\def\1{\bm{1}}
\def\vone{{\bm{1}}}
\def\vz{{\bm{z}}}
\def\mZ{{\bm{Z}}}
\DeclareMathAlphabet{\mathsfit}{\encodingdefault}{\sfdefault}{m}{sl}
\SetMathAlphabet{\mathsfit}{bold}{\encodingdefault}{\sfdefault}{bx}{n}
\def\gA{{\mathcal{A}}}
\def\gB{{\mathcal{B}}}
\def\gD{{\mathcal{D}}}
\def\gE{{\mathcal{E}}}
\def\gF{{\mathcal{F}}}
\def\gG{{\mathcal{G}}}
\def\gI{{\mathcal{I}}}
\def\gJ{{\mathcal{J}}}
\def\gL{{\mathcal{L}}}
\def\gM{{\mathcal{M}}}
\def\gQ{{\mathcal{Q}}}
\def\gR{{\mathcal{R}}}
\def\gS{{\mathcal{S}}}
\def\gU{{\mathcal{U}}}
\def\gX{{\mathcal{X}}}
\def\gY{{\mathcal{Y}}}
\def\gZ{{\mathcal{Z}}}
\def\sA{{\mathbb{A}}}
\def\sD{{\mathbb{D}}}
\def\sG{{\mathbb{G}}}
\def\sP{{\mathbb{P}}}
\def\sR{{\mathbb{R}}}
\newcommand{\E}{\mathbb{E}}
\DeclareMathOperator*{\argmax}{arg\,max}
\DeclareMathOperator*{\argmin}{arg\,min}
\newcommand{\twonorm}[1]{\|#1\|}
\newcommand{\LRabs}[1]{\left|#1\right|}
\newcommand{\Eqmark}[2]{\stackrel{(#1)}{#2}}
\newcommand{\LRs}[1]{\left(#1\right)}
\newcommand{\LRm}[1]{\left[#1\right]}
\newcommand{\LRl}[1]{\left\{#1\right\}}
\def\ECROMS{\mathrm{E}\text{-}\mathrm{CROMS}}
\def\FCROMS{\mathrm{F}\text{-}\mathrm{CROMS}}
\def\GFCROMS{\mathrm{GF}\text{-}\mathrm{CROMS}}
\def\JCROMS{\mathrm{J}\text{-}\mathrm{CROMS}}
\newtheorem{theorem}{Theorem}[section]
\newtheorem{lemma}{Lemma}[section]
\newtheorem{assumption}{Assumption}
\newtheorem{proposition}{Proposition}[section]
\newtheorem{remark}{Remark}[section]
\newtheorem{corollary}{Corollary}[section]
\newtheorem{definition}{Definition}
  \let\oldparagraph\textbf
  \renewcommand{\textbf}{
    \@ifstar
      \xxxParagraphStar
      \xxxParagraphNoStar
  }
  \newcommand{\xxxParagraphStar}[1]{\oldparagraph*{#1}\mbox{}}
  \newcommand{\xxxParagraphNoStar}[1]{\oldparagraph{#1}\mbox{}}
  \let\oldsubparagraph\subparagraph
  \renewcommand{\subparagraph}{
    \@ifstar
      \xxxSubParagraphStar
      \xxxSubParagraphNoStar
  }
  \newcommand{\xxxSubParagraphStar}[1]{\oldsubparagraph*{#1}\mbox{}}
  \newcommand{\xxxSubParagraphNoStar}[1]{\oldsubparagraph{#1}\mbox{}}
\patchcmd\longtable{\par}{\if@noskipsec\mbox{}\fi\par}{}{}
\def\maxwidth{\ifdim\Gin@nat@width>\linewidth\linewidth\else\Gin@nat@width\fi}
\def\maxheight{\ifdim\Gin@nat@height>\textheight\textheight\else\Gin@nat@height\fi}
\def\fps@figure{htbp}
  \renewcommand*\contentsname{Table of contents}
  \newcommand\contentsname{Table of contents}
  \renewcommand*\listfigurename{List of Figures}
  \newcommand\listfigurename{List of Figures}
  \renewcommand*\listtablename{List of Tables}
  \newcommand\listtablename{List of Tables}
  \renewcommand*\figurename{Figure}
  \newcommand\figurename{Figure}
  \renewcommand*\tablename{Table}
  \newcommand\tablename{Table}
\newcommand{\anon}{1}
\definecolor{HJblue}{RGB}{0,85,170}
\begin{document}

\def\spacingset#1{\renewcommand{\baselinestretch}%
{#1}\small\normalsize} \spacingset{1}

\makeatletter
\newcommand{\printfnsymbol}[1]{%
  \textsuperscript{\@fnsymbol{#1}}%
}
\makeatother

\if1\anon
{
  \title{\bf Optimal Model Selection for Conformalized Robust Optimization}

  \author{Yajie Bao$^1$\thanks{All authors are listed in alphabetical order.}, Yang Hu$^2$, Haojie Ren$^2$, Peng Zhao$^3$ and Changliang Zou$^1$\\
$^1$ {\normalsize School of Statistics and Data Science, Nankai University}\\
$^2$ {\normalsize School of Mathematical Sciences,  Shanghai Jiao Tong University} \\
$^3$ {\normalsize School of Mathematics and Statistics, Jiangsu Normal University}}
  \maketitle
} \fi

\if0\anon
{
  \bigskip
  \bigskip
  \bigskip
  \begin{center}
    {\LARGE\bf Optimal Model Selection for Conformalized Robust Optimization}
\end{center}
  \medskip
} \fi

\bigskip
\begin{abstract}
In decision-making under uncertainty, Contextual Robust Optimization (CRO) provides reliability by minimizing the worst-case decision loss over a prediction set. While recent advances use conformal prediction to construct prediction sets for machine learning models, the downstream decisions critically depend on model selection.
This paper introduces novel model selection frameworks for CRO that unify robustness control with decision risk minimization. 
{We first propose \emph{Conformalized Robust Optimization with Model Selection} (CROMS), a framework that selects the model to approximately minimize the averaged decision risk in CRO solutions. Given the target robustness level \(1-\alpha\), we present a computationally efficient algorithm called E-CROMS, which achieves asymptotic robustness control and decision optimality. To correct the control bias in finite samples, we further develop two algorithms: F-CROMS, which ensures a $1-\alpha$ robustness but requires searching the label space; and J-CROMS, which offers lower computational cost while achieving a $1-2\alpha$ robustness. Furthermore, we extend the CROMS framework to the \emph{individualized} setting, where model selection is performed by minimizing the conditional decision risk given the covariates of the test data. This framework advances conformal prediction methodology by enabling covariate-aware model selection.}
Numerical results demonstrate significant improvements in decision efficiency across diverse synthetic and real-world applications, outperforming baseline approaches.
\end{abstract}

\noindent%
{\it Keywords:} Conformal prediction; Contextual robust optimization; Empirical risk minimization; Individualized model selection; Uncertainty set
\vfill

\newpage
\spacingset{1.2} 

\section{Introduction}

In high-stakes domains like medical diagnosis or autonomous driving, traditional decision-making methods often focus on optimizing average-case outcomes, making them vulnerable to real-world uncertainties. In contrast, robust decision-making emphasizes resilience by design, ensuring that decisions remain effective even when actual conditions deviate from expectations. This adaptive stability is particularly crucial in fields like healthcare, agriculture, and climate modeling, where complex and uncertain environments require reliable solutions with a certain level of robustness.


Robust optimization \citep{ben2009robust} provides a principled framework for decision-making under uncertainty by optimizing against worst-case realizations within predefined uncertainty sets. 
However, this approach often leads to conservative or impractical solutions, as it does not dynamically incorporate observable covariates to modulate uncertainty. {Contextual robust optimization (CRO)  \citep{chenreddy2022data}} addresses this limitation by shifting the paradigm: instead of static uncertainty sets, CRO constructs data-driven, covariate-dependent prediction sets. This adaptation enables decisions to better reflect the specific situational contexts.

Formally, let $\phi(y,z)$ be a loss function regarding the decision $ z\in \gZ$ and the label $y\in \gY$, where $\gZ$ is the feasible set of decisions and $\gY$ is the label space. The label $Y$ will be predicted by the observed covariate $X \in \gX$. Assume $(X, Y)$ is drawn from an arbitrary distribution $P$, and let $\gU(X)$ be a prediction set for the unknown label $Y$. The CRO decision is obtained by solving the following minmax optimization problem:
\begin{equation}\label{eq:CRO}
    \begin{aligned}
        z(X)=\argmin_{z \in \gZ}&\max_{c\in \gU(X)} \phi(c, z).
    \end{aligned}
\end{equation}
The goal is to ensure that the decision $z(X)$ satisfies \emph{robustness} requirement, {meaning with probability $1-\alpha$, the true decision loss $\phi(Y, z(X))$ is smaller than the worst-case loss in the prediction set $\max_{c\in \gU(X)} \phi(c, z(X))$} (see Definition \ref{def:robustness}). To achieve this, the prediction set $\gU$ is required to have $1-\alpha$ level of marginal \emph{coverage}, i.e., $\sP\{Y\in \gU(X)\} \geq 1-\alpha$. Complex machine learning models, such as deep neural networks \citep{chenreddy2022data} and generative models \citep{patel2024conformal}, have been utilized to train the prediction set $\gU(X)$.
Although these models can provide informative sets for the unknown label, guaranteeing the coverage property remains challenging due to their ``black-box'' nature. Thus, a model-free uncertainty quantification tool is needed to ensure the robustness of decisions.

Recent works have applied the conformal prediction \citep{vovk2005algorithmic} to construct a valid prediction set for CRO problems \citep{johnstone2021conformal,sun2023predict}, leveraging its flexibility and validity for uncertainty quantification. Given any pre-trained machine learning model, conformal prediction constructs a valid prediction set using labeled data, ensuring the marginal coverage guarantee provided the training and test data are independent and identically distributed (i.i.d.) or exchangeable. This allows for a direct solution to problem~\eqref{eq:CRO} while satisfying the robustness of the decision in finite samples.
However, under exchangeability, the marginal coverage guarantee holds for any prediction model \citep{lei2018distribution}. Consequently, the practical performance---particularly the efficiency of downstream decisions---can vary dramatically with the model choice. Especially, it may yield overly conservative decisions when the model performs poorly on test data.



Consider, for example, in medical diagnosis systems, where multiple prediction models are trained on datasets from different hospitals, each with varying patient demographics and equipment. For a new patient, selecting an appropriate model before constructing the conformal prediction set is crucial for effective decision-making. The model selection problem in conformal prediction has gained significant attention in recent works \citep{yang2024selection,liang2024conformal}, which primarily aimed to select the model from a candidate set that minimizes the width of the prediction set while maintaining the validity of marginal coverage. However, minimizing the width is not directly relevant to the risk of the downstream decision, which is vital for practical applications. Additionally, existing works selected models from the viewpoint of average efficiency, failing to adapt models to specific decision contexts. This is particularly problematic in personalized adaptation like precision medicine \citep{mo2021learning}, where models selected based on average criteria might recommend the same treatment for all patients, ignoring individual variations in genetics, lifestyle, or comorbidities. 

\subsection{Our contributions}

In this paper, we develop a novel framework for model selection in the CRO problem with conformal prediction sets, aiming to optimize decision efficiency while guaranteeing robustness. Specifically, we consider a candidate model set $\{S_{\lambda}:\lambda\in \Lambda\}$, incorporating two typical scenarios: (i) $\Lambda$ is a finite index set corresponding to pre-trained models; (ii) $\Lambda \subset \sR^m$ constitutes a continuous parameter space for a class of models. Given labeled data $\{(X_i, Y_i)\}_{i=1}^n$ and test data $X_{n+1}$, our data-driven framework selects the optimal model $\hat{\lambda} \in \Lambda$ and produces a final decision $\hat{z}(X_{n+1})$ that simultaneously satisfies (asymptotic) robustness (see Definitions \ref{def:robustness}, \ref{def:individual_robustness}) and optimality (see Definitions \ref{def:efficiency}, \ref{def:individual_efficiency}). The selection criterion is based on minimizing the decision risk, defined as the expected decision loss for the true (unknown) label of test data. To approximate this risk, we first generate auxiliary decisions based on labeled data and compute their empirical decision losses. Model selection is then performed through \emph{empirical risk minimization} (ERM). Once the model index $\hat{\lambda}$ is selected, its corresponding conformal prediction set is incorporated into the CRO problem to make the final decision $\hat{z}(X_{n+1})$.
The main contributions of this work are as follows:

\begin{itemize}
    \item[(1)] We introduce \emph{Conformalized Robust Optimization with Model Selection} (CROMS), which unifies conformal prediction set construction with decision risk minimization. 
    We first propose ECROMS, a computationally efficient algorithm, and establish bounds in coverage error and excess decision risk under a general candidate model class.
    
    \item[(2)] To correct the coverage error in finite samples, we develop two improved algorithms: F-CROMS achieves $1-\alpha$ coverage and decision optimality via using augmented labeled data and searching over the label space to preserve the exchangeability. A theoretically justified grid-approximation procedure is developed to enable a computationally feasible implementation of F-CROMS for continuous label, without sacrificing coverage control; J-CROMS further reduces the computational cost in a leave-one-out fashion, and constructs the prediction set by the Jackknife+ technique \citep{barber2021predictive}, which achieves a $1-2\alpha$ coverage guarantee.

    \item[(3)] We extend the framework to \emph{Conformalized Robust Optimization with Individualized Model Selection} (CROiMS), which performs \emph{individualized} model selection by minimizing the conditional decision risk given the covariate of test data.
    We prove that CROiMS achieves asymptotic conditional coverage and decision optimality under mild nonparametric assumptions. To the best of our knowledge, this is the first study to introduce covariate-aware model selection in conformal prediction.

    \item[(4)] We conduct extensive numerical experiments on synthetic data, showing superior performance in enhancing decision efficiency and ensuring decision robustness across various settings. Additionally, our implementation on two real medical diagnosis datasets demonstrates that individualized model selection is important for achieving more precise and effective decisions tailored to different patients.
\end{itemize}

\subsection{Connections to existing works}


To improve the efficiency of decisions in the CRO problems, \citet{wang2023learning} and \citet{chenreddy2024end} proposed the \emph{end-to-end} approaches to directly train the uncertainty set by minimizing decision risk on historical data. However, these methods lack a finite-sample robustness guarantee. Their approaches are founded on the broader adoption of the end-to-end framework in predictive optimization \citep{donti2017task,elmachtoub2022smart}, which integrates model training with downstream optimization tasks. {Typically, \citet{yeh2024end} extended this framework to train the conformal uncertainty sets in CRO problems using a sample-splitting strategy: the first part of the labeled data is employed for model selection based on auxiliary decisions, while the second part is used to construct a split conformal prediction set for the final test decision.} While this approach ensures finite-sample coverage, the reduced sample size for constructing the prediction set may compromise decision efficiency to a large extent. {In addition, \citet{kiyani2025decision} studied the theoretically optimal prediction set in the CRO problem, which depends on the conditional distribution information. We emphasize two main differences between \citet{kiyani2025decision} and our work: first, the risk functions used to evaluate the efficiency of prediction sets are different; second, they aimed to use a machine learning model to learn the conditional distribution and then approximate the optimal prediction set, whereas we focus on selecting a model from a candidate set to minimize the downstream decision risk.}

In addition to the coverage property, the efficiency in the size of conformal prediction sets has also been extensively studied. \citet{lei2013distribution} and \citet{sadinle2019least} showed that the optimal prediction set with minimal size satisfying the marginal or conditional coverage is the level set of the conditional density of $Y$ given $X$. There is a line of works constructing prediction sets based on density estimators, see \citet{lei2013nonparametric}, \citet{lei2013distribution}, and \citet{izbicki2022cd}. Given a specific nonconformity score, the optimal size could be asymptotically achieved if the score estimator is consistent \citep{sesia2020comparison,lei2018distribution}. Several works \citep{bai2022efficient,kiyani2024length,braun2025minimum} considered directly minimizing the width of prediction sets by solving a constrained optimization problem. 
Notably, \citet{yang2024selection} and \citet{liang2024conformal} considered selecting the model that minimizes the set size while keeping a valid coverage.



This paper is organized as follows. Section \ref{sec:CROMS} provides a background on CRO problems under conformal prediction sets and introduces the CROMS framework aiming to minimize the decision risk, and then we propose a computationally efficient algorithm with asymptotic robustness control. Section \ref{sec:FCROMS} presents two viable algorithms to achieve the finite-sample robustness guarantee and asymptotic decision optimality. In Section \ref{sec:CROiMS}, we developed the individualized model selection framework CROiMS. In Sections \ref{sec:simulation} and \ref{sec:application}, we show the simulation results on synthetic data and applications on a real dataset, respectively.



\section{Conformalized Robust Optimization with Model Selection}\label{sec:CROMS}

\subsection{Warm-up: CRO with conformal prediction set}\label{sec:preliminaries}

As a starting point, it is useful to examine how the CRO problem with conformal prediction sets can be efficiently solved. Suppose the collected labeled dataset $\{(X_i,Y_i)\}_{i=1}^n$ and test data $X_{n+1}$ with unknown label $Y_{n+1}$ are i.i.d. Let $S:\gX\times\gY \to \sR$ be a pre-trained nonconformity score function. {The $(1-\alpha)$-level conformal prediction set} takes the form
\begin{align}\label{eq:conformal_set_single_model}
    \gU(X_{n+1}) = \LRl{c\in \gY: S(X_{n+1}, c) \leq \hat{q}},
\end{align}
where the calibration threshold $\hat{q} = Q_{(1-\alpha)(1+1/n)}(\{S(X_i,Y_i)\}_{i=1}^n)$ is the $(1-\alpha)(1+1/n)$ sample quantile. This set enjoys the finite-sample \emph{marginal coverage} property $\sP\{Y_{n+1}\in \gU(X_{n+1})\} \geq 1-\alpha$; see \citet{vovk2005algorithmic} and \citet{lei2018distribution}.

\paragraph*{Regression task.}
If $\gY = \sR^p$, there are two commonly used score functions \citep{johansson2017model,sun2023predict}: (1) \emph{Box score} $S(x,y) = \|(y-\hat{\mu}(x))/\hat{\sigma}(x)\|_{\infty}$, where $\hat{\mu}(\cdot), \hat{\sigma}(\cdot): \gX \to \sR^p$ are the mean and variance prediction models; (2) \emph{Ellipsoid score} $S(x, y) = \{(y- \hat{\mu}(x))^{\top}\hat{\Sigma}(x)^{-1}(y- \hat{\mu}(x))\}^{1/2}$, where $\hat{\mu}(\cdot):\gX \to \sR^p$ and $\hat{\Sigma}(\cdot): \gX \to \sR^{p\times p}$ are the estimators of conditional mean and covariance, respectively.
Since the conformal prediction sets are convex with both box and ellipsoid scores,
the CRO problem \eqref{eq:CRO} is tractable as long as the loss function $\phi(y,z)$ is concave in $y$ and convex in $z$. To have a direct intuition on the problem \eqref{eq:CRO} under conformal prediction set in \eqref{eq:conformal_set_single_model}, we consider the classical portfolio optimization application where $\phi(y,z) = -y^{\top}z$ with $\gZ = \{z\in [0,1]^p: \mathbf{1}^{\top}z=1\}$. Under the prediction set with box score, the problem \eqref{eq:CRO} is equivalent to $z(X_{n+1})=\argmin_{z\in \gZ} \{- \LRs{\hat{\mu}(X_{n+1}) - \hat{q} \hat{\sigma}(X_{n+1})}^{\top}z\}$.
Under the prediction set with ellipsoid score, the inner maximization has a closed form and the problem \eqref{eq:CRO} is equivalent to $z(X_{n+1})=\argmin_{z\in \gZ}\{\sqrt{\hat{q}}\sqrt{z^{\top}\hat{\Sigma}(X_{n+1})z} - \hat{\mu}(X_{n+1})^{\top}z\}$.
The problems are both convex and can be efficiently solved by well-studied methods \citep{Boyd_Vandenberghe_2004}.


\paragraph*{Classification task.}
For a discrete and finite label space $\gY$, the nonconformity function can be taken as $S(x,y) = 1 - \hat{f}^y(x)$, where $\hat{f}^y(x)$ is an estimator of $\sP(Y=y\mid X = x)$, such as the softmax output of a neural network. The prediction set is given by $\gU(X_{n+1}) = \{y\in \gY: S(X_{n+1}, y) \leq \hat{q}\}$. The decision space $\gZ$ is typically a finite set. The loss function can be represented by a matrix $M \in \sR^{|\gY| \times |\gZ|}$, where $\phi(y,z) = M_{y,z}$ for $y\in \gY$ and $z\in \gZ$, and the corresponding CRO problem becomes $z(X_{n+1}) = \argmin_{z\in \gZ} \max_{y\in \gU(X_{n+1})} M_{y,z}$, which can be easily solved among a finite set of possible solutions.

In the classical CRO problem, a foundational requirement is the marginal robustness of the decision given a prediction set. Here we state the robustness definition \citep{ben2009robust,sun2023predict} in the marginal notion.

\begin{definition}[Marginal robustness]\label{def:robustness}
    The prediction set $\gU(X_{n+1})$ satisfies $1-\alpha$ level of marginal robustness if $\sP\LRl{\phi(Y_{n+1}, z(X_{n+1})) \leq \max_{c\in \gU(X_{n+1})}\phi(c, z(X_{n+1}))} \geq 1-\alpha$.
\end{definition}

In the above definition, $\phi(Y_{n+1}, z(X_{n+1}))$ represents the ground truth decision loss on the test data, and $\max_{c\in \gU(X_{n+1})}\phi(c, z(X_{n+1}))$ denotes the observed worst-case loss under the prediction set $\gU(X_{n+1})$. As defined, the decision $z(X_{n+1})$ ensures robustness if we use the conformal prediction set in \eqref{eq:conformal_set_single_model} since the marginal robustness can be implied by the marginal coverage property.


\subsection{Oracle model selection to minimize decision risk}

Given a sequence of {pre-trained models} $\{S_{\lambda}:\lambda\in \Lambda\}$, we begin with the oracle model selection at the population level. For the data $(X,Y)\sim P$, we denote the $1-\alpha$ population quantile of score $S_{\lambda}(X,Y)$ as $q_{\lambda}^{o} = \inf\{q\in \sR: \sP\{S_{\lambda}(X,Y) \leq q\} \geq 1-\alpha\}$. Then define the oracle conformal prediction set of the candidate model $S_{\lambda}$ as $\gU_{\lambda}^{o}(X) = \LRl{c\in \gY: S_{\lambda}(X, c) \leq q_{\lambda}^{o}}$.

Plugging the prediction set $\gU_{\lambda}^{o}(X)$ into the CRO problem \eqref{eq:CRO} leads to the decision
\begin{align}\label{eq:oracle_decision}
    z_{\lambda}^{o}(X) = \argmin_{z\in \gZ}\max_{c\in \gU_{\lambda}^{o}(X)} \phi(c,z).
\end{align}
By the definition of $q_{\lambda}^{o}$, the marginal coverage $\sP\{Y\in \gU_{\lambda}^{o}(X)\} \geq 1-\alpha$ holds naturally, and thus $\gU_{\lambda}^{o}(X)$ satisfy the robustness requirement in Definition \ref{def:robustness}. To evaluate the efficiency of $z_{\lambda}^{o}(X)$, we introduce the {\emph{oracle decision risk}} of the model $S_{\lambda}$ as $\E[\phi(Y, z_{\lambda}^{o}(X))]$. Accordingly, the optimal model is the one that minimizes the downstream oracle decision risk,
\begin{align}\label{eq:oracle_optimal_model}
    \lambda^* = \argmin_{\lambda \in \Lambda}\E[\phi(Y, z_{\lambda}^{o}(X))].
\end{align}
From another perspective, the oracle model selection process discussed above can be regarded as a bilevel optimization problem \citep{dempe2002foundations}, where the lower-level problem \eqref{eq:oracle_decision} provides CRO solutions and the upper-level problem \eqref{eq:oracle_optimal_model} optimizes the efficiency of these solutions. 
By definition \eqref{eq:oracle_decision}, the decision $z_{\lambda^*}^{o}(X)$ achieves the minimum oracle decision risk among all models. We define the optimal efficiency of a data-driven decision $\hat{z}(X)$ as follows.

\begin{definition}[Asymptotic optimality]\label{def:efficiency}
    The decision $\hat{z}(X_{n+1})$ is asymptotically optimal if $\lim_{n\to \infty}\E[\phi(Y_{n+1},\hat{z}(X_{n+1}))] = v_{\Lambda}^*$, where $v_{\Lambda}^* = \E[\phi(Y_{n+1},z_{\lambda^*}^{o}(X_{n+1}))]$ is the minimum risk.
\end{definition}

In the following, we develop a data-driven framework named \textit{Conformalized Robust Optimization with Model Selection} (CROMS) {to perform optimal model selection by approximately solving problem \eqref{eq:oracle_optimal_model} while ensuring both marginal robustness (Definiton \ref{def:robustness}) and asymptotic optimality (Definition \ref{def:efficiency})} on the test data.

\subsection{E-CROMS: efficient selection with asymptotic optimality}\label{sec:ECROMS}



{We start with a computationally efficient model selection approach.} Recall that {the conformal prediction set} of the model $S_{\lambda}$ is given by $\gU_{\lambda}(\cdot) = \{c\in \gY: S_{\lambda}(\cdot, c) \leq \hat{q}_{\lambda}\},$ where $\hat{q}_{\lambda} = Q_{(1-\alpha)(1+1/n)}\LRs{ \LRl{S_{\lambda}(X_i,Y_i)}_{i=1}^n}$. {Based on the prediction set $\gU_\lambda$, we can obtain the decision for the test point $X_{n+1}$ as $z_{\lambda}(X_{n+1}) = \argmin_{z\in \gZ}\max_{c\in \gU_{\lambda}(X_{n+1})}\phi(c,z)$.}

{Due to the marginal coverage property of $\gU_\lambda(X_{n+1})$, the decision $z_\lambda(X_{n+1})$ satisfies $1-\alpha$ level of robustness for each $\lambda\in\Lambda$ under the i.i.d. assumption on data $\{(X_i,Y_i)\}_{i=1}^{n+1}$. Further}, since the sample quantile $\hat{q}_{\lambda}$ is a consistent estimator of the population quantile $q_{\lambda}^{o}$, thus the corresponding decision risk can approximate the oracle one in \eqref{eq:oracle_decision}, that is $\E[\phi(Y_{n+1},z_{\lambda}(X_{n+1}))] \approx \E[\phi(Y_{n+1},z_{\lambda}^{o}(X_{n+1}))]$.

To estimate the expectation $\E[\phi(Y_{n+1},z_{\lambda}(X_{n+1}))]$, it is natural to use the labeled data to compute the \emph{auxiliary decisions}, i.e., $z_{\lambda}(X_i) = \argmin_{z\in \gZ}\max_{c\in \gU_{\lambda}(X_{i})} \phi(c,z)$ for $i\in [n]$.
Since most CRO problems are convex as discussed in Section \ref{sec:preliminaries}, this step is computationally efficient, requiring solving $n$ convex problems. We regard those labeled decision losses $\{\phi(Y_i,z_{\lambda}(X_i))\}_{i=1}^n$ as nearly random ``copies'' of test decision loss $\phi(Y_{n+1},z_{\lambda}(X_{n+1}))$.
Then we perform the model selection through the following ERM problem
\begin{align}\label{eq:eff_CROMS_ERM}
    \hat{\lambda}_n = \argmin_{\lambda\in \Lambda} \frac{1}{n}\sum_{i=1}^n \phi(Y_i, z_{\lambda}(X_i)).
\end{align}
After that, we choose the corresponding conformal prediction set as the final prediction set, i.e., $\widehat{\gU}^{\ECROMS}(X_{n+1}) = \gU_{\hat{\lambda}_n}(X_{n+1})$. The \emph{final decision} is $\hat{z}^{\ECROMS}(X_{n+1}) = z_{\hat{\lambda}_n}(X_{n+1})$. We refer to this procedure as Efficient CROMS (E-CROMS) and summarize it in Algorithm \ref{alg:eff_CROMS}.

\begin{algorithm}[ht]
	\renewcommand{\algorithmicrequire}{\textbf{Input:}}
	\renewcommand{\algorithmicensure}{\textbf{Output:}}
	\caption{Efficient CROMS (E-CROMS)}\label{alg:eff_CROMS}
	\begin{algorithmic}
	\Require Pre-trained models $\{S_{\lambda}:\lambda\in \Lambda\}$, loss function $\phi$, labeled data $\{(X_{i},Y_i)\}_{i=1}^{n}$, test data $X_{n+1}$, robustness level $1-\alpha \in (0, 1)$.
        
        \For{$\lambda \in \Lambda$} \Comment{{Compute auxiliary decisions}}
        
        \State $\gU_{\lambda}(x) \gets \{c\in \gY: S_{\lambda}(x,c)\leq Q_{(1-\alpha)(1+1/n)}\LRs{ \LRl{S_{\lambda}(X_i,Y_i)}_{i=1}^n}\}$.

        \State $z_{\lambda}(X_i) \gets \argmin_{z\in \gZ}\max_{c\in \gU_{\lambda}(X_i)}\phi(c,z)$ for $i\in [n]$.


        \EndFor

        \State $\hat{\lambda}_n \gets \argmin_{\lambda \in \Lambda}\frac{1}{n}\sum_{i=1}^{n} \phi(Y_i, z_{\lambda}(X_i))$.  \Comment{{Select model via ERM}}

        \State $\widehat{\gU}^{\ECROMS}(X_{n+1}) \gets \{y\in \gY: S_{\hat{\lambda}_n}(X_{n+1}, y) \leq \hat{q}_{\hat{\lambda}_n}\}$. \Comment{{Construct prediction set}}

        \State $\hat{z}^{\ECROMS}(X_{n+1}) \gets \argmin_{z\in \gZ}\max_{c\in \widehat{\gU}^{\ECROMS}(X_{n+1})}\phi(c,z)$. \Comment{{Make the final decision}}
        
        \Ensure Prediction set $\widehat{\gU}^{\ECROMS}(X_{n+1})$ and decision $\hat{z}^{\ECROMS}(X_{n+1})$.
	\end{algorithmic}
\end{algorithm}

\subsubsection{Theoretical results of E-CROMS}\label{sec:theory_ECROMS}
For the ease of theoretical presentation, we denote the predcition set $\gU_{\lambda}(x;q) = \{y\in \gY: S_{\lambda}(x,y) \leq q\}$ and the CRO decision $z_{\lambda}(x; q) = \argmin_{z\in \gZ}\max_{c\in \gU_{\lambda}(x;q)} \phi(c,z)$ for model index $\lambda \in \Lambda$ and threshold $q \in \sR$.
Specifically, we introduce two function classes defined on the space $\gX \times \gY$: $\gF = \{\mathbbm{1}\{S_{\lambda}(x,y) > q\}: \lambda \in \Lambda, q\in \sR\}$ and $\gG = \{\phi(y, z_{\lambda}(x;q_{\lambda}^o)): \lambda \in \Lambda\}$, where $q_{\lambda}^o$ is the population quantile of $S_{\lambda}(X,Y)$.
Then we write their Rademacher complexities as $\mathfrak{R}_n(\gF) = \E\LRm{\sup_{f\in \gF} \left|\frac{1}{n}\sum_{i=1}^n \xi_i f(X_i,Y_i)\right|}$ and $\mathfrak{R}_n(\gG) = \E\LRm{\sup_{g\in \gG} \left|\frac{1}{n}\sum_{i=1}^n \xi_i g(X_i,Y_i)\right|}$, where $\{\xi_i\}_{i=1}^n$ are i.i.d. random variables taking $+1$ or $-1$ with equal probability.

{The first result is about the marginal coverage and robustness of E-CROMS.}


\begin{theorem}\label{thm:robustness_ECROMS}
    Suppose data $\{(X_i,Y_i)\}_{i=1}^{n+1}$ are i.i.d., E-CROMS satisfies $\sP\{Y_{n+1} \in \widehat{\gU}^{\ECROMS}(X_{n+1})\} \geq (1+n^{-1})(1-\alpha) - 2\mathfrak{R}_n(\gF)$. {Further, the decision of $\hat{z}^{\ECROMS}(X_{n+1})$ achieves the same level  of marginal robustness in Definition \ref{def:robustness}.}
\end{theorem}


The theorem above gives a non-asymptotic and distribution-free characterization for the marginal coverage of E-CROMS. 
Since the selected model index $\hat{\lambda}_n$ is not symmetric to the labeled and test data, the exchangeability between scores $\{S_{\hat{\lambda}_n}(X_i,Y_i)\}_{i=1}^{n+1}$ breaks, resulting in a coverage error for E-CROMS.

{Note that the coverage gap  $2\mathfrak{R}_n(\gF)$ comes from the size of the candidate set $\Lambda$. If $\{S_{\lambda}: \lambda \in \Lambda\}$ is a Vapnik–Chervonenkis (VC) model class with VC-dimension $\mathsf{v}(\gF)$, we can bound the gap by $\mathfrak{R}_n(\gF) = O\LRs{\sqrt{\mathsf{v}(\gF)/n}}$ \citep[Theorem 2.6.7 in][]{van1996weak}. In particular, if the index set $\Lambda$ is a finite set (i.e., $|\Lambda| < \infty$), we know $\mathsf{v}(\gF) \leq O(\log |\Lambda|)$, which recovers the bound of Theorem 1 in \citet{yang2024selection}. Differently, the latter focused on selecting the model to minimize the set width.}

Before analyzing the asymptotic efficiency, we introduce the following regular conditions on data distribution and loss function.

\begin{assumption}\label{assum:quantile_estimation}
    Let $f_{\lambda}(\cdot)$ and $F_{\lambda}(\cdot)$ be the density function and distribution function of nonconformity score $S_{\lambda}(X,Y)$. There exists a constant $\mu > 0$, $\inf_{s\in [F_{\lambda}^{-1}(1-\alpha-a_n), F_{\lambda}^{-1}(1-\alpha+a_n)]}f_{\lambda}(s) \geq \mu$ holds with $a_n = O(\sqrt{\log n/n}+\mathfrak{R}_n(\gF))$, where $F_{\lambda}^{-1}(\cdot)$ is the quantile function.
\end{assumption}

\begin{assumption}\label{assum:loss}
    There exists a constant $B > 0$, $\sup_{y\in \gY,z\in \gZ}|\phi(y,z)| \leq B$.
\end{assumption}

\begin{assumption}\label{assum:Lipschitz_solution_1}  
   There exists a constant $L > 0$, $\sup_{(x,y)\in\gX\times \gY}\LRabs{\phi(y,z_{\lambda}(x; q)) - \phi(y,z_{\lambda}(x;q_{\lambda}^o))} \leq L |q-q_{\lambda}^o|$ for any $\lambda \in \Lambda$ and {$|q-q_{\lambda}^o| \leq O\LRl{\mu^{-1}\LRs{\sqrt{\log n/n}+\mathfrak{R}_n(\gF)}}$}.
\end{assumption}

Assumption \ref{assum:quantile_estimation} ensures the consistency of quantile estimation, which is common in investigating the width efficiency of conformal prediction sets, e.g., \citet{lei2018distribution} and \citet{yang2024selection}. We impose Assumption \ref{assum:loss} for simplicity of concentration. 
Assumption \ref{assum:Lipschitz_solution_1} is key to safely approximating the oracle decision risk $\E[\phi(Y, z_{\lambda}^o(X))]$ of model $S_{\lambda}$.
This assumption is mild, as most robust optimization problems --- such as portfolio optimization with box or ellipsoid prediction sets --- are cone programming. Their solutions typically exhibit {locally Lipschitz continuous} to threshold $q$, see \citet{bolte2021nonsmooth} and \citet{wang2023learning}.


\begin{theorem}\label{thm:optimality_ECROMS} 
    {Suppose data $\{(X_i,Y_i)\}_{i=1}^{n+1}$ are i.i.d. and} Assumptions \ref{assum:quantile_estimation}-\ref{assum:Lipschitz_solution_1} hold, the decision risk of E-CROMS satisfies
    \begin{align}
        \left|\E[\phi(Y_{n+1}, \hat{z}^{\ECROMS}(X_{n+1}))] - v_{\Lambda}^*\right|&\leq O\LRl{\LRs{B + \frac{L}{\mu}}\sqrt{\frac{\log n}{n}} + \frac{L}{\mu}\mathfrak{R}_n(\gF) + \mathfrak{R}_n(\gG)}.
    \end{align}
\end{theorem}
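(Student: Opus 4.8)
The plan is to bound the decision risk gap by passing through the oracle decision risk as an intermediate quantity, splitting the analysis into three pieces: (i) the approximation error between the empirical-quantile decision loss $\phi(Y_{n+1},z_\lambda(X_{n+1}))$ and the oracle decision loss $\phi(Y_{n+1},z_\lambda^{\mathsf o}(X_{n+1}))$, uniformly over $\lambda\in\Lambda$; (ii) the uniform concentration of the empirical objective $\frac1n\sum_{i=1}^n\phi(Y_i,z_\lambda(X_i))$ around its population counterpart $\E[\phi(Y_{n+1},z_\lambda(X_{n+1}))]$; and (iii) combining these to show that the ERM-selected model $\hat\lambda_n$ is nearly optimal for the oracle objective, hence its decision risk is within the stated rate of $v_\Lambda^*$. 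First I would invoke Assumption \ref{assum:quantile_estimation} together with the DKW inequality (or a Bernstein-type bound on $F_\lambda(\hat q_\lambda)$) to show that, with high probability, $|\hat q_\lambda - q_\lambda^{\mathsf o}|\le a_n/\mu = O(\mu^{-1}\sqrt{\log(n\vee|\Lambda|)/n})$ simultaneously for all $\lambda$, using a union bound over the finite set $\Lambda$ to get the $\log|\Lambda|$ term; the event that all $\hat q_\lambda$ stay in the interval $[F_\lambda^{-1}(1-\alpha-a_n),F_\lambda^{-1}(1-\alpha+a_n)]$ where Assumption \ref{assum:Lipschitz_solution_1} applies must be controlled first. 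On that event, Assumption \ref{assum:Lipschitz_solution_1} yields $|\phi(y,z_\lambda(x;\hat q_\lambda)) - \phi(y,z_\lambda(x;q_\lambda^{\mathsf o}))|\le L|\hat q_\lambda - q_\lambda^{\mathsf o}|\le (L/\mu)a_n$ uniformly in $(x,y)$, which is exactly the $L/\mu$ contribution in the bound.

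For step (ii), condition on the selected-model-free randomness: since the labeled decisions $z_\lambda(X_i)$ are defined through $\hat q_\lambda$, which depends on all of $\{(X_i,Y_i)\}_{i=1}^n$, the summands $\phi(Y_i,z_\lambda(X_i))$ are not independent. I would handle this by comparing to the oracle version first — replace $z_\lambda(X_i)$ by $z_\lambda^{\mathsf o}(X_i)$ at a cost of $(L/\mu)a_n$ per term via step (i), then the $\phi(Y_i,z_\lambda^{\mathsf o}(X_i))$ are genuinely i.i.d.\ bounded by $B$ (Assumption \ref{assum:loss}), so Hoeffding plus a union bound over $\Lambda$ gives $\sup_\lambda |\frac1n\sum_i\phi(Y_i,z_\lambda^{\mathsf o}(X_i)) - \E[\phi(Y,z_\lambda^{\mathsf o}(X))]|\le O(B\sqrt{\log|\Lambda|/n})$. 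Combining, $\sup_\lambda|\frac1n\sum_i\phi(Y_i,z_\lambda(X_i)) - v_\Lambda(\lambda)|\le O((B+L/\mu)\sqrt{\log(n\vee|\Lambda|)/n})$ where $v_\Lambda(\lambda):=\E[\phi(Y,z_\lambda^{\mathsf o}(X))]$.

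Step (iii) is the standard ERM argument: let $\hat\lambda_n$ minimize the empirical objective and $\lambda^*$ minimize $v_\Lambda(\cdot)$; then $v_\Lambda(\hat\lambda_n) \le \frac1n\sum_i\phi(Y_i,z_{\hat\lambda_n}(X_i)) + \varepsilon_n \le \frac1n\sum_i\phi(Y_i,z_{\lambda^*}(X_i)) + \varepsilon_n \le v_\Lambda(\lambda^*) + 2\varepsilon_n = v_\Lambda^* + 2\varepsilon_n$ with $\varepsilon_n = O((B+L/\mu)\sqrt{\log(n\vee|\Lambda|)/n})$. Finally I would translate the bound on $v_\Lambda(\hat\lambda_n) = \E[\phi(Y,z_{\hat\lambda_n}^{\mathsf o}(X))]$ back to the actual decision risk $\E[\phi(Y_{n+1},\hat z^{\ECROMS}(X_{n+1}))] = \E[\phi(Y_{n+1},z_{\hat\lambda_n}(X_{n+1}))]$, again via step (i) but now with the subtlety that $\hat\lambda_n$ is data-dependent — I would absorb this by taking a supremum over $\lambda\in\Lambda$ inside the expectation on the failure event (whose probability is $O(1/n)$, contributing $O(B/n)$ by boundedness) and using the uniform step-(i) bound on the good event. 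Collecting all error terms yields the claimed $O((B+L/\mu)\sqrt{\log(n\vee|\Lambda|)/n})$. The main obstacle is the lack of independence among the auxiliary decision losses caused by the shared calibration quantile $\hat q_\lambda$; the resolution is precisely to route everything through the oracle quantile $q_\lambda^{\mathsf o}$ using Assumptions \ref{assum:quantile_estimation} and \ref{assum:Lipschitz_solution_1}, after which all concentration steps reduce to i.i.d.\ bounded-variable arguments with a union bound over the finite candidate set.
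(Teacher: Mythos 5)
Your proposal is correct and follows essentially the same route as the paper's proof: uniform quantile consistency via DKW plus a union bound over $\Lambda$, the locally Lipschitz assumption to pass between $\hat q_\lambda$ and $q_\lambda^{\mathsf o}$ (thereby restoring i.i.d.\ structure for the Hoeffding step), a standard ERM comparison, and absorption of the bad event via the bound $B$. The paper organizes the supremum into four explicit terms and conditions on $\gD_n$ to make the ERM comparison precise, but the decomposition and all key ingredients match yours.
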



{The upper bound of excess decision risk in Theorem \ref{thm:optimality_ECROMS} depends on two Rademacher complexities. To explicitly quantify the convergence rate of the decision risk, we further characterize the Rademacher complexity of $\mathfrak{R}_n(\gG)$ in the following proposition.
\begin{proposition}\label{pro:rademacher_G}
Under Assumption \ref{assum:loss}: (1) If $\Lambda$ is a finite set, then $\mathfrak{R}_n(\gG) \leq O\LRs{B\sqrt{\log (|\Lambda|) / n}}$; (2) If $\Lambda \subset \sR^m$ is a continuous set with a bounded radius $R$ (i.e., $\sup_{\lambda\in\Lambda}\|\lambda\| \leq R$), and satisfies $\sup_{(x,y)\in \gX \times \gY}\LRabs{\phi(y,z_{\lambda}(x; q_{\lambda}^o)) - \phi(y,z_{\lambda^{\prime}}(x;q_{\lambda^{\prime}}^o))} \leq L_{\Lambda} \|\lambda-\lambda^{\prime}\|$ for any $\|\lambda- \lambda^{\prime}\| \leq O(n^{-1})$, then  $\mathfrak{R}_n(\gG) \leq O\LRs{B \sqrt{m \log(nR)/n} + L_{\Lambda}/n}$.    
\end{proposition}

Proposition \ref{pro:rademacher_G} implies that for a finite model space, the decision risk converges to the optimal one provided that the cardinality grows sub-exponentially, i.e., $\log(|\Lambda|)=o(n)$. In the continuous case, the complexity is dominated by the model dimension $m$. For typical parametric models, such as linear models $S_{\lambda}=\|y-x^\top\lambda\|_{\infty}$ for $\Lambda \subset \sR^m$, we have $\mathfrak{R}_n(\gF)=O(\sqrt{m/n})$ and the term $\sqrt{m/n}$ governs the convergence. Thus, asymptotic robustness and decision optimality are guaranteed as long as $m=o(n)$.  }


{\begin{remark} The end-to-end (E2E) model training framework proposed by \citet{yeh2024end} maintains marginal robustness in finite samples via a sample splitting strategy. However, such splitting will degrade decision efficiency of the prediction set, as only part of the data is used in solving CRO problems to make final decisions. Despite its finite-sample margingal robustness, theoretical results in Appendix C.5 show that the E2E approach exhibits a larger deviation from the optimal value $v_{\Lambda}^*$ compared to our proposed methods. 
\end{remark}}

\section{CROMS with Finite-Sample Robustness Guarantee}\label{sec:FCROMS}

{In this section, we focus on introducing F-CROMS and J-CROMS to establish finite-sample robustness guarantees when the model space $\Lambda$ is finite. Theoretical results under continuous model space are deferred to Appendix C.2 and C.3.}


\subsection{F-CROMS: model selection by augmented data}\label{subsec:FCROMS}

The robustness gap in E-CROMS arises from the asymmetric dependence of the model selection on $\{(X_i,Y_i)\}_{i=1}^{n+1}$, as the test label $Y_{n+1}$ is unknown. We first consider adapting the full conformal technique \citep{vovk2005algorithmic,lei2018distribution} to address this issue {and name this procedure as Full CROMS (F-CROMS).}

Given a hypothesized value $y\in \gY$ intended to impute the test label $Y_{n+1}$, we now design a model selection process that is symmetric to the augmented dataset $\{(X_i,Y_i)\}_{i=1}^n \cup \{(X_{n+1}, y)\}$. First, for any $x\in \gX$, we define the prediction set of the model $S_{\lambda}$ with a symmetric threshold $\gU_{\lambda}^{y}(x) = \LRl{c\in \gY: S_{\lambda}(x, c) \leq Q_{1-\alpha}\LRs{\{S_{\lambda}(X_i,Y_i)\}_{i=1}^n\cup \{S_{\lambda}(X_{n+1},y)\}}}$.
Next, we introduce auxiliary decisions $z_{\lambda}^y(X_i) = \argmin_{z\in \gZ}\max_{c\in \gU_{\lambda}^{y}(X_i)}\phi(c,z)$ for $i\in [n+1]$.
Then, we select the model by solving the following augmented ERM problem:
\begin{align}\label{eq:hypothesized_avg_ERM}
    \hat{\lambda}^y = \argmin_{\lambda \in \Lambda}\frac{1}{n+1}\left\{\sum_{i=1}^n \phi(Y_i, z_{\lambda}^y(X_i)) + \phi(y, z_{\lambda}^y(X_{n+1}))\right\}.
\end{align}
By carefully checking the three steps above, we can see that the final model index $\hat{\lambda}^y$ is invariant to the permutation of the augmented data set. The step \eqref{eq:hypothesized_avg_ERM} can be recast as a recalibration of  \eqref{eq:eff_CROMS_ERM} in E-CROMS.
Having the selected model $\hat{\lambda}^y$ for each hypothesized label $y$, the \emph{final prediction set} is given by
\begin{align}\label{eq:FCROMS_set}
    \widehat{\gU}^{\FCROMS}(X_{n+1}) = \Big\{y\in \gY: S_{\hat{\lambda}^y}(X_{n+1}, y) \leq Q_{1-\alpha}\big(\{S_{\hat{\lambda}^y}(X_i,Y_i) \cup \{S_{\hat{\lambda}^y}(X_{n+1}, y)\}\}_{i=1}^n\big)\Big\}.
\end{align}
The \emph{final decision} is made by $\hat{z}^{\FCROMS}(X_{n+1}) = \argmin_{z\in \gZ}\max_{c\in \widehat{\gU}^{\FCROMS}(X_{n+1})}\phi(c,z)$. 
The detailed implementation of F-CROMS is deferred to Appendix B.1, where we also provide an acceleration approach by utilizing the relation between augmented quantile $\hat{q}_{\lambda}^y$ and labeled qauntile $\hat{q}_{\lambda}$.


To avoid sample splitting, the full conformal prediction uses a symmetric algorithm to train the model on the augmented dataset $\{(X_i,Y_i)\}_{i=1}^n \cup \{(X_{n+1},y)\}$ for $y\in \gY$. The training process usually aims to minimize the prediction error over a model class, e.g., least squares error and cross-entropy. In contrast, F-CROMS directly uses the downstream decision risk to select the model from a model class $\{S_{\lambda}:\lambda \in \Lambda\}$. Recently, \citet{liang2024conformal} also applied the full conformal prediction technique to maintain the finite-sample coverage after selecting the model based on the width or volume of the prediction set. However, optimizing the width of a conformal prediction set is not equivalent to improving decision efficiency. {It is also confirmed by the simulation results in Appendix G.5, where E-CROMS and F-CROMS achieve lower decision risk compared with the model selection methods in \citet{yang2024selection} and \citet{liang2024conformal}.}

\subsubsection{Theoretical results of F-CROMS}
The following theorem demonstrates that F-CROMS achieves finite-sample marginal coverage control, which is independent of any distributional assumptions.

\begin{theorem}\label{thm:FCROMS_coverage}
    Assume that $\{(X_i,Y_i)\}_{i=1}^{n+1}$ are i.i.d., then the prediction set of F-CROMS satisfies $\sP\{Y_{n+1} \in \widehat{\gU}^{\FCROMS}(X_{n+1})\}\geq 1-\alpha$.
\end{theorem}


Next, with the same assumptions needed for E-CROMS, we establish the upper bound of the excess decision risk of F-CROMS under a finite index set $\Lambda$. 

\begin{theorem}\label{thm:FCROMS_optimality}
Suppose there exists a positive sequence $\beta_n \geq O\LRl{(L/\mu + B)\sqrt{\log (n \vee |\Lambda|)/n}}$ such that $\E[\phi(Y, z_{\lambda}^{o}(X))] \geq \E[\phi(Y, z_{\lambda^*}^{o}(X))] + \beta_n$ for any $\lambda \neq \lambda^*$. 
Under the same assumptions of Theorem \ref{thm:optimality_ECROMS}, we have
    \begin{align}
        \left|\E\LRm{\phi\LRs{Y_{n+1}, \hat{z}^{\FCROMS}(X_{n+1})}} - v_{\Lambda}^*\right| \leq O\LRs{\frac{B}{n} + \frac{L}{\mu}\sqrt{\frac{\log n}{n}}}.\nonumber
    \end{align}
\end{theorem}

Theorem \ref{thm:FCROMS_optimality} imposes the minimum risk gap $\beta_n$ to establish model selection consistency in high probability, i.e., $\sP(\forall~ y\in\gY,\hat{\lambda}^y = \lambda^*) \geq 1-O(n^{-1})$. Under this event, $\widehat{\gU}^{\FCROMS}(X_{n+1})$ aligns with the conformal prediction set of the optimal model $S_{\lambda^*}$ in \eqref{eq:oracle_optimal_model}, that is $\gU_{\lambda^*}(X_{n+1}) = \left\{y\in \gY: S_{\lambda^*}(X_{n+1},y)\leq \hat{q}_{\lambda^*}\right\}$. Consequently, we can bound the difference between the decision risk of F-CROMS and the optimal value $v_{\Lambda}^*$.


\subsubsection{Grid-approximation of F-CROMS with optimality guarantee}\label{sec:GF-CROMS}

{Note that generating the prediction set for F-CROMS requires searching over the entire label space, which remains impossible for the regression task $\mathcal{Y}\subset\sR^p$.}
For those regression tasks with a bounded label space, we propose a grid-approximated version of F-CROMS based on the discretization technique introduced by \citet{chen2018discretized}. 
Let $\widetilde{\mathcal{Y}}$ be a set of uniformly spaced grid points over $\mathcal{Y}$ such that for any $y\in \gY$ there exists some $\tilde{y}\in \widetilde{\gY}$ such that $\|y - \tilde{y}\| \leq \epsilon_{\rm{grid}}$. Then we define a discretization mapping $\sD(y) = \argmin_{\tilde{y} \in \widetilde{\gY} } \| y - \tilde{y}\|$, assigning each $y \in \mathcal{Y}$ to its closest grid point in $\widetilde{\mathcal{Y}}$, see Figure \ref{fig:discretization_F_CROMS} (a) for the illustration.

\begin{figure}[ht] 
    \centering
    \begin{subfigure}[t]{0.48\textwidth}
        \centering
        \includegraphics[width=\linewidth]{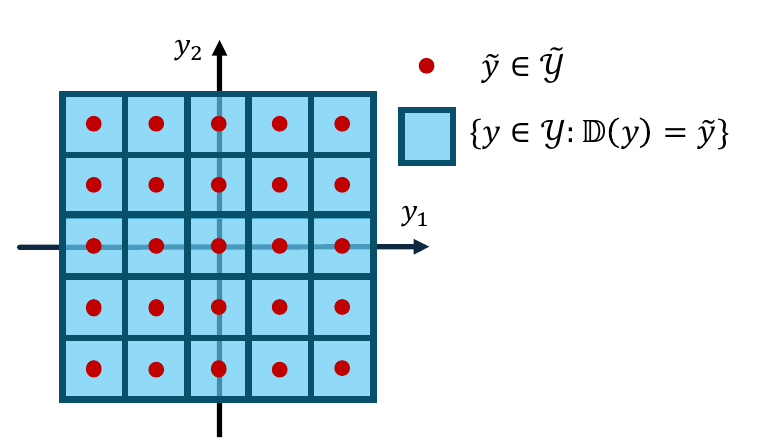}
        \caption{Discretization}
    \end{subfigure}
    \begin{subfigure}[t]{0.48\textwidth}
        \centering
        \includegraphics[width=\linewidth]{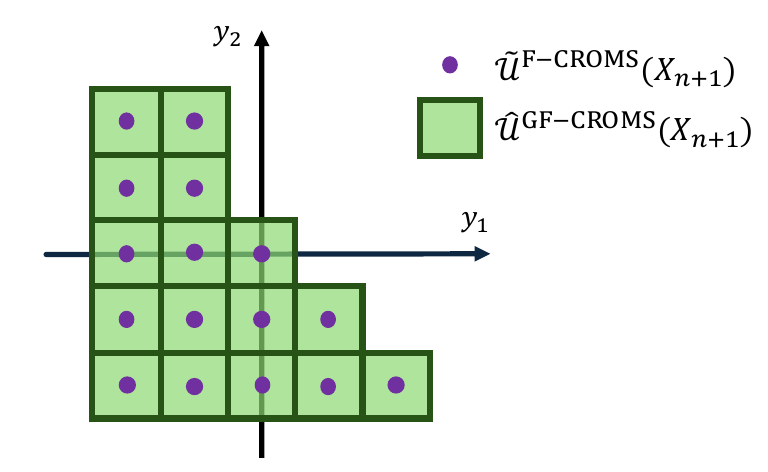}
        \caption{Final prediction set}
    \end{subfigure}
    \caption{Illustration for the grid-approximated F-CROMS with $\gY \subseteq \sR^2$. The red dots in panel (a) are grid points in $\widetilde{\mathcal{Y}}$, and the purple points in panel (b) are grid points in $\widetilde{\gU}^{\FCROMS}(X_{n+1})$, and the green area is the output prediction set $\widehat{\gU}^{\GFCROMS}(X_{n+1})$ in \eqref{eq:GFCROMS_set}.}
    \label{fig:discretization_F_CROMS}
\end{figure}

The subsequent implementation has three main steps. First, we apply the mapping $\sD$ to transform the labeled data $(X_i,Y_i)$ into the discretized version $(X_i, \sD(Y_i))$ for $i\in [n]$. Second, we call F-CROMS to output the prediction set $\widetilde{\mathcal{U}}^{\FCROMS}(X_{n+1}) \subset \widetilde{\mathcal{Y}}$ using the discretized dataset $\{(X_i, \sD(Y_i))\}_{i=1}^n$, which is computationally feasible since $\widetilde{\mathcal{Y}}$ contains finite points. Third, we output the following final prediction set via the inverse mapping,
\begin{align}\label{eq:GFCROMS_set}
    \widehat{\mathcal{U}}^{\GFCROMS}(X_{n+1}) = \LRl{y = \sD^{-1}(\tilde{y}): \tilde{y} \in \widetilde{\mathcal{U}}^{\FCROMS}(X_{n+1})}.
\end{align}
Then we make the decision by $\hat{z}^{\GFCROMS}(X_{n+1}) = \argmin_{z\in \gZ}\max_{c\in \widehat{\mathcal{U}}^{\GFCROMS}(X_{n+1})} \phi(c,z)$.
Since the prediction set \eqref{eq:GFCROMS_set} is a union of multiple convex sets (e.g., Figure \ref{fig:discretization_F_CROMS} (b)), we could adopt the gradient-based method in \citet{patel2024conformal} to facilitate solving the CRO problem, which has a polynomial time complexity. 
Since the discretized data points $\{(X_i, \sD(Y_i))\}_{i=1}^{n+1}$ are exchangeable, this approach retains a finite-sample marginal robustness guarantee. 

\begin{theorem}\label{thm:robustness_GFCROMS}
Assume that $\{(X_i,Y_i)\}_{i=1}^{n+1}$ are i.i.d., then grid-approximated F-CROMS satisfies $\sP\{Y_{n+1} \in \widehat{\gU}^{\GFCROMS}(X_{n+1})\}\geq 1-\alpha$ regardless of the choice of $\epsilon_{\rm{grid}}$.
\end{theorem}





{We now establish the decision optimality of the discretized predictor \eqref{eq:GFCROMS_set}. This analysis relies on two additional mild regularity conditions: {(i) the Lipschitz continuity of candidate models over the label, which can hold for the portfolio optimization problems with box or ellipsoid candidate models; (ii) the closeness of decision loss of the grid-approximated set and candidate set.} (formally stated as Assumptions C.1 and C.2 in Appendix C).

\begin{theorem}\label{thm:GFCROMS_optimality}
Suppose the assumptions of Theorem \ref{thm:FCROMS_optimality} hold. Under additional Assumptions C.1 and C.2, if the minimum risk gap in Theorem \ref{thm:optimality_ECROMS} satisfies $\beta_n \geq O\LRl{\epsilon_{\rm{grid}}+(L/\mu+B) \sqrt{\log (n \vee |\Lambda|)/n}}$, then we have 
    \begin{align*}
        \left|\E\LRm{\phi\LRs{Y_{n+1}, \hat{z}^{\GFCROMS}(X_{n+1})}} - v_{\Lambda}^*\right| \leq O\LRs{\epsilon_{\rm{grid}} + \frac{B}{n}+ \frac{L}{\mu} \sqrt{\frac{\log (n \vee |\Lambda|)}{n}}}.
    \end{align*}
\end{theorem}
Theorem \ref{thm:GFCROMS_optimality} yields a theoretical guide for the choice of $\epsilon_{\rm{grid}}$. To match the same convergence rate of the exact F-CROMS method in Theorem \ref{thm:FCROMS_optimality}, it suffices to set the grid size as $\epsilon_{\rm{grid}} \asymp n^{-1/2}$. 
Considering the case $\gY \subset \sR^p$ with bounded radius $R_{\gY}$, the total number of grid points in $\widetilde{\gY}$ should be $(2R_{\gY}/\epsilon_{\rm{grid}})^p = O(R_{\gY}^p n^{p/2})$. This polynomial complexity guarantees that the GF-CROMS maintains theoretical optimality while remaining feasibility.}

{Additionally, Appendix B.3 details a more computationally feasible procedure to construct an exact F-CROMS superset when $\phi(y, z_{\lambda}(x;q))$ exhibits piecewise monotonicity in $q$. This approach is specifically applied to portfolio optimization tasks using box or ellipsoid prediction sets discussed in Section \ref{sec:preliminaries}. However, we empirically found that this procedure can be quite conservative compared with the grid-approximated implementation when the sample size $n$ is small.
}

\subsection{J-CROMS: model selection by Jackknife+ method}
{Although F-CROMS provides strong guarantees via grid search, it may be computationally impractical for complex tasks. To obtain a more efficient alternative while preserving finite-sample validity, we introduce the J-CROMS framework. By leveraging the Jackknife+ method \citep{barber2021predictive}, J-CROMS eliminates the splitting bias of E-CROMS and offers a $1-2\alpha$ robustness guarantee with significantly lower computational complexity than the discretized F-CROMS in the previous subsection.}

Define the leave-one-out prediction set as $\gU_{\lambda}^{-i}(\cdot) = \{c\in \gY: S_{\lambda}(\cdot, c) \leq \hat{q}_{\lambda}^{-i}\}$, where $\hat{q}_{\lambda}^{-i} = Q_{(1-\alpha)(1+ (n-1)^{-1})}\LRs{\{S_{\lambda}(X_{\ell},Y_{\ell})\}_{\ell \in [n],\ell \neq i}}$.
J-CROMS performs model selection by $\hat{\lambda}^{-i} = \argmin_{\lambda \in \Lambda} \frac{1}{n-1}\sum_{\ell \in [n], \ell \neq i}\phi(Y_{\ell}, z_{\lambda}^{-i}(X_{\ell}))$, where $z_{\lambda}^{-i}(X_{\ell}) = \argmin_{z\in \gZ}\max_{c\in \gU_{\lambda}^{-i}(X_{\ell})} \phi(c,z)$. Then, the final prediction set is built as
\begin{align}\label{eq:JCROMS_set}
    \widehat{\gU}^{\text{J-CROMS}}(X_{n+1}) = \LRl{y\in \gY: \frac{\sum_{i=1}^n \mathbbm{1}\big\{ S_{\hat{\lambda}^{-i}}(X_{n+1}, y)\leq S_{\hat{\lambda}^{-i}}(X_{i}, Y_{i}) \big\}+1}{n+1} > \alpha},
\end{align}
and the final decision is 
\[
\hat{z}^{\text{J-CROMS}}(X_{n+1}) = \argmin_{z\in \gZ}\max_{c\in \widehat{\gU}^{\text{J-CROMS}}(X_{n+1})}\phi(c,z).
\]

\begin{theorem}\label{thm:JCROMS_robustness_finite_sample}
    If data $\{(X_i,Y_i)\}_{i=1}^{n+1}$ are i.i.d., the J-CROMS method satisfies $\sP\{Y_{n+1}\in \widehat{\gU}^{\JCROMS}(X_{n+1})\} \geq 1-2\alpha$.
\end{theorem}

\begin{theorem}\label{thm:JCROMS_optimality}
    Under the same conditions of Theorem \ref{thm:FCROMS_optimality}, the J-CROMS prediction set satisfies $\sP\{Y_{n+1}\in \widehat{\gU}^{\JCROMS}(X_{n+1})\} \geq 1-\alpha - O(n^{-1})$ and 
    \begin{align}
        \left|\E\LRm{\phi(Y, \hat{z}^{\JCROMS}(X))} - v_{\Lambda}^*\right| \leq O\LRl{\frac{L}{\mu} \sqrt{\frac{\log (n \vee |\Lambda|)}{n}}+\frac{B}{n}}.\nonumber
    \end{align} 
\end{theorem}

Theorem \ref{thm:JCROMS_robustness_finite_sample} shows that J-CROMS can guarantee a distribution-free $1-2\alpha$ level of robustness. Under certain stability conditions, \citet{barber2021predictive} proved that the Jackknife+ conformal prediction set can achieve $1-\alpha - o(1)$ marginal coverage. Essentially, the minimum risk gap condition in Theorem \ref{thm:FCROMS_optimality} ensures model selection stability, which leads to $1-\alpha - O(n^{-1})$ marginal robustness in Theorem \ref{thm:JCROMS_optimality} and  asymptotic optimality. Extensions to cross-validation are given in Appendix D.4. 

{For a general candidate model set, the J-CROMS prediction set $\widehat{\gU}^{\text{J-CROMS}}(X_{n+1})$ has no closed form and could be disconnected or nonconvex. 
When the candidate model set are box scores, that is $S_{\lambda}(x,y) = \|(y - \hat{\mu}_{\lambda}(x))/\hat{\sigma}_{\lambda}(x)\|_{\infty}$ for each $\lambda \in \Lambda$, $\hat{\mu}_{\lambda}(x)\in \sR^p$ and $\hat{\sigma}_{\lambda}(x)\in \sR^p$, we can construct a box-shaped superset. 
Let two endpoint vectors $c^{\rm{up}}$, $c^{\rm{lo}}\in\sR^p$ with components $c_{k}^{\text{up}} = Q_{(1-\alpha)(1+1/n)}\left(\LRl{\hat{\mu}_{\hat{\lambda}^{-i},k}(X_{n+1}) + \hat{\sigma}_{\hat{\lambda}^{-i},k}(X_{n+1})S_{\hat{\lambda}^{-i}}(X_i,Y_i)}_{i=1}^{n}\right)$ and $c_{k}^{\text{lo}} = -Q_{(1-\alpha)(1+1/n)}\left(\LRl{\hat{\sigma}_{\hat{\lambda}^{-i},k}(X_{n+1})S_{\hat{\lambda}^{-i}}(X_i,Y_i) - \hat{\mu}_{\hat{\lambda}^{-i},k}(X_{n+1})}_{i=1}^{n}\right).$
Then, a superset for \eqref{eq:JCROMS_set} is $\widehat{\gU}_{\rm{box}}^{\text{J-CROMS}}(X_{n+1}) = \LRl{y\in \sR^p: c^{\text{lo}} \leq y \leq c^{\text{up}}}$,
which satisfies $\widehat{\gU}^{\JCROMS}(X_{n+1}) \subseteq \widehat{\gU}_{\rm{box}}^{\JCROMS}(X_{n+1})$ and enables efficient CRO optimization. }

\subsection{Comparisons among CROMS methods}
    
To conclude this section, we summarize and compare the three algorithms in the CROMS framework in Table \ref{table:comparison}, including their computational complexity in terms of {in view of number of model selection steps and CRO optimaztions} and theoretical guarantees. 
E-CROMS exhibits the lowest computational complexity and is therefore well suitable for large sample sizes where the coverage error can be neglected. 
F-CROMS is the only method that provides a finite-sample, distribution-free \(1-\alpha\) marginal coverage guarantee, making it particularly recommended for classification tasks. 
J-CROMS serves as a practical trade-off between the two: it requires relatively low computational cost and achieves a \(1-2\alpha\) coverage.

\begin{table}[H]
    \begin{minipage}{\textwidth}
    \centering
    \renewcommand{\arraystretch}{1.2}
    \caption{Comparison of computational cost {in terms of number of model selection and solving CRO} on $n_{\rm{test}}$ test points and theoretical coverage guarantee.}\label{table:comparison} 
    \resizebox{\textwidth}{!}{
    \begin{tabular}{lcccc}
    \toprule
    Algorithm & No. of Model Selection & No. of Solving CRO & \makecell{Marginal Coverage\\ (Distribution-free)} &  Asymptotic Optimality \\
    \midrule
    E-CROMS & 1 & $n+n_{\rm{test}}$ & $1 - \alpha - O\LRs{\sqrt{\mathsf{v}/n}}$\footnote{Here $\mathsf{v}$ denotes the VC-dimension of candidate model set $\{S_{\lambda}:\lambda \in \Lambda\}$.}  & Theorem \ref{thm:optimality_ECROMS}\\ \vspace{0.2cm}
    F-CROMS &  \makecell{$n_{\rm{test}}(n+1)\cdot |\gY|$\\[2pt] \underline{or}\footnote{For regression task $\gY \subseteq \sR^p$, it refers to the complexity of grid-approximation with $\epsilon_{\rm{grid}}\asymp n^{-1/2}$.} $O(n_{\rm{test}}\cdot n^{p/2})$} & \makecell{$n_{\rm{test}}\cdot ((n+1)|\gY| + 1)$\\[2pt] \underline{or} $O(n_{\rm{test}}\cdot n^{p/2+1})$} & $1 - \alpha$  & Theorems \ref{thm:FCROMS_optimality}, \ref{thm:GFCROMS_optimality}\\ \vspace{0.2cm}
    J-CROMS & $n$ & $n(n-1) +n_{\rm{test}}$ & $1 - 2\alpha$ & Theorem \ref{thm:JCROMS_optimality} \\
    \bottomrule
    \end{tabular}}
    \end{minipage}
    \end{table}


\section{Individualized Optimal Model Selection}\label{sec:CROiMS}


In applications like precision medicine, the ideal model may vary greatly depending on the unique characteristics of each patient. This section introduces individualized model selection, extending the ideas in the CROMS framework to minimize conditional decision risk given the test data $X_{n+1}$. Instead of marginal robustness in Definition \ref{def:robustness}, individualized model selection needs to guarantee conditional robustness.

\begin{definition}[Asymptotic conditional robustness]\label{def:individual_robustness}
    The prediction set $\gU(X_{n+1})$ satisfies $1-\alpha$ level of asymptotic conditional robustness if it almost surely holds that
    \[\sP\LRl{\phi(Y_{n+1}, z(X_{n+1})) \leq \max_{c\in \gU(X_{n+1})}\phi(c, z(X_{n+1})) \mid X_{n+1}} \geq 1-\alpha + o(1).
    \]
\end{definition}

\subsection{Oracle model selection to minimize conditional decision risk}

For the pre-trained model $S_{\lambda}$, we denote the conditional quantile function as $q_{\lambda}^{co}(X) = \inf\{q\in \sR: \sP\{S_{\lambda}(X,Y) \leq q\mid X\}\}$. For data $(X, Y)\sim P$, the conditional oracle prediction set of model $S_{\lambda}$ is defined as $\gU_{\lambda}^{co}(X) = \{y\in \gY: S_{\lambda}(X, y) \leq q_{\lambda}^{co}(X)\}$,
which satisfies the conditional coverage property: $\sP\LRl{Y\in \gU_{\lambda}^{co}(X) \mid X} \geq 1-\alpha.$

Let $z_{\lambda}^{co}(X) = \argmin_{z\in \gZ}\max_{c\in \gU_{\lambda}^{co}(X)}$ be the decision enjoying the exact conditional robustness in Definition \ref{def:individual_robustness} without $o(1)$ term due to the conditional coverage.
To evaluate individual efficiency of decisions $\{z_{\lambda}^{co}(X): \lambda\in \Lambda\}$, we introduce the oracle conditional decision risk of model $S_{\lambda}$ as $\E[\phi(Y,z_{\lambda}^{co}(X))\mid X]$. 
Then, the index of the individually optimal model is defined as 
\begin{equation}\label{eq:oracle_individual_optimal_model}
    \begin{aligned}
    \lambda^*(X) = \argmin_{\lambda \in \Lambda}&\  \E\LRm{ \phi(Y, z_{\lambda}^{co}(X)) \mid X}.
    \end{aligned}
\end{equation}

Before proceeding further, it would be beneficial to discuss $\sP\LRl{Y\in \gU_{\lambda}^{co}(X) \mid X} \geq 1-\alpha$, which is also known as \emph{test-conditional coverage} in conformal prediction literature \citep{vovk2005algorithmic}.
As shown by \citet{vovk2012conditional} and \citet{lei2013distribution}, exact test-conditional coverage is impossible to achieve in a distribution-free regime, except for a noninformative trivial set $\gY$. Extensive research has been dedicated to constructing prediction sets that satisfy asymptotic or approximate conditional coverage, see Chapter 4 in \citet{angelopoulos2024theoretical}. Next, we define the asymptotic conditional optimality of one data-driven decision.

\begin{definition}[Asymptotic conditional optimality]\label{def:individual_efficiency}
    The decision $\hat{z}(X_{n+1})$ is asymptotically conditional optimal if $\lim_{n\to \infty}\E[\phi(Y_{n+1},\hat{z}(X_{n+1})) \mid X_{n+1}] = v_{\Lambda}^*(X_{n+1})$ almost surely, where $v_{\Lambda}^*(X_{n+1})=\E[\phi(Y_{n+1}, z_{\lambda^*(X_{n+1})}^{co}(X_{n+1}))\mid X_{n+1}]$ is the minimum conditional risk.
\end{definition}





\subsection{CROiMS: individualized model selection and robust decision}

To approximate the conditional decision risk in \eqref{eq:oracle_individual_optimal_model}, we first employ the kernel method to estimate the conditional quantile function $q_{\lambda}^{co}(X_{n+1})$. A similar localized strategy was used in \citet{guan2023localized} and \citet{hore2024conformal}. Equipped with a kernel function $H(\cdot,\cdot):\gX\times \gX \to \sR_{\geq 0}$, the conditional quantile function $q_{\lambda}^{co}(\cdot)$ can be estimated by the $(1-\alpha)$ weighted sample quantile $\hat{q}_{\lambda}(\cdot) = Q_{1-\alpha}\LRs{\{S_{\lambda}(X_i,Y_i)\}_{i=1}^n; \{w_i(\cdot)\}_{i=1}^n}$,
where $w_i(\cdot) = H(X_i,\cdot)/\sum_{j=1}^n H(X_j,\cdot)$ for $i\in [n]$ are weights. The localized conformal prediction (LCP) set of model $S_{\lambda}$ is defined as $\gU_{\lambda}^{\mathrm{LCP}}(X_{n+1}) = \LRl{c\in \gY: S_{\lambda}(X_{n+1},c) \leq \hat{q}_{\lambda}(X_{n+1})}$.
The induced decision is given by $z_{\lambda}^{\mathrm{LCP}}(X_{n+1}) = \argmin_{z\in \gZ}\max_{c\in \gU_{\lambda}^{\mathrm{LCP}}(X_{n+1})} \phi(c,z)$. Under mild nonparametric assumptions, $\hat{q}_{\lambda}(X_{n+1})$ is a consistent estimator to the conditional quantile $q_{\lambda}^{co}(X_{n+1})$, which indicates that $\phi(Y_{n+1}, z_{\lambda}^{\mathrm{LCP}}(X_{n+1})) \approx \phi(Y_{n+1}, z_{\lambda}^{co}(X_{n+1}))$. 

Next, we proceed with approximating the conditional decision risk $\E[\phi(Y_{n+1}, z_{\lambda}^{\mathrm{LCP}}(X_{n+1})) \mid X_{n+1}]$ through the labeled data. Denote $\gU_{\lambda}^{\mathrm{LCP}}(\cdot) = \{y\in\gY: S_{\lambda}(\cdot,y) \leq \hat{q}_{\lambda}(\cdot)\}$, and define auxiliary decisions as $z_{\lambda}^{\mathrm{LCP}}(X_i) = \argmin_{z\in \gZ}\max_{c\in \gU_{\lambda}^{\mathrm{LCP}}(X_i)}\phi(c,z)$ for $i\in [n]$. The \emph{individualized model selection} is conducted by solving the weighted ERM problem
\begin{align}
    \hat{\lambda}(X_{n+1}) = \argmin_{\lambda \in \Lambda}\sum_{i=1}^nw_i(X_{n+1})\cdot \phi(Y_i,z_{\lambda}^{\mathrm{LCP}}(X_i)).\nonumber
\end{align}
After obtaining the model index $\hat{\lambda}(X_{n+1})$, we output \emph{final individualized prediction set} as $\widehat{\gU}^{\mathrm{CROiMS}}(X_{n+1}) = \gU_{\hat{\lambda}(X_{n+1})}^{\mathrm{LCP}}(X_{n+1})$, and then \emph{final decision} is made by $\hat{z}^{\mathrm{CROiMS}}(X_{n+1}) = \argmin_{z\in \gZ}\max_{c\in \widehat{\gU}^{\mathrm{CROiMS}}(X_{n+1})} \phi(c,z)$.
The selection procedure above is referred to as \emph{Conformalized Robust Optimization with individualized Model Selection} (CROiMS), whose implementation is in Algorithm \ref{alg:CROiMS}.

\begin{algorithm}[ht]
	\renewcommand{\algorithmicrequire}{\textbf{Input:}}
	\renewcommand{\algorithmicensure}{\textbf{Output:}}
	\caption{CROiMS}
	\label{alg:CROiMS}
        \linespread{1.4}\selectfont
	\begin{algorithmic}
	\Require Pre-trained models $\{S_{\lambda}:\lambda\in \Lambda\}$, loss function $\phi$, labeled data $\{(X_{i},Y_i)\}_{i=1}^{n}$, test data $X_{n+1}$, kernel function $H$, robustness level $1-\alpha \in (0, 1)$.
        \For{$\lambda \in \Lambda$}\Comment{{Compute auxiliary decisions}}
        \State $\hat{q}_{\lambda}(\cdot) \gets Q_{1-\alpha}\LRs{\{S_{\lambda}(X_i,Y_i)\}_{i=1}^n; \{w_i(\cdot)\}_{i=1}^n}$ with $w_i(\cdot) = \frac{H(X_i,\cdot)}{\sum_{j=1}^n H(X_j,\cdot)}$.
        
        \State $\gU_{\lambda}^{\mathrm{LCP}}(\cdot) \gets \{c\in \gY: S_{\lambda}(\cdot,c)\leq \hat{q}_{\lambda}(\cdot)\}$.

        \State $z_{\lambda}^{\mathrm{LCP}}(X_i) \gets \argmin_{z\in \gZ}\max_{c\in \gU_{\lambda}^{\mathrm{LCP}}(X_i)}\phi(c,z)$ for $i\in [n]$.

        \EndFor

        \State $\hat{\lambda}(X_{n+1}) \gets \argmin_{\lambda \in \Lambda}\sum_{i=1}^{n} w_i(X_{n+1})\cdot \phi(Y_i, z_{\lambda}^{\mathrm{LCP}}(X_i))$. \Comment{{Select model via weighted ERM}}

        \State $\widehat{\gU}^{\mathrm{CROiMS}}(X_{n+1}) \gets \gU_{\hat{\lambda}(X_{n+1})}^{\mathrm{LCP}}(X_{n+1})$. \Comment{{Conditional prediction set for the selected model}}

        \State $\hat{z}^{\mathrm{CROiMS}}(X_{n+1}) \gets \argmin_{z\in \gZ}\max_{c\in \widehat{\gU}^{\mathrm{CROiMS}}(X_{n+1})} \phi(c,z)$. \Comment{{Make CRO decision}}
        
        \Ensure Prediction set $\widehat{\gU}^{\mathrm{CROiMS}}(X_{n+1})$ and decision $\hat{z}^{\mathrm{CROiMS}}(X_{n+1})$.
	\end{algorithmic}
\end{algorithm}


We further explore the conditional robustness and efficiency properties of CROiMS under a finite index set $\Lambda$. The results for a more general index set are deferred to Appendix E. For simplicity, we assume that $\gX \subseteq \sR^d$, and consider the kernel function $H(x_1,x_2) = \exp\LRs{-\frac{\|x_1-x_2\|^2}{h_n^2}}$. We start with introducing the basic distributional assumptions required to establish non-asymptotic conditional bounds.

\begin{assumption}\label{assum:cond_quantile}
    Let $p(x)$ be the density function of $X$. There exists some $\rho > 0$ such that $\inf_{x\in \gX} p(x) \geq \rho$. Let $f_{\lambda}(s|x)$ and $F_{\lambda}(s|x)$ be the conditional density function and distribution function of score $S_{\lambda}(X,Y)$ given $X=x$, respectively. There exist some constants $\bar{\tau}>0$, $\bar{\mu}>0$ such that for any $\lambda\in\Lambda$, $\sup_{s\in \sR}|F_{\lambda}(s|x) - F_{\lambda}(s| x^{\prime})| \leq \bar{\tau}\|x-x^{\prime}\|$ for any $x,x^{\prime}\in \gX$, and $\inf_{x\in \gX}f_{\lambda}(s|x) \geq \bar{\mu}$ for any $s\in [F_{\lambda}^{-1}(1-\alpha-b_n|x),F_{\lambda}^{-1}(1-\alpha+b_n|x)]$ with $b_n = O\LRl{\tau h_n\log(h_n^{-d})+\sqrt{\frac{\log (n \vee |\Lambda|)}{\rho nh_n^d}}}$, where $F_{\lambda}^{-1}(\cdot|x)$ is the conditional quantile function.
\end{assumption}

\begin{assumption}\label{assum:cond_expectation} 
    Suppose Assumption \ref{assum:Lipschitz_solution_1} holds by replacing $a_n$ with $b_n$ and $q_{\lambda}^o$ with $q_{\lambda}^{co}(x)$.
    Let $\Phi_{\lambda}(X,Y) = \phi(Y, z_{\lambda}(X; q_{\lambda}^{co}(X)))$.
    There exists a constant $\tau > 0$, for any $\lambda \in \Lambda$ and $x,x^{\prime}\in \gX$, $\left|\E[\Phi_{\lambda}(X,Y)\mid X=x] - \E[\Phi_{\lambda}(X,Y)\mid X=x^{\prime}]\right| \leq \tau \twonorm{x - x^{\prime}}$.
\end{assumption}

    

The conditions in Assumption \ref{assum:cond_quantile} are used to ensure the estimation consistency of conditional quantiles.
The smoothness condition for conditional risk in Assumption \ref{assum:cond_expectation} is common in the nonparametric estimation of conditional expectation \citep{wasserman2006all}. 

\begin{theorem}\label{thm:cond_coverage}
    Under Assumption \ref{assum:cond_quantile}, if $nh_n^d \to \infty$ and $h_n \to 0$, then CROiMS almost surely satisfies $1-\alpha - O\LRl{\sqrt{\frac{\log(n\vee |\Lambda|)}{\rho n h_n^d}} + \frac{\bar{\tau}}{\rho} h_n \log(h_n^{-d})}$ level of conditional robustness in Definition \ref{def:individual_robustness}. In addition, under Assumptions \ref{assum:loss}, \ref{assum:cond_quantile}, \ref{assum:cond_expectation}, it almost surely holds that
        \begin{align}
            &\left|\E\Big[\phi\LRs{Y_{n+1}, \hat{z}^{\mathrm{CROiMS}}(X_{n+1})} \mid X_{n+1}\Big] - v_{\Lambda}^*(X_{n+1})\right|\nonumber\\
            &\qquad\qquad\qquad\qquad\leq O\LRl{\LRs{\frac{L}{\bar{\mu}}+B}\sqrt{\frac{\log (n \vee |\Lambda|)}{\rho n h_n^d}} + \LRs{\frac{L }{\bar{\mu}}\frac{\bar{\tau}}{\rho} + \frac{\tau}{\rho}}h_n\log(h_n^{-d})}.\nonumber
        \end{align}
\end{theorem}


By comparing Theorems \ref{thm:cond_coverage} with the convergence rate of classical kernel estimation, we observe that individualized model selection introduces only an additional error factor of $\log|\Lambda|$, regarding the size of the candidate set. Assuming $|\Lambda| \leq O(n^{c})$ for constant $c>0$ and choosing the bandwidth $h_n \asymp n^{-\frac{1}{d+2}}$, CROiMS can achieve asymptotic conditional robustness and optimality with nearly optimal rate $O(n^{-\frac{1}{d+2}} \log n)$. In practice, we can set the bandwidth as $h_n = Cn^{-\frac{1}{d+2}}$ and tune the constant $C$ by the strategy in \citet{hore2024conformal}, ensuring that the effective sample size is greater than a specified value.



\section{Simulation Results}\label{sec:simulation}

In this section, we examine the numerical performance of the proposed methods. All the simulation results are based on 100 independent replications. For each replication, an independent draw of a labeled dataset $\gD_n = \{(X_i,Y_i)\}_{i=1}^n$ of size $n$ and a test dataset $\gD_{\rm test} = \{(X_{j},Y_{j})\}_{j=n+1}^{n+m}$ of size $m$ are generated. For simplicity, we let $\widehat{\gU}(X_{j})$ and $\hat{z}(X_{j})$ be the prediction set and decision returned by each method for test point $X_{j}$ for $j\in [m]$. {For classification tasks, F-CROMS returns the prediction set \eqref{eq:FCROMS_set}; for regression tasks $\gY \subset \sR^p$, F-CROMS is returns the grid-approximation set \eqref{eq:GFCROMS_set} with {the number of grid points $(A n^{1/2})^p$ and $A=3$}, and the sensitivity analysis on the constant $A$ is conducted in Appendix B.2.}

\subsection{Model selection in the averaged case} \label{Simulation: GCP}

In this case, we compare \texttt{E-CROMS}, \texttt{F-CROMS} and {\texttt{J-CROMS}} {with($\alpha/2$)/without($\alpha$)} against two baseline methods:
\begin{itemize}\parskip 4pt
     \item \texttt{Naive-CP}: The model $S_{\lambda}$ used in the CRO procedure is randomly selected from all pre-trained models $\{S_{\lambda}:\lambda \in \Lambda\}$, and then the conformal prediction set (\ref{eq:conformal_set_single_model}) is constructed for the randomly selected model.
     \item \texttt{E2E}: The labeled data $\{(X_i,Y_i)\}_{i=1}^n$ is split into $\gD_1$ and $\gD_2$, where $\gD_1$ is used to select a model via \texttt{E-CROMS}, and $\gD_2$ is used to construct the conformal prediction set (\ref{eq:conformal_set_single_model}) for the selected model. For annotations such as ``\texttt{E2E-0.75}'', it means that the dataset is split such that  $|\gD_1| = 0.75n$, $|\gD_2|=0.25n$. The implementation of \texttt{E2E} is outlined in Appendix C.10, which is adapted from the end-to-end approach in \citet{yeh2024end}. 
\end{itemize}
For evaluation, we compute the following metrics on the test data. Average loss is used to assess decision efficiency, with lower values indicating higher efficiency. Both marginal miscoverage and marginal misrobustness are expected to be less than or equal to $\alpha$.
\begin{itemize}\parskip 2pt
    \item[(1)] {\textit{Avg. Loss}} $=\frac{1}{m} \sum_{j = n+1}^{n+m} \phi(Y_{j},\hat{z}(X_{j}))$; 

    \item[(2)] {\textit{Marg. Misrob.}} $=\frac{1}{m}\sum_{j=n+1}^{n+m} \mathbbm{1}\left\{\phi(Y_{j}, \hat{z}(X_{j})) > \max_{c\in \widehat{\gU}(X_{j})}\phi(c,\hat{z}(X_{j}))\right\}$;

    \item[(3)] {\textit{Marg. Miscov.}} $=\frac{1}{m}\sum_{j=n+1}^{n+m} \mathbbm{1}\left\{Y_{j} \notin \widehat{\mathcal{U}}(X_{j})\right\}$.
\end{itemize} 

\subsubsection{Classification task}
We consider the classification task with $\mathcal{Y} =\{1,2,3, 4,5\}$ and $\mathcal{Z} = \{1,2,3,4,5\}$. The loss function is defined as a $|\gY|\times |\gZ|$ loss matrix $M$, that is $\phi(y,z) = M_{y,z}$ for $y\in \gY$ $z\in \gZ$.
The loss function carries practical clinical meaning in our framework as discussed in the real application of Section \ref{sec:application}: $\mathcal{Y}$ represents ordinal disease severity levels, and $\mathcal{Z}$ corresponds to available treatment options. The loss structure encodes two key clinical principles: (1) zero loss when the treatment perfectly matches disease severity (e.g., $\phi(1,1) = 0$), and (2) high loss for severe patients ($y=5$) when the treatment mismatched. 
This explicitly captures the dual dependence of clinical costs on both disease severity and treatment appropriateness.

The labeled and test data are i.i.d. generated by
$
\sP(Y=k|X) \propto \exp(-v_k(X)) \text{ for } k\in \gY,
$
where the first four coordinates of the covariate $X$ are \emph{categorical} variables taking value 0 or 1 with equal probability; the last three coordinates are standard normal random variables; and all coordinates are independent of each other. The functions $\{v_k(\cdot)\}_{k=1}^5$ have the following form: $v_k(X) = A_{k1} + A_{k2}X_1 + (A_{k3} + A_{k4}X_2)X_5 + (A_{k5} + A_{k6}X_3)X_6 + (A_{k7} + A_{k8}X_4)X_7$. The nonconformity score function is a greedy scoring rule tailored to the max-min policy from \citet{cortes2024decision}. The candidate model is defined as $S_{\lambda}(x,y) = \rho(x,y) + \lambda L(y)$ where $\lambda \in \Lambda$ is the score penalty parameter.

\begin{table}[H]
\centering
\small
\setlength{\tabcolsep}{4pt}
\caption{The evaluation metrics and running time (seconds) with the 95\% asymptotic standard error in parentheses under the classification task with $n=200$, $|\Lambda|=20$.}
\begin{tabular}{@{}clccccc@{}}
\toprule
$\alpha$ & \textbf{Method} & \textbf{Avg. Loss} & \textbf{Marg. Miscov.} & \textbf{Marg. Misrob.} & \textbf{Time} \\
\midrule
\multirow{8}{*}{0.10} 
 &Naive-CP & 4.032 (0.086) & 0.097 (0.007) & 0.037 (0.005) & 0.848 (0.027) \\
 &E2E-0.25 & 3.922 (0.096) & 0.100 (0.007) & 0.040 (0.005) & 1.843 (0.019) \\
 &E2E-0.50 & 3.860 (0.100) & 0.095 (0.008) & 0.043 (0.006) & 2.852 (0.019) \\
 &E2E-0.75 & 3.790 (0.103) & 0.100 (0.010) & 0.051 (0.008) & 3.892 (0.023) \\
 &E-CROMS & \textbf{3.470} (0.073) & 0.115 (0.007) & 0.059 (0.005) & 4.844 (0.026)\\
 &F-CROMS & \textbf{3.590} (0.086) & 0.095 (0.009) & 0.046 (0.006) & 45.004 (1.195)\\
 &J-CROMS($\alpha$) & \textbf{3.673} (0.100) & 0.091 (0.010) & 0.044 (0.006) & 95.211 (0.706)\\
 &J-CROMS($\alpha/2$) & 4.119 (0.068) & 0.045 (0.006) & 0.017 (0.003) & 94.671 (0.742)\\
\cmidrule(lr){1-6}
\multirow{8}{*}{0.20} 
 & Naive-CP & 2.938 (0.075) & 0.195 (0.009) & 0.138 (0.009) & 0.869 (0.029) \\
 & E2E-0.25 & 2.886 (0.081) & 0.195 (0.010) & 0.146 (0.011) & 1.859 (0.018) \\
 & E2E-0.50 & 2.974 (0.101) & 0.189 (0.012) & 0.136 (0.013) & 2.882 (0.019) \\
 & E2E-0.75 & 2.974 (0.114) & 0.197 (0.014) & 0.148 (0.016) & 3.932 (0.024) \\
 & E-CROMS & \textbf{2.613} (0.068) & 0.233 (0.010) & 0.194 (0.011) & 4.888 (0.029) \\
 & F-CROMS & \textbf{2.692} (0.073) & 0.189 (0.014) & 0.147 (0.014) & 34.534 (0.732) \\
 & J-CROMS($\alpha$) & \textbf{2.786} (0.086) & 0.188 (0.014) & 0.145 (0.014) & 95.970 (0.826) \\
 & J-CROMS($\alpha/2$) & 3.673 (0.100) & 0.091 (0.010) & 0.044 (0.006) & 95.554 (0.731) \\
\bottomrule
\end{tabular}
\label{tab: performance in average classification tasks}
\end{table}

Table \ref{tab: performance in average classification tasks} presents the evaluation metrics of the compared methods along with their corresponding running times. We observe that, except for \texttt{E-CROMS}, all methods consistently maintain marginal miscoverage close to the nominal level $\alpha$. And our proposed methods consistently achieve lower average loss, and both \texttt{F‑CROMS} and \texttt{J‑CROMS} empirically maintain valid coverage and robustness guarantees. Moreover, since the number of label categories is significantly smaller than the sample size, \texttt{F‑CROMS} runs faster than \texttt{J‑CROMS}, and also achieves better decision performance. Notably, the marginal misrobustness of all methods is much lower than $\alpha$, meaning the marginal coverage control in CRO is relatively conservative.

The simulation results in Figure \ref{fig:ACSSLP} illustrate the effect of labeled sample size $n$ and index set size $|\Lambda|$ on decision performance. Due to selection bias, \texttt{E-CROMS} yields miscoverage rates exceeding the nominal $\alpha$ when the sample size $n$ is small or the index set size $|\Lambda|$ is large, which aligns with the results in Theorem \ref{thm:optimality_ECROMS}. As expected, the three proposed CROMS methods outperform baselines in terms of averaged decision loss, as they effectively leverage all available data for efficient decisions. It is noteworthy that \texttt{F-CROMS} achieves a comparable loss to \texttt{E-CROMS} while still maintaining the finite-sample coverage. 

\begin{figure}[ht]
    \centering
    \begin{subfigure}[t]{\textwidth}
        \includegraphics[width=\textwidth]{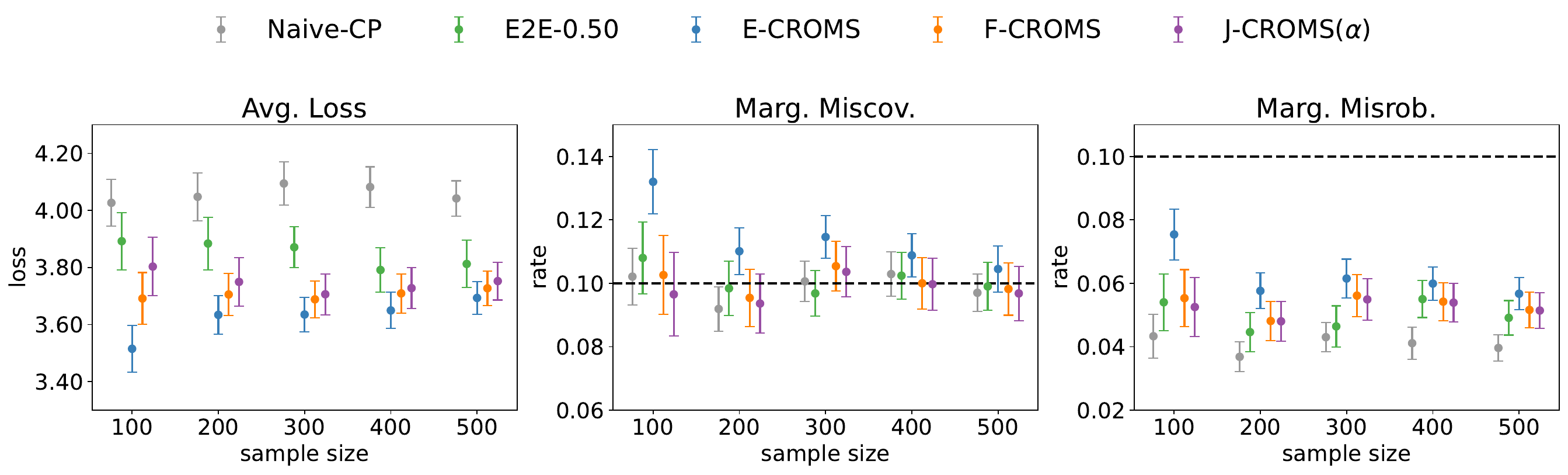}
        \caption{Varying sample size $n$ with $|\Lambda| = 10$.}
    \end{subfigure}

    \begin{subfigure}[t]{\textwidth}
        \includegraphics[width=\textwidth]{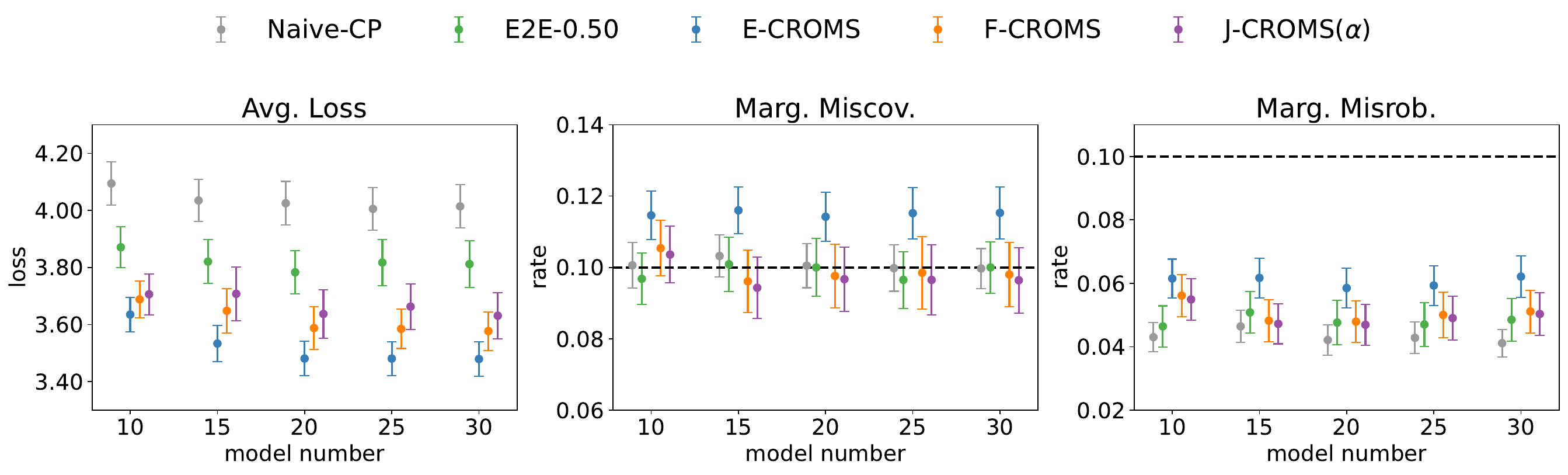}
        \caption{Varying candidate model numbers $|\Lambda|$ with $n = 300$.}
    \end{subfigure}

    \caption{The evaluation metrics with confidence intervals under the classification task. The nominal level is $\alpha = 0.1$.}
    \label{fig:ACSSLP}
\end{figure}

\subsubsection{Regression task}
In the regression task, we define the loss function as $\phi(y,z) = -y^{\top}z$, where $\mathcal{Y} = \mathbb{R}^2$ and $\mathcal{Z} = \{z \in [0,1]^2 : \|z\|_1 = 1, z \geq 0\}$. 
The labeled and test data are generated as follows:
$Y_1 = \sum_{k=1}^{50}\beta_{k1}X_k + \epsilon_1, Y_2 =\sum_{k=1}^{50}\beta_{k2}X_k + \epsilon_2$, where $\beta_{kl} = \mathbbm{1}\{(k+l) \text{ mod } 10 = 0\}$ for each $k\in[50], l\in[2]$. The features $\{X_k\}_{k=1}^{50}$ are i.i.d. truncated t-distribution (with degree 3), and the independent noises $\epsilon_1,\epsilon_2$ follow truncated normal distribution.


The candidate models are different box scores $S_{\lambda}(x,y) = \|\left(y-\hat{\mu}_{\lambda}(x)\right)/\hat{\sigma}_{\lambda}(x)\|_{\infty}$ for $\lambda \in \Lambda$, where $\hat{\mu}_{\lambda}$ and $\hat{\sigma}_{\lambda}$ are pre-trained mean and standard deviation functions, respectively. The candidate models are generated similarly to the procedure in \citet{liang2024conformal}. Specifically, $\hat{\mu}_{\lambda}, \hat{\sigma}_{\lambda}$ are obtained by first uniformly at random selecting 20\% features, then fitting the mean and standard deviation functions on the projected data, and finally embedding them back into the original 50-dimensional space. In other words, each $\lambda \in \Lambda$ corresponds to a distinct feature subset. The numerical results for ellipsoid candidate scores are deferred to Appendix G.1.3.

Table \ref{tab: performance in regression tasks} reports the evaluation metrics and running time of different methods. Our methods consistently achieve a lower average decision loss. In particular, the \texttt{J-CROMS} method has the best performance and requires significantly shorter computation time than \texttt{F-CROMS}. Even though \texttt{J-CROMS} has a distribution-free $1-2\alpha$ coverage by Theorem \ref{thm:JCROMS_robustness_finite_sample}, it empirically achieves $1-\alpha$ coverage in our simulation, which can be explained by the asymptotic results in Theorem \ref{thm:JCROMS_optimality}.
Because grid search introduces both approximation errors and computational overhead, \texttt{F‑CROMS} reaches a higher average loss comparable to \texttt{J‑CROMS}. However, it is important to emphasize that only \texttt{F‑CROMS} can provably guarantee $1-\alpha$ robustness in finite samples, irrespective of the underlying settings or data distribution.

\begin{table*}[ht]
\centering
\small
\setlength{\tabcolsep}{4pt}
\caption{The evaluation metrics and running time (seconds) with the 95\% asymptotic standard error in parentheses under the regression task with $n=150$, $|\Lambda|=25$.}
\begin{tabular}{@{}clccccc@{}}
\toprule
$\alpha$ & \textbf{Method} & \textbf{Avg. Loss} & \textbf{Marg. Miscov.} & \textbf{Marg. Misrob.} & \textbf{Time} \\
\midrule
\multirow{8}{*}{0.10} 
 & Naive-CP & -0.363 (0.072) & 0.103 (0.007) & 0.037 (0.004) & 0.670 (0.010) \\
 & E2E-0.25 & -0.583 (0.065) & 0.105 (0.008) & 0.035 (0.004) & 4.381 (0.012) \\
 & E2E-0.50 & -0.660 (0.064) & 0.091 (0.008) & 0.029 (0.004) & 7.732 (0.013) \\
 & E2E-0.75 & -0.682 (0.066) & 0.077 (0.010) & 0.023 (0.004) & 11.175 (0.034) \\
 & E-CROMS & \textbf{-0.732} (0.064) & 0.107 (0.008) & 0.032 (0.004) & 14.571 (0.103) \\
 & F-CROMS & \textbf{-0.714} (0.064) & 0.095 (0.008) & 0.023 (0.004) & 925.977 (41.851) \\
 & J-CROMS($\alpha$) & \textbf{-0.765} (0.061) & 0.089 (0.009) & 0.026 (0.004) & 80.561 (0.347) \\
 & J-CROMS($\alpha/2$) & -0.706 (0.061) & 0.038 (0.006) & 0.012 (0.003) & 80.633 (0.362) \\
\cmidrule(lr){1-6}
\multirow{8}{*}{0.20} 
 & Naive-CP & -0.393 (0.075) & 0.200 (0.010) & 0.068 (0.005) & 0.671 (0.007) \\
 & E2E-0.25 & -0.625 (0.069) & 0.204 (0.011) & 0.064 (0.005) & 4.395 (0.016) \\
 & E2E-0.50 & -0.744 (0.064) & 0.197 (0.011) & 0.059 (0.005) & 7.745 (0.014) \\
 & E2E-0.75 & -0.781 (0.065) & 0.177 (0.013) & 0.052 (0.005) & 11.181 (0.033) \\
 & E-CROMS & \textbf{-0.805} (0.061) & 0.210 (0.010) & 0.060 (0.004) & 14.516 (0.033) \\
 & F-CROMS & \textbf{-0.788} (0.061) & 0.206 (0.012) & 0.050 (0.005) & 443.787 (16.183) \\ 
 & J-CROMS($\alpha$) & \textbf{-0.823} (0.060) & 0.196 (0.011) & 0.054 (0.004) & 81.079 (0.331) \\
 & J-CROMS($\alpha/2$) & -0.765 (0.061) & 0.089 (0.009) & 0.026 (0.004) & 81.437 (0.345) \\
\bottomrule
\end{tabular}
\label{tab: performance in regression tasks}
\end{table*}


\subsection{Model selection in the individualized case}
In this section, we consider the individualized model selection setting studied in Section \ref{sec:CROiMS}. 
We compare the proposed \texttt{CROiMS} with the benchmarks mentioned in the previous subsection, as well as with \texttt{Naive-LCP}:
\begin{itemize}\parskip 4pt
     \item \texttt{Naive-LCP}: The model $S_{\lambda}$ is randomly selected from all pre-trained models $\{S_{\lambda}:\lambda \in \Lambda\}$, then the LCP set is constructed for the randomly selected model.
\end{itemize}
Notice that only \texttt{CROiMS} and \texttt{Naive-LCP} can enjoy asymptotic conditional robustness control, as defined in Definition \ref{def:individual_robustness}. For these two methods, the kernel function is chosen as $H(x,x^{\prime}) = \exp\LRs{-{\|x-x^{\prime}\|^2}/{h_n^2}}$, where the bandwidth is $h_n = C n^{-1/(d+2)}$ and matches the order of optimal choice in Theorems \ref{thm:cond_coverage}. Specifically, the constant $C$ is chosen such that $\hat{n}_{\text{eff}}(h_n) \geq 50$ when $n=200$, where $\hat{n}_{\text{eff}}(h_n)$ is the estimator of the effective sample size $n_{\text{eff}}(h_n) = n \cdot \frac{\mathbb{E}[\mathbb{E}[H(X,X^{\prime})\mid X]^2]}{\mathbb{E}[H(X,X^{\prime})^2]}$ and is computed in the pre-training dataset. 
To evaluate conditional coverage and robustness, we define $\gB$ as a set of balls in $\gX \subseteq \sR^d$, where each ball $B \in \mathcal{B}$ has its center randomly chosen and its radius set as the $10$-th (or $20$-th) percentile of the distances from the test dataset $\mathcal{D}_{\rm test}$ to the center. This ensures that each $B \in \mathcal{B}$ always contains $10\%$ (or $20\%$) of the test samples. To evaluate conditional decision loss, let $\mathcal{G}$ be a well-defined partition family of the covariate space $\mathcal{X}$, and we compute the average decision loss for each group $G \in \gG$. Denote $n_{B} = \sum_{j=n+1}^{n+m} \mathbbm{1}\{X_{j} \in B\}$ for $B \in \gB$ and $n_{G} = \sum_{j=n+1}^{n+m}\mathbbm{1}\{X_{j} \in G\}$ for $G\in \gG$. Let $W_j = \max_{c\in \widehat{\gU}(X_{j})}\phi(c,\hat{z}(X_{j}))$, we define:
\begin{itemize}\parskip 4pt
    \item[(4)] \textit{Worst Cond. Miscov.} $=\min_{B \in \mathcal{B}} \frac{1}{n_{B}} \sum_{j=n+1}^{n+m} \mathbbm{1}\left\{X_{j} \in B, Y_{j} \notin \widehat{\mathcal{U}}(X_{j})\right\}$.
    
    \item[(5)] \textit{Worst Cond. Misrob.} $=\min_{B \in \mathcal{B}} \frac{1}{n_{B}} \sum_{j=n+1}^{n+m} \mathbbm{1}\left\{X_{j} \in B, \phi(Y_{j}, \hat{z}(X_{j})) > W_j\right\}$.

    \item[(6)] \textit{Group Cond. Loss} $=\frac{1}{n_{G}} \sum_{j=n+1}^{n+m} \mathbbm{1}\{X_{j} \in G\} \phi(Y_{j},\hat{z}(X_{j}))$.
\end{itemize}


In this simulation, we consider the classification task with $\mathcal{Y} = \{1,2,3\}$ and $\gZ=\{1,2,3\}$. The loss function is defined as $\phi(z,y) = M_{z,y}$ for $z\in \gZ$ and $y\in \gY$.
The labeled data and test data are generated by
$
p(Y=k|X) \propto \exp(-\beta_k^{\top}X) \text{ for } k \in \{1,2,3\},
$
where $\beta_1^{\top}=(1,5,6),\ \beta_2^{\top} = (5,1,6),\ \beta_3^{\top} = (4,4,4)$ and $X = (X_1,X_2,X_3)^\top$ follows the multivariate
normal distribution $N(0, \Sigma)$. Other details of this simulation are given in Appendix G.2.
The candidate nonconformity score functions are $S_{\lambda}(x,y) = 1 - f_{\lambda}^{y}(x)$ for $\lambda \in \Lambda = \{1,2,3\}$, where $f_{\lambda}:\mathcal{X} \rightarrow [0,1]^{|\mathcal{Y}|}$ is the softmax layer of a classifier. Candidate models $\{S_\lambda:\lambda \in \Lambda\}$ are trained by the Gradient Boosting algorithm with the target variable $Y$ and various covariates including $(X_1,X_2)$, $(X_1,X_3)$ and $(X_2,X_3)$, respectively. 

We evaluate the decision performance of \texttt{CROiMS} and the baseline methods by varying the labeled sample size $n$. 
In Figure \ref{fig:ICFSM-loss} with a varying sample size, those methods aiming to control average decision risk, \texttt{E2E}, \texttt{E-CROMS}, and \texttt{F-CROMS}, fail to control conditional miscoverage and conditional misrobustness at the target level $\alpha$. In contrast, the conditional miscoverage and conditional misrobustness of both \texttt{Naive-LCP} and \texttt{CROiMS} gradually are close to $\alpha$ as the sample size $n$ increases. Moreover, \texttt{CROiMS} consistently outperforms the other methods in terms of average decision loss. In Figure \ref{fig:ICFSM-losscon}, \texttt{CROiMS} also achieves the lowest conditional loss across most regions $G \in \mathcal{G}$ among all methods. These results suggest that \texttt{CROiMS} performs better in individualized model selection.

\begin{figure}[H] 
\centering
\includegraphics[width=\textwidth]{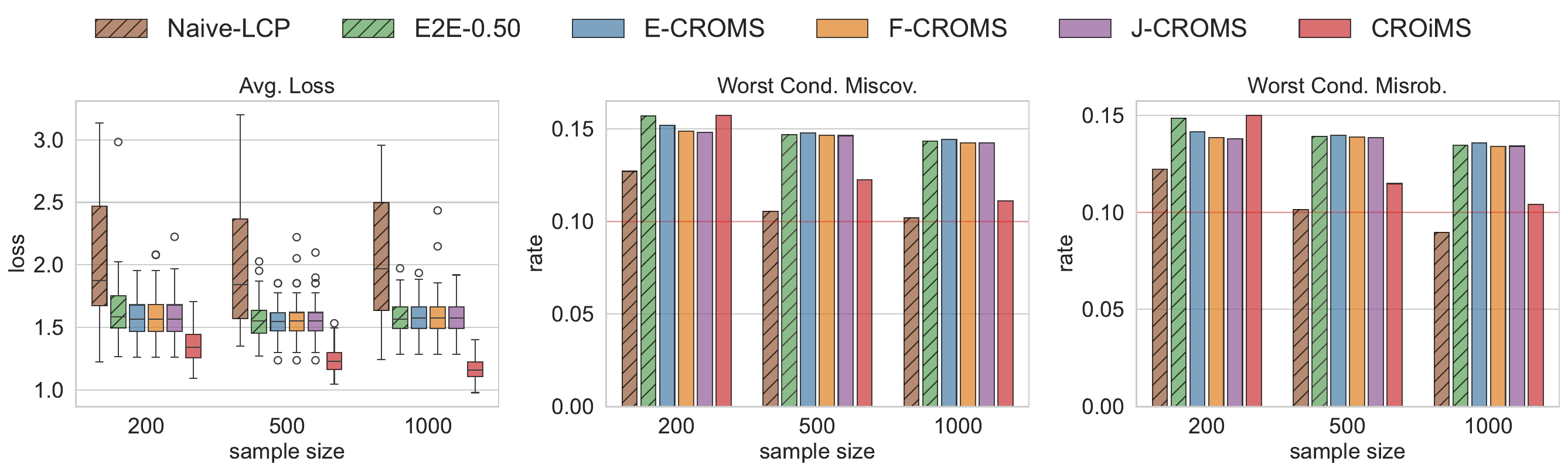}
\caption{The average loss, worst-case conditional miscoverage, and worst-case conditional misrobustness when varying sample size $n$ in the classification task, where candidate models are trained on different covariates, $|\Lambda| = 3$ and $\alpha = 0.1$.}
\label{fig:ICFSM-loss}
\end{figure}

\begin{figure}[H] 
\centering
\includegraphics[width=.88\textwidth]{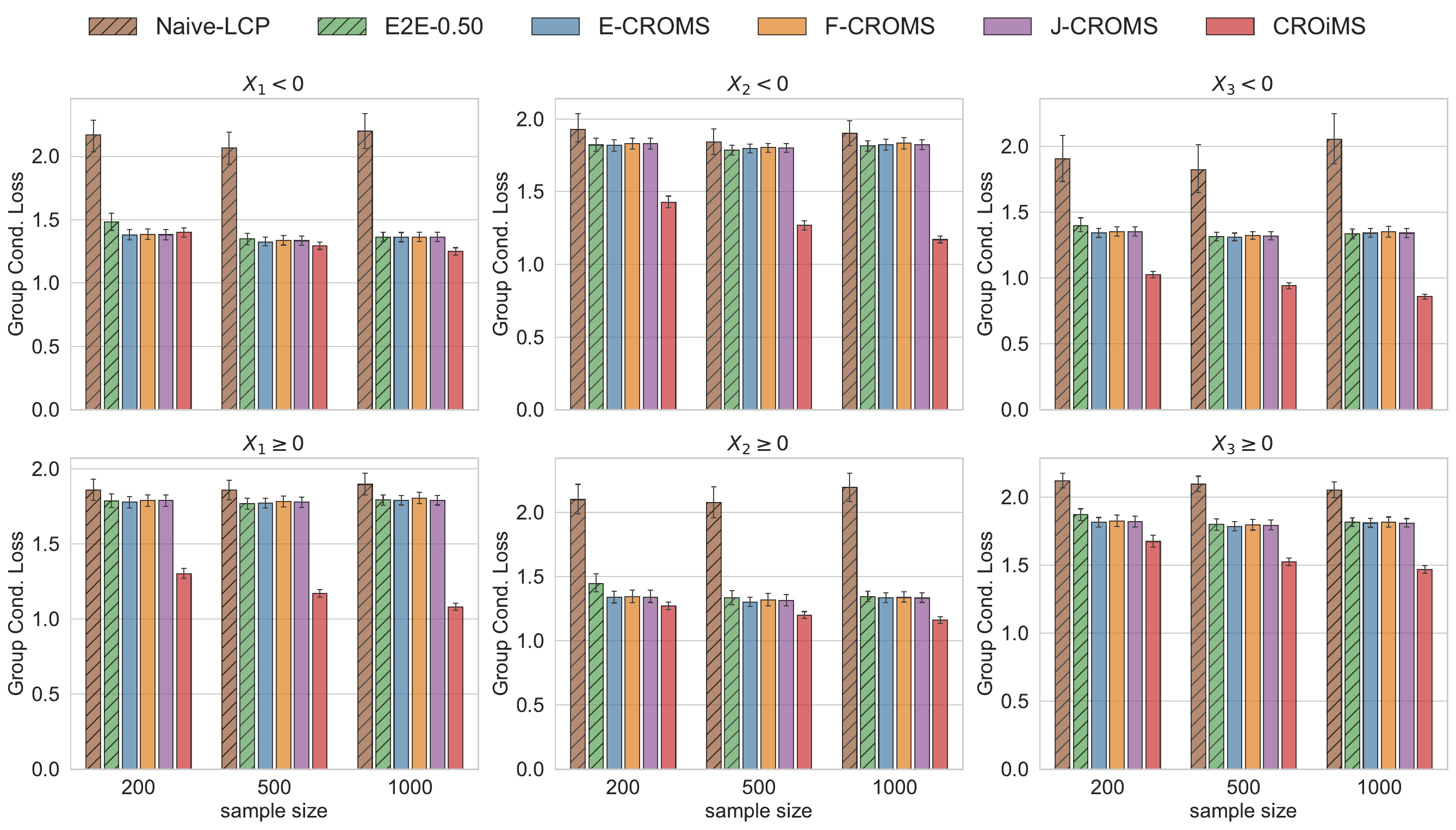}
\caption{The group conditional losses when varying sample size $n$ in the classification task, where candidate models are trained on different covariates, $|\Lambda| = 3$ and $\alpha = 0.1$.}
\label{fig:ICFSM-losscon}
\end{figure}


\section{Real Data Application}\label{sec:application}

The COVID-19 Radiography Database \citep{chowdhury2020can} comprises chest X-ray images (covariates) categorized into four classes (labels): Normal, Pneumonia, COVID-19, and Lung Opacity. To align with clinical priorities, we employ the loss matrix designed in \citet{kiyani2025decision}. 
We apply 8,240 images from this dataset for the experiment. Candidate models $\{S_{\lambda}: \lambda \in \Lambda\}$ with $|\Lambda|=4$ are trained on four randomly sampled datasets with size 1000, each with a distinct label distribution. In this experiment, the score function is $S_{\lambda}(x,y) = 1 - f_{\lambda}^{y}(x)$, where the classifier $f_{\lambda}:\mathcal{X} \rightarrow [0,1]^4$ is obtained with the convolutional neural network (CNN). In each replication, we randomly sample labeled and test data of size 300, respectively. 
For the similarity measurement of CROiMS, we consider the kernel function as $H(x,x^{\prime}) = \exp\left(-\|f_{\text{ex}}(x),f_{\text{ex}}(x^{\prime})\|^2/h^2\right)$, where $f_{\text{ex}}(x)$ is a pre-trained feature extractor that maps high-dimensional images $X$ ($3\times224\times 224$) to low-dimensional feature representations ($16\times 1$). Other details are deferred to Appendix G.3. 

To illustrate the robustness and efficiency, we examine the decision performance across two nominal misrobustness levels $\alpha=0.05$ and $\alpha=0.1$. In Figure \ref{fig:Medical diagnosis-SCP}, we compare Average Loss, Marginal Misrobustness, and Worst-case Conditional Misrobustness among different model selection approaches on COVID-19 dataset. We observe that the proposed methods \texttt{E-CROMS}, \texttt{F-CROMS}, and \texttt{CROiMS} achieve lower average loss compared to other benchmarks. Specifically, \texttt{CROiMS} results in the smallest average loss while maintaining control of worst-case conditional misrobustness. 
In Figure \ref{fig:Medical diagnosis-Part}, we present the group conditional loss across various prediction categories $\hat{Y} \in \{\text{``COVID-19''}, \text{``Lung Opacity''}, \text{``Normal''}, \text{``Pneumonia''}\}$. The results demonstrate that \texttt{CROiMS} consistently outperforms other methods by achieving lower losses across all groups. The significant improvements achieved by \texttt{CROiMS} highlight that individualized model selection based on decision efficiency could reduce the risk of the treatment strategy, aligning with the personalized medicine paradigm.

\begin{figure}[H] 
    \centering
    \includegraphics[width=1.0\textwidth]{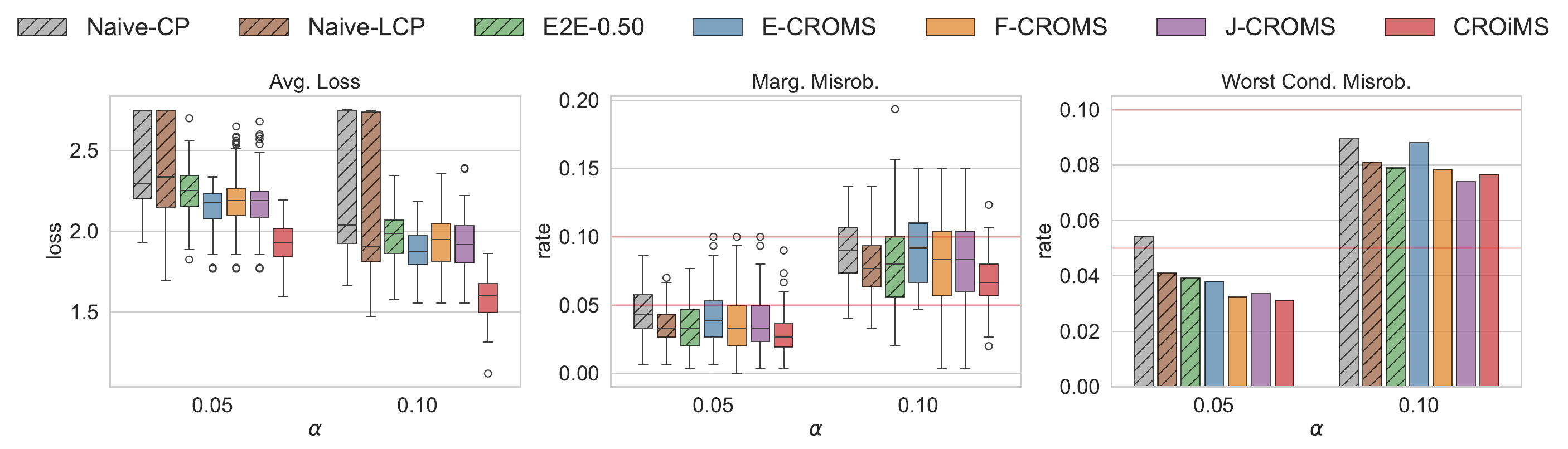}
    \caption{The average loss, marginal misrobustness, and worst-case conditional misrobustness on COVID-19 Radiography Database.}
    \label{fig:Medical diagnosis-SCP}
\end{figure}


\begin{figure}[H] 
    \centering
    \includegraphics[width=.88\textwidth]{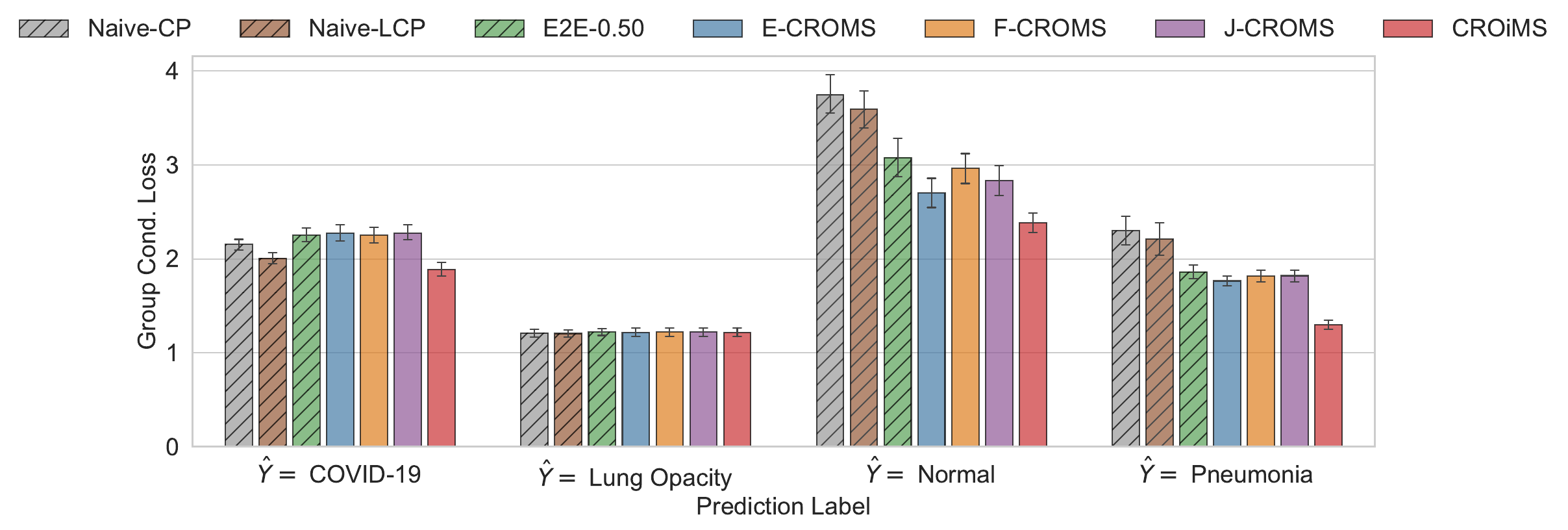}
    \caption{The group conditional loss on COVID-19 Radiography Database under $\alpha = 0.1$.}
    \label{fig:Medical diagnosis-Part}
\end{figure}



\section{Discussion and Future Work}


This paper explores optimal model selection for constructing a conformal prediction set in solving CRO problems. We propose two novel frameworks, CROMS and CROiMS, targeting at minimizing average and individual decision risks. Numerical results demonstrate substantial improvements in decision efficiency and robustness. Our model-free frameworks do not require data splitting, maximizing sample utilization in decision-making.
There are two promising future directions on this topic. 

(1) \textit{CROMS under a continuous model class.}
First, when $\Lambda$ is a continuous model space, the theoretical results in Theorems \ref{thm:robustness_ECROMS} and \ref{thm:optimality_ECROMS} show that E-CROMS retains asymptotic robustness and optimality, while F-CROMS satisfies finite-sample robustness and asymptotic optimality guarantees. For implementation, gradient-based methods can be employed to solve the ERM problems \eqref{eq:eff_CROMS_ERM} and \eqref{eq:hypothesized_avg_ERM} over the model index $\lambda$. Since most CRO problems are convex, we may adopt implicit differentiation techniques for convex programs, as proposed in \citet{bolte2021nonsmooth}, to compute the gradients of $\phi(Y_i, z_{\lambda}(X_i))$ with respect to $\lambda$. However, the corresponding ERM problems are generally nonconvex in $\lambda$, making it challenging to obtain a global minimizer in practice. Rather than relying on the exact \texttt{argmin} of the ERM formulation, a more practical analysis of robustness and efficiency should be grounded in a concrete algorithm, such as the bilevel optimization methods in \citet{ghadimi2018approximation}.


{(2) \textit{The discrepancy between coverage and robustness.}
The achieved robustness level may exceed the nominal level $1-\alpha$ because the coverage is sufficient but not a necessary condition for the robustness, which is also verified by our numerical result, e.g., Figure \ref{fig:ACSSLP}. Given the prediction set $\mathcal{U}(X)$ and the resulting decision $z(X)$, we define the ``robust region'' in the label space as $\gY_{\rm robust}(X) := \{y\in \gY: \phi(y, z(X)) \leq \max_{c\in \gU(X)}\phi(c, z(X))\}$. Clearly, $\mathcal{U}(X)$ is a subset of $\mathcal{Y}_{\mathrm{robust}}(X)$, and the discrepancy between marginal robustness and coverage can be quantified by the probability $\sP\{Y\in \gY_{\rm robust}(X)\setminus \gU(X)\}$. In the context of portfolio optimization, if we use the ellipsoidal prediction set, the corresponding robust region is a half-space in $\sR^p$. In Appendix A, we precisely analyze the gap $\mathbb{P}\{Y \in \mathcal{Y}_{\mathrm{robust}}(X) \setminus \mathcal{U}(X)\}$ for the portfolio optimization problem under a Gaussian data assumption. In such a setting, we can adjust the confidence level of the prediction set to achieve exact marginal robustness at level $1-\alpha$. However, when the data distribution is unknown, we could provide a procedure to construct a prediction set by directly controlling robustness in an asymptotic regime, but the finite-sample control becomes challenging and warrants further research.}






\phantomsection\label{supplementary-material}
\bigskip

\begin{center}

{\large\bf SUPPLEMENTARY MATERIAL}

\end{center}

The supplementary material contains the implementation details of F-CROMS, proofs of theoretical results, and deferred numerical settings and results.


\bibliography{references}

\newpage
\appendix
\allowdisplaybreaks
\numberwithin{equation}{section}
\numberwithin{figure}{section}
\numberwithin{assumption}{section}
\numberwithin{algorithm}{section}

\begin{center}
    {\LARGE\bf Supplementary Material for ``Optimal Model Selection for Conformalized Robust Optimization''}
\end{center}

\section*{Notations}
In Table \ref{table:notations}, we present the notations used throughout the main text and the appendix.

\begin{table}[ht]
    \centering
    \caption{Summary of notations.}
    \label{table:notations}
    \begin{center}
\resizebox{\linewidth}{!}{
\begin{tabular}{lll}
\toprule
Name & Definition & Comment\\
\midrule
$\mathcal{X}$ & Features space & $\mathcal{X} \subseteq \sR^d$ \\
$\mathcal{Y}$ & Label space & \\
$\gZ$ & Decision space &\\
\multicolumn{3}{@{}c@{}}{} \\
$\phi(y,z)$ & Loss function of decision $z$ on the label $y$& $|\phi(y,z)| \leq B$\\
$\gU_{\lambda}(x;q)$ & Conformal prediction set of score $S_{\lambda}$ with threshold $q$ & $\gU(x;q) = \{c\in \gY: S_{\lambda}(X,c) \leq q\}$\\
$z_{\lambda}(x;q)$ & CRO solution under prediction set $\gU_{\lambda}(x;q)$ & $z(x;q) = \argmin_{z\in \gZ} \max_{c\in \gU_{\lambda}(x;q)} \phi(c,z)$\\
\multicolumn{3}{@{}c@{}}{} \\
$F_{\lambda}(\cdot)$ & CDF of score $S_{\lambda}(X,Y)$ & $S_{\lambda}(X,Y)\sim F_{\lambda}$\\
$F_{\lambda}(\cdot|x)$ & Conditional CDF of score $S_{\lambda}(X,Y)$ given $X=x$ & $S_{\lambda}(X,Y)\mid X=x\sim F_{\lambda}(\cdot|x)$\\
$F_{\lambda}^{-1}(\cdot)$ & Quantile of score $S_{\lambda}(X,Y)$& $F_{\lambda}^{-1}(1-\alpha) = \inf\{q: F_{\lambda}(q) \geq 1-\alpha\}$\\
$F_{\lambda}^{-1}(\cdot|x)$ & Conditional quantile of score $S_{\lambda}(X,Y)$ given $X=x$& $F_{\lambda}^{-1}(1-\alpha|x) = \inf\{q: F_{\lambda}(q|x) \geq 1-\alpha\}$\\
\multicolumn{3}{@{}c@{}}{} \\
$q_{\lambda}^{o}$ & $(1-\alpha)$-th quantile of score $S_{\lambda}(X,Y)$ & $q_{\lambda}^{o} = F_{\lambda}^{-1}(1-\alpha)$\\
$Q_{1-\alpha}(\{s_i\}_{i=1}^n)$ & $(1-\alpha)$-th quantile of  $\frac{1}{n}\sum_{i=1}^n \delta_{s_i}$&\\
$\hat{q}_{\lambda}$ & $(1-\alpha)(1+n^{-1})$-th quantile of  $\frac{1}{n}\sum_{i=1}^n \delta_{S_{\lambda}(X_i,Y_i)}$ & $\hat{q}_{\lambda} = Q_{(1-\alpha)(1+n^{-1})}(\{S_{\lambda}(X_i,Y_i)\}_{i=1}^n)$\\
$\hat{q}_{\lambda}^y$ & $(1-\alpha)$-th quantile of $\frac{1}{n+1}\left(\sum_{i=1}^n \delta_{S_{\lambda}(X_i,Y_i)} + \delta_{S_{\lambda}(X_{n+1},y)}\right)$ & $\hat{q}_{\lambda}^y = Q_{1-\alpha}(\{S_{\lambda}(X_i,Y_i)\}_{i=1}^n \cup \{S_{\lambda}(X_{n+1},y)\})$\\
\multicolumn{3}{@{}c@{}}{} \\
$q_{\lambda}^{o}(x)$ & $(1-\alpha)$-th conditional quantile of $S_{\lambda}(X,Y)$ given $X=x$ & $q_{\lambda}^{o}(x) = F_{\lambda}^{-1}(1-\alpha|x)$\\
$Q_{1-\alpha}(\{s_i\}_{i=1}^n; \{w_i\}_{i=1}^n)$ & $(1-\alpha)$-th quantile of $\sum_{i=1}^n w_i \delta_{s_i}$ &\\
$w_i(x)$ & kernel weight function & $w_i(x) = H(X_i,x)/\sum_{j=1}^n H(X_j, x)$\\
$\hat{q}_{\lambda}(x)$ & $(1-\alpha)$-th quantile of  $\sum_{i=1}^n w_i(x)\delta_{S_{\lambda}(X_i,Y_i)}$& $\hat{q}_{\lambda}(x) = Q_{1-\alpha}(\{S_{\lambda}(X_i,Y_i)\}_{i=1}^n; \{w_i(x)\}_{i=1}^n)$\\
\bottomrule
\end{tabular}
}
\end{center}
\end{table}

\section{The gap between marginal coverage and robustness}\label{appen:gap_robustness_coverage}

\subsection{Quantify the gap by robust region}
 For the prediction set $\gU(X)$, we denote the corresponding CRO decision as $z(X) = \argmin_{z\in \gZ}\max_{c\in \gU(X)}\phi(c,z)$. Let us recall the definition of $(1-\alpha)$ marginal robustness and coverage:
\begin{align}
    &\sP\{Y \in \gU(X)\} \geq 1-\alpha, \tag{Coverage}\\
    &\sP\left\{\phi(Y, z(X)) \leq \max_{c\in \gU(X)} \phi(c,z(X))\right\} \geq 1-\alpha. \tag{Robustness}
\end{align}
If $\gU(X)$ covers the true label $Y$, the robustness event will be satisfied automatically by the definition. Hence, the marginal robustness is implied by the marginal coverage. However, the prediction set $\gU(X)$ is the subset of the \emph{robust region} $\gY_{\rm{robust}}(X) = \{y\in \gY: \phi(y, z(X)) \leq \max_{c\in \gU(X)} \phi(c,z(X))\}$. Since $\gU(X)\subseteq \gY_{\rm{robust}}(X)$, the discrepancy between the marginal robustness and coverage can be computed by
\begin{align}
    \Delta(\gU) := &\sP\left\{\phi(Y, z(X)) \leq \max_{c\in \gU(X)} \phi(c,z(X))\right\} - \sP\{Y \in \gU(X)\}\nonumber\\
    = &\sP\Big\{Y \in \gY_{\rm{robust}}(X)\setminus \gU(X)\Big\}.
\end{align}
The scale of the gap $\Delta(\gU) > 0$ largely depends on the data distribution, the structure of the prediction set, and the loss function.

In the portfolio optimization task, if we choose the prediction set $\gU(X) = \{y\in \sR^p: (y-\hat{\mu}(X))^{\top}\hat{\Sigma}^{-1}(X)(y-\hat{\mu}(X)) \leq \hat{q}\}$. Let $z(X) = \argmin_{z\in \gZ}\max_{c\in \gU(X)} -c^{\top}z$ be the CRO solution. By taking the dual of the inner maximization in the CRO problem, we have
\begin{align*}
    \max_{c\in \gU(X)} -c^{\top}z(X) = \sqrt{\hat{q}} \|\hat{\Sigma}^{1/2}(X) z(X)\|_2 - \hat{\mu}(X)^{\top}z(X).
\end{align*}
Hence, the robust region is given by
\begin{align*}
    \gY_{\rm robust}(X) &= \LRl{y\in \sR^p: -y^{\top}z(X) \leq \sqrt{\hat{q}}\cdot \|\hat{\Sigma}^{1/2}(X) z(X)\|_2 - \hat{\mu}(X)^{\top}z(X)}\\
    &= \LRl{y\in \sR^p: \frac{-(y - \hat{\mu}(X))^{\top}z(X)}{\|\hat{\Sigma}^{1/2}(X) z(X)\|_2} \leq \sqrt{\hat{q}}},
\end{align*}
which is a half-space in $\sR^p$. The next proposition characterizes the gap $\Delta(\gU)$ for the elliptical prediction set under the Gaussian distribution in the portfolio optimization task.

\begin{proposition}\label{pro:cover_robust_gap_Gaussian}
    Let $Y|X \sim N(\mu(X), \Sigma(X))$ with $Y \in \sR^p$ and $X \in \sR^d$ and assume $\Sigma(X)$ is of full rank. We consider the prediction set $\gU(X) = \{y\in \sR^p: (Y - \mu(X))^{\top} \Sigma^{-1}(X)(Y - \mu(X)) \leq \chi_{p, 1-\alpha}^2\}$, where $\chi_{p, 1-\alpha}^2$ is the $1-\alpha$ quantile of Chi-square distribution with degree of freedom $p$. Let $\phi(y,z) = -y^{\top}z$, then the marginal robustness satisfies that
    \begin{align}\label{eq:gap_Gaussian}
        \Delta(\gU) = \Phi(\sqrt{\chi_{p, 1-\alpha}^2}) - (1-\alpha),
    \end{align}
    where $\Phi(\cdot)$ is the c.d.f. of standard normal distribution.
\end{proposition}
\begin{proof}    
    By the definition of a robust region, we have
    \begin{align*}
        \sP\LRl{\phi(Y, z(X)) \leq \max_{c\in \gU(X)} \phi(c,z(X))}
        =& \sP\LRl{\frac{-z(X)^{\top}(Y - \mu(X))}{\|\Sigma^{1/2}(X) z(X)\|_2} \leq \sqrt{\chi_{p, 1-\alpha}^2}}.
    \end{align*}
    Then the conclusion follows from the fact $\frac{z(X)^{\top}(Y - \mu(X))}{\|\Sigma^{1/2}(X) z(X)\|_2}\mid X \sim N(0,1)$.
\end{proof}

In the case of Proposition \ref{pro:cover_robust_gap_Gaussian}, according to \eqref{eq:gap_Gaussian}, the gap $\Delta(\gU)$ is increasing as the dimension of label $p$ grows when $1-\alpha > 0.5$. Notice that, if we change the confidence level of prediction set from $1-\alpha$ to $1-\alpha-\Delta(\gU)$, then the final decision will satisfies the exact $1-\alpha$ level of robustness.
However, if the model is misspecified and the data distribution is unknown, finding the modification above is difficult.

\subsection{A direct approach for robustness control}
For the prediction set $\gU(X;q) = \{c\in \gY: s(X,c) \leq q\}$ with $q\in \sR$, we define its CRO decision as $z(X;q) = \argmin_{z\in \gZ}\max_{c\in \gU(X;q)} \phi(c,z)$. Given the data $(x,y)$, denote the robustness indicator as
\begin{align*}
    R(x,y;q) = \mathbbm{1}\LRl{\phi(y,z(x;q)) \leq \max_{c\in \gU(x;q)} \phi(c, z(x;q))}.
\end{align*}
We consider the following constrained optimization problem
\begin{align}\label{eq:CRC_threshold}
    \hat{q} = \min\LRl{q\in \sR: \frac{1}{n}\sum_{i=1}^n R(X_i,Y_i;q) \geq 1-\alpha.}
\end{align}
Then we make the decision for test point by $\hat{z}_{\rm{robust}}(X_{n+1}) = \argmin_{z\in \gZ}\max_{c\in \gU(X_{n+1};\hat{q})}$.
Define the function class $\gR = \{\mathbbm{1}\{\phi(y,z(x;q)) \leq \max_{c\in \gU(x;q)} \phi(c, z(x;q))\}: q\in \sR\}$.
Let $\{\xi_i\}_{i=1}^n$ are i.i.d. random variables taking $+1$ or $-1$ with equal probability. Denote the Rademacher complexity of $\gR$ as $\mathfrak{R}_n(\gR) = \E\LRm{\sup_{q \in \sR} n^{-1}\left|\sum_{i=1}^n\xi_i R(X_i,Y_i;q)]\right|}$.
\begin{theorem}
    Suppose $\{(X_i,Y_i)\}_{i=1}^{n+1}$ are i.i.d., under regular conditions, we have $\sP\LRl{R(X_{n+1},Y_{n+1};\hat{q}) = 1} \geq 1-\alpha-2\mathfrak{R}_n(\gR)$.
\end{theorem}

\begin{proof}
    Since $\hat{q}$ depends only on the labeled data, using the symmetrization technique,
    \begin{align*}
        \E\LRm{R(X_{n+1},Y_{n+1};\hat{q})} - (1-\alpha) &\geq \E\LRm{R(X_{n+1},Y_{n+1};\hat{q}) - \frac{1}{n}\sum_{i=1}^n R(X_i,Y_i;\hat{q})}\nonumber\\
        &\geq -  \E\LRm{\sup_{q\in \sR}\E\LRm{R(X_{n+1},Y_{n+1};q) - \frac{1}{n}\sum_{i=1}^n R(X_i,Y_i;q) \mid \gD_n}}\nonumber\\
        &= -  \E\LRm{\sup_{q\in \sR}\LRabs{\frac{1}{n}\sum_{i=1}^n R(X_i,Y_i;q) - \E\LRm{R(X_{i},Y_{i};q)}}}\nonumber\\
        &\leq -2 \mathfrak{R}_n(\gR),
    \end{align*}
    where the first inequality holds due to \eqref{eq:CRC_threshold}.
\end{proof}



\section{Implementation of F-CROMS and J-CROMS}


\subsection{Efficient implementation of F-CROMS}
The F-CROMS method builds upon the full conformal prediction framework by integrating the test point into the selection procedure. This preserves exchangeability and provides distribution-free robustness guarantees.
However, methods based on full conformal prediction generally computationally expensive. In what follows, we present an efficient implementation of F-CROMS that can substantially reduce unnecessary computations.

For each model $\lambda \in \Lambda$, we define the lower and upper quantiles as:
\begin{align}
\hat{q}_{\lambda}^{-} &= Q_{(1-\alpha)(1+1/n)-1/n}(\{S_{\lambda}(X_i,Y_i)\}_{i=1}^{n}),\quad \hat{q}_{\lambda} = Q_{(1-\alpha)(1+1/n)}(\{S_{\lambda}(X_i,Y_i)\}_{i=1}^{n}).\notag
\end{align}
By the property of sample quantile, it holds that
\begin{equation}\label{eq:qy_expression}
    \hat{q}_{\lambda}^y = \begin{cases}
        \hat{q}_{\lambda}^{-} & \text{if } S_{\lambda}(X_{n+1},y) \leq \hat{q}_{\lambda}^{-}\\
        \hat{q}_{\lambda} & \text{if } S_{\lambda}(X_{n+1},y) \geq \hat{q}_{\lambda}\\
        S_{\lambda}(X_{n+1}, y) & \text{if } \hat{q}_{\lambda}^{-}< S_{\lambda}(X_{n+1},y) < \hat{q}_{\lambda}.
    \end{cases}.
\end{equation}
The associated lower and upper losses are defined by $\mathcal{L}_{n}^{-}(\lambda) = \sum_{i=1}^{n}\phi(Y_i,z_{\lambda}(X_i; \hat{q}_{\lambda}^-))$ and $\mathcal{L}_{n}(\lambda) = \sum_{i=1}^{n}\phi(Y_i,z_{\lambda}(X_i;\hat{q}_{\lambda}))$.
By the definition of $\hat{\lambda}^y$, we know
\begin{equation*}
\hat{\lambda}^y = \argmin_{\lambda\in \Lambda} \LRl{\mathcal{L}_{n+1}(\lambda; y) = \sum_{i=1}^{n} \phi\left(Y_i, z_{\lambda}(X_i; \hat{q}_{\lambda}^y)\right) + \phi\left(y, z_{\lambda}(X_{n+1}; \hat{q}_{\lambda}^y)\right)}.
\end{equation*}
The objective function admits the following case-wise expression:
\begin{align*}
    \mathcal{L}_{n+1}(\lambda;y) = 
\begin{cases}
\gL_n^{-}(\lambda) + \phi\left(y,z_{\lambda}(X_{n+1};\hat{q}_{\lambda}^-)\right) & \text{ if } S_{\lambda}(X_{n+1},y) \leq \hat{q}_{\lambda}^{-} \\
\gL_n(\lambda) + \phi\left(y,z_{\lambda}(X_{n+1};\hat{q}_{\lambda})\right) & \text{ if } S_{\lambda}(X_{n+1},y)\geq \hat{q}_{\lambda} \\
\begin{matrix}
    &\sum_{i=1}^{n} \phi(Y_i, z_{\lambda}(X_i; S_{\lambda}(X_{n+1},y)))\\
    &+ \phi(y, z_{\lambda}(X_{n+1}; S_{\lambda}(X_{n+1},y)))
\end{matrix}
& \text{if } \hat{q}_{\lambda}^{-}< S_{\lambda}(X_{n+1},y) < \hat{q}_{\lambda}\\
\end{cases}.
\end{align*}
In practice, when the sample size $n$ is sufficiently large, the subset $\{y \in \mathcal{Y}: \hat{q}_{\lambda}^{-} < S_{\lambda}(X_{n+1},y) < \hat{q}_{\lambda}\}$
is typically small (see Lemma \ref{lemma:qy_diff_VC}), and the subset $\{y \in \mathcal{Y}: S_{\lambda}(X_{n+1},y) \leq \hat{q}_{\lambda}^{-} \text{ or } S_{\lambda}(X_{n+1}, y) \geq \hat{q}_{\lambda} \}$ constitutes a large portion of $\mathcal{Y}$. Therefore, for most labels $y\in \gY$,  the loss $\gL_{n+1}(\lambda;y)$  can be evaluated rapidly for all $\lambda \in \Lambda$. After that, the model index $\hat{\lambda}^{y}$ can be obtained and the inclusion $y\in \widehat{\gU}^{\FCROMS}(X_{n+1})$ can be determined. By storing and effectively reusing the precomputed values $\hat{q}_{\lambda}^{-},\ \hat{q}_{\lambda}$ and $\mathcal{L}_{n}^{-}(\lambda),\ \mathcal{L}_{n}(\lambda)$, which are independent of the hypothesized $y$, the overall computational cost is significantly reduced. The complete procedure is summarized in Algorithm \ref{alg:AOA}.

\begin{algorithm}[H]
    \caption{Efficient algorithm of F-CROMS (for both classification and regression)}
    \label{alg:AOA}
    \renewcommand{\algorithmicrequire}{\textbf{Input:}}
    \renewcommand{\algorithmicensure}{\textbf{Output:}}
    \linespread{1.2}\selectfont
    \begin{algorithmic}[1]
        \Require Pre-trained models $\{S_{\lambda}:\lambda\in \Lambda\}$, loss function $\phi$, test data $X_{n+1}$, labeled dataset $\mathcal{D}_n = \{(X_{i},Y_i)\}_{i=1}^{n}$, robustness level $1-\alpha \in (0, 1)$.
        

        \State Initialize $\widehat{\mathcal{U}}^{\FCROMS}(X_{n+1}) \gets \emptyset$.
        \For{$y \in \mathcal{Y}$}
            \For{$\lambda \in \Lambda$}
                \If{$S_{\lambda}(X_{n+1},y) \leq \hat{q}_{\lambda}^{-}$}
                    \State $ \hat{q}_{\lambda}^{y} \gets \hat{q}_{\lambda}^{-}$, $\mathcal{L}_{n+1}(\lambda;y)\gets  \gL_n^{-}(\lambda) + \phi\left(y,z_{\lambda}(X_{n+1};\hat{q}_{\lambda}^-)\right)$.
                \ElsIf{$\hat{q}_{\lambda} \leq S_{\lambda}(X_{n+1},y)$}
                    \State $\hat{q}_{\lambda}^{y} \leftarrow \hat{q}_{\lambda}$, $\mathcal{L}_{n+1}(\lambda;y) \gets  \gL_n(\lambda) + \phi\left(y,z_{\lambda}(X_{n+1};\hat{q}_{\lambda})\right)$.
                \ElsIf{$\hat{q}_{\lambda}^{-} < S_{\lambda}(X_{n+1},y) < \hat{q}_{\lambda}$}


                    \State $\mathcal{L}_{n+1}(\lambda;y) \gets \sum_{i=1}^{n} \phi(Y_i, z_{\lambda}(X_i; S_{\lambda}(X_{n+1},y))) + \phi(y, z_{\lambda}(X_{n+1}; S_{\lambda}(X_{n+1},y)))$.
                \EndIf
            \EndFor
            \State $\hat{\lambda}^y \leftarrow \argmin_{\lambda \in \Lambda} \mathcal{L}_{n+1}(\lambda,y)$.
            \If{$S_{\hat{\lambda}^{y}}(X_{n+1}, y) \leq \hat{q}_{\hat{\lambda}^{y}}^{y}$}
                \State $\widehat{\gU}^{\FCROMS}(X_{n+1}) \gets \widehat{\gU}^{\FCROMS}(X_{n+1}) \cup \{y\}$.
            \EndIf
        \EndFor
        
        \State Solve CRO problem $\hat{z}^{\FCROMS}(X_{n+1}) = \argmin_{z\in \gZ} \max_{c \in \widehat{\gU}^{\FCROMS}(X_{n+1})} \phi(c,z)$.

        \Ensure Decision $\hat{z}^{\FCROMS}(X_{n+1})$.
    \end{algorithmic}
\end{algorithm}

\subsection{Grid-approximated F-CROMS for regression tasks}\label{appen:Discretization}

The detailed implementation of GF-CROMS for regression is stated in Algorithm \ref{alg:Discretized full}. Given the spacing $\epsilon_{\rm{grid}}$, the grid points in the $j$-th dimension is $\widetilde{\gY}_j = \{-R_{\gY} + k\epsilon_{\rm{grid}}\}_{k=1}^{\lceil2R_{\gY}/\epsilon_{\rm{grid}}\rceil}$, where $R_{\gY}$ is the radius of the label space $\gY$. Then we can construct the discretized label space by $\widetilde{\gY} = \widetilde{\gY}_1\times\cdots\times \widetilde{\gY}_p$. The discretization mapping is defined by $\sD(y) = \argmin_{\tilde{y}\in \widetilde{\gY}} \|y - \tilde{y}\|$.

\begin{algorithm}[H]
	\renewcommand{\algorithmicrequire}{\textbf{Input:}}
	\renewcommand{\algorithmicensure}{\textbf{Output:}}
    \linespread{1.2}\selectfont
	\caption{GF-CROMS for regression tasks}
	\label{alg:Discretized full}
	\begin{algorithmic}[1]
	\Require Pre-trained models $\{S_{\lambda}:\lambda\in \Lambda\}$, loss function $\phi$, test data $X_{n+1}$, labeled dataset $\mathcal{D}_n = \{(X_{i},Y_i)\}_{i=1}^{n}$, grid $\widetilde{\gY}$, mapping $\sD: \gY \rightarrow \widetilde{\gY}$, robustness level $1-\alpha \in (0, 1)$.
        \State  Obtain a discretized labeled dataset $\widetilde{\mathcal{D}}_n =\{(X_i,\sD(Y_i))\}_{i=1}^{n}$.

        \State Call Algorithm 2 to construct $\widetilde{\mathcal{U}}^{\FCROMS}(X_{n+1}) \subset \widetilde{\mathcal{Y}}$ for $X_{n+1}$ based on $\widetilde{\mathcal{D}}_n$.

        \State The final prediction set $\LRl{y = \sD^{-1}(\tilde{y}): \tilde{y} \in \widetilde{\mathcal{U}}^{\FCROMS}(X_{n+1})}$. 
        
        \State Solve CRO problem $\hat{z}^{\GFCROMS}(X_{n+1}) = \argmin_{z\in \gZ} \max_{c \in \gU(X_{n+1})} \phi(c,z)$.
        
        \Ensure Decision $\hat{z}^{\GFCROMS}(X_{n+1})$.
	\end{algorithmic}
\end{algorithm}



\begin{figure}[H]
    \centering
    \includegraphics[width=\textwidth]{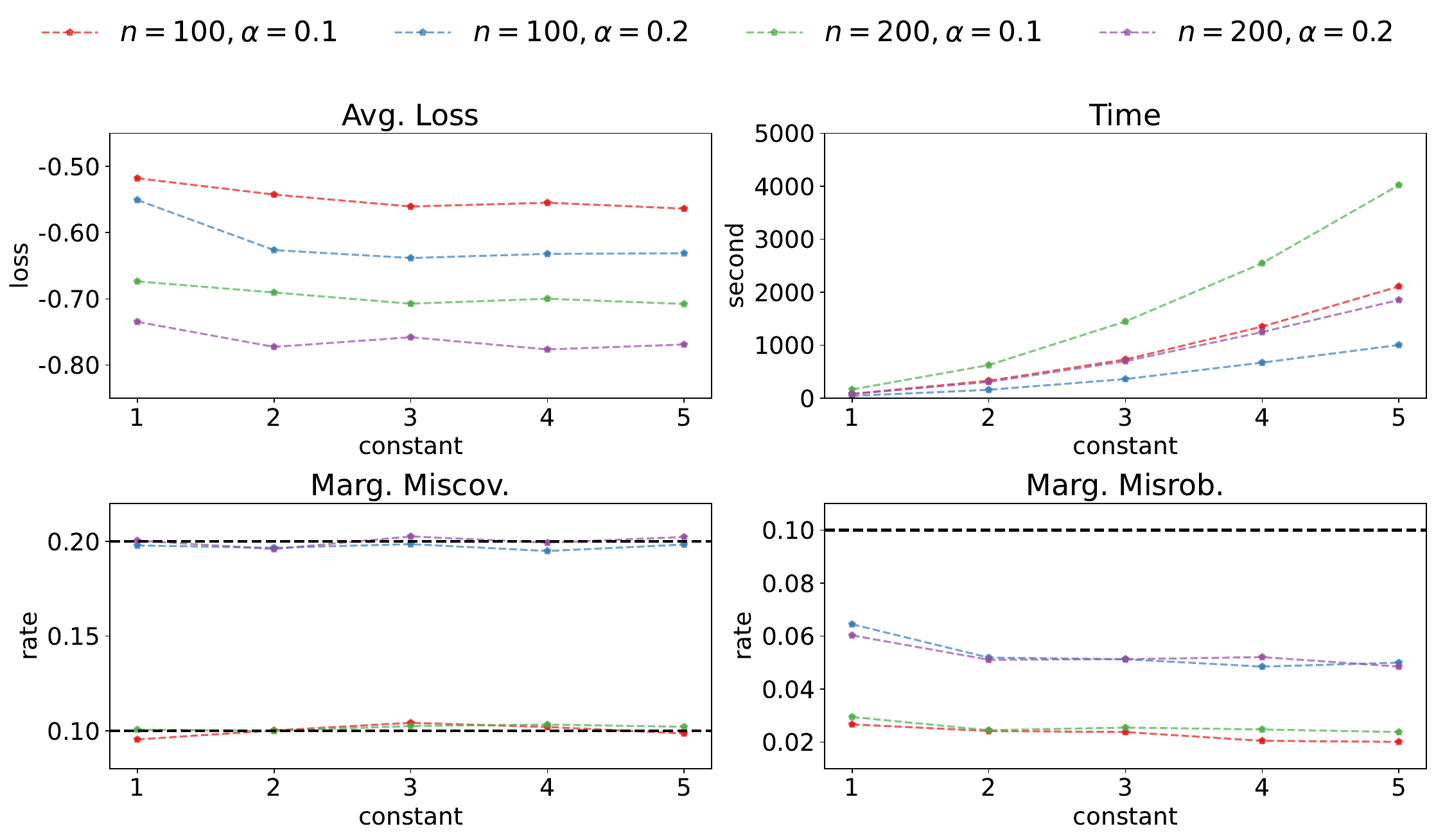}
    \caption{The performance of F-CROMS versus the grid size constant $c$ in each dimension. The simulation setting is consistent with that in Section 5.1.2, with $|\Lambda| = 30$. The time refers to the average single-run time over 100 independent repetitions.}
    \label{fig: Sensitivity}
\end{figure}

Next, we conduct a sensitivity analysis on the number of grid points for the two-dimensional label space, where the simulation setting is the same as Section 5.1.2. As suggested by the decision risk bound of GF-CROMS in Theorem 3.6, we discretize each dimension of the label $y\in \sR^2$ using $1/\epsilon_{\rm{grid}} = cn^{1/2}$ grid points. As demonstrated in Figure \ref{fig: Sensitivity}, the performance of F-CROMS is quite stable when the constant $c$ varies in all scenarios.


\subsection{A superset implementation method for F-CROMS}

In this section, we aim to construct a superset $\widehat{\gU}^{\text{SF-CROMS}}(X_{n+1})$ for the F-CROMS prediction set $\widehat{\gU}^{\FCROMS}(X_{n+1})$. Given the prediction set $\gU_{\lambda}(x; q) = \{y\in \sR^p: S_{\lambda}(x,y) \leq q\}$, we denote the CRO decision as $z_{\lambda}(x;q) = \argmin_{z\in \gZ} \max_{c\in \gU_{\lambda}(x; q)} \phi(c,z)$. The construction relies on the following assumption.

\begin{assumption}\label{assum:piecewise_monotone}
    Given a fixed $x\in \gX$ and $y\in \gY$, the loss value $\phi(y, z_{\lambda}(x;q))$ is a piecewise monotone function in $q\in \sR$ with a finite breakpoint set $\gQ_{\lambda}^b(x) = \{q_1^b(x) < q_2^b(x) < \ldots < q_K^b(x)\}$. It means that $\phi(y, z_{\lambda}(x;q))$ is monotone function for $q \in [q_k^b(x), q_{k+1}^b(x)]$.
\end{assumption}
Denote $\gY_{\lambda}^- = \{y\in \gY: S_{\lambda}(X_{n+1}, y) < \hat{q}_{\lambda}^-\}$, $\gY_{\lambda}^b = \LRl{y\in\gY: \hat{q}_{\lambda}^-\leq S_{\lambda}(X_{n+1}, y)\leq \hat{q}_{\lambda}}$ and $\gY_{\lambda}^+ = \{y\in \gY: S_{\lambda}(X_{n+1}, y) > \hat{q}_{\lambda}\}$.
For each $X_i$, by \eqref{eq:qy_expression}, we know
\begin{align}
    \phi(Y_i, z_{\lambda}(X_i;\hat{q}_{\lambda}^y)) = \begin{cases}
        \phi(Y_i, z_{\lambda}(X_i;\hat{q}_{\lambda}^{-})) & \text{if }y\in \gY_{\lambda}^-\\
        \phi(Y_i, z_{\lambda}(X_i;S_{\lambda}(X_{n+1}, y))) & \text{if }y\in \gY_{\lambda}^b\\
        \phi(Y_i, z_{\lambda}(X_i;\hat{q}_{\lambda})) & \text{if }y\in \gY_{\lambda}^+.
    \end{cases} 
\end{align}
Using Assumption \ref{assum:piecewise_monotone}, we can guarantee that
\begin{align}
    \sup_{y\in \gY}\phi(Y_i, z_{\lambda}(X_i;\hat{q}_{\lambda}^y)) &\leq \phi(Y_i, z_{\lambda}(X_i;\hat{q}_{\lambda}^{-})) \vee \phi(Y_i, z_{\lambda}(X_i;\hat{q}_{\lambda}))\nonumber\\
    &\qquad \vee \sup_{q \in \gQ_{\lambda}^b(X_i) \cap [\hat{q}_{\lambda}^-,\ \hat{q}_{\lambda}]}\phi(Y_i, z_{\lambda}(X_i;q))\nonumber\\
    &= \sup_{q\in \gQ_{\lambda}^b(X_i) \cap [\hat{q}_{\lambda}^-,\ \hat{q}_{\lambda}] \cup \{\hat{q}_{\lambda}^-,\hat{q}_{\lambda}\}} \phi(Y_i, z_{\lambda}(X_i;q))\nonumber\\
    &=: \varphi_i^+(\lambda),\nonumber\\
    \inf_{y\in \gY}\phi(Y_i, z_{\lambda}(X_i;\hat{q}_{\lambda}^y)) &\leq \phi(Y_i, z_{\lambda}(X_i;\hat{q}_{\lambda}^{-})) \wedge \phi(Y_i, z_{\lambda}(X_i;\hat{q}_{\lambda}))\nonumber\\
    &\qquad \wedge \inf_{q \in \gQ_{\lambda}^b(X_i) \cap [\hat{q}_{\lambda}^-,\ \hat{q}_{\lambda}]}\phi(Y_i, z_{\lambda}(X_i;q))\nonumber\\
    &=: \varphi_i^-(\lambda).\nonumber
\end{align}
We write $\gL_n^+(\lambda) = \sum_{i=1}^n \varphi_i^+(\lambda)$ and $\gL_n^-(\lambda) = \sum_{i=1}^n \varphi_i^-(\lambda)$ for each $\lambda \in \Lambda$, which can be computed based on labeled data $\{(X_i,Y_i)\}_{i=1}^{n}$. For the test point, we have
\begin{align*}
    \sup_{y\in\gY}\phi\LRs{y, z_{\lambda}(X_{n+1};\hat{q}_{\lambda}^y)} =& \sup_{y \in \gY_{\lambda}^-} \phi\LRs{y, z_{\lambda}(X_{n+1};\hat{q}_{\lambda}^-)} \vee \sup_{y \in \gY_{\lambda}^+} \phi\LRs{y, z_{\lambda}(X_{n+1};\hat{q}_{\lambda})}\\
    &\vee \sup_{y \in \gY_{\lambda}^b} \phi\LRs{y, z_{\lambda}(X_{n+1}; S_{\lambda}(X_{n+1}, y))}\\
    \leq & \sup_{y \in \gY_{\lambda}^-} \phi\LRs{y, z_{\lambda}(X_{n+1};\hat{q}_{\lambda}^-)} \vee \sup_{y \in \gY_{\lambda}^+} \phi\LRs{y, z_{\lambda}(X_{n+1};\hat{q}_{\lambda})}\\
    &\vee \sup_{y \in \gY_{\lambda}^b} \sup_{q\in (\gQ_{\lambda}^b(X_{n+1}) \cap [\hat{q}_{\lambda}^{-},\hat{q}_{\lambda}])\cup \{\hat{q}_{\lambda}^{-},\hat{q}_{\lambda}\}}\phi\LRs{y, z_{\lambda}(X_{n+1}; q)}\\
    =:& \widetilde{\varphi}_{n+1}^+(\lambda),\\
    \inf_{y\in\gY}\phi\LRs{y, z_{\lambda}(X_{n+1};\hat{q}_{\lambda}^y)} 
    \geq & \inf_{y \in \gY_{\lambda}^-} \phi\LRs{y, z_{\lambda}(X_{n+1};\hat{q}_{\lambda}^-)} \wedge \inf_{y \in \gY_{\lambda}^+} \phi\LRs{y, z_{\lambda}(X_{n+1};\hat{q}_{\lambda})}\\
    &\wedge \inf_{y \in \gY_{\lambda}^b} \inf_{q\in (\gQ_{\lambda}^b(X_{n+1}) \cap [\hat{q}_{\lambda}^{-},\hat{q}_{\lambda}])\cup \{\hat{q}_{\lambda}^{-},\hat{q}_{\lambda}\}}\phi\LRs{y, z_{\lambda}(X_{n+1}; q)}\\
    =:& \widetilde{\varphi}_{n+1}^-(\lambda).
\end{align*}
Let $\gL_n(\lambda;y) = \sum_{i=1}^n \phi(Y_i, z_{\lambda}(X_i;\hat{q}_{\lambda}^y))$.
According to the definition of $\hat{\lambda}^y$, we know
\begin{align*}
    \gL_n(\hat{\lambda}^y;y) + \phi\LRs{y, z_{\hat{\lambda}^y}(X_{n+1};\hat{q}_{\hat{\lambda}^y}^y)}
    \leq &\min_{\lambda\in \Lambda}\LRl{\gL_n(\lambda;y) + \phi\LRs{y, z_{\lambda}(X_{n+1};\hat{q}_{\lambda}^y)}}\nonumber\\
    \leq & \min_{\lambda\in \Lambda}\LRl{\sup_{y\in \gY}\gL_n(\lambda;y) + \sup_{y\in \gY}\phi\LRs{y, z_{\lambda}(X_{n+1};\hat{q}_{\lambda}^y)}} \nonumber\\
    \leq & \min_{\lambda\in \Lambda}\LRl{\gL_n^+(\lambda) + \widetilde{\varphi}_{n+1}^+(\lambda)}.
\end{align*}
In addition we also have the lower bound
\begin{align*}
    \gL_n(\hat{\lambda}^y;y) + \phi\LRs{y, z_{\hat{\lambda}^y}(X_{n+1};\hat{q}_{\hat{\lambda}^y}^y)} &\geq \gL_n^-(\hat{\lambda}^y) + \inf_{y\in \gY} \phi\LRs{y, z_{\hat{\lambda}^y}(X_{n+1};\hat{q}_{\hat{\lambda}^y}^y)}\\
    &\geq \gL_n^-(\hat{\lambda}^y) + \inf_{y\in \gY}\inf_{\lambda \in \Lambda} \phi\LRs{y, z_{\lambda}(X_{n+1};\hat{q}_{\lambda}^y)}\\
    &= \gL_n^-(\hat{\lambda}^y) + \inf_{\lambda \in \Lambda}\inf_{y\in \gY} \phi\LRs{y, z_{\lambda}(X_{n+1};\hat{q}_{\lambda}^y)}\\
    &\geq \gL_n^-(\hat{\lambda}^y) + \inf_{\lambda \in \Lambda} \widetilde{\varphi}_{n+1}^-(\lambda).
\end{align*}
Therefore, we can obtain the superset of model $\hat{\lambda}^{y}$ as
$$
\gM = \LRl{\lambda^{\prime}\in \Lambda: \gL_{n}^{-}(\lambda^{\prime}) \leq \inf_{\lambda\in \Lambda} \LRl{\gL_{n}^{+}(\lambda) + \widetilde{\varphi}_{n+1}^+(\lambda)} - \inf_{\lambda \in \Lambda}\widetilde{\varphi}_{n+1}^-(\lambda)}.
$$
Consequently, the superset of the F-CROMS prediction set is
$$
\widehat{\gU}^{\text{SF-CROMS}}(X_{n+1}) = \bigcup_{\lambda \in \gM} \left\{y\in \gY: S_{\lambda}(X_{n+1},y) \leq \hat{q}_{\lambda}\right\}.
$$
In fact, for any $y\in \widehat{\gU}^{\FCROMS}(X_{n+1})$, we know
\begin{align*}
    S_{\hat{\lambda}^y}(X_{n+1}, y) \leq \hat{q}_{\hat{\lambda}^y}^y & \Longleftrightarrow S_{\hat{\lambda}^y}(X_{n+1}, y) \leq \hat{q}_{\hat{\lambda}^y}\Longrightarrow \exists \lambda \in \gM,\ S_{\lambda}(X_{n+1}, y) \leq \hat{q}_{\lambda}.
\end{align*}
Hence, we can guarantee that $\widehat{\gU}^{\FCROMS}(X_{n+1})\subseteq \widehat{\gU}^{\text{SF-CROMS}}(X_{n+1})$.

In the following, we consider the portfolio optimization problem $\phi(y,z) = - y^{\top}z$ with $\gY = \sR^p$ and $\gZ = \{z\in [0,1]^p: \vone^{\top}z = 1\}$, and verify Assumption \ref{assum:piecewise_monotone} under the box scores and ellipsoid scores.

\subsubsection{Box candidate scores}\label{appen:breakpoint_box}
Under the box score $S_{\lambda}(x,y) = \|(y - \hat{\mu}_{\lambda}(x))/\hat{\sigma}(x)\|_{\infty}$, the CRO problem is equivalent to
\begin{align*}
    z(x;q)&=\argmin_{z\in \gZ} \LRl{- \LRs{\hat{\mu}(x) - q \hat{\sigma}(x)}^{\top}z} = e_{j(x;q)},
\end{align*}
where $j(x;q) = \argmax_{j\in [p]} \LRl{\hat{\mu}_j(x) - q \hat{\sigma}_j(x)}$ and $\{e_{j}\}_{j=1}^p$ are the standard basis vectors. Hence $z_{\lambda}(x;q)$ is a \textit{step function} with respect to the threshold $q$ for a fixed $x \in \gX$, and the breakpoints are
$$
\gQ_\lambda^b(x) = \left\{\frac{\hat{\mu}_{\lambda,k}(x) - \hat{\mu}_{\lambda,j}(x)}{\hat{\sigma}_{\lambda,j}(x) - \hat{\sigma}_{\lambda,k}(x)}: j,k \in [p], j\neq k\right\}.
$$
The decision at the breakpoint is given by $q_{\lambda}^b \in  \gQ_{\lambda}^b$ as $z_{\lambda}(x;q_{\lambda}^b) = \lim_{q\rightarrow (q_{\lambda}^b)^{+}}z_{\lambda}(x;q)$, which is the limit of $z_{\lambda}(x;q)$ as $q$ approaches $q_\lambda^b$ from the right. For each $X_i$, by \eqref{eq:qy_expression}, we know
\begin{align}\label{eq:z_qy_set}
    z_{\lambda}(X_i;\hat{q}^y) \in \LRl{z_{\lambda}(X_i;q): q \in (\gQ_{\lambda}^b(X_i) \cap [\hat{q}_{\lambda}^-, \hat{q}_{\lambda}]) \cup \{\hat{q}_{\lambda}^-, \hat{q}_{\lambda}\}}.
\end{align}
Let $\gQ_{\lambda}^{b} = \bigcup_{i=1}^n (\gQ_{\lambda}^b(X_i) \cap [\hat{q}_{\lambda}^-, \hat{q}_{\lambda}]) \cup \{\hat{q}_{\lambda}^-,\hat{q}_{\lambda}\}$, then we define better upper and lower loss by
\begin{align*}
    \gL_n^+(\lambda) &= \sup_{q\in \gQ_{\lambda}^{b}} \sum_{i=1}^n \phi\LRs{Y_i, z_{\lambda}(X_i; q)},\quad \gL_n^-(\lambda) = \inf_{q\in \gQ_{\lambda}^{b}} \sum_{i=1}^n \phi\LRs{Y_i, z_{\lambda}(X_i; q)}.
\end{align*}

\subsubsection{Ellipsoid candidate scores}
Under the ellipsoid score $S_{\lambda}(x,y) = (y - \hat{\mu}_{\lambda}(x))^{\top}\Sigma_{\lambda}(x)^{-1}(y - \hat{\mu}_{\lambda}(x))$, the CRO problem is equivalent to
\begin{align*}
    z(x;q)=\argmin_{z\in \sR^p}\LRl{\sqrt{q}\sqrt{z^{\top}\hat{\Sigma}(x)z} - \hat{\mu}(x)^{\top}z\quad \text{s.t.}\quad z\geq 0,\ \vone^{\top}z = 1}.
\end{align*}
The Lagrangian form is given by
\begin{align*}
    z(x;q)=\argmin_{z\in \sR^p}\LRl{\sqrt{q}\sqrt{z^{\top}\hat{\Sigma}z} - \hat{\mu}(x)^{\top}z - \eta^{\top}z + \gamma (\vone^{\top}z - 1)}.
\end{align*}
Let us suppress the dependence on $x$ for now. According to the KKT conditions, we have
\begin{align*}
    \sqrt{q}\frac{(\hat{\Sigma} z)_j}{\sqrt{z^{\top}\hat{\Sigma}z}} - \hat{\mu}_j - \eta_j + \gamma = 0,\ j=1,\ldots,p\\
    \eta_j \geq 0,\ z_j \geq 0,\ \eta_jz_j = 0,\ j=1,\ldots,p,\ \sum_{j=1}^p z_j = 1.
\end{align*}

\paragraph*{Monotone loss value for fixed active set.}
Denote the active set $A = \{j\in [p]: w_j > 0\}$, then we have
\begin{align*}
    \sqrt{q}\frac{\hat{\Sigma}_{AA} z_{A}}{w} - \hat{\mu}_A + \gamma \vone_A = 0,\\
    \sqrt{q}\frac{\hat{\Sigma}_{AA} z_{A^c}}{w} - \hat{\mu}_{A^c} + \gamma \vone_{A^c} \geq 0,\\
    \vone_{A}^{\top}z_A = 1,\ z^{\top}\hat{\Sigma}z = z_A^{\top}\hat{\Sigma}_{AA}z_A.
\end{align*}
Denote $t = \frac{\sqrt{z_A^{\top}\hat{\Sigma}_{AA}z_A}}{\sqrt{q}}$, then the active part of solution is
\begin{align}
    z_A = t\cdot \hat{\Sigma}_{AA}^{-1}\LRs{\hat{\mu}_A - \gamma \vone_A}.\label{eq:decision_gamma}
\end{align}
Since $\vone_{A}^{\top}z_A = 1$, we get $1 = t\cdot \vone_A^{\top} \hat{\Sigma}_{AA}^{-1} (\hat{\mu}_A - \gamma\cdot \vone_A)$, which implies that $\gamma = \frac{\vone_A^{\top}\hat{\Sigma}_{AA}^{-1} \hat{\mu}_A}{\vone_A^{\top}\hat{\Sigma}_{AA}^{-1} \vone_A} - \frac{1}{t\cdot \vone_A^{\top}\hat{\Sigma}_{AA}^{-1} \vone_A}$. In addition, plugging \eqref{eq:decision_gamma} into $z_A^{\top}\hat{\Sigma}_{AA}z_A$, we can get
\begin{align}\label{eq:decision_quadratic}
    z_A^{\top}\hat{\Sigma}_{AA}z_A &= t^2\cdot \LRs{\hat{\Sigma}_{AA}^{-1}\LRs{\hat{\mu}_A - \gamma \vone_A}}^{\top}\hat{\Sigma}_{AA}\LRs{\hat{\Sigma}_{AA}^{-1}\LRs{\hat{\mu}_A - \gamma \vone_A}}\nonumber\\
    &= t^2\cdot \LRs{\hat{\mu}_A - \gamma \vone_A}^{\top}\hat{\Sigma}_{AA}^{-1}\LRs{\hat{\mu}_A - \gamma \vone_A}.
\end{align}
We introduce the following notations,
\begin{align}\label{eq:active_parameters}
    \theta_A = \hat{\mu}_A^{\top} \hat{\Sigma}_{AA}^{-1} \hat{\mu}_A,\quad \beta_A = \hat{\mu}_A^{\top} \hat{\Sigma}_{AA}^{-1} \vone_A,\quad \zeta_A = \vone_A^{\top} \hat{\Sigma}_{AA}^{-1} \vone_A.
\end{align}
Plugging $\gamma = \frac{\beta_A}{\zeta_A} - \frac{1}{t \zeta_A}$ into \eqref{eq:decision_quadratic}, we have
\begin{align*}
    z_A^{\top}\hat{\Sigma}_{AA}z_A &= t^2 \LRs{\hat{\mu}_A - \LRs{\frac{\beta_A}{\zeta_A} - \frac{1}{t \zeta_A}}\cdot \vone_A}^{\top}\hat{\Sigma}_{AA}^{-1}\LRs{\hat{\mu}_A - \LRs{\frac{\beta_A}{\zeta_A} - \frac{1}{t \zeta_A}}\cdot \vone_A}\\
    &= \LRs{\theta_A - \frac{\beta_A^2}{\zeta_A}}t^2 + \frac{1}{\zeta_A}.
\end{align*}
Together with the definition of $t$, we have the equation $qt^2 = \LRs{\theta_A - \frac{\beta_A^2}{\zeta_A}}t^2 + \frac{1}{\zeta_A}$, leading to the root
\begin{align}
    t = \LRs{\zeta_A \LRs{q - \theta_A + \frac{\beta_A^2}{\zeta_A}}}^{-1/2},\quad \text{if }q > \theta_A - \frac{\beta_A^2}{\zeta_A}.
\end{align}
Notice that the active set $A$ appears only if $q > \theta_A - \frac{\beta_A^2}{\zeta_A}$.
Plugging it into \eqref{eq:decision_gamma}, we have
\begin{align}
    z_A(q) &= t\cdot \hat{\Sigma}_{AA}^{-1}\LRs{\hat{\mu}_A - \frac{\beta_A}{\zeta_A}\cdot \vone_A - \frac{1}{t \zeta_A}\cdot \vone_A}\nonumber\\
    &= -\frac{1}{\zeta_A} \hat{\Sigma}_{AA}^{-1} \vone_A + \LRs{\zeta_A \LRs{q - \theta_A + \frac{\beta_A^2}{\zeta_A}}}^{-1/2} \hat{\Sigma}_{AA}^{-1} \LRs{\hat{\mu}_A - \frac{\beta_A}{\zeta_A}\cdot \vone_A}.
\end{align}
Hence, given any fixed $y\in \sR^p$, the loss $-y^{\top}z_A(q)$ is a monotone function when the active set $A$ is fixed. Next, we derive the breakpoints where the active pattern changes.

\paragraph*{The breakpoints where the active set changes.}
The current active set $A$ changes when one of the following scenarios happens:
\begin{itemize}
    \item[(1)] $z_i(q) = 0$ for some $i\in A$, which means that
    \begin{align}\label{eq:active_change_out}
         -\frac{1}{\zeta_A} (\hat{\Sigma}_{AA}^{-1})_{\cdot i}\vone_A  + \LRs{\zeta_A \LRs{q - \theta_A + \frac{\beta_A^2}{\zeta_A}}}^{-1/2} (\hat{\Sigma}_{AA}^{-1})_{\cdot i}^{\top} \LRs{\hat{\mu}_A - \frac{\beta_A}{\zeta_A}\cdot \vone_A} = 0,\nonumber\\
         \Longrightarrow q = \frac{\LRs{(\hat{\Sigma}_{AA}^{-1})_{\cdot i}^{\top}(\hat{\mu}_A \zeta_A - \beta_A \vone_A)}^2}{\LRs{(\hat{\Sigma}_{AA}^{-1})_{\cdot i}^{\top} \vone_A}^2 \zeta_A} + \theta_A - \frac{\beta_A^2}{\zeta_A}.
    \end{align}

    \item[(2)] $\sqrt{q}\frac{(\hat{\Sigma} z)_j}{\sqrt{z^{\top}\hat{\Sigma}z}} - \hat{\mu}_j + \gamma = 0$ for some $j\notin A$, which means that
    \begin{align}\label{eq:active_change_in_0}
        \hat{\mu}_j - \sqrt{\zeta_A (q - \theta_A + \beta_A^2/\zeta_A)}\cdot u_j - v_j = \frac{\beta_A}{\zeta_A} - \frac{1}{\zeta_A\sqrt{\zeta_A (q - \theta_A + \beta_A^2/\zeta_A)}},
    \end{align}
    where $u_j = \hat{\Sigma}_{jA}\hat{\Sigma}_{AA}^{-1}\vone_A/\zeta_A$ and $v_j = \hat{\Sigma}_{jA}\hat{\Sigma}_{AA}^{-1}\LRs{\hat{\mu}_A - \beta_A/\zeta_A\cdot \vone_A}$. Let $w = \sqrt{\zeta_A (q - \theta_A + \beta_A^2/\zeta_A)}$, the equation \eqref{eq:active_change_in_0} is equivalent to
    \begin{align}\label{eq:active_change_in}
        &u_j\cdot w^2 - \LRs{\hat{\mu}_j - v_j - \frac{\beta_A}{\zeta_A}}w - \frac{1}{\zeta_A} = 0\nonumber\\
        &\Longrightarrow w = \frac{\LRs{\hat{\mu}_j - v_j - \frac{\beta_A}{\zeta_A}} + \sqrt{\LRs{\hat{\mu}_j - v_j - \frac{\beta_A}{\zeta_A}}^2 + 4u_j/\zeta_A}}{2u_j}\nonumber\\
        &\Longrightarrow q = \frac{1}{\zeta_A}\LRs{\frac{\LRs{\hat{\mu}_j - v_j - \frac{\beta_A}{\zeta_A}} + \sqrt{\LRs{\hat{\mu}_j - v_j - \frac{\beta_A}{\zeta_A}}^2 + 4u_j/\zeta_A}}{2u_j}}^2 + \theta_A - \frac{\beta_A^2}{\zeta_A}.
    \end{align}
\end{itemize}
Combing \eqref{eq:active_change_out} and \eqref{eq:active_change_in}, we conclude that the active set $A$ will change if
\begin{align}\label{eq:active_breakpoint}
    q&\geq \min_{i \in A} \LRl{\frac{\LRs{(\hat{\Sigma}_{AA}^{-1})_{\cdot i}^{\top}(\hat{\mu}_A \zeta_A - \beta_A \vone_A)}^2}{\LRs{(\hat{\Sigma}_{AA}^{-1})_{\cdot i}^{\top} \vone_A}^2 \zeta_A} + \theta_A - \frac{\beta_A^2}{\zeta_A}}\nonumber\\
    &\wedge \min_{j\notin A}\LRl{\frac{1}{\zeta_A}\LRs{\frac{\LRs{\hat{\mu}_j - v_j - \frac{\beta_A}{\zeta_A}}\sqrt{\LRs{\hat{\mu}_j - v_j - \frac{\beta_A}{\zeta_A}}^2 + 4u_j/\zeta_A}}{2u_j}}^2 + \theta_A - \frac{\beta_A^2}{\zeta_A}}.
\end{align}

Given the lower quantile $\hat{q}_{\lambda}^-$, we can determine the current active set $A$ by solving the CRO problem $z(x;\hat{q}_{\lambda}^-) = \argmin_{z\in \gZ}\max_{c\in \gU_{\lambda}(x;\hat{q}_{\lambda}^-)} \phi(c,z)$. After that, we compute the quantities in \eqref{eq:active_parameters} and further find the next breakpoint by \eqref{eq:active_breakpoint}. Then we update the active set and find the next breakpoint until it exceeds $\hat{q}_{\lambda}$.


\subsubsection{Simulation with box candidate scores}
In this section, we consider the regression task described in Section 5.1.2. We compare the GF-CROMS method (with the number of grid points set to $(1\cdot n^{1/2})^2$)) with the method introduced above, referred to as ``SF-CROMS''. Figure \ref{fig: Superset} illustrates the performance of each method as the size of the labeled sample varies.

\begin{figure}[H] 
\centering
\includegraphics[width=\textwidth]{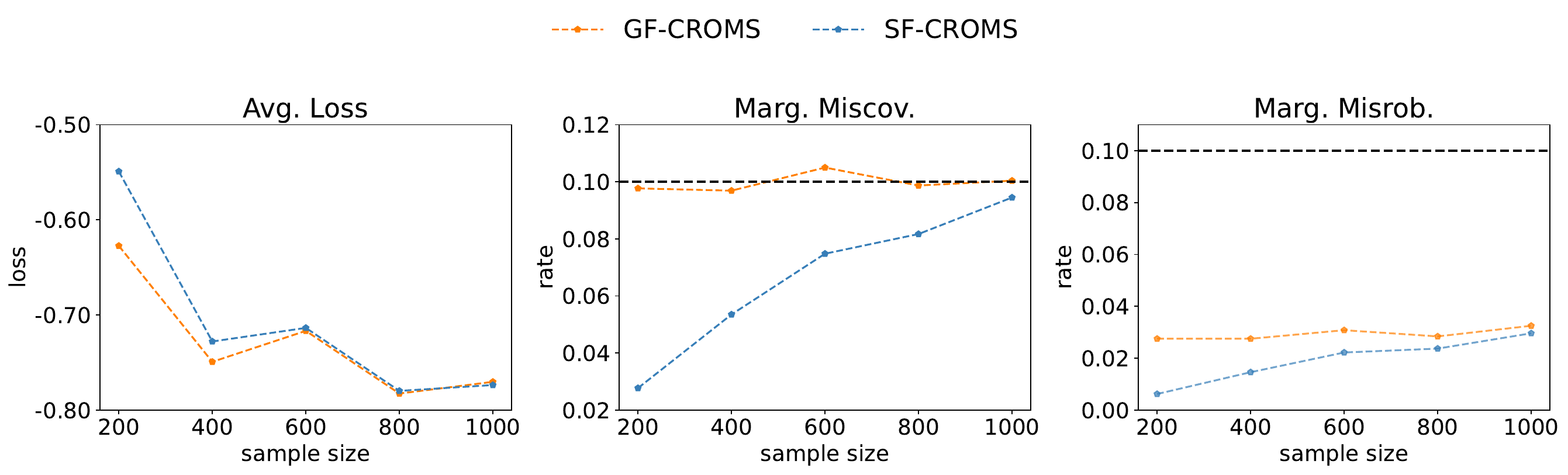}
\caption{The average loss, marginal coverage, and robustness in the regression task, with the sample size of labeled data points $n$ varied, $|\Lambda| = 25$ and $\alpha = 0.10$.}
\label{fig: Superset}
\end{figure}

Figure \ref{fig: Superset} illustrates that when the sample size is small, the superset of $\hat{\lambda}^{y}$ tends to contain multiple candidate models, which results in more conservative behavior of the SF-CROMS method. When the sample size increases, the optimal candidate model becomes relatively well determined; consequently, the average loss achieved by the SF-CROMS method aligns with that of the GF-CROMS method.

\section{Proofs for theoretical results of CROMS}

We introduce two function classes on $\gX \times \gY$: $\gF = \{\mathbbm{1}\{S_{\lambda}(x,y) > q\}: \lambda \in \Lambda, q\in \sR\}$ and $\gG = \{\phi(y, z_{\lambda}(x;q_{\lambda}^o)): \lambda \in \Lambda\}$. Then we define their Rademacher complexities as 
\begin{itemize}
    \item $\mathfrak{R}_n(\gF) = \E\LRm{\sup_{f\in \gF} \left|\frac{1}{n}\sum_{i=1}^n \xi_i f(X_i,Y_i)\right|}$,

    \item $\mathfrak{R}_n(\gG) = \E\LRm{\sup_{g\in \gG} \left|\frac{1}{n}\sum_{i=1}^n \xi_i g(X_i,Y_i)\right|}$,
\end{itemize}
where $\{\xi_i\}_{i=1}^n$ are i.i.d. random variables taking $+1$ or $-1$ with equal probability. In the following proofs, the constant $c>0$ represents a numerical constant, and is independent of any quantities in the assumptions. For simplicity, we do not distinguish the scale of $c$.

\subsection{Proofs of E-CROMS}
\subsubsection{Proof of Theorem 2.1}
\begin{proof}[Proof of Theorem 2.1]
    By the definition $\hat{q}_{\lambda} = Q_{(1-\alpha)(1+n^{-1})}\LRs{\{S_{\lambda}(X_i,Y_i)\}_{i=1}^n}$, we know 
        $$\frac{1}{n}\sum_{i=1}^{n}\mathbbm{1}\{S_{\hat{\lambda}_n}(X_i,Y_i) \leq \hat{q}_{\hat{\lambda}_n}\} = \frac{\lceil(1-\alpha)(n+1)\rceil}{n} \geq (1-\alpha)(1 + n^{-1}).$$ 
    Since both $\hat{\lambda}_n$ and $\hat{q}_{\lambda}$ depends only on $\gD_n = \{(X_i,Y_i)\}_{i=1}^n$, we have the following bound
    \begin{align}
          (1-\alpha)(1 + n^{-1}) &-\sP\LRl{S_{\hat{\lambda}_n} (X_{n+1},Y_{n+1}) \leq \hat{q}_{\hat{\lambda}_n} }\nonumber\\
         = & \E\LRm{(1-\alpha)(1 + n^{-1}) -\mathbbm{1}\LRl{S_{\hat{\lambda}_n} (X_{n+1},Y_{n+1}) \leq \hat{q}_{\hat{\lambda}_n}}}\nonumber\\
        \leq & \E\left[\frac{1}{n}\sum_{i=1}^{n}\mathbbm{1}\left\{S_{\hat{\lambda}_n}(X_i,Y_i) \leq \hat{q}_{\hat{\lambda}_n}\right\} -\mathbbm{1}\LRl{S_{\hat{\lambda}_n} (X_{n+1},Y_{n+1}) \leq \hat{q}_{\hat{\lambda}_n}}\right]\nonumber\\
        \leq & \E\left[\left|\frac{1}{n}\sum_{i=1}^{n}\mathbbm{1}\left\{S_{\hat{\lambda}_n}(X_i,Y_i) \leq \hat{q}_{\hat{\lambda}_n}\right\} - \sP\LRl{S_{\hat{\lambda}_n} (X_{n+1},Y_{n+1}) \leq \hat{q}_{\hat{\lambda}_n}\mid \gD_n}\right|\right]\nonumber\\
        \leq & \E\LRm{\sup_{\lambda \in \Lambda}\left|\frac{1}{n}\sum_{i=1}^n\mathbbm{1}\left\{S_{\lambda}(X_i,Y_i) \leq \hat{q}_{\lambda}\right\} - \sP\LRl{S_{\lambda} (X_{n+1},Y_{n+1}) \leq \hat{q}_{\lambda}\mid \gD_n}\right|}\nonumber\\
        \leq & \E\LRm{\sup_{\lambda\in \Lambda}\sup_{q\in \sR} \LRabs{\frac{1}{n}\sum_{i=1}^n\mathbbm{1}\left\{S_{\lambda}(X_i,Y_i) \leq q\right\} - \sP\LRs{S_{\lambda} (X_{n+1},Y_{n+1}) \leq q\mid \gD_n}}} \nonumber\\
        = & \E\LRm{\sup_{\lambda\in \Lambda}\sup_{q\in \sR} \LRabs{\frac{1}{n}\sum_{i=1}^n\mathbbm{1}\left\{S_{\lambda}(X_i,Y_i) \leq q\right\} - \sP\LRs{S_{\lambda} (X_{n+1},Y_{n+1}) \leq q}}}.\label{eq:uniform_marg_coverage}
    \end{align}
    Let $\xi_1,\ldots,\xi_n \stackrel{\text{i.i.d.}}{\sim} \mathrm{Unif}\{-1,+1\}$, by standard symmetrization technique, we have
    \begin{align}\label{eq:symmetrization}
        &\E\LRm{\sup_{\lambda\in \Lambda}\sup_{q\in \sR} \LRabs{\frac{1}{n}\sum_{i=1}^n\mathbbm{1}\left\{S_{\lambda}(X_i,Y_i) \leq q\right\} - \sP\LRs{S_{\lambda} (X_{n+1},Y_{n+1}) \leq q}}}\nonumber\\
        \leq & 2\E\LRm{\sup_{\lambda\in \Lambda, q\in \sR} \left|\frac{1}{n}\sum_{i=1}^n \xi_i \mathbbm{1}\left\{S_{\lambda}(X_i,Y_i) \leq q\right\}\right|} =  2\mathfrak{R}_n(\gF).
    \end{align}
    Together with \eqref{eq:uniform_marg_coverage}, we can prove the conclusion on the robustness.
\end{proof}

\subsubsection{Proof of Theorem 2.2}
\begin{lemma}\label{lemma:quantile_estimation_VC}
    Let $f_{\lambda}$ and $F_{\lambda}$ be the density function and distribution function of $S_{\lambda}(X,Y)$ respectively. For a large constant $c > 0$, if $f_{\lambda}(s) \geq \mu  > 0$ for any $s \in [F_{\lambda}^{-1}(1-\alpha-\epsilon_n), F_{\lambda}^{-1}(1-\alpha+\epsilon_n)]$ with $\epsilon_n = c\LRs{\sqrt{\frac{\log n}{n}} + \mathfrak{R}_n(\gF)} = o(1)$, then we have
    \begin{align}
    \sP\LRl{\sup_{\lambda\in \lambda}\LRabs{\hat{q}_{\lambda} - q_{\lambda}^{o}} \leq \frac{c}{\mu}\LRs{\sqrt{\frac{\log n}{n}} + \mathfrak{R}_n(\gF)}} \geq 1 - 3n^{-c}.\nonumber
    \end{align}
\end{lemma}

\begin{lemma}\label{lemma:uniform_concentration_loss}
    Under Assumption 2, then we have
    \begin{align*}
        \sP\LRl{\sup_{\lambda \in \Lambda}\LRabs{\frac{1}{n}\sum_{i=1}^n \phi(Y_i,z_{\lambda}(X_i;q_{\lambda}^o)) - \E\LRm{\phi(Y, z_{\lambda}(X;q_{\lambda}^o))}} \leq c\LRs{B\sqrt{\frac{\log n}{n}} + \mathfrak{R}_n(\gG)}} \geq 1 - n^{-c}.
    \end{align*}
\end{lemma}

\begin{proof}[Proof of Theorem 3.2]
    Recall the definitions:
    \begin{align}
        \hat{\lambda}_n &= \argmin_{\lambda\in \Lambda}\frac{1}{n}\sum_{i=1}^n \phi(Y_i,z_{\lambda}(X_i)) = \argmin_{\lambda\in \Lambda}\frac{1}{n}\sum_{i=1}^n \phi(Y_i,z_{\lambda}(X_i;\hat{q}_{\lambda})),\nonumber\\
        \lambda^* &= \argmin_{\lambda\in \Lambda}\E[\phi(Y, z_{\lambda}^{o}(X))] = \argmin_{\lambda\in \Lambda}\E[\phi(Y, z_{\lambda}(X;q_{\lambda}^{o}))].\nonumber
    \end{align}
    We define the event
    \begin{align*}
        \gE = \LRl{\sup_{\lambda \in \Lambda} |\hat{q}_{\lambda} - q_{\lambda}^{o}| \leq \frac{c}{\mu}\LRs{\sqrt{\frac{\log n}{n}} + \mathfrak{R}_n(\gF)}}.
    \end{align*}
    Under Assumption 3, using Lemma \ref{lemma:quantile_estimation_VC}, we have
    \begin{align}\label{eq:inermediate_loss_bound}
        &\E\left[\sup_{\lambda\in \Lambda}\left|\phi(Y, z_{\lambda}(X;q_{\lambda}^{o})) - \phi(Y, z_{\lambda}(X;\hat{q}_{\lambda}))\right|\right]\nonumber\\
        &\qquad\leq 2B\cdot \sP(\gE^c) + \E\LRm{\sup_{\lambda \in \Lambda}\mathbbm{1}_{\gE}\LRabs{\phi(Y, z_{\lambda}(X;q_{\lambda}^{o})) - \phi(Y, z_{\lambda}(X;\hat{q}_{\lambda}))}}\nonumber\\
        &\qquad\leq 2B\cdot n^{-c} + L  \E\LRm{\sup_{\lambda \in \Lambda}\mathbbm{1}_{\gE}|q_{\lambda}^{o} - \hat{q}_{\lambda}|}\nonumber\\
        &\qquad\leq 2Bn^{-c} + \frac{cL}{\mu}\LRs{\sqrt{\frac{\log n}{n}} + \mathfrak{R}_n(\gF)},
    \end{align}
    where the third inequality holds due to the definition of $\gE$. Since $\hat{\lambda}_n$, $\hat{q}_{\lambda}$ are independent of test data $(X_{n+1}, Y_{n+1})$, below we will write $(X,Y) \equiv (X_{n+1},Y_{n+1})$ for short.
    By the optimality of $\lambda^*$, we have
    \begin{align}\label{eq:inermediate_loss_bound_lower}
        \E[\phi(Y, z_{\hat{\lambda}_n}(X;\hat{q}_{\hat{\lambda}_n}))]& - \underbrace{\E\LRm{\phi(Y, z_{\lambda^*}(X;q^{o}_{\lambda^*}))}}_{v_{\Lambda}^*}\nonumber\\
        &\geq  \E[\phi(Y, z_{\hat{\lambda}_n}(X;\hat{q}_{\hat{\lambda}_n})) ] - \E\LRm{\phi(Y, z_{\hat{\lambda}_n}(X;q^{o}_{\hat{\lambda}_n}))}\nonumber\\
        &\geq  -2Bn^{-c} - \frac{cL}{\mu}\LRs{\sqrt{\frac{\log n}{n}} + \mathfrak{R}_n(\gF)}.
    \end{align}
    In addition, we also have the upper bound
    \begin{align}\label{eq:inermediate_loss_bound_upper}
        &\E[\phi(Y, z_{\hat{\lambda}_n}(X;\hat{q}_{\hat{\lambda}_n}))] - \underbrace{\E\LRm{\phi(Y, z_{\lambda^*}(X;q^{o}_{\lambda^*}))}}_{v_{\Lambda}^*}\nonumber\\
        &\leq \E[\phi(Y, z_{\hat{\lambda}_n}(X;\hat{q}_{\hat{\lambda}_n}))] - \E[\phi(Y, z_{\lambda^*}(X;\hat{q}_{\lambda^*}))] +\E[\phi(Y, z_{\lambda^*}(X;\hat{q}_{\lambda^*}))] - \E\LRm{\phi(Y, z_{\lambda^*}(X;q_{\lambda^*}^{o}))}\nonumber\\
        &\leq \E[\phi(Y, z_{\hat{\lambda}_n}(X;\hat{q}_{\hat{\lambda}_n}))] - \E[\phi(Y, z_{\lambda^*}(X;\hat{q}_{\lambda^*}))] + 2Bn^{-c} + \frac{cL}{\mu}\LRs{\sqrt{\frac{\log n}{n}} + \mathfrak{R}_n(\gF)}.
    \end{align}
    Next, using the optimality of $\hat{\lambda}_n$, we have
    \begin{align}
        &\E[\phi(Y, z_{\hat{\lambda}_n}(X;\hat{q}_{\hat{\lambda}_n})) \mid \gD_n] - \E\LRm{\phi(Y, z_{\lambda^*}(X;\hat{q}_{\lambda^*}))\mid \gD_n}\nonumber\\
        &\qquad = \E\LRm{\phi(Y, z_{\hat{\lambda}_n}(X;\hat{q}_{\hat{\lambda}_n})) \mid \gD_n} - \frac{1}{n}\sum_{i=1}^n \phi(Y_i,z_{\hat{\lambda}_n}(X_i;\hat{q}_{\hat{\lambda}_n})) \nonumber\\
        &\qquad + \underbrace{\frac{1}{n}\sum_{i=1}^n \phi(Y_i,z_{\hat{\lambda}_n}(X_i;\hat{q}_{\hat{\lambda}_n}))-\frac{1}{n}\sum_{i=1}^n \phi(Y_i,z_{\lambda^*}(X_i;\hat{q}_{\lambda^*}))}_{\leq 0}\nonumber\\
        &\qquad + \frac{1}{n}\sum_{i=1}^n \phi(Y_i,z_{\lambda^*}(X_i;\hat{q}_{\lambda^*})) - \E\LRm{\phi(Y, z_{\lambda^*}(X;\hat{q}_{\lambda^*}))\mid \gD_n}\nonumber\\
        &\qquad\leq 2\sup_{\lambda \in \Lambda}\LRabs{\frac{1}{n}\sum_{i=1}^n \phi(Y_i,z_{\lambda}(X_i;\hat{q}_{\lambda})) - \E\LRm{\phi(Y, z_{\lambda}(X;\hat{q}_{\lambda}))\mid \gD_n}}\nonumber\\
        &\qquad\leq 2\underbrace{\sup_{\lambda \in \Lambda}\mathbbm{1}_{\gE}\LRabs{\frac{1}{n}\sum_{i=1}^n \phi(Y_i,z_{\lambda}(X_i;q_{\lambda}^{o})) - \E\LRm{\phi(Y, z_{\lambda}(X;q_{\lambda}^{o}))}}}_{(\mathrm{I})}\nonumber\\
        &\qquad+ 2\underbrace{\sup_{\lambda \in \Lambda}\mathbbm{1}_{\gE} \LRabs{\frac{1}{n}\sum_{i=1}^n\LRl{\phi(Y_i,z_{\lambda}(X_i;\hat{q}_{\lambda}))-\phi(Y_i,z_{\lambda}(X_i;q_{\lambda}^{o}))}}}_{(\mathrm{II})}\nonumber\\
        &\qquad + 2\underbrace{\sup_{\lambda \in \Lambda}\mathbbm{1}_{\gE}\E\LRm{\LRabs{\phi(Y, z_{\lambda}(X;\hat{q}_{\lambda}))-\phi(Y, z_{\lambda}(X;q_{\lambda}^{o}))}\mid \gD_n}}_{(\mathrm{III})}\nonumber\\
        &\qquad + 2\underbrace{\sup_{\lambda \in \Lambda}\mathbbm{1}_{\gE^c}\LRabs{\frac{1}{n}\sum_{i=1}^n \phi(Y_i,z_{\lambda}(X_i;\hat{q}_{\lambda})) - \E\LRm{\phi(Y, z_{\lambda}(X;\hat{q}_{\lambda}))\mid \gD_n}}}_{(\mathrm{IV})},\label{eq:avg_loss_bound}
    \end{align}
    where the first inequality holds due to $\hat{\lambda}$ and $\hat{q}_{\lambda}$ are fixed given $\gD_n$. By Lemma \ref{lemma:uniform_concentration_loss}, we get
    \begin{align*}
        \sP\LRl{(\mathrm{I}) \leq c\LRs{B\sqrt{\frac{\log n}{n}} + \mathfrak{R}_n(\gG)}} \geq 1-n^{-c}.\nonumber
    \end{align*}
    Using Assumption 3 and the definition of $\gE$, we almost surely have
    \begin{align}
        \max\{(\mathrm{II}),(\mathrm{III})\} \leq \mathbbm{1}_{\gE} L \sup_{\lambda \in \Lambda}|\hat{q}_{\lambda} - q_{\lambda}^{o}| \leq \frac{cL}{\mu}\LRs{\sqrt{\frac{\log n}{n}} + \mathfrak{R}_n(\gF)}.\nonumber
    \end{align}
    In addition, by Lemma \ref{lemma:quantile_estimation_VC}, we also have $\sP\{(\mathrm{IV}) = 0\} \geq \sP(\gE) \geq 1-n^{-c}$. Substituting the bounds above into \eqref{eq:avg_loss_bound}, together with \eqref{eq:avg_loss_bound}, we can finish the proof.
\end{proof}

\subsection{Proofs of F-CROMS}
\subsubsection{Proof of Theorem 3.1}
\begin{proof}
    Define the virtually selected model as if $Y_{n+1}$ is known,
    \begin{align}
        \hat{\lambda} = \argmin_{\lambda \in \Lambda}\frac{1}{n+1}\sum_{i=1}^{n+1}\phi(Y_i, z_{\lambda}(X_i; \hat{Q}_{\lambda})),
    \end{align}
    where $\hat{Q}_{\lambda} = Q_{1-\alpha}\LRs{\{S_{\lambda}(X_i,Y_i)\}_{i=1}^{n+1}}$.
    Hence $\hat{\lambda}$ is symmetric to $\{(X_i,Y_i)\}_{i=1}^{n+1}$ because $\hat{Q}_{\lambda}$ is symmetric. By comparing the definitions of $\hat{Q}_{\lambda}$ and $\hat{\lambda}^y$, we know $\hat{Q}_{\lambda} \equiv \hat{q}_{\lambda}^{Y_{n+1}}$ and $\hat{\lambda} \equiv \hat{\lambda}^{Y_{n+1}}$.
    Hence, the coverage property follows from the full conformal prediction \citep{vovk2005algorithmic,lei2018distribution}, that is
    \begin{align}
        \sP\LRl{Y_{n+1} \in \widehat{\gU}^{\rm F-CROMS}(X_{n+1})} 
        &= \sP\LRl{S_{\hat{\lambda}^{Y_{n+1}}}(X_{n+1},Y_{n+1}) \leq Q_{1-\alpha}\LRs{\{S_{\hat{\lambda}^{Y_{n+1}}}(X_i,Y_i)\}_{i=1}^{n+1} }}\nonumber\\
        &= \sP\LRl{S_{\hat{\lambda}}(X_{n+1},Y_{n+1}) \leq Q_{1-\alpha}\LRs{\{S_{\hat{\lambda}}(X_i,Y_i)\}_{i=1}^{n+1} }}\nonumber\\
        &\geq 1-\alpha,\nonumber
    \end{align}
    where we also used the fact that $Q_{1-\alpha}\LRs{\{S_{\hat{\lambda}}(X_i,Y_i)\}_{i=1}^{n+1} }$ is symmetric to $\{(X_i,Y_i)\}_{i=1}^{n+1}$.
\end{proof}

\subsubsection{Proof of Theorem 3.2}
\begin{lemma}\label{lemma:qy_diff_VC}
    Under the same conditions of Lemma \ref{lemma:quantile_estimation_VC}, for a large constant $c > 0$, it holds that 
    \begin{align}
        \sP\LRl{\sup_{\lambda \in \Lambda,y\in \gY}\left|\hat{q}_{\lambda} - \hat{q}_{\lambda}^{y}\right| \leq \frac{c}{\mu} \LRs{\sqrt{\frac{\log n}{n}} + \mathfrak{R}_n(\gF)}}\geq n^{-c}.\nonumber
    \end{align}
\end{lemma}


\begin{proof}[Proof of Theorem 3.4]
Given any value $y\in \gY$, we define the hypothesized loss,
\begin{align}
    \gL_{n+1}(\lambda; y) &= \sum_{i=1}^{n}\phi(Y_i, z_{\lambda}(X_i; \hat{q}_{\lambda}^y)) + \phi(y, z_{\lambda}(X_{n+1}; \hat{q}_{\lambda}^y)),\nonumber
\end{align}
where $\gU_{\lambda}(X_i;\hat{q}_{\lambda}^y) = \{c\in \gY: S_{\lambda}(X_i,c) \leq \hat{q}_{\lambda}^y\}$ with $\hat{q}_{\lambda}^y = Q_{1-\alpha}\LRs{\{S_{\lambda}(X_i,Y_i)\}_{i=1}^n \cup \{S_{\lambda}(X_{n+1},y)\}}$.
\paragraph*{Step 1: model selection consistency for finite index set.}
We first show that for any $y\in \gY$, $\hat{\lambda}^y = \lambda^*$ holds with high probability. If there exists some $\lambda \in \Lambda$ and $\lambda \neq \lambda^*$ such that $\gL_{n+1}(\lambda;y) < \gL_{n+1}(\lambda^*;y)$, then we have
\begin{align}
    \phi(y,z_{\lambda^*}(X_{n+1};\hat{q}_{\lambda^*}^y)) - &\phi(y,z_{\lambda}(X_{n+1};\hat{q}_{\lambda}^y)) > \sum_{i=1}^n \phi(Y_i,z_{{\lambda}}(X_i;\hat{q}_{{\lambda}}^y)) - \sum_{i=1}^n \phi(Y_i,z_{\lambda^*}(X_i;\hat{q}_{\lambda^*}^y))\nonumber\\
    &= n\Big(\E[\phi(Y_{n+1},z_{\lambda}(X_{n+1};q_{\lambda}^{o}))] - \E[\phi(Y_{n+1},z_{\lambda^*}(X_{n+1};q_{\lambda^*}^{o}))]\Big)\nonumber\\
    &\qquad+ \underbrace{\sum_{i=1}^{n}\Big(\phi(Y_i,z_{{\lambda}}(X_i;\hat{q}_{{\lambda}}^y)) - \E[\phi(Y_{n+1},z_{\lambda}(X_{n+1};q_{\lambda}^{o}))]\Big)}_{\Delta_{\lambda}(y)}\nonumber\\
    &\qquad- \underbrace{\sum_{i=1}^{n}\Big(\phi(Y_i,z_{\lambda^*}(X_i;\hat{q}_{\lambda^*}^y)) - \E[\phi(Y_{n+1},z_{\lambda^*}(X_{n+1};q_{\lambda^*}^{o}))]\Big)}_{\Delta_{\lambda^*}(y)}\nonumber\\
    \geq& n\beta_n - |\Delta_{\lambda}(y)| - |\Delta_{\lambda^*}(y)|,\nonumber
\end{align}
where the last inequality holds due to the optimality gap condition in Theorem 3.5.
By the definition $\hat{\lambda}^y = \argmin_{\lambda^{\prime}\in \Lambda} \gL_{n+1}(\lambda^{\prime};y)$ and Assumption 2, we have
\begin{align}\label{eq:model_selection_event}
    \sP\LRl{\exists y \in \gY, \hat{\lambda}^y \neq \lambda^*} \leq  \sP\LRl{\frac{2}{n}\sup_{y\in \gY,\lambda \in \Lambda}\Delta_{\lambda}(y) \geq \beta_n -  \frac{2B}{n}}.
\end{align}
Using Assumption 3, for any $y\in \gY$ we have
    \begin{align}
        \max_{i\in [n]}\Big|\phi(Y_i,z_{{\lambda}}(X_i;\hat{q}_{{\lambda}}^y)) - \phi(Y_i,z_{{\lambda}}(X_i;q_{{\lambda}}^{o}))\Big| \leq L \cdot |\hat{q}_{\lambda}^y - q_{\lambda}^{o}|.\nonumber
    \end{align}
    Using Lemma \ref{lemma:uniform_concentration_loss} and \ref{lemma:qy_diff_VC}, with probability at least $1-2n^{-c}$ we have
    \begin{align}
        \frac{2}{n}\sup_{y\in \gY, \lambda \in \Lambda}\Delta_{\lambda}(y) &= 2\sup_{y\in\gY,\lambda \in \Lambda}\left|\frac{1}{n}\sum_{i=1}^{n}\Big(\phi(Y_i,z_{{\lambda}}(X_i;\hat{q}_{{\lambda}}^y)) - \E[\phi(Y_{n+1},z_{\lambda}(X_{n+1};q_{\lambda}^{o}))]\Big)\right|\nonumber\\
        &\leq 2\sup_{y\in \gY, \lambda \in \Lambda} \LRabs{\sum_{i=1}^{n}\phi(Y_i,z_{{\lambda}}(X_i;\hat{q}_{{\lambda}}^y)) - \phi(Y_i,z_{{\lambda}}(X_i;q_{{\lambda}}^{o}))}\nonumber\\
        &+ 2 \sup_{\lambda\in \lambda}\left|\sum_{i=1}^{n} \phi(Y_i, z_{\lambda}(X_i;q_{\lambda}^o)) - \E[\phi(Y_{n+1},z_{\lambda}(X_{n+1}; q_{\lambda}^o))]\right|\nonumber\\
        &\leq \frac{cL}{\mu} \LRs{\sqrt{\frac{\log n}{n}} + \mathfrak{R}_n(\gF)} + c\LRs{B\sqrt{\frac{\log n}{n}} + \mathfrak{R}_n(\gG)}\nonumber\\
        &\leq c\LRs{\frac{L}{\mu} + B}\sqrt{\frac{\log (n \vee |\Lambda|)}{n}},
    \end{align}
    where the last inequality holds due to Lemmas \ref{lemma:rademacher_finite_DKW} and \ref{lemma:rademacher_finite} when $\Lambda$ is a finite set.
    Recalling \eqref{eq:model_selection_event}, together with the assumption $\beta_n \geq O\LRs{(L/\mu + B)\sqrt{\log (n \vee |\Lambda|)} n^{-\gamma}}$ for $\gamma < 1/2$, we can show 
    \begin{equation}\label{eq:model_selection_event_prob}
        \sP\LRl{\forall y \in \gY, \hat{\lambda}^y = \lambda^*} > 1-2n^{-c}.
    \end{equation}
    
    \paragraph*{Step 2: optimality of decision.}
    Recall the definition of $\widehat{\gU}^{\FCROMS}$, we have
    \begin{align}
    \widehat{\gU}^{\FCROMS}(X_{n+1}) &= \LRl{y\in \gY: S_{\hat{\lambda}^y}(X_{n+1}, y) \leq \hat{q}_{\hat{\lambda}^y}^y}\nonumber\\
    &= \LRl{y\in \gY: S_{\hat{\lambda}^y}(X_{n+1}, y) \leq Q_{1-\alpha}\LRs{\{S_{\hat{\lambda}^y}(X_i,Y_i)\}_{i=1}^{n} \cup \{S_{\hat{\lambda}^y}(X_{n+1},y)\}}}\nonumber\\
    &= \LRl{y\in \gY: S_{\hat{\lambda}^y}(X_{n+1}, y) \leq Q_{(1-\alpha)(1+n^{-1})}\LRs{\{S_{\hat{\lambda}^y}(X_i,Y_i)\}_{i=1}^{n}}}\nonumber\\
    &= \LRl{y\in \gY: S_{\hat{\lambda}^y}(X_{n+1}, y) \leq \hat{q}_{\hat{\lambda}^y}},
    \end{align}
    where the second equality holds due to the inflation property of the sample quantile, see Lemma 2 in \citet{romano2019conformalized}.
    Under the event $\gA := \{\forall y\in \gY, \hat{\lambda}^y = \lambda^*\}$, by definitions, the final decision is equivalent to
    \begin{align}
        \hat{z}^{\FCROMS}(X_{n+1}) = z_{\lambda^*}(X_{n+1};\hat{q}_{\lambda^*}) = \argmin_{z\in \gZ}\max_{c\in \gU_{\lambda^*}(X_{n+1}; \hat{q}_{\lambda^*})} \phi(c,z).\nonumber
    \end{align}
    Recalling the definition of optimal decision:
    \begin{align}
        z_{\lambda^*}^o(X_{n+1}) = z(X_{n+1};q_{\lambda^*}^{o}) = \argmin_{z\in \gZ}\max_{c\in \gU_{\lambda^*}(X_{n+1}; q_{\lambda^*}^{o})} \phi(c,z).\nonumber
    \end{align}
    In addition, we define the event $\gE_{\lambda^*} = \LRl{|\hat{q}_{\lambda^*} - q_{\lambda^*}^{o}| \leq \mu ^{-1}\sqrt{c\log n/(2n)}}$. By \eqref{eq:model_selection_event_prob} and Lemma \ref{lemma:quantile_estimation_VC}, we can guarantee $\sP(\gE_{\lambda^*} \cap \gA) \geq 1 - 5n^{-c}$. Using Assumption 2, it follows that
    \begin{align}
        &\left|\E[\phi(Y_{n+1}, \hat{z}(X_{n+1}))] - \E[\phi(Y_{n+1}, z_{\lambda^*}^o(X_{n+1}))]\right|\nonumber\\
        &= \left|\E\LRm{\mathbbm{1}_{\gE_{\lambda^*} \cap \gA}\LRl{\phi(Y_{n+1}, z(X_{n+1};\hat{q}_{\lambda^*})) - \phi(Y_{n+1}, z(X_{n+1};q_{\lambda^*}))}}\right| + 2B\cdot \sP(\gA^c\cup \gE_{\lambda^*}^c)\nonumber\\
        &\leq \frac{L }{\mu }\sqrt{\frac{c\log n}{n}}  + 10B n^{-c}.\nonumber
    \end{align}
    We can prove the conclusion since $\E[\phi(Y_{n+1}, z_{\lambda^*}(X_{n+1}))] = v_{\Lambda}^*$.
\end{proof}


\subsubsection{Optimality of F-CROMS under continuous model class}

    \begin{assumption}\label{assum:set_smooth}
        For the general region $\gU(x) \subseteq \gY$, if  $\gU_{\lambda^*}(x; q_{\lambda^*}^o - e_n) \subseteq \gU(x) \subseteq \gU_{\lambda^*}(x; q_{\lambda^*}^o + e_n)$ with $e_n = o(1)$, then $\sup_{x,y}|\phi(y,z_{\gU}(x)) - \phi(y, z_{\lambda^*}(x;q_{\lambda^*}^o))| \leq \kappa e_n$ for some $\kappa > 0$, where $z_{\gU}(x) = \argmin_{z\in \gZ}\max_{c\in \gU(x)} \phi(c,z)$.
    \end{assumption}
    
    \begin{theorem}\label{thm:FCROMS_optimality_VC}
        For {a continuous index set $\Lambda$}, suppose that there exist two positive sequences $\beta_n$ and $\delta_n = o(1)$ such that $\E[\phi(Y,z_{\lambda}^o(X))] \geq \E[\phi(Y, z_{\lambda^*}^o(X))] + \beta_n$ holds for any $\|\lambda - \hat{\lambda}^y\| \leq \delta_n$ and $\sup_{(x,y)\in \gX \times \gY}|S_{\lambda}(x,y) - S_{\hat{\lambda}^y}(x,y)| \leq \bar{L}_{\Lambda}\delta_n$ if $\|\lambda - \hat{\lambda}^y\| \leq \delta_n$. Under Assumptions 1-3 and \ref{assum:set_smooth}, if $e_n = O\LRl{\frac{\mu}{L}\LRs{\mathfrak{R}_n(\gF)+\sqrt{\frac{\log n}{n}}} + \bar{L}_{\Lambda}\delta_n}$ in Assumption \ref{assum:set_smooth} and $\beta_n \geq O\LRl{\LRs{\frac{\mu}{L}+B} \sqrt{\frac{\log n}{n}} + \frac{\mu}{L}\mathfrak{R}_n(\gF) + \mathfrak{R}_n(\gG)}$, then we have
        \begin{align}
            \left|\E\LRm{\phi\LRs{Y_{n+1}, \hat{z}^{\FCROMS}(X_{n+1})}} - v_{\Lambda}^*\right| \leq O\LRl{\frac{\kappa L}{\mu} \LRs{\sqrt{\frac{\log n}{n}} + \mathfrak{R}_n(\gF)} + \kappa\bar{L}_{\Lambda}\delta_n + \frac{B}{n}}.\nonumber
        \end{align}
    \end{theorem}

Compared with Theorem 3.2, we replace the minimum risk gap condition with a ``continuous'' version in this theorem. It guarantees that, under a small perturbation $\beta_n$ in the oracle decision risk, the deviation between the selected model index $\hat{\lambda}^y$ and the optimal index $\lambda^*$ can be bounded by $\delta_n$. 
Such a condition ensures the model selection \emph{stability} of the F-CROMS method, that is $\|\hat{\lambda}^y - \lambda^*\| \leq \delta_n$ holds for any $y\in \gY$ with high probability. This type of stability is crucial for analyzing the theoretical properties of full conformal prediction methods, as discussed in \citet{bian2023training} and \citet{liang2025algorithmic}. Moreover, the required smoothness condition on $S_{\lambda}(x,y)$ around $\hat{\lambda}^y$ is naturally satisfied for the form $S_{\lambda}(x,y) = f(g_{\lambda}(x), y)$, provided that $f$ and $g$ are Lipschitz smooth.

\begin{proof}[Proof of Theorem \ref{thm:FCROMS_optimality_VC}]
    We use the same notations in the proof of Theorem 3.4.
    \paragraph*{Step 1: model selection consistency.}
    Due to the assumption
    \[
    \beta_n \geq O\LRl{\frac{L}{\mu} \LRs{\sqrt{\frac{\log n}{n}} + \mathfrak{R}_n(\gF)} + B\sqrt{\frac{\log n}{n}} + \mathfrak{R}_n(\gG)},
    \]
    using similar arguments in Step 1 of the proof of Theorem 3.5, we can show
    \begin{equation}\label{eq:model_selection_event_prob_VC}
        \sP\LRl{\forall y \in \gY, \|\hat{\lambda}^y - \lambda^*\| \leq \delta_n} > 1-2n^{-c}.
    \end{equation}
    \paragraph*{Step 2: optimality of decision.}
    Recall the definition of $\widehat{\gU}^{\FCROMS}$, we have
    \begin{align}\label{eq:U_full_expansion}
    \widehat{\gU}^{\FCROMS}(X_{n+1}) 
    &= \LRl{y\in \gY: S_{\hat{\lambda}^y}(X_{n+1}, y) \leq \hat{q}_{\hat{\lambda}^y}^y}\nonumber\\
    &= \LRl{y\in \gY: S_{\lambda^*}(X_{n+1}, y) \leq S_{\lambda^*}(X_{n+1}, y) -S_{\hat{\lambda}^y}(X_{n+1}, y) + \hat{q}_{\hat{\lambda}^y}^y}\nonumber\\
    &=: \LRl{y\in \gY: S_{\lambda^*}(X_{n+1}, y) \leq \widetilde{Q}^y}.
    \end{align}
    By the smoothness condition of $S_{\lambda}$, we have $\max_{i\in [n]}|S_{\lambda^*}(X_i,Y_i) - S_{\hat{\lambda}^y}(X_i,Y_i)| \leq \bar{L}_{\Lambda}\|\lambda^* - \hat{\lambda}^y\|$ and $|S_{\lambda^*}(X_{n+1}, y) - S_{\hat{\lambda}^y}(X_i,y)| \leq \bar{L}_{\Lambda}\|\lambda^* - \hat{\lambda}^y\|$. By Lemma \ref{lemma:quantile_uniform_diff}, we know $|\hat{q}_{\hat{\lambda}_y}^y - \hat{q}_{\lambda^*}^y| \leq \bar{L}_{\Lambda}\|\lambda^* - \hat{\lambda}^y\|.$
    Applying Lemma \ref{lemma:qy_diff_VC} and the relation \eqref{eq:model_selection_event_prob_VC}, with probability $1-5n^{-c}$,
    \begin{align}
        \sup_{y\in \gY}|\widetilde{Q}^y - q_{\lambda}^o| &\leq \sup_{y\in \gY}|\hat{q}_{\hat{\lambda}_y} - \hat{q}_{\lambda^*}^y| + \sup_{y\in \gY}|\hat{q}_{\lambda^*}^y - q_{\lambda^*}^o| + \sup_{y\in \gY} |S_{\lambda^*}(X_{n+1}, y) -S_{\hat{\lambda}^y}(X_{n+1}, y)|\nonumber\\
        &\leq \bar{L}_{\Lambda}\sup_{y\in \gY}\|\lambda^* - \hat{\lambda}^y\| + \frac{c}{\mu} \LRs{\sqrt{\frac{\log n}{n}} + \mathfrak{R}_n(\gF)} + \bar{L}_{\Lambda}\sup_{y\in \gY}\|\hat{\lambda}^y - \lambda^*\|\nonumber\\
        &\leq \underbrace{\frac{c}{\mu} \LRs{\sqrt{\frac{\log n}{n}} + \mathfrak{R}_n(\gF)} + 2\bar{L}_{\Lambda}\delta_n}_{e_n},\nonumber
    \end{align}
    where the second inequality holds due to Assumption \ref{assum:set_smooth}; and the last inequality holds due to Lemma \ref{lemma:quantile_estimation_VC} and \eqref{eq:model_selection_event_prob_VC}. With the same probability, by \eqref{eq:U_full_expansion}, we have
    \begin{align}
        \widehat{\gU}^{\FCROMS}(X_{n+1}) \subseteq \gU_{\lambda^*}(X_{n+1}; q_{\lambda^*}^o + e_n),\quad
        \gU_{\lambda^*}(X_{n+1}; q_{\lambda^*}^o - e_n) \subseteq \widehat{\gU}^{\FCROMS}(X_{n+1}).\nonumber
    \end{align}
    Using the additional assumption (ii), we can have
    \begin{align*}
        \left|\E\LRm{\phi\LRs{Y_{n+1}, z^{\FCROMS}(X_{n+1})}} - \E\LRm{\phi\LRs{Y_{n+1}, z_{\lambda^*}(X_{n+1})}}\right| \leq \kappa e_n + 10B n^{-c}.
    \end{align*}
    Then the conclusion follows from the definition of $e_n$.
    \end{proof}

\begin{lemma}\label{lemma:quantile_uniform_diff}
    Let $\{a_i\}_{i=1}^n$ and $\{b_i\}_{i=1}^n$ be two sequences without ties. Denote $a_{(1)} \leq \ldots \leq a_{(n)}$ and $b_{(1)} \leq \ldots \leq b_{(n)}$. If $\max_{i\in [n]}|a_i - b_i| \leq e$, then we have $|a_{(k)} - b_{(k)}| \leq e$ for any $k\in [n]$.
\end{lemma}

\begin{proof}
    We first notice that
    \begin{align*}
        \sum_{i=1}^n \mathbbm{1}\{a_i \leq b_{(k)} + e\} \geq \sum_{i=1}^n \mathbbm{1}\{b_i + e \leq b_{(k)} + e\} = k,
    \end{align*}
    which means that $a_{(k)} \leq b_{(k)} + e$. On the contrary side, we also have
    \begin{align*}
        \sum_{i=1}^n \mathbbm{1}\{a_i > b_{(k)} - e\} \geq \sum_{i=1}^n \mathbbm{1}\{b_i - e > b_{(k)} - e\} = n-k,
    \end{align*}
    which means that $a_{(k)} \geq  b_{(k)} - e$.
\end{proof}

\subsection{Proofs of grid-approximated F-CROMS}

\subsubsection{Proof of Theorem 3.3}
\begin{proof}
Let $\widetilde{\gD}_n = \{(X_i,\widetilde{Y_i})\}_{i=1}^{n+1}$ denote the entire discretized dataset, where $\widetilde{\gY}_i = \sD(Y_i)$. Since $\sD$ is a deterministic mapping, the discretized data $\{(X_i, \widetilde{Y_i})\}_{i=1}^{n+1}$ remain i.i.d. (or exchangeable). By the intermediate result established in the proof of Theorem 3.4, we have:
$$
\mathbb{P}\left\{ Y_{n+1} \in \widehat{\gU}^{\GFCROMS} (X_{n+1})\right\} = \mathbb{P}\left\{\widetilde{Y}_{n+1} \in \widetilde{\gU}^{\FCROMS}(X_{n+1})\right\} \geq 1-\alpha.
$$
where the first equality follows from the definition of the inverse mapping $\sD^{-1}$.
\end{proof}

\subsubsection{Proof of Theorem 3.4}

\begin{assumption}\label{assum:smooth_score}
    The score functions satisfy $\sup_{x\in \gX}|S_{\lambda}(x,y) - S_{\lambda}(x,y^{\prime})| \leq \bar{L}_{\gY}\|y - y^{\prime}\|$ for any $\lambda \in \Lambda$. The loss function satisfies $\sup_{z\in \gZ}|\phi(y,z) - \phi(y^{\prime},z)| \leq L_{\gY} \|y - y^{\prime}\|$.
\end{assumption}

\begin{proof}[Proof of Theorem 3.4]
    By the definition of the discretization mapping, we know $\|y - \sD(y)\| \leq \epsilon_{\rm{grid}}$ for any $y\in \gY$. 
    Given any value $\tilde{y}\in \widetilde{\gY}$, we define the hypothesized discretized loss as
    \begin{align}
        \widetilde{\gL}_{n+1}(\lambda; \tilde{y}) &= \frac{1}{n+1}\LRl{\sum_{i=1}^{n}\phi\LRs{\widetilde{Y_i}, z_{\lambda}(X_i; \tilde{q}_{\lambda}^{\tilde{y}})} + \phi\LRs{\tilde{y}, z_{\lambda}(X_{n+1}; \tilde{q}_{\lambda}^{\tilde{y}})}},\nonumber
    \end{align}
    where $\tilde{q}_{\lambda}^{\tilde{y}} = Q_{1-\alpha}\LRs{\{S_{\lambda}(X_i,\widetilde{Y_i})\}_{i=1}^n \cup \{S_{\lambda}(X_{n+1},\tilde{y})\}}$. According to assumption, it holds that $\max_{i\in [n]}|S_{\lambda}(X_i, \widetilde{Y_i}) - S_{\lambda}(X_i, Y_i)| \leq \bar{L}_{\gY}\epsilon_{\rm{grid}}$. By Lemma \ref{lemma:quantile_uniform_diff}, we have $\left|\tilde{q}_{\lambda}^{\tilde{y}} - \hat{q}_{\lambda}^{\sD^{-1}(\tilde{y})}\right| \leq \bar{L}_{\gY}\epsilon_{\rm{grid}}$. Using Lemma \ref{lemma:qy_diff_VC}, we further have
    \begin{align}\label{eq:discretized_quantile_diff}
        \sP\LRl{\sup_{\tilde{y}\in \widetilde{\gY}, \lambda \in \Lambda}\left|\tilde{q}_{\lambda}^{\tilde{y}} - q_{\lambda}^o\right| \leq \bar{L}_{\gY}\epsilon_{\rm{grid}} + \frac{c}{\mu} \LRs{\sqrt{\frac{\log n}{n}} + \mathfrak{R}_n(\gF)}} \geq 1-n^{-c}.
    \end{align}
    By Assumptions 3 and \ref{assum:smooth_score}, we have
    \begin{align}\label{eq:discretization_loss_error}
        &\sup_{\tilde{y}\in \widetilde{\gY},\lambda \in \Lambda}\max_{i\in [n]}\left|\phi\LRs{\widetilde{Y_i}, z_{\lambda}(X_i; \tilde{q}_{\lambda}^{\tilde{y}})} - \phi(Y_i, z_{\lambda}(X_i;q_{\lambda}^o))\right|\nonumber\\
        &\leq \sup_{\tilde{y}\in \widetilde{\gY},\lambda \in \Lambda}\max_{i\in [n]}\left|\phi\LRs{\widetilde{Y_i}, z_{\lambda}(X_i; \tilde{q}_{\lambda}^{\tilde{y}})} - \phi\LRs{Y_i, z_{\lambda}(X_i; \tilde{q}_{\lambda}^{\tilde{y}})}\right|\nonumber\\
        &\qquad + \sup_{\tilde{y}\in \widetilde{\gY},\lambda \in \Lambda}\max_{i\in [n]}\left|\phi\LRs{Y_i, z_{\lambda}(X_i; \tilde{q}_{\lambda}^{\tilde{y}})} - \phi(Y_i, z_{\lambda}(X_i;q_{\lambda}^o))\right|\nonumber\\
        &\leq L_{\gY}\epsilon_{\rm{grid}} + L\bar{L}_{\gY}\epsilon_{\rm{grid}} + \frac{cL}{\mu} \LRs{\sqrt{\frac{\log n}{n}} + \mathfrak{R}_n(\gF)}.
    \end{align}
    \paragraph*{Step 1: model selection consistency.}
    Notice that, $\tilde{\lambda}^{\tilde{y}}=\widetilde{\gL}_{n+1}(\lambda; \tilde{y})$.
    We first show that for any $\tilde{y}\in \gY$, $\tilde{\lambda}^{\tilde{y}} = \lambda^*$ holds with high probability. If there exists some $\lambda \in \Lambda$ and $\lambda \neq \lambda^*$ such that $\widetilde{\gL}_{n+1}(\lambda;\tilde{y}) < \widetilde{\gL}_{n+1}(\lambda^*;\tilde{y})$, then we have
    \begin{align}
        \phi(\tilde{y},z_{\lambda^*}(X_{n+1};\tilde{q}_{\lambda^*}^{\tilde{y}})) - &\phi(\tilde{y},z_{\lambda}(X_{n+1};\tilde{q}_{\lambda}^{\tilde{y}})) > \sum_{i=1}^n \phi(\widetilde{Y_i},z_{{\lambda}}(X_i;\tilde{q}_{{\lambda}}^{\tilde{y}})) - \sum_{i=1}^n \phi(\widetilde{Y_i},z_{\lambda^*}(X_i;\tilde{q}_{\lambda^*}^{\tilde{y}}))\nonumber\\
        &= n\Big(\E[\phi(Y_{n+1},z_{\lambda}(X_{n+1};q_{\lambda}^{o}))] - \E[\phi(Y_{n+1},z_{\lambda^*}(X_{n+1};q_{\lambda^*}^{o}))]\Big)\nonumber\\
        &\qquad+ \underbrace{\sum_{i=1}^{n}\Big(\phi(\widetilde{Y_i},z_{{\lambda}}(X_i;\tilde{q}_{{\lambda}}^{\tilde{y}})) - \E[\phi(Y_{n+1},z_{\lambda}(X_{n+1};q_{\lambda}^{o}))]\Big)}_{\widetilde{\Delta}_{\lambda}(\tilde{y})}\nonumber\\
        &\qquad- \underbrace{\sum_{i=1}^{n}\Big(\phi(\widetilde{Y_i},z_{\lambda^*}(X_i;\tilde{q}_{\lambda^*}^{\tilde{y}})) - \E[\phi(Y_{n+1},z_{\lambda^*}(X_{n+1};q_{\lambda^*}^{o}))]\Big)}_{\widetilde{\Delta}_{\lambda^*}(\tilde{y})}\nonumber\\
        &\geq n\beta_n - |\widetilde{\Delta}_{\lambda}(\tilde{y})| - |\widetilde{\Delta}_{\lambda^*}(\tilde{y})|,\nonumber
    \end{align}
    where the last inequality holds due to the minimum risk gap condition. Using \eqref{eq:discretization_loss_error}, we can have
    \begin{align}
        \frac{1}{n}\sup_{\lambda \in \Lambda,\tilde{y} \in \widetilde{\gY}} |\widetilde{\Delta}_{\lambda}(\tilde{y})| &\leq \sup_{\tilde{y}\in \widetilde{\gY}, \lambda \in \Lambda} \LRabs{\frac{1}{n}\sum_{i=1}^{n}\phi(\widetilde{Y_i},z_{{\lambda}}(X_i;\tilde{q}_{{\lambda}}^{\tilde{y}})) - \phi(Y_i,z_{{\lambda}}(X_i;q_{{\lambda}}^{o}))}\nonumber\\
        &\quad +  \sup_{\lambda\in \lambda}\left|\frac{1}{n}\sum_{i=1}^{n} \phi(Y_i, z_{\lambda}(X_i;q_{\lambda}^o)) - \E[\phi(Y_{n+1},z_{\lambda}(X_{n+1}; q_{\lambda}^o))]\right|\nonumber\\
        &\leq (L_{\gY}+L\bar{L}_{\gY})\epsilon_{\rm{grid}} + \frac{cL}{\mu} \LRs{\sqrt{\frac{\log n}{n}} + \mathfrak{R}_n(\gF)} + c\LRs{B\sqrt{\frac{\log n}{n}} + \mathfrak{R}_n(\gG)}\nonumber\\
        &\leq (L_{\gY}+L\bar{L}_{\gY})\epsilon_{\rm{grid}} + c\LRs{\frac{L}{\mu} + B}\sqrt{\frac{\log(n \vee |\Lambda|)}{n}},
    \end{align}
    where the last inequality holds due to Lemmas \ref{lemma:rademacher_finite_DKW} and \ref{lemma:rademacher_finite} when $\Lambda$ is a finite set.
    Recalling \eqref{eq:model_selection_event}, together with the assumption $\beta_n > O\LRs{(L_{\gY}+L\bar{L}_{\gY})\epsilon_{\rm{grid}}+(L/\mu + B) \sqrt{\frac{\log (n \vee |\Lambda|)}{n}}}$, we can show
    \begin{equation}\label{eq:grid_model_selection_event_prob}
        \sP\LRl{\forall \tilde{y} \in \widetilde{\gY}, \tilde{\lambda}^{\tilde{y}} = \lambda^*} > 1-2n^{-c}.
    \end{equation}
    
    \paragraph*{Step 2: optimality of decision.} By the relation \eqref{eq:grid_model_selection_event_prob}, with probability at least $1-2n^{-c}$, we have $\widetilde{\gU}^{\GFCROMS}(X_{n+1}) = \LRl{\tilde{y}\in\widetilde{\gY}: S_{\lambda^*}(X_{n+1}, \tilde{y}) \leq \tilde{q}_{\lambda^*}^{\tilde{y}}}.$
    For any $y\in \gU_{\lambda^*}(X_{n+1}; q_{\lambda^*}^o - e_n)$, using the relation \eqref{eq:discretized_quantile_diff}, we know that with probability at least $1-n^{-c}$,
    \begin{align}
        S_{\lambda^*}(X_{n+1},y) \leq q_{\lambda^*}^o - e_n &\Longrightarrow
        S_{\lambda^*}(X_{n+1}, \sD(y)) \leq q_{\lambda^*}^o - e_n + \bar{L}_{\gY}\epsilon_{\rm{grid}}\nonumber\\
        &\Longrightarrow
        S_{\lambda^*}(X_{n+1}, \sD(y)) \leq \tilde{q}_{\lambda^*}^{\sD(y)} - e_n + \bar{L}_{\gY}\epsilon_{\rm{grid}} + \frac{c}{\mu} \LRs{\sqrt{\frac{\log n}{n}} + \mathfrak{R}_n(\gF)}\nonumber\\
        &\Longrightarrow
        S_{\lambda^*}(X_{n+1}, \sD(y)) \leq \tilde{q}_{\lambda^*}^{\sD(y)},\nonumber
    \end{align}
    where we also used the assumption $e_n = O\LRs{\bar{L}_{\gY}\epsilon_{\rm{grid}} + \frac{1}{\mu} \LRs{\sqrt{\frac{\log n}{n}} + \mathfrak{R}_n(\gF)}}$. It means that $\sP\{\forall y \in \gU_{\lambda^*}(X_{n+1}; q_{\lambda^*}^o - e_n),\sD(y) \in \widetilde{\gU}^{\GFCROMS}(X_{n+1})\} \geq 1-3n^{-c}$.
    Then we have shown
    \begin{align}\label{eq:grid_FCROMS_right}
        \sP\LRl{\gU_{\lambda^*}(X_{n+1}; q_{\lambda^*}^o - e_n) \subseteq \widehat{\gU}^{\GFCROMS}(X_{n+1})} \geq 1-3n^{-c}.
    \end{align}
    For any $\tilde{y} \in \widetilde{\gU}^{\GFCROMS}(X_{n+1})$, using \eqref{eq:grid_model_selection_event_prob}, we have with probability at least $1-n^{-c}$,
    \begin{align}
        S_{\lambda^*}(X_{n+1}, \tilde{y}) \leq \tilde{q}_{\lambda^*}^{\tilde{y}}
        &\Longrightarrow S_{\lambda^*}(X_{n+1}, \tilde{y}) \leq q_{\lambda^*}^o + \frac{c}{\mu} \LRs{\sqrt{\frac{\log n}{n}} + \mathfrak{R}_n(\gF)}\nonumber\\
        &\Longrightarrow S_{\lambda^*}(X_{n+1}, \sD^{-1}(\tilde{y})) \leq q_{\lambda^*}^o + \bar{L}_{\gY}\epsilon_{\rm{grid}} + \frac{c}{\mu} \LRs{\sqrt{\frac{\log n}{n}} + \mathfrak{R}_n(\gF)}\nonumber\\
        &\Longrightarrow S_{\lambda^*}(X_{n+1}, \sD^{-1}(\tilde{y})) \leq q_{\lambda^*}^o + e_n.\nonumber
    \end{align}
    It implies that $\sP\{\forall \tilde{y} \in \widetilde{\gU}^{\GFCROMS}(X_{n+1}),\sD^{-1}(\tilde{y}) \in \gU_{\lambda^*}(X_{n+1}; q_{\lambda^*}^o + e_n)\} \geq 1-3n^{-c}$, which means that
    \begin{align}\label{eq:grid_FCROMS_left}
        \sP\LRl{\widehat{\gU}^{\GFCROMS}(X_{n+1}) \subseteq \gU_{\lambda^*}(X_{n+1}; q_{\lambda^*}^o + e_n)} \geq 1-3n^{-c}.
    \end{align}
    Combining \eqref{eq:grid_FCROMS_right} and \eqref{eq:grid_FCROMS_left}, and using Assumption \ref{assum:set_smooth}, we can have
    \begin{align}
        \sP\LRl{\left|\phi(Y_{n+1}, \hat{z}^{\GFCROMS}(X_{n+1})) - \phi(Y_{n+1}, z_{\lambda^*}(X_{n+1}))\right| \leq \kappa e_n} \geq 1-6n^{-c}.\nonumber
    \end{align}
    Then we can show the conclusion by noticing that $|\phi(y,z)| \leq B$.
\end{proof}

\subsubsection{Continuous index set}
\begin{theorem}
    Under the assumptions of Theorem 3.2 and Assumptions \ref{assum:set_smooth} and \ref{assum:smooth_score}, we additionally assume: $\sup_{(x,y)\in \gX \times \gY}|S_{\lambda}(x,y) - S_{\hat{\lambda}^y}(x,y)| \leq \bar{L}_{\Lambda}\delta_n$ if $\|\lambda - \hat{\lambda}^y\| \leq \delta_n$.
    If $e_n = O\LRl{\bar{L}_{\gY}\epsilon_{\rm{grid}} + \frac{c}{\mu} \LRs{\sqrt{\frac{\log n}{n}} + \mathfrak{R}_n(\gF)} + 2\bar{L}_{\Lambda} \delta_n}$ and
    \begin{align}\label{eq:minimum_gap_GF_continuous}
        \beta_n \geq (L_{\gY}+L\bar{L}_{\gY})\epsilon_{\rm{grid}} + \frac{cL}{\mu} \LRs{\sqrt{\frac{\log n}{n}} + \mathfrak{R}_n(\gF)} + c\LRs{B\sqrt{\frac{\log n}{n}} + \mathfrak{R}_n(\gG)},
    \end{align}
    then we have
        \begin{align}
            \E\LRm{\phi\LRs{Y_{n+1}, \hat{z}^{\GFCROMS}(X_{n+1})}} \leq v_{\Lambda}^* + O\LRl{\frac{\kappa L}{\mu} \LRs{\sqrt{\frac{\log n}{n}} + \mathfrak{R}_n(\gF)} + \kappa\bar{L}_{\Lambda}\delta_n + \frac{B}{n}}.\nonumber
        \end{align}
\end{theorem}
\begin{proof}
    We use the same notations in the proof of Theorem 3.4.
    \paragraph*{Step 1: model selection consistency.}
    Due to the assumption \eqref{eq:minimum_gap_GF_continuous}
    using similar arguments in Step 1 of the proof of Theorem 3.7, we can show
    \begin{equation}\label{eq:model_selection_event_prob_VC_GF}
        \sP\LRl{\forall y \in \gY, \|\tilde{\lambda}^y - \lambda^*\| \leq \delta_n} > 1-2n^{-c}.
    \end{equation}
    \paragraph*{Step 2: optimality of decision.}
    For any $y\in \gU_{\lambda^*}(X_{n+1}; q_{\lambda^*}^o - e_n)$, we know that with probability at least $1-2n^{-c}$,
    \begin{align}
        &S_{\lambda^*}(X_{n+1},y) \leq q_{\lambda^*}^o - e_n\nonumber\\
        &\Longrightarrow
        S_{\lambda^*}(X_{n+1}, \sD(y)) \leq q_{\lambda^*}^o - e_n + \bar{L}_{\gY}\epsilon_{\rm{grid}}\nonumber\\
        &\Longrightarrow S_{\tilde{\lambda}^{\sD(y)}}(X_{n+1}, \sD(y)) \leq q_{\lambda^*}^o - e_n + \bar{L}_{\gY}\epsilon_{\rm{grid}} + S_{\tilde{\lambda}^{\sD(y)}}(X_{n+1}, \sD(y)) - S_{\lambda^*}(X_{n+1}, \sD(y))\nonumber\\
        &\Longrightarrow
        S_{\tilde{\lambda}^{\sD(y)}}(X_{n+1}, \sD(y)) \leq \tilde{q}_{\tilde{\lambda}^{\sD(y)}}^{\sD(y)} - e_n + \bar{L}_{\gY}\epsilon_{\rm{grid}} + \frac{c}{\mu} \LRs{\sqrt{\frac{\log n}{n}} + \mathfrak{R}_n(\gF)} + 2\bar{L}_{\Lambda} \delta_n\nonumber\\
        &\Longrightarrow
        S_{\tilde{\lambda}^{\sD(y)}}(X_{n+1}, \sD(y)) \leq \tilde{q}_{\tilde{\lambda}^{\sD(y)}}^{\sD(y)},\nonumber
    \end{align}
    where we used the relations \eqref{eq:discretized_quantile_diff}, \eqref{eq:model_selection_event_prob_VC_GF}, additional assumption and Lemma \ref{lemma:quantile_uniform_diff}; and the assumption $e_n = O\LRs{\bar{L}_{\gY}\epsilon_{\rm{grid}} + \frac{c}{\mu} \LRs{\sqrt{\frac{\log n}{n}} + \mathfrak{R}_n(\gF)} + 2\bar{L}_{\Lambda} \delta_n}$. Then it implies that $\sD(y) \in \widetilde{\gU}^{\GFCROMS}(X_{n+1})$ with probability at least $1-2n^{-c}$. Hence we have
    \begin{align}
        \sP\LRl{\gU_{\lambda^*}(X_{n+1}; q_{\lambda^*}^o - e_n) \subseteq \widetilde{\gU}^{\GFCROMS}(X_{n+1})} \geq 1-2n^{-c}.\nonumber
    \end{align}
    On the contrary side, for any $\tilde{y} \in \widetilde{\gU}^{\GFCROMS}(X_{n+1})$, with probability at least $1-2n^{-c}$,
    \begin{align}
        &S_{\tilde{\lambda}^{\tilde{y}}}(X_{n+1}, \tilde{y}) \leq \tilde{q}_{\tilde{\lambda}^{\tilde{y}}}^{\tilde{y}}\nonumber\\
        &\Longrightarrow S_{\lambda^*}(X_{n+1}, \tilde{y}) \leq \tilde{q}_{\tilde{\lambda}^{\tilde{y}}}^{\tilde{y}} + \bar{L}_{\Lambda} \delta_n \nonumber\\
        &\Longrightarrow S_{\lambda^*}(X_{n+1}, \sD^{-1}(\tilde{y})) \leq \tilde{q}_{\tilde{\lambda}^{\tilde{y}}}^{\tilde{y}} + \bar{L}_{\Lambda} \delta_n + \bar{L}_{\gY} \epsilon_{\rm{grid}} \nonumber\\
        &\Longrightarrow S_{\lambda^*}(X_{n+1}, \sD^{-1}(\tilde{y})) \leq q_{\lambda^*}^o +\tilde{q}_{\tilde{\lambda}^{\tilde{y}}}^{\tilde{y}} - \tilde{q}_{\lambda^*}^{\tilde{y}} + \tilde{q}_{\lambda^*}^{\tilde{y}} - q_{\lambda^*}^o + \bar{L}_{\Lambda} \delta_n + \bar{L}_{\gY} \epsilon_{\rm{grid}} \nonumber\\
        &\Longrightarrow S_{\lambda^*}(X_{n+1}, \sD^{-1}(\tilde{y})) \leq q_{\lambda^*}^o + \bar{L}_{\gY}\epsilon_{\rm{grid}} + \frac{c}{\mu} \LRs{\sqrt{\frac{\log n}{n}} + \mathfrak{R}_n(\gF)} + 2\bar{L}_{\Lambda} \delta_n\nonumber\\
        &\Longrightarrow S_{\lambda^*}(X_{n+1}, \sD^{-1}(\tilde{y})) \leq q_{\lambda^*}^o + e_n.\nonumber
    \end{align}
    It means that
    \begin{align}
        \sP\LRl{\widetilde{\gU}^{\GFCROMS}(X_{n+1}) \subseteq \gU_{\lambda^*}(X_{n+1}; q_{\lambda^*}^o - e_n)} \geq 1-2n^{-c}.\nonumber
    \end{align}
    Using Assumption \ref{assum:set_smooth}, we can have
    \begin{align*}
        \left|\E\LRm{\phi\LRs{Y_{n+1}, z^{\GFCROMS}(X_{n+1})}} - \E\LRm{\phi\LRs{Y_{n+1}, z_{\lambda^*}(X_{n+1})}}\right| \leq \kappa e_n + 10B n^{-c}.
    \end{align*}
    Then the conclusion follows from the definition of $e_n$.
    \end{proof}

\subsection{Bounds for Rademacher complexities}
Proposition 3.1 can be proved by Lemma \ref{lemma:rademacher_VC}. Proposition 3.2 can be proved by Lemmas \ref{lemma:rademacher_finite} and \ref{lemma:rademacher_lipschitz}.

\begin{lemma}\label{lemma:rademacher_VC}
    If the model class $\{S_{\lambda}(x,y): \lambda \in \Lambda\}$ is VC-class with VC-dimension $\mathsf{v}$, then we have $\mathfrak{R}_n(\gF) \leq c\sqrt{\frac{\mathsf{v}}{n}}$.
\end{lemma}
\begin{proof}
    By Chapter 3.3 of \citet{kearns1994introduction}, we know
    the VC dimension of function class $\gF$ is $\mathsf{v} + 1$. 
    By Theorem 2.6.7 in \citet{van1996weak}, for any probability measure $Q$, there exists a universal constant $c$ such that
    \begin{align}\label{eq:covering_number_VC}
        N(\epsilon, \gF, L_2(Q)) \leq c (\mathsf{v}+1)(16e)^{\mathsf{v}}(1/\epsilon)^{2\mathsf{v}}.
    \end{align}
    For any $f \in \gF$ (i.e., $f(x,y) = \mathbbm{1}\left\{S_{\lambda}(X_i,Y_i) \leq q\right\}$ for some $\lambda \in \Lambda$ and $q\in \sR$), we define
    \begin{align*}
        \sG_n(f) = \frac{1}{\sqrt{n}}\sum_{i=1}^n \xi_i f(X_i,Y_i).
    \end{align*}
    For any $f,g\in \gF$, we define the metric
    \begin{align}
        \|\sG_n(f) - \sG_n(g)\|_{P_n}^2 &= \frac{1}{n}\sup_{\substack{\lambda,\lambda^{'} \in \Lambda,\\ q,q^{'}\in \sR}}\sum_{i=1}^n \LRs{\mathbbm{1}\{S_{\lambda}(X_i,Y_i) \leq q\} - \mathbbm{1}\{S_{\lambda^{\prime}}(X_i,Y_i) \leq q^{\prime}\}}.\nonumber
    \end{align}
    It holds that $\sup_{f,g\in \gF} \|\sG_n(f) - \sG_n(g)\|_{P_n} \leq 1$.
    Taking expectation only on $\xi_1,\ldots,\xi_n$ and using Dudley’s entropy integral bound, we have
    \begin{align*}
        \E\LRm{\sup_{f\in \gF} |\sG_n(f)| \mid \gD_n} \leq c\int_{0}^{1} \sqrt{\log N(\epsilon,\gF,\|\cdot\|_{P_n})} d\epsilon \leq c\sqrt{\mathsf{v}+1},
    \end{align*}
    where we used the upper bound \eqref{eq:covering_number_VC}. Taking full expectation, we can finish the proof.
\end{proof}

\begin{lemma}\label{lemma:rademacher_finite_DKW}
    If $\Lambda$ is a finite index set, i.e., $|\Lambda| < \infty$, then we have $\mathfrak{R}_n(\gF) \leq O(\sqrt{\log (|\Lambda|)/n})$.
\end{lemma}
\begin{proof}
    When $\Lambda$ is a finite set, the class $\gF_{\lambda} = \{\mathbbm{1}\{S_{\lambda}(x,y) \leq q\}: q\in \sR\}$ has VC-dimension 1. Then $\gF = \cup_{\lambda \in \Lambda} \gF_{\lambda}$ has VC-dimension $2+ \log_2 (|\Lambda|)$. Dudley’s entropy integral shows that $\mathfrak{R}_n(\gF) \leq O(\sqrt{\log (|\Lambda|)/n})$.
\end{proof}

\begin{lemma}\label{lemma:rademacher_finite}
    If $\Lambda$ is a finite index set, i.e., $|\Lambda| < \infty$, then we have $\mathfrak{R}_n(\gG) \leq 2B \sqrt{\frac{\log (2|\Lambda|)}{n}}$.
\end{lemma}
\begin{proof}
    We first show the conclusion for a general finite function class $\gG$, which is based on the proof in \citet{lecture9}.
    For convenience, let us augment the class $\widetilde{\gG} = \gG \cup -\gG$. Then it holds that
    \begin{align*}
        \mathfrak{R}_n(\gG) \leq \E\LRm{\sup_{f\in \widetilde{\gG}} \frac{1}{n}\sum_{i=1}^n \xi_i f(X_i,Y_i)}.
    \end{align*}
    It implies that for $t\geq 0$, if $\sup_{f\in \gG}|f| \leq b$ we have
    \begin{align*}
        \exp\LRl{t\mathfrak{R}_n(\gG)} &\leq \exp\LRl{t \E\LRm{\sup_{f\in \widetilde{\gG}} \frac{1}{n}\sum_{i=1}^n \xi_i f(X_i,Y_i)}}\\
        &\leq \E\LRm{\exp\LRl{t \sup_{f\in \widetilde{\gG}} \frac{1}{n}\sum_{i=1}^n \xi_i f(X_i,Y_i)}}\\
        &\leq \sum_{f\in \widetilde{\gG}}\prod_{i=1}^n\E\LRm{\exp\LRl{t \frac{1}{n} \xi_i f(X_i,Y_i)}}\\
        &= 2|\gG|\cdot \exp\LRs{\frac{4t^2 b^2}{n}},
    \end{align*}
    where the last inequality follows the proof of Hoeffding’s inequality. It follows that $\mathfrak{R}_n(\gG) \leq \frac{\log(2|\gG|)}{t} + \frac{4tb^2}{n}$.
    Choosing $t = \sqrt{\frac{n \log(2|\gG|)}{4b^2}}$, we can prove that $\mathfrak{R}_n(\gG) \leq 2b\sqrt{\frac{\log(2|\gG|)}{n}}$. Then the conclusion follows from $|\gG| = |\Lambda|$.
\end{proof}


\begin{lemma}\label{lemma:rademacher_lipschitz}
    For the continuous index set $\Lambda \subseteq \sR^m$ with bounded radius $R$, if there exists a constant $L_{\Lambda} > 0$ such that  $\sup_{x\in \gX,y\in \gY}\LRabs{\phi(y,z_{\lambda}(x; q_{\lambda}^o)) - \phi(y,z_{\lambda^{\prime}}(x;q_{\lambda^{\prime}}^o))} \leq L_{\Lambda} \|\lambda-\lambda^{\prime}\|$ for any $\|\lambda - \lambda^{\prime}\| \leq O(n^{-1})$, then we have $\mathfrak{R}_n(\gG) \leq O\LRs{B\sqrt{\frac{m \log(R n)}{n}} + \frac{L_{\Lambda}}{n}}$.
\end{lemma}

\begin{proof}
    Let $\{\lambda_{\ell}\}_{\ell=1}^{N_{\epsilon}}$ be an $\epsilon$-covering of $\Lambda \subset \sR^m$ under Euclidean norm $\|\cdot\|$, where $\epsilon \leq n^{-1}$. It holds that $N_{\epsilon} \leq O\{(2R/\epsilon)^m\}$.
    Then for any $\lambda \in \Lambda$, there exists some $\lambda_{\ell}$ such that $\|\lambda - \lambda_{\ell}\| \leq \epsilon$ and $\|\lambda - \lambda_{\ell^{'}}\| > \epsilon$ for $\ell^{'}\neq \ell$. It follows that
    \begin{align}
        \mathfrak{R}_n(\gG) &= \E\LRm{\sup_{\lambda \in \Lambda}\mathbbm{1}_{\gE}\LRabs{\frac{1}{n}\sum_{i=1}^n \xi_i\phi(Y_i,z_{\lambda}(X_i;q_{\lambda}^{o}))}}\nonumber\\
        &\leq \E\LRm{\sup_{\lambda \in \Lambda}\sum_{\ell \in [N_{\epsilon}]}\mathbbm{1}\LRl{\|\lambda - \lambda_{\ell}\| \leq \epsilon} \left|\frac{1}{n}\sum_{i=1}^n \xi_i \LRm{\phi(Y_i,z_{\lambda_{\ell}}(X_i;q_{\lambda_{\ell}}^o)) - \phi(Y_i,z_{\lambda}(X_i;q_{\lambda}^o))}\right|}\nonumber\\
        &\qquad + \E\LRm{\sum_{\ell \in [N_{\epsilon}]}\mathbbm{1}\LRl{\|\lambda - \lambda_{\ell}\| \leq \epsilon}\LRabs{\frac{1}{n}\sum_{i=1}^n\xi_i \phi(Y_i,z_{\lambda_{\ell}}(X_i;q_{\lambda_{\ell}}^o))}}\nonumber\\
        &\leq L_{\Lambda} \epsilon + \E\LRm{\max_{\ell \in [N_{\epsilon}]}\LRabs{\frac{1}{n}\sum_{i=1}^n \xi_i \phi(Y_i,z_{\lambda_{\ell}}(X_i;q_{\lambda_{\ell}}^o))}},\nonumber
    \end{align}
    where the last inequality holds due to locally Lipschitz continuity on $\lambda$ and $\epsilon \leq n^{-1}$. By the same proof of Lemma \ref{lemma:rademacher_finite} with finite index set $[N_{\epsilon}]$, we know
    \begin{align*}
        \E\LRm{\max_{\ell \in [N_{\epsilon}]}\LRabs{\frac{1}{n}\sum_{i=1}^n \xi_i \phi(Y_i,z_{\lambda_{\ell}}(X_i;q_{\lambda_{\ell}}^o))}} \leq 2B \sqrt{\frac{\log (2N_{\epsilon})}{n}}.
    \end{align*}
    Taking $\epsilon = n^{-1}$, together with $N_{\epsilon} \leq O\{(R/\epsilon)^m\}$, we can prove the conclusion.
\end{proof}

\subsection{Proofs of main lemmas}

\subsubsection{Proof of Lemma \ref{lemma:quantile_estimation_VC}}

\begin{proof}
    Denote $M_n = \sup_{\lambda\in \Lambda, q\in \sR} \left|\frac{1}{n}\sum_{i=1}^{n}\mathbbm{1}\LRl{S_{\lambda}(X_i,Y_i) \leq q} - \sP(S_{\lambda}(X_i,Y_i) \leq q)\right|$.
    Applying McDiarmid’s inequality, we have
    \begin{align*}
        \sP\LRs{M_n  > \E[M_n] + t } \leq \exp\LRs{-\frac{nt^2}{2}}.
    \end{align*}
    Same as \eqref{eq:symmetrization}, we have $\E[M_n] \leq 2 \mathfrak{R}_n(\gF)$. Fixing $\beta \in (0,1)$ and
    taking $\epsilon_n = \sqrt{\frac{2c\log n}{n}}+2 \mathfrak{R}_n(\gF)$, it follows from the definition of sample quantile that
    \begin{align}
        &\sP\LRl{\exists\lambda\in \lambda,\ Q_{\beta}\LRs{\{S_{\lambda}(X_i,Y_i)\}_{i=1}^{n}} > F_{\lambda}^{-1}(\beta + \epsilon_n)}\nonumber\\
        &\leq \sP\LRl{\exists\lambda\in \lambda,\ \frac{1}{n}\sum_{i=1}^{n}\mathbbm{1}\LRl{S_{\lambda}(X_i,Y_i) \leq F_{\lambda}^{-1}(\beta + \epsilon_n)} < \beta}\nonumber\\
        &\leq \sP\LRl{\sup_{\lambda\in\Lambda,q\in \sR}\left|\frac{1}{n}\sum_{i=1}^{n}\mathbbm{1}\LRl{S_{\lambda}(X_i,Y_i) \leq q} - \sP\LRl{S_{\lambda}(X_i,Y_i) \leq q}\right| > \epsilon_n}\nonumber\\
        &\leq 2n^{-c},\nonumber
    \end{align}
    Similarly, we can also show that $\sP\LRl{\exists\lambda\in \lambda,Q_{\beta}\LRs{\{S_{\lambda}(X_i,Y_i)\}_{i=1}^{n}} < F_{\lambda}^{-1}(\beta -\epsilon_n)} \leq 2n^{-c}.$
    It means that
    \begin{align}
        \sP\LRl{\forall \lambda\in \lambda, F_{\lambda}^{-1}(\beta-\epsilon_n) \leq Q_{\beta}\LRs{\{S_{\lambda}(X_i,Y_i)\}_{i=1}^{n}}\leq F_{\lambda}^{-1}(\beta+\epsilon_n)} \geq 1 - 4n^{-c}.\nonumber
    \end{align}
    In addition, since the density $f_{\lambda}(s)$ is lower bounded by $\mu $ for $s \in [F_{\lambda}^{-1}(1-\alpha-\epsilon_n), F_{\lambda}^{-1}(1-\alpha+\epsilon_n)]$, we have $F_{\lambda}^{-1}(1-\alpha+\epsilon_n) - q_{\lambda}^{o} = F_{\lambda}^{-1}(1-\alpha+\epsilon_n) - F_{\lambda}^{-1}(1-\alpha) \leq \frac{\epsilon_n}{\mu }$. Combining the results above, we can prove the conclusion.
\end{proof}

\subsubsection{Proof of Lemma \ref{lemma:uniform_concentration_loss}}
\begin{proof}
    Denote $G_n = \sup_{\lambda \in \Lambda,q\in \sR}\LRabs{\frac{1}{n}\sum_{i=1}^n \phi(Y_i,z_{\lambda}(X_i;q)) - \E\LRm{\phi(Y, z_{\lambda}(X;q))}}$. Since $\sup_{\lambda\in \Lambda}|\phi(Y_i,z_{\lambda}(X_i;q_{\lambda}^{o}))| \leq B$, using McDiarmid’s inequality gives
    \begin{align*}
        \sP\LRs{G_n  > \E[G_n] + t } \leq \exp\LRs{-\frac{nt^2}{2B^2}}.
    \end{align*}
    Using the symmetrization again, we have $\E[G_n] \leq 2 \mathfrak{R}_n(\gG)$ with $\gG = \{\phi(y,z_{\lambda}(x;q)): \lambda \in \Lambda, q\in \gQ\}$. Taking $t = B\sqrt{\frac{2c\log n}{n}}$, we can prove the conclusion.
\end{proof}

\subsubsection{Proof of Lemma \ref{lemma:qy_diff_VC}}
\begin{proof}
    By the proof of Lemma \ref{lemma:quantile_estimation_VC}, we have
    \begin{align*}
        \sP\LRl{\sup_{\lambda \in \Lambda} \left|\hat{q}_{\lambda} - F_{\lambda}^{-1}((1-\alpha)(1+n^{-1}))\right| > \frac{c}{\mu}\LRs{\sqrt{\frac{\log n}{n}} + \mathfrak{R}_n(\gF)}} \leq 2n^{-c},
    \end{align*}
    and
    \begin{align*}
        \sP\LRl{\sup_{\lambda \in \Lambda} \left|\hat{q}_{\lambda}^- - F_{\lambda}^{-1}((1-\alpha)(1+n^{-1}) -n^{-1})\right| > \frac{c}{\mu}\LRs{\sqrt{\frac{\log n}{n}} + \mathfrak{R}_n(\gF)}} \leq 2n^{-c},
    \end{align*}
    Using the lower bounded density condition in Assumption 1, for any $\lambda \in \Lambda$ we have
    \begin{align*}
        F_{\lambda}^{-1}((1-\alpha)(1+n^{-1}) +n^{-1}) -  \frac{1}{\mu} \frac{2-\alpha}{n} &\leq F_{\lambda}^{-1}(1-\alpha)\\
        &\leq F_{\lambda}^{-1}((1-\alpha)(1+n^{-1}) - n^{-1}) +  \frac{1}{\mu} \frac{\alpha}{n}.
    \end{align*}
    Together with the fact $\sup_{y\in \gY}|\hat{q}_{\lambda}^y - \hat{q}_{\lambda}| \leq \hat{q}_{\lambda}-\hat{q}_{\lambda}^-$, using Lemma \ref{lemma:quantile_estimation_VC}, we can prove the conclusion.
\end{proof}

\subsection{E2E method with sample splitting}\label{appen:split_CROMS}
Suppose the labeled data is split into two parts:
$\gD_n^{(1)} = \{(X_i,Y_i)\}_{i=1}^{n_0}$ and $\gD_n^{(2)} = \{(X_i,Y_i)\}_{i=n_0+1}^n$. Define the calibration threshold $\hat{q}_{\lambda}^{(1)} = Q_{(1-\alpha)(1+n_0^{-1})}(\{S_{\lambda}(X_i,Y_i)\}_{i=1}^{n_0})$ and respective prediction set $\gU_{\lambda}^{(1)}(\cdot) = \{y\in \gY:S_{\lambda}(\cdot,y) \leq \hat{q}_{\lambda}^{(1)}\}$. The auxiliary decisions are $z_{\lambda}^{(1)}(X_i) = \argmin_{z\in \gZ}\max_{c\in \gU_{\lambda}^{(1)}(X_i)} \phi(c,z)$ for $i\in [n_0]$. Then we perform the model selection via
\begin{align}
    \hat{\lambda}^{(1)} = \argmin_{\lambda \in \Lambda} \frac{1}{n_0}\sum_{i=1}^{n_0} \phi(Y_i,z_{\lambda}^{(1)}(X_i)).\nonumber
\end{align}
After that, we construct the conformal prediction set through the dataset $\gD_n^{(2)}$ as
\begin{align}
    \widehat{\gU}^{(2)}(X_{n+1}) = \LRl{y\in \gY: S_{\hat{\lambda}^{(1)}}(X_{n+1}, y) \leq Q_{1-\alpha}\LRs{\{S_{\hat{\lambda}^{(1)}}(X_i,Y_i)\}_{i=n_0+1}^n\cup \{\infty\}}}.\nonumber
\end{align}
The respective decision is given by $\hat{z}^{(2)}(X_{n+1}) = \argmin_{z\in \gZ}\max_{c\in \widehat{\gU}^{(2)}(X_{n+1})}\phi(c,z)$. This procedure is equivalent to the E2E method in \citet{yeh2024end} for finite $\Lambda$. According to the split conformal prediction theory \citep{vovk2005algorithmic,lei2018distribution}, $\widehat{\gU}^{(2)}(X_{n+1})$ enjoys finite-sample coverage property, hence the decision $\hat{z}^{(2)}(X_{n+1})$ achieves the $1-\alpha$ level of marginal robustness in Definition 1. 
We define the Rademacher complexities as 
\begin{itemize}
    \item $\mathfrak{R}_{n_0}(\gF) = \E\LRm{\sup_{f\in \gF} \left|\frac{1}{n_0}\sum_{i=1}^{n_0} \xi_i f(X_i,Y_i)\right|}$,
    
    \item $\mathfrak{R}_{n-n_0}(\gF) = \E\LRm{\sup_{f\in \gF} \left|\frac{1}{n-n_0}\sum_{i=n_0+1}^{n} \xi_i f(X_i,Y_i)\right|}$,

    \item $\mathfrak{R}_{n_0}(\gG) = \E\LRm{\sup_{g\in \gG} \left|\frac{1}{n_0}\sum_{i=1}^{n_0} \xi_i g(X_i,Y_i)\right|}$,
    
    \item $\mathfrak{R}_{n-n_0}(\gG) = \E\LRm{\sup_{g\in \gG} \left|\frac{1}{n-n_0}\sum_{i=n_0+1}^{n} \xi_i g(X_i,Y_i)\right|}$,
\end{itemize}
 where $\{\xi_i\}_{i=1}^n$ are i.i.d. random variables taking $+1$ or $-1$ with equal probability. The next theorem characterizes the decision loss of $\hat{z}^{(2)}(X_{n+1})$.

\begin{theorem}\label{thm:split_CROMS_loss}
    Under the same conditions of Theorem 3.2, we have
    \begin{align}
        &\E[\phi(Y_{n+1}, \hat{z}^{(2)}(X_{n+1}))] - v_{\Lambda}^*\nonumber\\
        &\leq O\LRl{\LRs{B+\frac{L }{\mu }}\sqrt{\frac{\log n}{n_0}} + \frac{L }{\mu }\sqrt{\frac{\log n}{n-n_0}} + \frac{L}{\mu}\LRs{\mathfrak{R}_{n_0}(\gF) + \mathfrak{R}_{n-n_0}(\gF)} + \mathfrak{R}_{n_0}(\gG)}.\nonumber
    \end{align}
\end{theorem}

\begin{proof}
    From the proof of Theorem 3.2, we know
    \begin{align}\label{eq:split_1_decision_loss}
        &\E\left[\phi(Y_{n+1}, z_{\hat{\lambda}^{(1)}}^{(1)}(X_{n+1}))\right] - v_{\Lambda}^*\nonumber\\
        \leq & O\LRl{\frac{L}{\mu}\LRs{\sqrt{\frac{\log n}{n_0}} + \mathfrak{R}_{n_0}(\gF)} + B\sqrt{\frac{\log n}{n_0}} + \mathfrak{R}_{n_0}(\gG)}.
    \end{align}
    Here we write $z_{\hat{\lambda}^{(1)}}(X_{n+1};\hat{q}_{\hat{\lambda}^{(1)}}^{(1)}) \equiv z_{\hat{\lambda}^{(1)}}^{(1)}(X_{n+1})$ and $z_{\hat{\lambda}^{(1)}}(X_{n+1};\hat{q}_{\hat{\lambda}^{(1)}}^{(2)}) \equiv \hat{z}^{(2)}(X_{n+1})$, where 
    $$\hat{q}_{\lambda}^{(2)} = Q_{(1-\alpha)(n-n_0+1)^{-1}}\LRs{\{S_{\lambda}(X_i,Y_i)\}_{i=n_0+1}^n},\quad \forall \lambda \in \Lambda.$$
    It follows from Assumption 3, we have
    \begin{align}
        &\left|\phi\LRs{Y_{n+1}, z_{\hat{\lambda}^{(1)}}(X_{n+1};\hat{q}_{\hat{\lambda}^{(1)}}^{(1)})} - \phi\LRs{Y_{n+1}, z_{\hat{\lambda}^{(1)}}(X_{n+1};\hat{q}_{\hat{\lambda}^{(1)}}^{(2)})}\right|\nonumber\\
        &\leq L  \LRabs{\hat{q}_{\hat{\lambda}^{(1)}}^{(1)} - \hat{q}_{\hat{\lambda}^{(1)}}^{(2)}} \leq L \sup_{\lambda \in \Lambda}\LRs{|\hat{q}_{\lambda}^{(1)} - q_{\lambda}^{o}| + |\hat{q}_{\lambda}^{(2)} - q_{\lambda}^{o}|}.\nonumber 
    \end{align}
    Invoking Lemma \ref{lemma:quantile_estimation_VC}, with probability at least $1-2n^{-c}$, we can guarantee that
    \begin{align}
        \sup_{\lambda \in \Lambda}\LRs{|\hat{q}_{\lambda}^{(1)} - q_{\lambda}^{o}| + |\hat{q}_{\lambda}^{(2)} - q_{\lambda}^{o}|} \leq \frac{c}{\mu}\LRs{\sqrt{\frac{\log n}{n_0}} + \sqrt{\frac{\log n}{n-n_0}} + \mathfrak{R}_{n_0}(\gF) + \mathfrak{R}_{n-n_0}(\gF)}.\nonumber
    \end{align}
    Hence we can show
    \begin{align}
        &\E\LRm{\left|\phi\LRs{Y_{n+1}, z_{\hat{\lambda}^{(1)}}(X_{n+1};\hat{q}_{\hat{\lambda}^{(1)}}^{(1)})} - \phi\LRs{Y_{n+1}, z_{\hat{\lambda}^{(1)}}(X_{n+1};\hat{q}_{\hat{\lambda}^{(1)}}^{(2)})}\right|}\nonumber\\
        \leq & O\LRl{\frac{L }{\mu }\LRs{\sqrt{\frac{\log n}{n_0}} + \sqrt{\frac{\log n}{n-n_0}} + \mathfrak{R}_{n_0}(\gF) + \mathfrak{R}_{n-n_0}(\gF)} + \frac{B}{n}}.\nonumber
    \end{align}
    Together with \eqref{eq:split_1_decision_loss}, we can prove the conclusion.
\end{proof}

\section{Jackknife+ and CV+ CROMS}

For $i\in [n]$, J-CROMS performs the leave-one-out (LOO) model selection by
\begin{align*}
    \hat{\lambda}^{-i} = \argmin_{\lambda \in \Lambda} \LRl{\mathcal{L}_{-i}^{\rm{LOO}}(\lambda)= \sum_{\ell =1,\ell \neq i}^n \phi\LRs{Y_{\ell}, z_{\lambda}(X_i; \hat{q}_{\lambda}^{-i})}},
\end{align*}
where $\hat{q}_{\lambda}^{-i} = Q_{(1-\alpha)(1+(n-1)^{-1})}\LRs{\{S_{\lambda}(X_{\ell},Y_{\ell})\}_{\ell =1,\ell \neq i}^n }$.
The final prediction set is given by
\begin{align}\label{eq:JCROMS_set_appen}
    \widehat{\gU}^{\text{J-CROMS}}(X_{n+1}) = \LRl{y\in \gY: \frac{\sum_{i=1}^n \mathbbm{1}\big\{ S_{\hat{\lambda}^{-i}}(X_{n+1}, y)\leq S_{\hat{\lambda}^{-i}}(X_{i}, Y_{i}) \big\}+1}{n+1} > \alpha}.
\end{align}
If the candidate models are box scores $S_{\lambda}(x,y) = \|(y - \hat{\mu}_{\lambda}(x))/\hat{\sigma}_{\lambda}(x)\|_{\infty}$ for $\lambda \in \Lambda$, a superset for J-CROMS prediction set \eqref{eq:JCROMS_set_appen} is given in Section 3.3, which is also a box area in $\gY$. Next, we consider the case where the candidate models are ellipsoid scores $S_{\lambda}(x,y) = (y - \hat{\mu}_{\lambda}(x))^{\top}\hat{\Sigma}_{\lambda}^{-1}(x)(y - \hat{\mu}_{\lambda}(x)) =: \|\hat{\Sigma}_{\lambda}^{-1/2}(x)(y - \hat{\mu}_{\lambda}(x))\|$ for $\lambda \in \Lambda$. Let $r_i = S_{\hat{\lambda}^{-i}}(X_{i}, Y_{i})$ and $O_i = \hat{\Sigma}_{\hat{\lambda}^{-i}}^{-1/2}(X_i)\hat{\mu}_{\hat{\lambda}^{-i}}(X_i)$ for $i\in [n]$, then notice that
\begin{align}
    S_{\hat{\lambda}^{-i}}(X_{n+1}, y)\leq r_i \Longleftrightarrow \|\hat{\Sigma}_{\hat{\lambda}^{-i}}^{-1/2}(X_i) y - O_i\| \leq r_i.\nonumber
\end{align}
Then we can equivalently write the J-CROMS prediction set \eqref{eq:JCROMS_set_appen} as
\begin{align}\label{eq:JCROMS_and_depth_set}
    \widehat{\gU}^{\text{J-CROMS}}(X_{n+1}) = \bigcup_{\substack{\gJ \subseteq [n],\\|\gJ| = \lceil (n+1)\alpha -1\rceil}}\bigcap_{i\in \gJ} \left\{y\in \gY: \|\hat{\Sigma}_{\hat{\lambda}^{-i}}^{-1/2}(X_i) y - O_i\| \leq r_i\right\}.
\end{align}
The right-hand side of \eqref{eq:JCROMS_and_depth_set} is the region of space $\gY = \sR^p$ covered by at least $L=\lceil (n+1)\alpha -1\rceil$ of the $n$ ellipsoids, which is the $L$-level problem in computational geometry \citep{alma991023418299706532}. However, since the intersection region of convex areas can be nonconvex and disconnected, directly solving the downstream CRO problem may be non-tractable.

\subsection{Efficient implementation of J-CROMS}
Similar to Algorithm \ref{alg:AOA}, we can develop an efficient implementation for the J-CROMS method. For each model $\lambda \in \Lambda$, we define the upper and lower quantile as:
\begin{align}
\hat{q}_{\lambda}^+ = Q_{1-\alpha+1/n}(\{S_{\lambda}(X_i,Y_i)\}_{i=1}^{n}),\quad \hat{q}_{\lambda}^- = Q_{1-\alpha}(\{S_{\lambda}(X_i,Y_i)\}_{i=1}^{n}).\nonumber
\end{align}
For a given model $\lambda$ and $i\in [n]$, the LOO lower and upper loss are computed by
\begin{equation*}
\mathcal{L}_{-i}^-(\lambda) = \frac{1}{n-1} \sum_{\ell=1, \ell \neq i}^{n} \phi\left(Y_{\ell}, z_{\lambda}(X_i; \hat{q}_{\lambda}^-)\right),\quad
\mathcal{L}_{-i}^+(\lambda) = \frac{1}{n-1} \sum_{\ell=1, \ell \neq i}^{n} \phi\left(Y_{\ell}, z_{\lambda}(X_i;\hat{q}_{\lambda})\right).
\end{equation*}
Then the LOO loss admits the following case-wise expression:
\begin{align*}
    \mathcal{L}_{-i}^{\rm{LOO}}(\lambda) = 
\begin{cases}
\mathcal{L}_{-i}^-(\lambda) & \text{ if } S_{\lambda}(X_{i},Y_i) \geq \hat{q}_{\lambda}^- \\[5pt]
\mathcal{L}_{-i}^+(\lambda) & \text{ if } S_{\lambda}(X_{i},Y_i) < \hat{q}_{\lambda}^-.
\end{cases}.
\end{align*}
Using this expression, we can accelerate the J-CROMS method.

\subsection{Proof of Theorem 3.8}
\begin{proof}
    The proof is similar to the proof of Theorem 1 in \citet{barber2021predictive}. We first introduce the leave-two-out definitions: for $i,j\in [n+1]$,
    \begin{align}
        \hat{\lambda}^{-(i,j)} = \argmin_{\lambda \in \Lambda} \frac{1}{n-1}\sum_{\ell =1,\ell \neq i,j}^{n+1} \phi(Y_{\ell}, z_{\lambda}^{-(i,j)}(X_{\ell})),\nonumber
    \end{align}
    where $z_{\lambda}^{-(i,j)}(X_{\ell}) = \argmin_{z\in \gZ}\max_{c\in \gU_{\lambda}^{-(i,j)}(X_{\ell})} \phi(c,z)$ and $\gU_{\lambda}^{-(i,j)}(X_{\ell}) = \{c\in \gY: S_{\lambda}(X_{\ell}, c) \leq Q_{(1-\alpha)(1+1/n)}\LRs{\{S_{\lambda}(X_l,Y_l)\}_{l=1,l\neq i,j}^{n+1}}\}$. Then we know that $\hat{\lambda}^{-(i,n+1)} \equiv \hat{\lambda}^{-i}$ and $\hat{\lambda}^{-(n+1,j)} \equiv \hat{\lambda}^{-j}$ for any $i,j\in [n+1]$. In addition, we also know that $\hat{\lambda}^{-(i,j)}$ is symmetric to the data points $\{(X_{\ell},Y_{\ell})\}_{\ell=1,\ell \neq i,j}^{n+1}$.

    \paragraph*{Coverage of J-CROMS prediction set.}
    Let $A_{i,j} = \mathbbm{1}\{S_{\hat{\lambda}^{-(i,j)}}(X_{i}, Y_{i}) > S_{\hat{\lambda}^{-(i,j)}}(X_{j}, Y_{j})\}$ for any $i,j\in [n+1]$, and $A_{i,i} = 0$ for any $i\in [n+1]$. Define the set of strange points:
    \begin{align*}
        \gS = \LRl{i\in [n+1]: \sum_{j=1}^{n+1} A_{i,j} \geq (n+1)(1-\alpha)}.
    \end{align*}
    According to \citet{barber2021predictive}, we know $|\gS| \leq 2\alpha(n+1)$, which means that $\sum_{i=1}^{n+1} \mathbbm{1}\{i\in \gS\} \leq 2\alpha (n+1)$. In addition, using the exchangeability, we have
    \begin{align*}
        \sP\LRs{n+1 \in \gS} = \frac{1}{n+1}\sum_{i=1}^{n+1}\sP(i\in \gS) = \frac{1}{n+1}\E\LRm{\sum_{i=1}^{n+1} \mathbbm{1}\{i\in \gS\}} \leq 2\alpha.
    \end{align*}
    Recall that $\sum_{j=1}^{n+1} A_{n+1,j} = \sum_{j=1}^{n} \mathbbm{1}\{S_{\hat{\lambda}^{-j}}(X_{n+1}, Y_{n+1}) > S_{\hat{\lambda}^{-j}}(X_{j}, Y_{j})\}$ and
    \begin{align}
        \widehat{\gU}^{\text{J-CROMS}}(X_{n+1}) &= \LRl{y\in \gY: \frac{\sum_{i=1}^n \mathbbm{1}\big\{S_{\hat{\lambda}^{-i}}(X_{n+1}, y) \leq S_{\hat{\lambda}^{-i}}(X_{i}, Y_{i}) \big\}+1}{n+1} > \alpha}\nonumber\\
        &= \LRl{y\in \gY: n+1 - \sum_{j=1}^{n+1} A_{n+1,j} \leq (n+1)\alpha}.\nonumber
    \end{align}
    Hence we have showed that $\sP\{Y_{n+1}\notin \widehat{\gU}^{\text{J-CROMS}}(X_{n+1})\} = \sP\LRs{n+1 \in \gS} \leq 2\alpha$.

    \paragraph*{Coverage of the box J-CROMS prediction set.}
    Recall that the J-CROMS prediction set for box candidate scores is $\widehat{\gU}_{\rm{box}}^{\text{J-CROMS}}(X_{n+1}) = \LRl{y\in \sR^p: c^{\text{lo}} \leq y \leq c^{\text{up}}}$, where for $k\in [p]$,
    \begin{align*}
        c_{k}^{\text{up}} &= Q_{(1-\alpha)(1+1/n)}\left(\LRl{\hat{\mu}_{\hat{\lambda}^{-i},k}(X_{n+1}) + \hat{\sigma}_{\hat{\lambda}^{-i},k}(X_{n+1}) S_{\hat{\lambda}^{-i}}(X_i,Y_i)}_{i=1}^{n}\right),\nonumber\\
        c_{k}^{\text{lo}} &= -Q_{(1-\alpha)(1+1/n)}\left(\LRl{\hat{\sigma}_{\hat{\lambda}^{-i},k}(X_{n+1})S_{\hat{\lambda}^{-i}}(X_i,Y_i) - \hat{\mu}_{\hat{\lambda}^{-i},k}(X_{n+1})}_{i=1}^{n}\right).
    \end{align*}
    It implies that for any $y\in \gY$, we have the following relation
    \begin{align*}
        &y\notin \widehat{\gU}_{\rm{box}}^{\text{J-CROMS}}(X_{n+1})\\
        \Longleftrightarrow &\ 
        \bigcup_{k\in [p]} \left\{y_k > c_k^{\rm{up}}\right\} \cup \left\{y_k < c_k^{\rm{lo}}\right\}\nonumber\\
        \Longleftrightarrow&\bigcup_{k\in [p]}\left\{ y_k > Q_{(1-\alpha)(1+1/n)}\left(\LRl{\hat{\mu}_{\hat{\lambda}^{-i},k}(X_{n+1}) + \hat{\sigma}_{\hat{\lambda}^{-i},k}(X_{n+1})S_{\hat{\lambda}^{-i}}(X_i,Y_i)}_{i=1}^{n}\right)\right\}\nonumber\\
        &\ \cup \left\{ y_k  < -Q_{(1-\alpha)(1+1/n)}\left(\LRl{\hat{\sigma}_{\hat{\lambda}^{-i},k}(X_{n+1})S_{\hat{\lambda}^{-i}}(X_i,Y_i)-\hat{\mu}_{\hat{\lambda}^{-i},k}(X_{n+1})}_{i=1}^{n}\right)\right\}\nonumber\\
        \Longleftrightarrow&\  \bigcup_{k\in [p]}\LRl{\sum_{i=1}^n \mathbbm{1}\LRl{\frac{y_k - \hat{\mu}_{\hat{\lambda}^{-i},k}(X_{n+1})}{\hat{\sigma}_{\hat{\lambda}^{-i},k}(X_{n+1})} > S_{\hat{\lambda}^{-i}}(X_i,Y_i)}> (1-\alpha)(n+1)}\nonumber\\
        &\quad \cup \LRl{\sum_{i=1}^n \mathbbm{1}\LRl{\frac{y_k - \hat{\mu}_{\hat{\lambda}^{-i},k}(X_{n+1})}{\hat{\sigma}_{\hat{\lambda}^{-i},k}(X_{n+1})} < -S_{\hat{\lambda}^{-i}}(X_i,Y_i)} > (1-\alpha)(n+1)}\nonumber\\
        \Longrightarrow&\  \LRl{\sum_{i=1}^n \mathbbm{1}\{S_{\hat{\lambda}^{-i}}(X_{n+1},y) \leq S_{\hat{\lambda}^{-i}}(X_i,Y_i)\} + 1 \leq \alpha(n+1)}\nonumber\\
        \Longleftrightarrow&\  y \notin \widehat{\gU}^{\text{J-CROMS}}(X_{n+1}).
    \end{align*}
    Hence we know $\widehat{\gU}^{\text{J-CROMS}}(X_{n+1}) \subseteq \widehat{\gU}_{\rm{box}}^{\text{J-CROMS}}(X_{n+1})$ and $\sP\LRl{Y_{n+1} \in \widehat{\gU}_{\rm{box}}^{\text{J-CROMS}}(X_{n+1})} \geq \sP\LRl{Y_{n+1} \in \widehat{\gU}^{\text{J-CROMS}}(X_{n+1})} \geq 1-2\alpha.$
\end{proof}

\subsection{Proof of Theorem 3.9}\label{proof:J-CROMS-cond-coverage}

\begin{proof}
    Recall that the LOO selected model index is
    \begin{align}
        \hat{\lambda}_{-i} = \argmin_{\lambda \in \Lambda} \frac{1}{n-1}\sum_{\ell \in [n], \ell \neq i}\phi(Y_i, z_{\lambda}(X_i)).\nonumber
    \end{align}
    According to the proof of Theorem 3.2 (by replacing $\gD_n$ with $\gD_{-i}$), we can show that for any $i\in [n]$,
    \begin{align}
        &\left|\E\LRm{\phi(Y, z_{\hat{\lambda}_{-i}}(X; \hat{q}_{\hat{\lambda}_{-i}}^{-i}))} - \E\LRm{\phi(Y, z_{\lambda^*}(X; q_{\lambda^*}^o))}\right|\nonumber\\
        &\qquad\qquad\leq O\LRl{\LRs{B + \frac{L}{\mu}}\sqrt{\frac{\log n}{n}} + \frac{L}{\mu}\mathfrak{R}_n(\gF) + \mathfrak{R}_n(\gG)}.\nonumber
    \end{align}
    In addition, by Lemma \ref{lemma:quantile_estimation_VC} Assumption 3, we also have
    \begin{align}
        &\left|\E\LRm{\phi(Y, z_{\hat{\lambda}_{-i}}(X; \hat{q}_{\hat{\lambda}_{-i}}^{-i}))} - \E\LRm{\phi(Y, z_{\hat{\lambda}_{-i}}(X; q_{\hat{\lambda}_{-i}}^o))}\right| \nonumber\\
        &\qquad\qquad\leq L |\hat{q}_{\hat{\lambda}_{-i}}^{-i} - q_{\hat{\lambda}_{-i}}^o|\leq O\LRl{\frac{L}{\mu}\LRs{\sqrt{\frac{\log n}{n}} + \mathfrak{R}_n(\gF)}}.\nonumber
    \end{align}
    Combining the two relations above, we can show
    \begin{align}
        &\left|\E\LRm{\phi(Y, z_{\hat{\lambda}_{-i}}^o(X))} - \E\LRm{\phi(Y, z_{\lambda^*}^o(X))}\right| \nonumber\\
        &\qquad\qquad= 
        \left|\E\LRm{\phi(Y, z_{\hat{\lambda}_{-i}}(X; q_{\hat{\lambda}_{-i}}^o))} - \E\LRm{\phi(Y, z_{\lambda^*}(X; q_{\lambda^*}^o))}\right|\nonumber\\
        &\qquad\qquad\leq O\LRl{\LRs{B + \frac{L}{\mu}}\sqrt{\frac{\log n}{n}} + \frac{L}{\mu}\mathfrak{R}_n(\gF) + \mathfrak{R}_n(\gG)}.\nonumber
    \end{align}
    \paragraph*{Finite index set.}
    If the index set $\Lambda$ is finite, using the minimum risk gap condition in Theorem 3.4, we can have
    \begin{align}
        \sP\LRl{\forall i\in [n], \hat{\lambda}^{-i} = \lambda^*} \geq 1-n^{-c}.\nonumber
    \end{align}
    Under this event, we can write the prediction set of J-CROMS as
    \begin{align}\label{eq:JCROMS_set_star}
        \widehat{\gU}^{\JCROMS}(X_{n+1}) &=  \LRl{y\in \gY: \frac{\sum_{i=1}^n \mathbbm{1}\big\{S_{\hat{\lambda}^{-i}}(X_{n+1}, y)\leq S_{\hat{\lambda}^{-i}}(X_{i}, Y_{i}) \big\}+1}{n+1} > \alpha}\nonumber\\
        &=\LRl{y\in \gY:\frac{\sum_{i=1}^n \mathbbm{1}\big\{S_{\lambda^*}(X_{n+1}, y)\leq S_{\lambda^*}(X_{i}, Y_{i}) \big\}+1}{n+1} > \alpha}\nonumber\\
        &= \Big\{y\in \gY: S_{\lambda^*}(X_{n+1}, y) \leq Q_{(1-\alpha)(1+1/n)}\LRs{\{S_{\lambda^*}(X_i,Y_i)\}_{i=1}^n}\Big\}\nonumber\\
        &= \Big\{y\in \gY: S_{\lambda^*}(X_{n+1}, y) \leq \hat{q}_{\lambda^*}\Big\}.
    \end{align}
    By Lemma \ref{lemma:quantile_estimation_VC}, it holds that
    \begin{align}
        \sP\LRl{\left|\hat{q}_{\lambda^*} - q_{\lambda^*}^o\right| \leq \frac{c}{\mu}\LRs{\sqrt{\frac{\log n}{n}} + \mathfrak{R}_n(\gF)}} \geq 1-n^{-c}.\nonumber
    \end{align}
    Using Assumption 3, we can guarantee
    \begin{align}
        \left|\E\LRm{\phi(Y_{n+1}, \hat{z}^{\JCROMS}(X_{n+1}))} - v_{\Lambda}^*\right| \leq O\LRl{\frac{L}{\mu}\LRs{\sqrt{\frac{\log n}{n}} + \mathfrak{R}_n(\gF)}+\frac{B}{n}}.\nonumber
    \end{align}
    In addition, by \eqref{eq:JCROMS_set_star}, we also have
    \begin{align}
        \sP\LRl{Y_{n+1} \notin \widehat{\gU}^{\JCROMS}(X_{n+1})} &\leq n^{-c} + \sP\LRl{S_{\lambda^*}(X_{n+1}, Y_{n+1}) > Q_{1-\alpha}\LRs{\{S_{\lambda^*}(X_i,Y_i)\}_{i=1}^n}}\nonumber\\
        &= n^{-c} + \sP\LRl{S_{\lambda^*}(X_{n+1}, Y_{n+1}) > Q_{(1-\alpha)(1-n^{-1})}\LRs{\{S_{\lambda^*}(X_i,Y_i)\}_{i=1}^{n+1}}}\nonumber\\
        &\leq n^{-c} + \alpha + \frac{1-\alpha}{n},\nonumber
    \end{align}
    where the equality holds due to the inflation property of the sample quantile, see Lemma 2 in \citet{romano2019conformalized}.

    \paragraph*{Continuous index set.}
If the index set $\Lambda$ is continuous, using the minimum risk gap condition in Theorem 3.5, we can have
\begin{align}
    \sP\LRl{\forall i\in [n], \|\hat{\lambda}^{-i} - \lambda^*\| \leq \delta_n} \geq 1-n^{-c}.\nonumber
\end{align}
Under this event, it follows that
\begin{align*}
    \max_{i\in [n]}\left|S_{\hat{\lambda}^{-i}}(X_i,Y_i) - S_{\lambda^*}(X_i,Y_i)\right|&\leq \bar{L}_{\Lambda}\cdot \max_{i\in [n]}\|\hat{\lambda}^{-i} - \lambda^*\| \leq \bar{L}_{\Lambda}\delta_n,\\
    \sup_{y\in \gY}\max_{i\in [n]}\left|S_{\hat{\lambda}^{-i}}(X_{n+1},y) - S_{\lambda^*}(X_{n+1},y)\right| &\leq \bar{L}_{\Lambda}\cdot \max_{i\in [n]}\|\hat{\lambda}^{-i} - \lambda^*\| \leq \bar{L}_{\Lambda}\delta_n.
\end{align*}
Then we have
\begin{align}\label{eq:quantile_two_diff}
    \sup_{y\in \gY}\max_{i\in [n]}\left|S_{\hat{\lambda}^{-i}}(X_{i}, Y_{i})-S_{\hat{\lambda}^{-i}}(X_{n+1}, y) - \LRs{S_{\lambda^*}(X_{i}, Y_{i})-S_{\lambda^*}(X_{n+1}, y)}\right| \leq \bar{L}_{\Lambda}\delta_n.
\end{align}
For any $y\in \gU_{\lambda^*}(X_{n+1}; q_{\lambda^*}^o - e_n)$ with $e_n = O\LRs{\frac{1}{\mu} \LRs{\sqrt{\frac{\log n}{n}} + \mathfrak{R}_n(\gF) + \frac{1}{n+1}} + \bar{L}_{\Lambda}\delta_n}$, with probability at least $1-n^{-c}$, it holds that
\begin{align*}
    &S_{\lambda^*}(X_{n+1},y) \leq q_{\lambda^*}^o - e_n \\
    \stackrel{(i)}{\Longrightarrow}&\quad S_{\lambda^*}(X_{n+1},y)\leq Q_{1-\alpha}\LRs{\{S_{\lambda^*}(X_{i}, Y_{i})\}_{i=1}^n \cup \{S_{\lambda^*}(X_{n+1}, y)\}} + \bar{L}\delta_n\\
    \Longrightarrow&\quad 0 \leq Q_{1-\alpha}\LRs{\{S_{\lambda^*}(X_{i}, Y_{i})-S_{\lambda^*}(X_{n+1}, y)\}_{i=1}^n \cup \{0\}} + \bar{L}\delta_n\\
    \stackrel{(ii)}{\Longrightarrow}&\quad 0 \leq Q_{1-\alpha}\LRs{\{S_{\hat{\lambda}^{-i}}(X_{i}, Y_{i})-S_{\hat{\lambda}^{-i}}(X_{n+1}, y)\}_{i=1}^n \cup \{0\}}\\
    \Longleftrightarrow &\quad 0 \leq Q_{(1-\alpha)(1+1/n)}\LRs{\{S_{\hat{\lambda}^{-i}}(X_{i}, Y_{i})-S_{\hat{\lambda}^{-i}}(X_{n+1}, y)\}_{i=1}^n}\\
    \Longrightarrow &\quad \sum_{i=1}^n \mathbbm{1}\{S_{\hat{\lambda}^{-i}}(X_{n+1}, y) \leq S_{\hat{\lambda}^{-i}}(X_{i}, Y_{i})\} > n - (1-\alpha)(n+1)\\
    \stackrel{(iii)}{\Longleftrightarrow}&\quad y\in \widehat{\gU}^{\JCROMS}(X_{n+1}).
\end{align*}
where $(i)$ holds due to Lemma \ref{lemma:quantile_estimation_VC}; $(ii)$ follows from Lemma \ref{lemma:quantile_uniform_diff} and \eqref{eq:quantile_two_diff}; and $(iii)$ follows from $\frac{1}{n+1}\LRs{\sum_{i=1}^n \mathbbm{1}\{S_{\hat{\lambda}^{-i}}(X_{n+1}, y)\leq S_{\hat{\lambda}^{-i}}(X_{i}, Y_{i})\} + 1} > \alpha$ and the definition of J-CROMS. Hence we have
\begin{align*}
    \sP\LRl{Y_{n+1} \in \widehat{\gU}^{\JCROMS}(X_{n+1})} &\geq \sP\LRl{Y_{n+1}\in \gU_{\lambda^*}(X_{n+1}; q_{\lambda^*}^o - e_n)} - O(n^{-c}).
\end{align*}
If the density function $f_{\lambda}(\cdot)$ of $S_{\lambda}(X,Y)$ satisfies $\sup_{s\in \sR}f_{\lambda}(s) \leq \mu^+$, we can have $\sP\LRl{Y_{n+1} \in \widehat{\gU}^{\JCROMS}(X_{n+1})} \geq 1-\alpha - \mu^+e_n - O(n^{-c})$.
\end{proof}

\subsection{CV+ CROMS}\label{appen:CV+}
In this subsection, we adapt the CV+ method proposed in \citet{barber2021predictive} to our framework, which also enjoys a distribution-free and finite-sample $1-2\alpha-\frac{1-K/n}{K+1}$ coverage guarantee like the J-CROMS method. Here $K$ denotes the folds of cross-validation, hence $K = n$ reduces to the J-CROMS method.

Suppose we split the labeled data into $K$ disjoint subsets $\gI_1,\ldots,\gI_K$ with equal size $n/K$. Denote $\gU_{\lambda}^{-\gI_k}(\cdot) = \LRl{y\in \gY: S_{\lambda}(\cdot, y) \leq \hat{q}_{\lambda}^{-\gI_k}}$, where the threshold $\hat{q}_{\lambda}^{-\gI_k} = Q_{(1-\alpha)(1+(n-n/K)^{-1})}\LRs{\{S_{\lambda}(X_i,Y_i)\}_{i\in [n]\setminus\gI_k}}$. Then we define the leave-$\gI_k$-out model index by
\begin{align}\label{eq:CV_ERM}
    \hat{\lambda}^{-\gI_k} = \argmin_{\lambda \in \Lambda} \frac{1}{n-n/K}\sum_{i\in [n]\setminus\gI_k} \phi\LRs{Y_i, z_{\lambda}^{-\gI_k}(X_i)},
\end{align}
where $z_{\lambda}^{-\gI_k}(X_i) = \argmin_{z\in \gZ}\max_{c\in \gU_{\lambda}^{-\gI_k}(X_i)} \phi(c,z)$. The CV-CROMS prediction set is defined as
\begin{align}
    \widehat{\gU}^{\text{CV-CROMS}}(X_{n+1}) = \LRl{y\in \gY: \frac{\sum_{i=1}^n \mathbbm{1}\{S_{\hat{\lambda}^{-\gI_{k(i)}}}(X_{n+1}, y)\leq S_{\hat{\lambda}^{-\gI_{k(i)}}}(X_i,Y_i)\} +1}{n+1} > \alpha},\nonumber
\end{align}
where $k(i) = \{m\in [K]: i\in \gI_{m}\}$. In fact, the prediction set above is a special case of the cross-conformal prediction method in \citet{vovk2015cross} and \citet{vovk2018cross} without randomization.
If the candidate models are all box scores with $S_{\lambda}(x,y) = \|(y-\hat{\mu}_{\lambda}(x))/\hat{\sigma}_{\lambda}(x)\|_{\infty}$ for all $\lambda \in \Lambda$, we have a closed form $\widehat{\gU}_{\text{box}}^{\text{CV-CROMS}}(X_{n+1}) = \LRl{y\in \sR^p: \bar{c}^{\text{lo}} \leq y \leq \bar{c}^{\text{up}}}$,
where for $k\in [p]$,
\begin{align*}
    \bar{c}_{k}^{\text{up}} &= Q_{(1-\alpha)(1+1/n)}\left(\LRl{\hat{\mu}_{\hat{\lambda}^{-\gI_{k(i)}},k}(X_{n+1}) + \hat{\sigma}_{\hat{\lambda}^{-i},k}(X_{n+1})S_{\hat{\lambda}^{-\gI_{k(i)}}}(X_i,Y_i)}_{i=1}^{n}\right),\nonumber\\
    \bar{c}_{k}^{\text{lo}} &= -Q_{(1-\alpha)(1+1/n)}\left(\LRl{\hat{\sigma}_{\hat{\lambda}^{-i},k}(X_{n+1})S_{\hat{\lambda}^{-\gI_{k(i)}}}(X_i,Y_i) - \hat{\mu}_{\hat{\lambda}^{-\gI_{k(i)}},k}(X_{n+1})}_{i=1}^{n}\right).
\end{align*}
Notably, if $p=1$ (one-dimensional label), the box score is identical to the absolute residual score, and the prediction set $\widehat{\gU}_{\text{box}}^{\text{CV-CROMS}}(X_{n+1})$ recovers the Jackknife+ method in \citet{barber2021predictive}.

\begin{theorem}
    Suppose data $\{(X_i,Y_i)\}_{i=1}^{n+1}$ are i.i.d., we have
    \begin{align*}
        \sP\LRl{Y_{n+1} \in \widehat{\gU}^{\rm{CV-CROMS}}(X_{n+1})} \geq 1-2\alpha-\frac{1-K/n}{K+1}.
    \end{align*}
\end{theorem}
\begin{proof}
    The proof is essentially the same as that of Theorem 3.8. In addition to $\gI_k$ for $m\in[K]$, we define $\gI_{K+1} = \{n+1,\ldots,n+(n/K)\}$.
    Let $A_{i,j} = \mathbbm{1}\{S_{\hat{\lambda}^{-\gI_{k(i)}}}(X_{i}, Y_{i}) > S_{\hat{\lambda}^{-\gI_{k(j)}}}(X_{j}, Y_{j})\}$ for any $i,j\in [n+n/K]$ such that $k(i) \neq k(j)$, and $A_{i,i} = 0$ for any $k(i) = k(j)$. Define the set of strange points:
    \begin{align*}
        \gS = \LRl{i\in [n+n/K]: \sum_{j=1}^{n+1} A_{i,j} \geq (n+1)(1-\alpha)}.
    \end{align*}
    According to the proof of Theorem 4 in \citet{barber2021predictive}, we know $|\gS| \leq 2\alpha(n+n/K) + (1-2\alpha)(K-1)-1$, which means that $\sum_{i=1}^{n+n/K} \mathbbm{1}\{i\in \gS\} \leq 2\alpha(n+n/K) + (1-2\alpha)(K-1)-1$. In addition, using the exchangeability, we have
    \begin{align*}
        \sP\LRs{n+1 \in \gS} = \frac{\sum_{i=1}^{n+n/K}\sP(i\in \gS)}{n+n/K} = \frac{\E\LRm{\sum_{i=1}^{n+n/K} \mathbbm{1}\{i\in \gS\}}}{n+n/K} \leq 2\alpha + \frac{1-K/n}{K+1}.
    \end{align*}
    Since $m(n+1) = \cdots = m(n+n/K)$ and $A_{i,j} = 0$ if $k(i) = k(j)$, we have
    \begin{align*}
        \sum_{j=1}^{n+n/K} A_{n+1,j} &= \sum_{j=1}^{n+n/K} \mathbbm{1}\{S_{\hat{\lambda}^{-\gI_{k(j)}}}(X_{n+1}, Y_{n+1}) > S_{\hat{\lambda}^{-\gI_{k(j)}}}(X_{j}, Y_{j})\} \nonumber\\
        &= \sum_{j=1}^{n} \mathbbm{1}\{S_{\hat{\lambda}^{-\gI_{k(j)}}}(X_{n+1}, Y_{n+1}) > S_{\hat{\lambda}^{-\gI_{k(j)}}}(X_{j}, Y_{j})\}.
    \end{align*}
    By the definition of $\widehat{\gU}^{\text{CV-CROMS}}(X_{n+1})$, we also have
    \begin{align}
        \widehat{\gU}^{\text{CV-CROMS}}(X_{n+1}) &= \LRl{y\in \gY: \frac{\sum_{i=1}^n \mathbbm{1}\{S_{\hat{\lambda}^{-\gI_{k(i)}}}(X_i,Y_i) \leq S_{\hat{\lambda}^{-\gI_{k(i)}}}(X_{n+1}, y)\}+1}{n+1} > \alpha}\nonumber\\
        &= \LRl{y\in \gY: (n+1) - \sum_{j=1}^{n+n/K} A_{n+1,j} > (n+1)\alpha}.\nonumber
    \end{align}
    Hence we have showed that $\sP\{Y_{n+1}\notin \widehat{\gU}^{\text{J-CROMS}}(X_{n+1})\} = \sP\LRs{n+1 \in \gS} \leq 2\alpha + \frac{1-K/n}{K+1}$.
\end{proof}

\subsection{Standard leave-one-out method}
In this subsection, we consider using the standard leave-out-out (Jackknife) approach in \citet{barber2021predictive} and \citet{steinberger2023conditional} to perform the model selection and construct the prediction set. Here we follow the notations in Section 3.3.

Given the selected model indexes $\hat{\lambda}^{-i}$ for $i\in [n]$ in Section 3.3 and $\hat{\lambda}_{n}$ in Section 3.1, the standard leave-one-out (LOO) prediction set is
\begin{align}\label{eq:LOO-set}
    \widehat{\gU}^{\text{LOO}}(X_{n+1}) = \LRl{y\in \gY: S_{\hat{\lambda}_n}(X_{n+1}, y) \leq Q_{(1-\alpha)(1+1/n)}\LRs{\{S_{\hat{\lambda}^{-i}}(X_i,Y_i)\}_{i=1}^n}}.
\end{align}
However, as shown by Theorem 2 in \citet{barber2021predictive}, the prediction set $\widehat{\gU}^{\text{LOO}}(X_{n+1})$ does not have the distribution-free coverage guarantee anymore. And establishing the coverage bound requires the model stability conditions, see Theorem 5 in \citet{barber2021predictive}. The next theorem shows the coverage bound of the LOO method under the same conditions of Theorem 3.9, and the proof is the same as that in Section \ref{proof:J-CROMS-cond-coverage}.

\begin{theorem}
    For a finite index set $\Lambda$, under the same conditions of Theorem 3.4, the LOO prediction set in \eqref{eq:LOO-set} satisfies $1-\alpha - O(n^{-1})$ level of marginal robustness and the decision risk satisfies that $\left|\E\LRm{\phi(Y, \hat{z}^{\JCROMS}(X))} - v_{\Lambda}^*\right| \leq O\LRl{\frac{L}{\mu} \sqrt{\frac{\log (n \vee |\Lambda|)}{n}}+\frac{B}{n}}$.
\end{theorem}

\subsection{Numerical comparison results}

Table \ref{table:classification_performance_appen} and \ref{table:regression_performance_appen} present the performance of the compared methods along with their corresponding running times in the classification task of Section 5.1.1 and the regression task of Section 5.1.2. Both {F‑CROMS} and {J‑CROMS} empirically maintain valid coverage and robustness guarantees. In the classification task, {F‑CROMS} runs faster than {J‑CROMS} since $|\gY|$ is significantly smaller than the sample size $n$. In the regression task, J-CROMS and CV-CROMS have the best decision performance, while requiring significantly less computational time than F-CROMS.

    \begin{table}[H]
    \centering
    \small
    \setlength{\tabcolsep}{4pt}
    \caption{The evaluation metrics and running time (seconds) on 100 test points with 95\% asymptotic standard errors in parentheses under the classification task in Section 5.1.1. The scenario is $n=200$, $|\Lambda|=20$. }
    \label{table:classification_performance_appen}
    \resizebox{\textwidth}{!}{
    \begin{tabular}{@{}clccccc@{}}
    \toprule
    $\alpha$ & \textbf{Method} & \textbf{Avg. Loss} & \textbf{Marg. Miscov.} & \textbf{Marg. Misrob.} & \textbf{Time} \\
    \midrule
    \multirow{6}{*}{0.10}
     &LOO & 3.778 (0.122) & 0.110 (0.012) & 0.008 (0.060) & 17.694 (0.285) \\
     &E-CROMS & \textbf{3.646} (0.090) & {0.128} (0.007) & 0.069 (0.006) & 4.579 (0.027) \\
     &F-CROMS & \textbf{3.759} (0.098) & 0.100 (0.010) & 0.051 (0.006) & {43.197} (0.994) \\
     &J-CROMS & \textbf{3.830} (0.106) & 0.094 (0.010) & 0.048 (0.007) & {90.956} (0.680) \\
     &CV-CROMS($K=5$) & 3.956 (0.095) & 0.052 (0.006) & 0.025 (0.004) & 103.471 (1.884) \\
     &CV-CROMS($K=10$) & 3.928 (0.098) & 0.062 (0.007) & 0.029 (0.005) & 124.830 (2.185) \\
    \midrule
    \multirow{6}{*}{0.20}
     &LOO & 2.855 (0.125) & 0.206 (0.013) & 0.168 (0.014) & 17.555 (0.272) \\
     & E-CROMS & \textbf{2.629} (0.067) & {0.227} (0.009) & 0.193 (0.010) & 4.593 (0.039) \\
     & F-CROMS & \textbf{2.713} (0.081) & 0.197 (0.013) & 0.160 (0.013) & {32.313} (0.630) \\
     & J-CROMS & 2.794 (0.101) & 0.194 (0.012) & 0.157 (0.013) & {93.099} (1.159) \\
     &CV-CROMS($K=5$) & \textbf{2.771} (0.075) & 0.136 (0.011) & 0.101 (0.010) & 103.269 (1.932) \\
     &CV-CROMS($K=10$) & 2.792 (0.073) & 0.149 (0.011) & 0.114 (0.012) & 124.445 (2.241) \\
    \arrayrulecolor{black}\bottomrule
    \end{tabular}}
    \end{table}

    \begin{table}[H]
    \centering
    \small
    \setlength{\tabcolsep}{4pt}
    \caption{The evaluation metrics and running time (seconds) on 100 test points with 95\% asymptotic standard errors under the regression task in Section 5.1.2. The scenario is $n=150$, $|\Lambda|=25$.}
    \label{table:regression_performance_appen}
    \resizebox{\textwidth}{!}{
    \begin{tabular}{@{}clccccc@{}}
    \toprule
    $\alpha$ & \textbf{Method} & \textbf{Avg. Loss} & \textbf{Marg. Miscov.} & \textbf{Marg. Misrob.} & \textbf{Time} \\
    \midrule
    \multirow{6}{*}{0.10} 
    & LOO & -0.744 (0.066) & 0.095 (0.007) & 0.029 (0.005) & 51.663 (0.229) \\
    & E-CROMS & {-0.749} (0.067) & {0.100} (0.007) & 0.031 (0.003) & 14.147 (0.023) \\
    & F-CROMS & {-0.742} (0.067) & 0.097 (0.007) & 0.022 (0.003) & {2386.55} (110.139) \\
    & J-CROMS($\alpha$) & \textbf{-0.774} (0.067) & 0.087 (0.008) & 0.025 (0.003) & {79.686} (0.259) \\
    & J-CROMS($\alpha/2$) & -0.724 (0.063) & 0.037 (0.005) & 0.011 (0.002) & 79.889 (0.283) \\
    & CV-CROMS($K = 5$) & \textbf{-0.860} (0.067) & 0.059 (0.009) & 0.016 (0.003) & 95.909 (0.359) \\
    & CV-CROMS($K = 10$) & \textbf{-0.833} (0.066) & 0.068 (0.008) & 0.018 (0.003) & 162.701 (0.719) \\
    \cmidrule(lr){1-6}
    \multirow{6}{*}{0.20} 
    & LOO & -0.800 (0.064) & 0.199 (0.011) & 0.060 (0.005) & 52.248 (0.329) \\
    & E-CROMS & {-0.801} (0.064) & {0.210} (0.010) & 0.065 (0.005) & 14.266 (0.045) \\
    & F-CROMS & {-0.790} (0.066) & 0.198 (0.010) & 0.052 (0.005) & {1173.70} (50.149) \\
    & J-CROMS($\alpha$) & \textbf{-0.816} (0.066) & 0.188 (0.012) & 0.056 (0.005) & {80.159} (0.259) \\
    & J-CROMS($\alpha/2$) & -0.774 (0.067) & 0.087 (0.008) & 0.025 (0.003) & 80.378 (0.407) \\
    & CV-CROMS($K = 5$) & \textbf{-0.914} (0.063) & 0.136 (0.013) & 0.034 (0.005) & 96.853 (0.549) \\
    & CV-CROMS($K = 10$) & \textbf{-0.884} (0.063) & 0.152 (0.013) & 0.042 (0.006) & 164.063 (1.013) \\
    \bottomrule
    \end{tabular}}
    \end{table}

\section{Theoretical results of CROiMS}
In this section, we define $\mathfrak{B}_n(X_{n+1}) = \sum_{j=1}^n H(X_j,X_{n+1})$ and the function classes: $\gF = \{\mathbbm{1}\{S_{\lambda}(x,y) \leq q\}: \lambda \in \Lambda, q\in \sR\}$ and $\gG^{\prime} = \{\phi(y, z_{\lambda}(x, q_{\lambda}^o(x))): \lambda \in \Lambda\}$. Let $\gX_{n+1} = \{X_i\}_{i=1}^{n+1}$, then we define the following two weighted Rademacher complexities:
\begin{itemize}
    \item $\widehat{\mathfrak{R}}_n(\gF) = \E\LRm{\sup_{f\in \gF} \left|\sum_{i=1}^n w_i(X_{n+1}) \xi_i f(X_i, Y_i)\right| \mid \gX_{n+1}}$;

    \item $\widehat{\mathfrak{R}}_n(\gG^{\prime}) = \E\LRm{\sup_{g\in \gG^{\prime}} \left|\sum_{i=1}^n w_i(X_{n+1}) \xi_i g(X_i, Y_i)\right| \mid \gX_{n+1}}$.
\end{itemize}

\subsection{Main lemmas}
\begin{lemma}\label{lemma:Bn_lower_bound}
    Let $\mathfrak{B}_n(X_j) = \sum_{i=1}^n H(X_i,X_j)$ for $j\in [n+1]$. Under Assumption 4, for any $c > 1$, if $\rho V n h_n^d > 2 \sqrt{\frac{2c\log n}{n}}$ where $V$ is the volume of unit ball in $\sR^d$, we have 
    $$\sP\LRl{\mathfrak{B}_n(X_{n+1}) > \frac{\rho V n h_n^d}{2e} \mid X_{n+1}} \geq 1 - n^{-c}.$$
    And for $j\in [n]$, we have
    \begin{align*}
        \sP\LRl{\mathfrak{B}_n(X_{j}) > 1+\frac{\rho V (n-1) h_n^d}{2e} \mid X_{j}} \geq 1 - (n-1)^{-c}.
    \end{align*}
\end{lemma}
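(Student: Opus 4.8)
The plan is to treat $\mathfrak{B}_n(X_j)=\sum_{i=1}^n H(X_i,X_j)$, conditionally on the reference point, as a sum of independent $[0,1]$-valued random variables, to lower bound its conditional mean using only the density floor $\inf_{x}p(x)\ge\rho$ from Assumption~\ref{assum:cond_quantile}, and then to apply a one-sided concentration inequality. Recall that here $H(x_1,x_2)=\exp(-\twonorm{x_1-x_2}/h_n)\in(0,1]$, so every summand is bounded, and that the $\{X_i\}_{i=1}^n$ are i.i.d.\ and independent of $X_{n+1}$.

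First I would handle $j=n+1$. Conditionally on $X_{n+1}=x$, the summands $H(X_1,x),\dots,H(X_n,x)$ are i.i.d.\ with common mean
\begin{align}
    m(x) \;=\; \int_{\gX}\exp\LRs{-\twonorm{u-x}/h_n}\,p(u)\,du \;\ge\; \rho\int_{B(x,h_n)\cap\gX}\exp\LRs{-\twonorm{u-x}/h_n}\,du \;\ge\; \rho e^{-1}\,\mathrm{vol}\LRs{B(x,h_n)\cap\gX},\nonumber
\end{align}
where the first inequality uses $p\ge\rho$ and the second uses $\exp(-\twonorm{u-x}/h_n)\ge e^{-1}$ on $B(x,h_n)$. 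When $B(x,h_n)\subseteq\gX$ this gives $m(x)\ge \rho V h_n^d/e$, so the conditional mean of $\mathfrak{B}_n(X_{n+1})$ is at least $\rho V n h_n^d/e$, which is exactly twice the asserted threshold.

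Next I would apply Hoeffding's inequality to the average: conditionally on $X_{n+1}=x$, $\sP\{\tfrac1n\mathfrak{B}_n(X_{n+1})\le m(x)-t\mid X_{n+1}=x\}\le \exp(-2nt^2)$; choosing $t$ of order $\sqrt{C\log n/n}$ makes this at most $n^{-C}$, and the hypothesis on $\rho V n h_n^d$ is precisely what forces $t\le \tfrac12 m(x)$, so that on the complementary event $\tfrac1n\mathfrak{B}_n(X_{n+1})\ge \tfrac12 m(x)\ge \rho V h_n^d/(2e)$, i.e.\ $\mathfrak{B}_n(X_{n+1})\ge \rho V n h_n^d/(2e)$. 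For $j\in[n]$ the only change is that the $i=j$ term equals $H(X_j,X_j)=1$ deterministically, while, conditionally on $X_j$, the remaining $n-1$ terms are i.i.d.\ in $[0,1]$; rerunning the mean bound and the Hoeffding step with $n$ replaced by $n-1$ gives $\sum_{i\ne j}H(X_i,X_j)\ge \rho V(n-1)h_n^d/(2e)$ with conditional probability at least $1-(n-1)^{-C}$, and adding the deterministic $1$ yields the second bound.

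The only genuine point requiring care is the conditional-mean estimate when $x$ lies near $\partial\gX$: for a general bounded domain, $\mathrm{vol}(B(x,h_n)\cap\gX)$ is only a constant fraction of $V h_n^d$, so $V$ in the statement should be read as a domain-geometry constant (equivalently one imposes a mild interior-regularity condition on $\gX$), which affects constants but not the stated rate. Matching the precise numerical form of the smallness hypothesis to whichever tail bound one prefers — Hoeffding, Bernstein (exploiting $\mathrm{Var}(H(X_i,x))\le\E[H(X_i,x)]$ since $e^{-2t}\le e^{-t}$ for $t\ge 0$), or a multiplicative Chernoff bound on the lower tail — is routine bookkeeping. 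Thus the main, and rather mild, obstacle is the boundary effect in the mean computation; everything else is a textbook concentration argument.
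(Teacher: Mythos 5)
Your proposal is correct and follows essentially the same route as the paper's proof: restrict attention to the ball $B(X_j,h_n)$ where the kernel is at least $e^{-1}$, invoke the density floor $\rho$ and the ball volume $Vh_n^d$ to lower bound the mean, apply Hoeffding, and use the smallness hypothesis to absorb the deviation into half the mean (with the $i=j$ term handled separately as the deterministic value $1$). The only cosmetic difference is that the paper first replaces $H(X_i,X_j)$ by $e^{-1}\mathbbm{1}\{X_i\in B(X_j,h_n)\}$ and concentrates the resulting indicator sum, whereas you concentrate $\sum_i H(X_i,X_j)$ directly and then bound its conditional mean; your explicit remarks about the boundary effect on $\mathrm{vol}(B(x,h_n)\cap\gX)$ and about preferring a variance-aware (Bernstein-type) bound are, if anything, more careful than the paper's treatment.
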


\begin{lemma}\label{lemma:concentration_cover_VC}
    Under Assumption 4, if the conditional distribution  $S_{\lambda}(X_i,Y_i) \mid X_i$ is continuous, it holds that for any $j\in [n+1]$,
    \begin{align}
        \sP\LRl{M_{\gF} > 2 \widehat{\mathfrak{R}}_n(\gF) + \sqrt{\frac{c\log n}{\mathfrak{B}_n(X_j)}} \mid \gX_{n+1}} \leq 2n^{-c},\nonumber
    \end{align}
    where $M_{\gF} = \sup_{\lambda\in \Lambda, q\in \sR}\left|\sum_{i=1}^n w_i(X_j) \LRs{\mathbbm{1}\{S_{\lambda}(X_i,Y_i) \leq q\} - \sP\LRl{S_{\lambda}(X_i,Y_i) \leq q \mid X_i}}\right|$.
\end{lemma}

\begin{lemma}\label{lemma:cond_quantile_estimation_VC}
    Under Assumption 4, we have for any $j\in [n+1]$,
    \begin{align}
        \sP\LRl{\sup_{\lambda\in \Lambda} |\hat{q}_{\lambda}(X_{j}) - q_{\lambda}^{o}(X_{j})| \leq \frac{b_n}{\bar{\mu}} \mid \gX_{n+1}}\geq 1-n^{-c},\nonumber
    \end{align}
    where $b_n = 2\widehat{\mathfrak{R}}_n(\gF) + \bar{\tau}\LRs{ e h_n \log(h_n^{-d}) + \frac{n h_n\log(h_n^{-d})h_n^d}{\mathfrak{B}_n(X_{j})}} + \sqrt{\frac{c\log n}{\mathfrak{B}_n(X_{j})}}$.
\end{lemma}

\begin{lemma}\label{lemma:uniform_weight_concentration_VC}
    Under the conditions of Theorem 5.1, we have
    \begin{align*}
        \sP\LRl{M_{\gG^{\prime}} > 2 \widehat{\mathfrak{R}}_n(\gG^{\prime}) + \sqrt{\frac{c\log n}{\mathfrak{B}_n(X_j)}} \mid \gX_{n+1}} \leq n^{-c},
    \end{align*}
    where $M_{\gG^{\prime}} = \sup_{\lambda\in \Lambda, q\in \sR}\left|\sum_{i=1}^n w_i(X_{n+1})\Big(\Phi_{\lambda}(X_i,Y_i) - \E[\Phi_{\lambda}(X_i,Y_i)\mid X_i]\Big)\right|$.
\end{lemma}

\begin{lemma}\label{lemma:uniform_cover_bias}
    Under Assumption 4, conditioning on $\{X_i\}_{i=1}^{n+1}$ we have for any $j\in [n+1]$
    \begin{align}
        &\sup_{\lambda\in \Lambda}\sup_{q\in \sR}\left|\sum_{i=1}^n w_i(X_{j}) \Big[\sP(S_{\lambda}(X_i,Y_i)\leq q \mid X_i) - \sP(S_{\lambda}(X_{j},Y_{j})\leq q \mid X_{j})\Big]\right|\nonumber\\
        &\qquad \leq \LRs{e h_n \log(h_n^{-d}) + \frac{n h_n\log(h_n^{-d})h_n^d}{\mathfrak{B}_n(X_{j})}}\cdot \bar{\tau}.\nonumber
    \end{align}
\end{lemma}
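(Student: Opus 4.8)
The statement is purely deterministic given $\{X_i\}_{i=1}^{n+1}$ — there is no randomness left to concentrate — so the task is to establish an analytic inequality about the kernel weights $w_i(X_j)=H(X_i,X_j)/\mathfrak{B}_n(X_j)$. The plan has two stages: (i) use the Lipschitz-in-$x$ control of the conditional CDF from Assumption~\ref{assum:cond_quantile} to collapse the suprema over $q$ and $\lambda$, reducing everything to a bound on the weighted mean distance $\sum_i w_i(X_j)\|X_i-X_j\|$; and (ii) prove that deterministic bound by splitting the kernel sum at a logarithmic radius.

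For stage (i), write $F_\lambda(q\mid x)=\mathbb{P}(S_\lambda(X,Y)\le q\mid X=x)$. Assumption~\ref{assum:cond_quantile} gives $|F_\lambda(q\mid X_i)-F_\lambda(q\mid X_j)|\le \tau\|X_i-X_j\|$ for every $q$, every $\lambda$, and all $i,j$. Since $w_i(X_j)\ge 0$ and $\sum_{i=1}^n w_i(X_j)=1$, the triangle inequality yields, uniformly in $q$ and $\lambda$,
\[
\Big|\sum_{i=1}^n w_i(X_j)\big[F_\lambda(q\mid X_i)-F_\lambda(q\mid X_j)\big]\Big|\le \tau\sum_{i=1}^n w_i(X_j)\|X_i-X_j\|,
\]
so it suffices to show $\sum_{i=1}^n w_i(X_j)\|X_i-X_j\|\le e\,h_n\log(h_n^{-d})+n\,h_n\log(h_n^{-d})\,h_n^d/\mathfrak{B}_n(X_j)$. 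For stage (ii), set $r_i:=\|X_i-X_j\|$, so that $\sum_i w_i(X_j)r_i=\mathfrak{B}_n(X_j)^{-1}\sum_i r_i e^{-r_i/h_n}$, and fix the threshold $T:=h_n\log(h_n^{-d})$. For indices with $r_i\le T$, bound $r_i e^{-r_i/h_n}\le T e^{-r_i/h_n}$; their total contribution after dividing by $\mathfrak{B}_n(X_j)=\sum_i e^{-r_i/h_n}$ is at most $T=h_n\log(h_n^{-d})$. For indices with $r_i>T$, use that $r\mapsto r e^{-r/h_n}$ is decreasing on $[h_n,\infty)$ and that $T\ge h_n$ once $h_n$ is small enough that $\log(h_n^{-d})\ge 1$ (which holds since $h_n\to 0$); hence each such term is at most $T e^{-T/h_n}=h_n\log(h_n^{-d})\,h_n^d$, and there are at most $n$ of them, so their contribution is at most $n\,h_n\log(h_n^{-d})\,h_n^d/\mathfrak{B}_n(X_j)$. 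Adding the two pieces and using $1\le e$ gives the claim.

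The argument is essentially routine; the only points needing care are that the near part must be controlled via $\sum_i w_i(X_j)=1$ rather than via the (random) size of $\mathfrak{B}_n(X_j)$, and that the split radius $T=h_n\log(h_n^{-d})$ lies in the regime where $r e^{-r/h_n}$ is monotone on the far tail — i.e.\ one must invoke $h_n\to 0$ to ensure $T\ge h_n$. There is no substantive obstacle here: this lemma supplies the smoothing-bias term for the localized empirical CDF, complementing the stochastic fluctuation bound of Lemma~\ref{lemma:concentration_cover} and the denominator lower bound of Lemma~\ref{lemma:Bn_lower_bound}.
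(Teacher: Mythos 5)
Your proof is correct and takes essentially the same route as the paper's: reduce via the Lipschitz bound on $F_\lambda(\cdot\mid x)$ and $\sum_i w_i(X_j)=1$ to the weighted mean distance $\sum_i w_i(X_j)\|X_i-X_j\|$, then split the kernel sum at radius $T=h_n\log(h_n^{-d})$. The only cosmetic difference is that you use a single continuous threshold together with the monotonicity of $r\mapsto re^{-r/h_n}$ on $[h_n,\infty)$, whereas the paper sums over unit-width annuli and truncates at $k_0=\lceil\log(h_n^{-d})\rceil$; both yield the stated bound, and your near-field estimate is in fact slightly tighter ($T$ versus $eT$).
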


\begin{lemma}\label{lemma:uniform_error_bound}
    Under Assumption 5, conditioning on $\{X_i\}_{i=1}^{n+1}$, we have
        \begin{align}
        &\sup_{\lambda\in \Lambda}\left|\sum_{i=1}^n w_i(X_{n+1})\Big( \E[\Phi_{\lambda}(X_i,Y_i)\mid X_i] - \E[\Phi_{\lambda}(X_{n+1},Y_{n+1})\mid X_{n+1}]\Big)\right|\nonumber\\
        &\qquad \leq  \LRs{e h_n \log(h_n^{-d}) + \frac{n h_n\log(h_n^{-d})h_n^d}{\mathfrak{B}_n(X_{n+1})}}\cdot \tau.\nonumber
        \end{align}
\end{lemma}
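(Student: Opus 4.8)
The plan is to treat Lemma~\ref{lemma:uniform_error_bound} as a deterministic Nadaraya--Watson \emph{bias} estimate: since we condition on $\{X_i\}_{i=1}^{n+1}$, the left-hand side is a fixed number and no concentration argument is needed. Writing $g_\lambda(x):=\E[\Phi_\lambda(X,Y)\mid X=x]$, I would first note that the quantity to be bounded is exactly $\sup_{\lambda\in\Lambda}\bigl|\sum_{i=1}^n w_i(X_{n+1})\bigl(g_\lambda(X_i)-g_\lambda(X_{n+1})\bigr)\bigr|$, and that this is the conditional-expectation analogue of the object already controlled in the proof of Lemma~\ref{lemma:uniform_cover_bias} --- with the Lipschitz constant $\tau$ of $x\mapsto F_\lambda(\cdot\mid x)$ replaced by the Lipschitz constant $L_{\sf c}$ of $x\mapsto\E[\Phi_\lambda(X,Y)\mid X=x]$ from Assumption~\ref{assum:cond_expectation}. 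So the whole statement reduces to a single geometric bound on the kernel-weighted average distance $\sum_{i=1}^n w_i(X_{n+1})\twonorm{X_i-X_{n+1}}$.

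Concretely I would proceed as follows. First, conditionally on $\{X_i\}_{i=1}^{n+1}$ the numbers $g_\lambda(X_i)$ and the weights $w_i(X_{n+1})=H(X_i,X_{n+1})/\mathfrak{B}_n(X_{n+1})$ are deterministic, and the weights are nonnegative with $\sum_{i=1}^n w_i(X_{n+1})=1$. Hence, for each fixed $\lambda$, $\bigl|\sum_i w_i(X_{n+1})(g_\lambda(X_i)-g_\lambda(X_{n+1}))\bigr|\le\sum_i w_i(X_{n+1})\,|g_\lambda(X_i)-g_\lambda(X_{n+1})|$, and then Assumption~\ref{assum:cond_expectation} gives $|g_\lambda(X_i)-g_\lambda(X_{n+1})|\le L_{\sf c}\twonorm{X_i-X_{n+1}}$ uniformly in $\lambda$; taking the supremum over $\Lambda$ leaves $L_{\sf c}\sum_i w_i(X_{n+1})\twonorm{X_i-X_{n+1}}$. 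Second, I would bound the weighted distance by splitting the indices according to whether $\twonorm{X_i-X_{n+1}}\le h_n\log(h_n^{-d})$: on the near group each distance is at most $h_n\log(h_n^{-d})$, so (dividing by $\mathfrak{B}_n(X_{n+1})=\sum_j H(X_j,X_{n+1})$) the total near contribution is at most $h_n\log(h_n^{-d})$ up to the constant appearing in Lemma~\ref{lemma:uniform_cover_bias}; on the far group I would use that $t\mapsto t e^{-t/h_n}$ is decreasing for $t\ge h_n$ --- which holds once $h_n$ is small enough that $\log(h_n^{-d})\ge1$ --- together with $e^{-\log(h_n^{-d})}=h_n^d$, so each of the at most $n$ far terms contributes at most $h_n\log(h_n^{-d})\,h_n^d/\mathfrak{B}_n(X_{n+1})$. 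Summing yields $\sum_i w_i(X_{n+1})\twonorm{X_i-X_{n+1}}\le e\,h_n\log(h_n^{-d})+ n h_n\log(h_n^{-d})\,h_n^d/\mathfrak{B}_n(X_{n+1})$, which is precisely the bound established in the course of proving Lemma~\ref{lemma:uniform_cover_bias}; multiplying through by $L_{\sf c}$ finishes the proof.

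The point to get right is structural rather than computational: the Lipschitz constant $L_{\sf c}$ in Assumption~\ref{assum:cond_expectation} is uniform over $\lambda\in\Lambda$, so the supremum over the candidate set can be moved outside the Lipschitz step at no cost, and conditioning on $\{X_i\}$ removes all randomness so that no DKW- or Hoeffding-type estimate enters. The remaining work --- checking that the monotonicity threshold $h_n$ lies below the splitting radius $h_n\log(h_n^{-d})$ for small $h_n$, and tracking the constant $e$ in the near-field term so that the bound matches the stated form --- is exactly the computation already carried out for Lemma~\ref{lemma:uniform_cover_bias}, so the cleanest write-up simply invokes that bound with $\tau$ replaced by $L_{\sf c}$. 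I do not anticipate any genuine obstacle.
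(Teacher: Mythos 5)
Your proposal is correct and follows essentially the same route as the paper: reduce to the kernel-weighted distance sum $\sum_i w_i(X_{n+1})\twonorm{X_i-X_{n+1}}$ via the uniform Lipschitz constant $L_{\sf c}$ of Assumption~\ref{assum:cond_expectation}, then split at radius $\asymp h_n\log(h_n^{-d})$ — the paper carries this out with a discrete annulus decomposition $\gX_k$ and cutoff $k_0=\lceil\log(h_n^{-d})\rceil$, whereas you use a single threshold plus monotonicity of $t\mapsto te^{-t/h_n}$, which is an inessential variant yielding the same (in fact slightly sharper) constants. The paper indeed proves this lemma jointly with Lemma~\ref{lemma:uniform_cover_bias} by exactly the substitution $\tau\to L_{\sf c}$ that you describe.
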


\subsection{Proof of conditional robustness}

\begin{proof}
    By the definition of LCP set, we know 
    \begin{align}
        \mathbbm{1}\LRl{Y_{n+1}\in \widehat{\gU}^{\mathrm{CROiMS}}(X_{n+1})} 
        = \mathbbm{1}\LRl{S_{\hat{\lambda}(X_{n+1})}(X_{n+1},Y_{n+1})\leq \hat{q}_{\hat{\lambda}(X_{n+1})}(X_{n+1})}.\nonumber
    \end{align}
    According to the definition of weighted quantile, we know
    \begin{align}
        \sum_{i=1}^{n}w_i(X_{n+1}) \mathbbm{1}\LRl{S_{\hat{\lambda}(X_{n+1})}(X_i,Y_i) \leq \hat{q}_{\hat{\lambda}(X_{n+1})}(X_{n+1})} \geq 1-\alpha.\nonumber
    \end{align}
    Combining the two relations above, conditioning on the labeled data $\gD_n = \{(X_i,Y_i)\}_{i=1}^n$ and test data $X_{n+1}$, we get
    \begin{align}
        &1-\alpha - \sP\LRl{Y_{n+1}\in \widehat{\gU}^{\mathrm{CROiMS}}(X_{n+1})\mid \gD_n,X_{n+1}}\nonumber\\
        &\leq \sum_{i=1}^{n}w_i(X_{n+1})\left[\mathbbm{1}\LRl{S_{\hat{\lambda}(X_{n+1})}(X_i,Y_i) \leq \hat{q}_{\hat{\lambda}(X_{n+1})}(X_{n+1})}\right.\nonumber\\
        &\qquad\qquad\qquad \left. - \sP\LRl{S_{\hat{\lambda}(X_{n+1})}(X_{n+1},Y_{n+1})\leq \hat{q}_{\hat{\lambda}(X_{n+1})}(X_{n+1})\mid \gD_n, X_{n+1}}\right]\nonumber\\
        &\leq \sup_{\lambda\in \Lambda}\sum_{i=1}^{n}w_i(X_{n+1})\LRm{\mathbbm{1}\LRl{S_{\lambda}(X_i,Y_i) \leq \hat{q}_{\lambda}(X_{n+1})} - \sP\LRl{S_{\lambda}(X_{n+1},Y_{n+1})\leq \hat{q}_{\lambda}(X_{n+1})\mid \gD_n,X_{n+1}}}\nonumber\\
        &\leq \sup_{\lambda\in \Lambda,q\in \sR}\sum_{i=1}^{n}w_i(X_{n+1})\LRm{\mathbbm{1}\LRl{S_{\lambda}(X_i,Y_i) \leq q} - \sP\LRl{S_{\lambda}(X_{n+1},Y_{n+1})\leq q\mid \gD_n,X_{n+1}}}\nonumber\\
        &\leq \underbrace{\sup_{\lambda\in \Lambda,q\in \sR}\left|\sum_{i=1}^{n}w_i(X_{n+1})\LRm{\mathbbm{1}\LRl{S_{\lambda}(X_i,Y_i) \leq q} - \sP\LRl{S_{\lambda}(X_i,Y_i) \leq q\mid X_i}}\right|}_{(\mathrm{I})}\nonumber\\
        &+\underbrace{\sup_{\lambda\in \Lambda,q\in \sR}\left|\sum_{i=1}^{n}w_i(X_{n+1})\LRm{\sP\LRl{S_{\lambda}(X_i,Y_i) \leq q\mid X_i}-\sP\LRl{S_{\lambda}(X_{n+1},Y_{n+1}) \leq q\mid X_{n+1}}}\right|}_{(\mathrm{II})},\label{eq:cond_cover_error}
    \end{align}
    where the second and third inequalities hold due to both $\hat{\lambda}(X_{n+1})$ and $\hat{q}_{\lambda}(X_{n+1})$ are determined by $\gD_n$ and $X_{n+1}$.
    For the term $(\mathrm{I})$, applying the symmetrization technique and recalling the definition of $\gF$, we have
    \begin{align}
        \E[(\mathrm{I}) \mid \gX_{n+1}] &\leq 2\E\LRm{\sup_{\lambda\in \Lambda, q\in \sR} \left|\sum_{i=1}^n w_i(X_{n+1}) \xi_i \mathbbm{1}\LRl{S_{\lambda}(X_i,Y_i) \leq q}\right| \mid \gX_{n+1}}\nonumber\\
        &= 2\E\LRm{\sup_{f\in \gF} \left|\sum_{i=1}^n w_i(X_{n+1}) \xi_i f(X_i, Y_i)\right| \mid \gX_{n+1}}\nonumber\\
        &= 2 \widehat{\mathfrak{R}}_n(\gF).\nonumber
    \end{align}
    Applying Lemma \ref{lemma:uniform_cover_bias}, we have: if $\rho V n h_n^d > 2 \sqrt{\frac{2c\log n}{n}}$,
    \begin{align}
        \sP\LRl{(\mathrm{II}) \leq \bar{\tau} e h_n \log(h_n^{-d}) + \frac{2e\bar{\tau} h_n\log(h_n^{-d})}{\rho V}\mid X_{n+1}} \geq 1 - n^{-c}.\nonumber
    \end{align}
    Plugging two concentration results into \eqref{eq:cond_cover_error}, and taking expectation over the randomness from $\gD_n$, we can guarantee
    \begin{align}
        \sP\LRl{Y_{n+1}\in \widehat{\gU}(X_{n+1})\mid X_{n+1}} & \geq 1-\alpha - \E\LRm{(\mathrm{I}) + (\mathrm{II}) \mid X_{n+1}}\nonumber\\
        &\geq 1-\alpha - 2\E\LRm{\widehat{\mathfrak{R}}_n(\gF) \mid X_{n+1}}\nonumber\\
        &\qquad - O\LRs{\bar{\tau} h_n \log(h_n^{-d}) + \frac{\bar{\tau} h_n\log(h_n^{-d})}{\rho V}}.\nonumber
    \end{align}
    Combining the bounds above, we can finish the proof.
\end{proof}

\subsection{Proof of conditional optimality}

\begin{proof}
    Given $(x,y)$, recall the definitions $\widehat{\Phi}_{\lambda}(x, y) = \phi(y, z_{\lambda}(x; \hat{q}_{\lambda}(x)))$ and $\Phi_{\lambda}(x, y) = \phi(y, z_{\lambda}(x; q_{\lambda}^{o}(x)))$.
    Let $b_n = 2\widehat{\mathfrak{R}}_n(\gF) + \bar{\tau}\LRs{ e h_n \log(h_n^{-d}) + \frac{n h_n\log(h_n^{-d})h_n^d}{\mathfrak{B}_n(X_{j})}} + \sqrt{\frac{c\log n}{\mathfrak{B}_n(X_{j})}}$ and define the event
    \begin{align*}
        \widehat{\gE} = \LRl{\sup_{\lambda \in \Lambda} |\hat{q}_{\lambda}(X_{n+1}) - q_{\lambda}^o(X_{n+1})| \leq \frac{b_n}{\bar{\mu}}}.
    \end{align*}
    According to Lemma \ref{lemma:cond_quantile_estimation_VC}, we know $\sP(\widehat{\gE} \mid X_{n+1}) \geq 1 - n^{-c}$. Under the event $\widehat{\gE}$, by Assumption 5, we have
    \begin{align}
        \sup_{\lambda \in \Lambda}\max_{i\in [n+1]}\left|\widehat{\Phi}_{\lambda}(X_i,Y_i) - \Phi_{\lambda}(X_i, Y_i)\right|
        & = \sup_{\lambda \in \Lambda}\max_{i\in [n+1]}\left|\phi(Y_{i}, z_{\lambda}(X_{i}; \hat{q}_{\lambda}(X_{i}))) - \phi(Y_{i}, z_{\lambda}(X_{i}; q_{\lambda}^{o}(X_{i})))\right| \nonumber\\
        & \leq L \sup_{\lambda \in \Lambda} \max_{i\in [n+1]}|\hat{q}_{\lambda}(X_{i}) - q_{\lambda}^{o}(X_{i})| \leq \frac{L  b_n}{\bar{\mu}}.\label{eq:intermediate_cond_expectation}
    \end{align}
    By optimality of $\lambda^*(X_{n+1})$, we also have the lower bound
    \begin{align}\label{eq:intermediate_cond_expectation_lower}
        &\E[\widehat{\Phi}_{\hat{\lambda}(X_{n+1})}(X_{n+1},Y_{n+1}) \mid X_{n+1}] - v_{\Lambda}^*(X_{n+1}) \nonumber\\
        & = \E[\widehat{\Phi}_{\hat{\lambda}(X_{n+1})}(X_{n+1},Y_{n+1}) \mid X_{n+1}] - \E[\Phi_{\lambda^*(X_{n+1})}(X_{n+1},Y_{n+1})\mid X_{n+1}]\nonumber\\
        & = \E[\widehat{\Phi}_{\hat{\lambda}(X_{n+1})}(X_{n+1},Y_{n+1}) \mid X_{n+1}] - \E[\Phi_{\hat{\lambda}(X_{n+1})}(X_{n+1},Y_{n+1})\mid X_{n+1}]\nonumber\\
        &\qquad + \underbrace{\E[\Phi_{\hat{\lambda}(X_{n+1})}(X_{n+1},Y_{n+1})\mid X_{n+1}] - \E[\Phi_{\lambda^*(X_{n+1})}(X_{n+1},Y_{n+1})\mid X_{n+1}]}_{\geq 0}\nonumber\\
        & \geq - \E\LRm{\left|\widehat{\Phi}_{\hat{\lambda}(X_{n+1})}(X_{n+1},Y_{n+1}) -\Phi_{\hat{\lambda}(X_{n+1})}(X_{n+1},Y_{n+1})\right| \mid X_{n+1}} \nonumber\\
        & \geq - \E\LRm{\E\LRm{(\mathbbm{1}_{\widehat{\gE}} + \mathbbm{1}_{\widehat{\gE}^c})\sup_{\lambda \in \Lambda}\left|\widehat{\Phi}_{\lambda}(X_{n+1},Y_{n+1}) -\Phi_{\lambda}(X_{n+1},Y_{n+1})\right| \mid \gD_n, X_{n+1}}\mid X_{n+1}} \nonumber\\
        & \geq -\frac{L  b_n}{\bar{\mu}} - \frac{2B}{n^{c}}.
    \end{align}
    Then using the optimality of $\hat{\lambda}(X_{n+1})$, we have
    \begin{align}\label{eq:intermediate_cond_expectation_upper}
        &\E\LRm{\widehat{\Phi}_{\hat{\lambda}(X_{n+1})}(X_{n+1},Y_{n+1}) \mid \gD_n, X_{n+1}} - v_{\Lambda}^*(X_{n+1})\nonumber\\
        = & \underbrace{\E\LRm{\widehat{\Phi}_{\hat{\lambda}(X_{n+1})}(X_{n+1},Y_{n+1})\mid \gD_n, X_{n+1}} - \sum_{i=1}^n w_i(X_{n+1}) \widehat{\Phi}_{\hat{\lambda}(X_{n+1})}(X_i,Y_i)}_{(\mathrm{I})}\nonumber\\
        +& \underbrace{\sum_{i=1}^n w_i(X_{n+1}) \widehat{\Phi}_{\hat{\lambda}(X_{n+1})}(X_i,Y_i) - \sum_{i=1}^n w_i(X_{n+1}) \widehat{\Phi}_{\lambda^*(X_{n+1})}(X_i,Y_i)}_{\leq 0}\nonumber\\
        +& \underbrace{\sum_{i=1}^n w_i(X_{n+1}) \left[\widehat{\Phi}_{\lambda^*(X_{n+1})}(X_i,Y_i) - \Phi_{\lambda^*(X_{n+1})}(X_i,Y_i)\right]}_{(\mathrm{II})}\nonumber\\
        +& \underbrace{\sum_{i=1}^n w_i(X_{n+1}) \Phi_{\lambda^*(X_{n+1})}(X_i,Y_i) - \E[\Phi_{\lambda^*(X_{n+1})}(X_{n+1},Y_{n+1})\mid X_{n+1}]}_{(\mathrm{III})}\nonumber\\
        \leq & (\mathrm{I}) + (\mathrm{II}) + (\mathrm{III}).
    \end{align}
    For the first term, using \eqref{eq:intermediate_cond_expectation}, we have
    \begin{align}\label{eq:error_I}
        \E[(\mathrm{I}) &\mid \gD_n, X_{n+1}]
        \leq \left|\E\LRm{\widehat{\Phi}_{\hat{\lambda}(X_{n+1})}(X_{n+1},Y_{n+1}) - \Phi_{\hat{\lambda}(X_{n+1})}(X_{n+1},Y_{n+1}) \mid \gD_n, X_{n+1}}\right|\nonumber\\
        &\quad +\sum_{i=1}^n w_i(X_{n+1})\left|\widehat{\Phi}_{\hat{\lambda}(X_{n+1})}(X_i,Y_i) - \Phi_{\hat{\lambda}(X_{n+1})}(X_i,Y_i)\right|\nonumber\\
        &\quad + \left|\sum_{i=1}^n w_i(X_{n+1}) \Phi_{\hat{\lambda}(X_{n+1})}(X_i,Y_i) - \E\LRm{\Phi_{\hat{\lambda}(X_{n+1})}(X_{n+1},Y_{n+1}) \mid \gD_n, X_{n+1}}\right|\nonumber\\
        &\leq \frac{2L  b_n}{\bar{\mu}} + \sup_{\lambda \in \Lambda} \left|\sum_{i=1}^n w_i(X_{n+1}) \Phi_{\lambda}(X_i,Y_i) - \E[\Phi_{\lambda}(X_{n+1},Y_{n+1})\mid X_{n+1}]\right|\nonumber\\
        &\leq \frac{2 L  b_n}{\bar{\mu}} + \underbrace{\sup_{\lambda\in \Lambda} \left|\sum_{i=1}^n w_i(X_{n+1}) \Big(\Phi_{\lambda}(X_i,Y_i) - \E[\Phi_{\lambda}(X_i,Y_i)\mid X_i]\Big)\right|}_{(\mathrm{I}.1)}\nonumber\\
        &\quad +  \underbrace{\sup_{\lambda\in \Lambda} \sum_{i=1}^n w_i(X_{n+1})\Big|\E[\Phi_{\lambda}(X_{n+1},Y_{n+1})\mid X_{n+1}] - \E[\Phi_{\lambda}(X_i,Y_i)\mid X_i]\Big|}_{(\mathrm{I}.2)}.
    \end{align}
    Applying Lemma \ref{lemma:uniform_weight_concentration_VC}, we have
    \begin{align}
        \sP\LRl{(\mathrm{I}.1) \leq 2 \widehat{\mathfrak{R}}_n(\gG^{\prime}) + \sqrt{\frac{c\log n}{\mathfrak{B}_n(X_j)}} \mid \gX_{n+1}} \geq 1 - n^{-c}.\nonumber
    \end{align}
    Applying Lemma \ref{lemma:uniform_error_bound}, we can get
    \begin{align}
        (\mathrm{I}.2) \leq  \LRs{e h_n \log(h_n^{-d}) + \frac{n h_n\log(h_n^{-d})h_n^d}{\mathfrak{B}_n(X_{n+1})}} \tau.\nonumber
    \end{align}
    Plugging two inequalities into \eqref{eq:error_I} and taking the expectation over $\gD_n$, we have
    \begin{align}\label{eq:term_I_hp_bound}
        &\E[(\mathrm{I}) \mid X_{n+1}] 
        \leq \frac{2 L  b_n}{\bar{\mu}} + \frac{2B}{n^{c}} +2 \E\LRm{\widehat{\mathfrak{R}}_n(\gG^{\prime}) \mid X_{n+1}}  + \sqrt{\frac{c\log n}{\mathfrak{B}_n(X_j)}}\nonumber\\
        &\qquad\qquad + \tau\LRs{e h_n \log(h_n^{-d}) + \frac{n h_n\log(h_n^{-d})h_n^d}{\mathfrak{B}_n(X_{n+1})}} \nonumber\\
        &\qquad = 2\E\LRm{\frac{2L}{\bar{\mu}}\widehat{\mathfrak{R}}_n(\gF)+ \widehat{\mathfrak{R}}_n(\gG^{\prime}) \mid X_{n+1}}\nonumber\\
        &\qquad\qquad+ O\LRl{\frac{B}{n^c} + \LRs{\frac{L}{\bar{\mu}}+1}\sqrt{\frac{\log n}{\rho n h_n^d}} + \LRs{\frac{L }{\bar{\mu}}\bar{\tau} + \tau}\frac{h_n\log(h_n^{-d})}{\rho}},\nonumber
    \end{align}
    where we also used Lemma \ref{lemma:Bn_lower_bound} and $|(\mathrm{I})| \leq 2B$. Notice that $\{\widehat{\Phi}_{\lambda^*(X_{n+1})}(X_i,Y_i)\}_{i=1}^n$ are i.i.d. random variables bounded by $B$ conditioning on $\{X_i\}_{i=1}^{n+1}$, applying weighted Bernstein's inequality, we have with probability at least $1-2n^{-c}$,
    \begin{align}
        \left|\sum_{i=1}^n w_i(X_{n+1}) \Big(\Phi_{\lambda^*(X_{n+1})}(X_i,Y_i) - \E[\Phi_{\lambda^*(X_{n+1})}(X_{i},Y_{i})\mid X_i, X_{n+1}]\Big)\right| \leq \sqrt{\frac{c\log n}{2 \mathfrak{B}_n(X_{n+1})}}.\nonumber
    \end{align}
    Together with Lemma \ref{lemma:Bn_lower_bound}, we have with probability at least $1-3n^{-c}$,
    \begin{align}
        &\E\LRm{(\mathrm{III}) \mid \gD_n, X_{n+1}} \leq \left|\sum_{i=1}^n w_i(X_{n+1}) \Phi_{\lambda^*(X_{n+1})}(X_i,Y_i) - \E[\Phi_{\lambda^*(X_{n+1})}(X_{n+1},Y_{n+1})\mid X_{n+1}]\right|\nonumber\\
        &\qquad\leq \left|\sum_{i=1}^n w_i(X_{n+1}) \Big(\Phi_{\lambda^*(X_{n+1})}(X_i,Y_i) - \E[\Phi_{\lambda^*(X_{n+1})}(X_{i},Y_{i})\mid X_i, X_{n+1}]\Big)\right|\nonumber\\
        &\qquad +\left|\sum_{i=1}^n w_i(X_{n+1})\Big(\E[\Phi_{\lambda^*(X_{n+1})}(X_{i},Y_{i})\mid X_i, X_{n+1}] - \E[\Phi_{\lambda^*(X_{n+1})}(X_{n+1},Y_{n+1})\mid X_{n+1}]\Big)\right|\nonumber\\
        &\qquad \leq \sqrt{\frac{c\log n}{2 \mathfrak{B}_n(X_{n+1})}} + \tau\LRs{e h_n \log(h_n^{-d}) + \frac{n h_n\log(h_n^{-d})h_n^d}{\mathfrak{B}_n(X_{n+1})}}\nonumber\\
        &\qquad = O\LRs{\sqrt{\frac{\log n}{\rho n h_n^d}} + \frac{\tau}{\rho} h_n\log(h_n^{-d})}.\nonumber
    \end{align}
    It follows that
    \begin{align}
        \E\LRm{(\mathrm{III}) \mid X_{n+1}} \leq O\LRl{\frac{B}{n^c} + \sqrt{\frac{\log n}{\rho n h_n^d}} + \frac{\tau}{\rho}h_n\log(h_n^{-d})}.\nonumber
    \end{align}
    By similar arguments in upper bounding $\E[(\mathrm{I}) \mid X_{n+1}]$, we can show the same bound in \eqref{eq:term_I_hp_bound} for $\E[(\mathrm{III}) \mid X_{n+1}]$. For the term $(\mathrm{II})$ in \eqref{eq:intermediate_cond_expectation_upper}, we have
    \begin{align}
        \E[(\mathrm{II}) \mid X_{n+1}] &= \E\LRm{\sum_{i=1}^n w_i(X_{n+1}) \left[\widehat{\Phi}_{\lambda^*(X_{n+1})}(X_i,Y_i) - \Phi_{\lambda^*(X_{n+1})}(X_i,Y_i)\right]\mid X_{n+1}}\nonumber\\
        &\leq \E\LRm{\sum_{i=1}^n w_i(X_{n+1}) \cdot \sup_{\lambda \in \Lambda}\left|\widehat{\Phi}_{\lambda}(X_i,Y_i) - \Phi_{\lambda}(X_i, Y_i)\right| \mid X_{n+1}}\nonumber\\
        &\leq \frac{L  b_n}{\bar{\mu}} + \frac{2B}{n^{c}}.\nonumber
    \end{align}
    Plugging the bounds for $(\mathrm{I})$, $(\mathrm{II})$ and $(\mathrm{III})$ into \eqref{eq:intermediate_cond_expectation_upper}, together with the bounded loss assumption, we can show
    \begin{align}
        &\E\LRm{\widehat{\Phi}_{\hat{\lambda}(X_{n+1})}(X_{n+1},Y_{n+1}) \mid X_{n+1}} - v_{\Lambda}^*(X_{n+1})\nonumber\\
        &\qquad \leq \E\LRm{\frac{4L}{\bar{\mu}}\widehat{\mathfrak{R}}_n(\gF)+ 2\widehat{\mathfrak{R}}_n(\gG^{\prime}) \mid X_{n+1}}\nonumber\\
        &\qquad\qquad+ O\LRl{\frac{B}{n^c} + \LRs{\frac{L}{\bar{\mu}}+1}\sqrt{\frac{\log n}{\rho n h_n^d}} + \frac{1}{\rho}\LRs{\frac{L }{\bar{\mu}}\bar{\tau} + \tau}h_n\log(h_n^{-d})}.\nonumber
    \end{align}
    Combining with the lower bound in \eqref{eq:intermediate_cond_expectation_lower}, we can finish the proof.
\end{proof}

\subsection{Bounds for weighted Rademacher complexities}

\begin{lemma}
    If the VC-dimension of $\{S_{\lambda}(x,y):\lambda \in \Lambda\}$ is $\mathsf{v}$, we almost surely have
    \begin{align*}
        \E\LRm{\widehat{\mathfrak{R}}_n(\gF) \mid X_{n+1}} \leq O\LRs{\sqrt{\frac{\mathsf{v}}{\rho nh_n^d}}}.
    \end{align*}
\end{lemma}

\begin{proof}
    For each $f\in \gF$, we define
    \begin{align*}
        \sG_n(f) = \frac{1}{\sqrt{\mathfrak{B}_n(X_{n+1})}}\sum_{i=1}^n w_i(X_{n+1}) \xi_i f(X_i,Y_i).
    \end{align*}
    For any $f,g\in \gF$, we define the metric
    \begin{align}
        \sup_{f,g\in \gF}\|\sG_n(f) - \sG_n(g)\|_{P_n}^2 &= \frac{\sup_{\substack{\lambda,\lambda^{'} \in \Lambda,\\ q,q^{'}\in \sR}}\sum_{i=1}^n w_i^2(X_j) \LRs{\mathbbm{1}\{S_{\lambda}(X_i,Y_i) \leq q\} - \mathbbm{1}\{S_{\lambda^{\prime}}(X_i,Y_i) \leq q^{\prime}\}}^2}{\mathfrak{B}_n(X_j)}.\nonumber
    \end{align}
    It holds that $\sup_{f,g\in \gF} \|\sG_n(f) - \sG_n(g)\|_{P_n} =\leq \frac{\sum_{i=1}^n w_i^2(X_j)}{\mathfrak{B}_n(X_j)}\leq 1$ by the definition of $\mathfrak{B}_n(X_j)$.
    Taking expectation only on $\xi_1,\ldots,\xi_n$ and using Dudley’s entropy integral bound, we have
    \begin{align*}
        \E\LRm{\sup_{f\in \gF} |\sG_n(f)| \mid \gD_n, X_{n+1}} \leq c\int_{0}^{1} \sqrt{\log N(\epsilon,\gF,\|\cdot\|_{P_n})} d\epsilon \leq c\sqrt{\mathsf{v}+1},
    \end{align*}
    where we used the upper bound \eqref{eq:covering_number_VC}. Taking the expectation conditioning on $X_{n+1}$, together with Lemma \ref{lemma:Bn_lower_bound}, we can finish the proof.
\end{proof}

\begin{lemma}\label{lemma:rademacher_finite_weight}
    If $\Lambda$ is a finite set, we almost surely have
    \begin{align}
        \E\LRm{\widehat{\mathfrak{R}}_n(\gG^{\prime}) \mid X_{n+1}}\leq B\sqrt{\frac{\log |\Lambda|}{\rho nh_n^d}}.
    \end{align}
\end{lemma}
\begin{proof}
    For convenience, let us augment the class $\widetilde{\gG}^{\prime} = \gG^{\prime} \cup -\gG^{\prime}$. Then it holds that
    \begin{align*}
        \widehat{\mathfrak{R}}_n(\gG^{\prime}) \leq \E\LRm{\sup_{g\in \widetilde{\gG}^{\prime}} \frac{1}{n}\sum_{i=1}^n \xi_i g(X_i,Y_i)}.
    \end{align*}
    It implies that for $t\geq 0$, if $\sup_{g\in \gG^{\prime}}|g| \leq B$ we have
    \begin{align*}
        \exp\LRl{t\widehat{\mathfrak{R}}_n(\gG^{\prime})} &\leq \exp\LRl{t \E\LRm{\sup_{g\in \widetilde{\gG}^{\prime}} \sum_{i=1}^n w_i(X_{n+1}) \xi_i f(X_i,Y_i) \mid \gX_{n+1}}}\\
        &\leq \E\LRm{\exp\LRl{t \sup_{g\in \widetilde{\gG}^{\prime}} \sum_{i=1}^n \xi_i w_i(X_{n+1}) g(X_i,Y_i)}}\\
        &\leq \sum_{g\in \widetilde{\gG}^{\prime}}\prod_{i=1}^n\E\LRm{\exp\LRl{t \xi_i w_i(X_{n+1}) g(X_i,Y_i)}\mid \gX_{n+1}}\\
        &= 2|\gG|\cdot \exp\LRs{4t^2 B^2 \sum_{i=1}^n w_i^2(X_{n+1})},
    \end{align*}
    where the last inequality follows the proof of weighted Hoeffding’s inequality. It follows that $\mathfrak{R}_n(\gG^{\prime}) \leq \frac{\log(2|\gG^{\prime}|)}{t} + 4tB^2/\mathfrak{B}_n(X_{n+1})$.
    Choosing $t = \sqrt{\frac{\log(2|\gG^{\prime}|) \mathfrak{B}_n(X_{n+1})}{4B^2 }}$, we can prove that $\mathfrak{R}_n(\gG^{\prime}) \leq 2B\sqrt{\frac{\log(2|\gG^{\prime}|)}{\mathfrak{B}_n(X_{n+1})}}$. Then the conclusion follows from Lemma \ref{lemma:Bn_lower_bound} and $|\gG^{\prime}| = |\Lambda|$.
\end{proof}

\begin{lemma}\label{lemma:rademacher_lipschitz_weight}
    For the continuous index set $\Lambda \subseteq \sR^m$ with bounded radius $R$, if there exists a constant $L_{\Lambda} > 0$ such that  $\sup_{x\in \gX,y\in \gY}\LRabs{\phi(y,z_{\lambda}(x; q_{\lambda}^o(x))) - \phi(y,z_{\lambda^{\prime}}(x;q_{\lambda^{\prime}}^o(x)))} \leq L_{\Lambda} \|\lambda-\lambda^{\prime}\|$ for any $\|\lambda - \lambda^{\prime}\| \leq O(n^{-1})$, then we have $\E[\widehat{\mathfrak{R}}_n(\gG^{\prime})\mid X_{n+1}] \leq O\LRs{B\sqrt{\frac{m \log(R n)}{n}} + \frac{L_{\Lambda}}{n}}$.
\end{lemma}

\begin{proof}
    Let $\{\lambda_{\ell}\}_{\ell=1}^{N_{\epsilon}}$ be an $\epsilon$-covering of $\Lambda \subset \sR^m$ under Euclidean norm $\|\cdot\|$, where $\epsilon \leq n^{-1}$. It holds that $N_{\epsilon} \leq O\{(2R/\epsilon)^m\}$.
    Then for any $\lambda \in \Lambda$, there exists some $\lambda_{\ell}$ such that $\|\lambda - \lambda_{\ell}\| \leq \epsilon$ and $\|\lambda - \lambda_{\ell^{'}}\| > \epsilon$ for $\ell^{'}\neq \ell$. It follows that
    \begin{align}
        &\E\LRm{\widehat{\mathfrak{R}}_n(\gG^{\prime})\mid X_{n+1}} = \E\LRm{\sup_{\lambda \in \Lambda}\mathbbm{1}_{\gE}\LRabs{\sum_{i=1}^n w_i(X_{n+1}) \xi_i\phi(Y_i,z_{\lambda}(X_i;q_{\lambda}^{o}(X_i)))} \mid X_{n+1}}\nonumber\\
        &\leq \E\LRm{\sup_{\lambda \in \Lambda}\sum_{\ell \in [N_{\epsilon}]}\mathbbm{1}\LRl{\|\lambda - \lambda_{\ell}\| \leq \epsilon} \left|\sum_{i=1}^n w_i(X_{n+1}) \xi_i \LRm{\phi(Y_i,z_{\lambda_{\ell}}(X_i;q_{\lambda_{\ell}}^o(X_i))) - \phi(Y_i,z_{\lambda}(X_i;q_{\lambda}^o(X_i)))}\right|\mid X_{n+1}}\nonumber\\
        &\qquad + \E\LRm{\sum_{\ell \in [N_{\epsilon}]}\mathbbm{1}\LRl{\|\lambda - \lambda_{\ell}\| \leq \epsilon}\LRabs{\sum_{i=1}^nw_i(X_{n+1}) \xi_i \phi(Y_i,z_{\lambda_{\ell}}(X_i;q_{\lambda_{\ell}}^o(X_i)))}\mid X_{n+1}}\nonumber\\
        &\leq L_{\Lambda} \epsilon + \E\LRm{\max_{\ell \in [N_{\epsilon}]}\LRabs{\sum_{i=1}^n w_i(X_{n+1}) \xi_i \phi(Y_i,z_{\lambda_{\ell}}(X_i;q_{\lambda_{\ell}}^o(X_i)))}\mid X_{n+1}},\nonumber
    \end{align}
    where the last inequality holds due to locally Lipschitz continuity on $\lambda$ and $\epsilon \leq n^{-1}$. By the same proof of Lemma \ref{lemma:rademacher_finite_weight} with finite index set $[N_{\epsilon}]$, we know
    \begin{align*}
        \E\LRm{\max_{\ell \in [N_{\epsilon}]}\LRabs{\sum_{i=1}^n w_i(X_{n+1}) \xi_i \phi(Y_i,z_{\lambda_{\ell}}(X_i;q_{\lambda_{\ell}}^o(X_i)))} \mid X_{n+1}} \leq B\sqrt{\frac{\log (2N_{\epsilon})}{\rho nh_n^d}}.
    \end{align*}
    Taking $\epsilon = n^{-1}$, together with $N_{\epsilon} \leq O\{(R/\epsilon)^m\}$, we can prove the conclusion.
\end{proof}

\subsection{Proofs of main lemmas}
\subsubsection{Proof of Lemma \ref{lemma:Bn_lower_bound}}
\begin{proof}\label{proof:lemma:Bn_lower_bound}
    Let us divide the support of covariate as $\gX_k = \{x\in \gX: \|x-X_{n+1}\|^2 \in [(k-1)h_n^2, kh_n^2)\}$ for $k \geq 1$. 
    Recall that $H(X_i,X_{n+1}) = e^{-\| X_i-X_{n+1}\|^2/h_n^2} \geq e^{-\| X_i-X_{n+1}\|^2/h_n^2} \mathbbm{1}\{X_i \in \gX_1\} \geq e^{-1}\mathbbm{1}\{X_i \in \gX_1\}$. Conditioning on $X_{n+1}$, it follows that with probability at least $1-n^{-c}$,
    \begin{align}
        e\cdot \mathfrak{B}_n(X_{n+1}) \geq  \sum_{i=1}^n \mathbbm{1}\{X_i \in \gX_1\} &\geq n\sP(X_1 \in \gX_1) - \LRabs{\sum_{i=1}^n \mathbbm{1}\{X_i \in \gX_1\} - \sP(X_i \in \gX_1)}\nonumber\\
        &\Eqmark{i}{\geq} n\sP(X_1 \in \gX_1) - \sqrt{\frac{2C\log n}{n}}\nonumber\\
        &= n\int_{\gX} \mathbbm{1}\{\|x-X\| \in [0, h_n) \}\cdot p(x) dx - \sqrt{\frac{2C\log n}{n}}\nonumber\\
        &\Eqmark{ii}{\geq} \rho V n h_n^d - \sqrt{\frac{2C\log n}{n}}\nonumber\\
        &\Eqmark{iii}{\geq} \frac{\rho V n h_n^d}{2},\nonumber
    \end{align}
    where $(i)$ holds due to Hoeffding's inequality; $(ii)$ holds due to Assumption 4; and $(iii)$ holds due to $\rho V n h_n^d > 2 \sqrt{\frac{2C\log n}{n}}$. 
    
    Regarding the second conclusion, we observe that $\mathfrak{B}_n(X_j) = \sum_{i\neq j}H(X_i,X_j) + 1$. We can then apply similar arguments to complete the proof.
\end{proof}

\subsubsection{Proof of Lemma \ref{lemma:concentration_cover_VC}}

\begin{proof}
    Recall the function class: $\gF=\{\mathbbm{1}\{S_{\lambda}(x,y) \leq q\}: \lambda \in \Lambda, q\in \sR\}$. We define the zero-mean random variable
    \begin{align*}
        M_{\gF} = \sup_{\lambda\in \Lambda, q\in \sR}\left|\sum_{i=1}^n w_i(X_j) \LRs{\mathbbm{1}\{S_{\lambda}(X_i,Y_i) \leq q\} - \sP\LRl{S_{\lambda}(X_i,Y_i) \leq q \mid X_i}}\right|.
    \end{align*}
    Using McDiarmid’s inequality, we have
    \begin{align}
        \sP\LRl{M_{\gF} > \E\left[M_{\gF}\mid \gX_{n+1}\right] + t \mid \gX_{n+1}} \leq 2\exp\LRs{-\frac{t^2}{\sum_{i=1}^n w_i^2(X_j)}}.\nonumber
    \end{align}
    Using the symmetrization technique, we also have
    \begin{align}
        \E\left[M_{\gF}\mid \gX_{n+1}\right] \leq 2 \widehat{\mathfrak{R}}_n(\gF).\nonumber
    \end{align}
    Taking $t = \sqrt{\frac{c\log n}{\mathfrak{B}_n(X_j)}}$, we can have
    \begin{align}
        \sP\LRl{M_{\gF} > 2 \widehat{\mathfrak{R}}_n(\gF) + \sqrt{\frac{c\log n}{\mathfrak{B}_n(X_j)}} \mid \gX_{n+1}} \leq 2n^{-c},\nonumber
    \end{align}
    which proves the conclusion.
\end{proof}

\subsubsection{Proof of Lemma \ref{lemma:cond_quantile_estimation_VC}}
\begin{proof}
    Lemma \ref{lemma:concentration_cover_VC} guarantees that for any $j\in [n+1]$,
    \begin{align}
        \sP\LRs{\sup_{\lambda\in \Lambda,q\in \sR} \LRabs{\sum_{i=1}^n w_i(X_{j})\Big[\mathbbm{1}\LRl{S_{\lambda}(X_i,Y_i) \leq q} - F_{\lambda}(q|X_i)\Big]} > 2 \widehat{\mathfrak{R}}_n(\gF) + \sqrt{\frac{c\log n}{\mathfrak{B}_n(X_{j})}} \mid \gX_{n+1}} \leq 2n^{-c}.\nonumber
    \end{align}
    Due to Lemma \ref{lemma:uniform_cover_bias}, we have
    \begin{align}
        \sup_{\lambda\in \Lambda,q\in \sR}\sum_{i=1}^n w_i(X_{j})|F_{\lambda}(q|X_i) - F_{\lambda}(q|X_{j})| \leq \bar{\tau}\LRs{e h_n \log(h_n^{-d}) + \frac{n h_n\log(h_n^{-d})h_n^d}{\mathfrak{B}_n(X_{j})}}.\nonumber
    \end{align}
    Take $b_n = 2\widehat{\mathfrak{R}}_n(\gF) + \bar{\tau}\LRs{ e h_n \log(h_n^{-d}) + \frac{n h_n\log(h_n^{-d})h_n^d}{\mathfrak{B}_n(X_{j})}} + \sqrt{\frac{c\log n}{\mathfrak{B}_n(X_{j})}}$ and denote $q_{\lambda}^{-}(X_{j}) = F_{\lambda}^{-1}(1-\alpha- b_n| X_{j})$ and $q_{\lambda}^{+}(X_{j}) = F_{\lambda}^{-1}(1-\alpha+ b_n| X_{j})$.
    We can have
    \begin{align}
        \sP\LRl{\forall \lambda \in \Lambda, \hat{q}_{\lambda}(X_{j})\in \LRm{q_{\lambda}^{-}(X_{j}),q_{\lambda}^{+}(X_{j})} \mid \gX_{n+1}} \geq 1-2n^{-c}.\nonumber
    \end{align}
    According to the lower-bounded condition for $f_{\lambda}(s|X_{j})$ in Assumption 4, we know
    \begin{align}
        q_{\lambda}^{+}(X_{j}) - F_{\lambda}^{-1}(1-\alpha|X_{j}) &= F_{\lambda}^{-1}(1-\alpha+b_n|X_{j}) - F_{\lambda}^{-1}(1-\alpha|X_{j})\leq \frac{b_n}{\bar{\mu}}.\nonumber
    \end{align}
    Combining the results above, we can prove the conclusion.
\end{proof}

\subsubsection{Proof of Lemma \ref{lemma:uniform_weight_concentration_VC}}
\begin{proof}
    Recall the function class $\gG^{\prime} = \{\phi(y, z_{\lambda}(x; q_{\lambda}^{o}(x))): \lambda \in \Lambda\}$ and the definition $\Phi_{\lambda}(x,y) = \phi(y, z_{\lambda}(x; q_{\lambda}^{o}(x)))$. We define the random variable
    \begin{align*}
        M_{\gG^{\prime}} = \sup_{\lambda\in \Lambda, q\in \sR}\left|\sum_{i=1}^n w_i(X_{n+1})\Big(\Phi_{\lambda}(X_i,Y_i) - \E[\Phi_{\lambda}(X_i,Y_i)\mid X_i]\Big)\right|.
    \end{align*}
    Using McDiarmid’s inequality, we have
    \begin{align}
        \sP\LRl{M_{\gG^{\prime}} > \E\left[M_{\gG^{\prime}}\mid \gX_{n+1}\right] + t \mid \gX_{n+1}} \leq 2\exp\LRs{-\frac{t^2}{\sum_{i=1}^n w_i^2(X_j)}}.\nonumber
    \end{align}
    Using the symmetrization technique, we also have
    \begin{align}
        \E\left[M_{\gG^{\prime}}\mid \gX_{n+1}\right] & \leq 2 \E\left[\sup_{\lambda\in \Lambda, q\in \sR}\left|\sum_{i=1}^n w_i(X_{n+1})\xi_i\Phi_{\lambda}(X_i,Y_i) \right|\mid \gX_{n+1}\right]\nonumber\\
        &= 2 \widehat{\mathfrak{R}}_n(\gG^{\prime}).\nonumber
    \end{align}
    Taking $t = \sqrt{\frac{c\log n}{\mathfrak{B}_n(X_j)}}$, we can have
    \begin{align}
        \sP\LRl{M_{\gG^{\prime}} > 2 \widehat{\mathfrak{R}}_n(\gG^{\prime}) + \sqrt{\frac{c\log n}{\mathfrak{B}_n(X_j)}} \mid \gX_{n+1}} \leq 2n^{-c},\nonumber
    \end{align}
    which proves the conclusion.
\end{proof}

Let $\xi_1,\ldots,\xi_n$ be i.i.d. Rademacher random variables. For any $f\in \Psi^{'}$, conditioning on $\{X_i\}_{i=1}^{n+1}$, we define the zero-mean random variable
    \begin{align*}
        M_f = \frac{1}{2B\sqrt{\mathfrak{B}_n(X_{n+1})}}\sum_{i=1}^n \xi_i w_i(X_{n+1})\Big(\Phi_{\lambda}(X_i,Y_i) - \E[\Phi_{\lambda}(X_i,Y_i)\mid X_i]\Big).
    \end{align*}
    Then we define the metric
    \begin{align}
        \sup_{f,g\in \Psi^{'}}\|M_{f} - M_{g}\|_{P_n}^2 &= \frac{1}{4B^2\mathfrak{B}_n(X_{n+1})}\sup_{\substack{\lambda,\lambda^{'} \in \Lambda}}\sum_{i=1}^n w_i^2(X_{n+1}) \Big(\Phi_{\lambda}(X_i,Y_i) - \E[\Phi_{\lambda}(X_i,Y_i)\mid X_i]\Big)^2\nonumber\\
        &\leq \frac{\sum_{i=1}^n w_i^2(X_{n+1})}{\mathfrak{B}_n(X_{n+1})} \leq 1,\nonumber
    \end{align}
    where we used $|\Phi_{\lambda}(x,y)| \leq B$.
    Taking expectation only on $\xi_1,\ldots,\xi_n$ and using Dudley’s entropy integral bound, we have
    \begin{align*}
        \E\LRm{\sup_{f\in \Psi^{'}} |M_f| \mid \{(X_i,Y_i)\}_{i=1}^{n+1}} \leq c\int_{0}^{1} \sqrt{\log N(\epsilon,\Psi^{'},\|\cdot\|_{P_n})} d\epsilon \leq c\sqrt{\mathsf{v}(\Psi^{'})}.
    \end{align*}
    Applying McDiarmid’s inequality and the symmetrization, we have
    \begin{align}\label{eq:weighted_uniform_concentration_loss}
        \sP\left\{\frac{2B\sup_{f\in \Psi^{'}} M_f}{\sqrt{\mathfrak{B}_n(X_{n+1})}} \leq 2cB\sqrt{\frac{\mathsf{v}(\Psi^{'})}{\mathfrak{B}_n(X_j)}} + \frac{2B\cdot t}{\sqrt{\mathfrak{B}_n(X_j)}} \mid \gX_{n+1}\right\} &\leq \exp\LRs{\frac{-t^2}{\sum_{i=1}^n w_i^2(X_j)/\mathfrak{B}_n(X_j)}}\nonumber\\
        &\leq \exp\LRs{-t^2},
    \end{align}
    where we used the fact $\sum_{i=1}^n w_i^2(X_{n+1}) \leq \mathfrak{B}_n(X_{n+1})$.
    Taking $t = \sqrt{c\log n}$ in \eqref{eq:weighted_uniform_concentration_loss}, we can prove the conclusion.

\subsubsection{Proof of Lemmas \ref{lemma:uniform_cover_bias} and \ref{lemma:uniform_error_bound}}\label{proof:lemma:uniform_cover_bias}
\begin{proof}
    By Assumption 5, we know
    \begin{align}
        \left|\E\LRm{\phi(Y_i, z_{\lambda}(X_i;q^{o}(X_i)))\mid X_i} - \E\LRm{\phi(Y_{n+1}, z_{\lambda}(X_{n+1};q^{o}(X_{n+1}))))\mid X_{n+1}}\right| \leq \tau\twonorm{X_i - X_{n+1}}.\nonumber
    \end{align}
    Recall that $\gX_k = \{x\in \gX: \|x-X_{n+1}\|^2 \in [(k-1)h_n^2, kh_n^2)\}$ for $k \geq 1$. Then we can write
    \begin{align}
        &\left|\sum_{i=1}^n H(X_i,X_{n+1})\Big(\E\LRm{\phi(Y_i, z_{\lambda}(X_i;q^{o}(X_i)))\mid X_i} - \E\LRm{\phi(Y_{n+1}, z_{\lambda}(X_{n+1};q^{o}(X_{n+1})))\mid X_{n+1}}\Big)\right|\nonumber\\
        &\qquad\leq \sum_{k=1}^{\infty} \sum_{i=1}^n \mathbbm{1}\{X_i\in \gX_k\} H(X_i,X_{n+1}) \tau \twonorm{X_i - X_{n+1}}\nonumber\\
        &\qquad\leq \sum_{k=1}^{\infty} \sum_{i=1}^n \mathbbm{1}\{X_i\in \gX_k\} ke^{-k+1}\tau\cdot h_n\nonumber\\
        &\qquad\Eqmark{i}{\leq} \tau\LRs{e k_0 h_n\sum_{k=1}^{k_0}  \sum_{i=1}^n \mathbbm{1}\{X_i\in \gX_k\}e^{-k} + h_n\sum_{k=k_0+1}^{\infty} k e^{-k+1} \sum_{i=1}^n \mathbbm{1}\{X_i\in \gX_k\}}     \nonumber\\
        &\qquad\Eqmark{ii}{\leq} \LRs{e h_n k_0 \cdot \mathfrak{B}_n(X_{n+1}) + n h_n(k_0+1)e^{-k_0}}\tau,\label{eq:bias_error}
        \end{align}
    where $(i)$ holds for arbitrary $k_0 \geq 1$; and $(ii)$ holds since $\mathbbm{1}\{X_i\in \gX_k\}e^{-k} \leq \exp(-\| X_i-X_{n+1}\|^2/h_n^2)$ for any $k\geq 1$. Taking $k_0 = \lceil \log(h_n^{-d}) \rceil$ in \eqref{eq:bias_error}, we get
    \begin{align}
        |\Xi_n(\lambda)| \leq \LRs{e h_n k_0 + \frac{n h_n(k_0+1)e^{-k_0}}{\mathfrak{B}_n(X_{n+1})}} \tau\leq \LRs{e h_n \log(h_n^{-d}) + \frac{n h_n\log(h_n^{-d})h_n^d}{\mathfrak{B}_n(X_{n+1})}} \tau.\nonumber
    \end{align}
    Taking supreme over $\lambda\in \Lambda$ on the left side, we can prove the conclusion of Lemma \ref{lemma:uniform_error_bound}.

    Now we prove Lemma \ref{lemma:uniform_cover_bias}.
    According to Assumption 4, we know
    \begin{align}
        \sup_{\lambda\in \Lambda,q\in \sR}&\left|\sP\{S_{\lambda}(X_i,Y_i)\leq q\mid X_i\} - \sP\LRl{S_{\lambda}(X_{n+1},Y_{n+1})\leq q\mid X_{n+1}}\right|\nonumber\\
        =& \sup_{q\in \sR}\left|\sP\{S_{\lambda}(X_{n+1},Y_{n+1})\leq q \mid X_i\} - \sP\LRl{S_{\lambda}(X_{n+1},Y_{n+1}) \leq q\mid X_{n+1}}\right| \nonumber\\
        \leq & \bar{\tau}\cdot \|X_i-X_{n+1}\|.\nonumber
    \end{align}
    Similar to the derivation of \eqref{eq:bias_error}, we also have
    \begin{align}
        \sup_{\lambda\in \Lambda,q\in \sR}\sum_{i=1}^n w_i(X_{n+1}) &\left|\sP\{S_{\lambda}(X_i,Y_i)\leq q \mid X_i\} - \sP\LRl{S_{\lambda}(X_{n+1},Y_{n+1}) \leq q\mid X_{n+1}}\right|\nonumber\\
        &\leq \mathfrak{B}_n^{-1}(X_{n+1})\sum_{i=1}^n H(X_i,X_{n+1})\cdot \bar{\tau} \|X_i-X_{n+1}\|\nonumber\\
        &\leq  \mathfrak{B}_n^{-1}(X_{n+1})\LRs{eh_n k_0\cdot \mathfrak{B}_n(X_{n+1}) + nh_n(k_0+1)e^{-k_0}}\cdot \bar{\tau}\nonumber\\
        &\leq \LRs{e h_n \log(h_n^{-d}) + \frac{n h_n\log(h_n^{-d})h_n^d}{\mathfrak{B}_n(X_{n+1})}}\cdot \bar{\tau},\nonumber
    \end{align}
    where the last inequality holds by taking $k_0 = \lceil \log(h_n^{-d}) \rceil$. Notice that for $j\in [n]$,
    \begin{align*}
        &\sup_{\lambda\in \Lambda,q\in \sR}\sum_{i=1}^n w_i(X_{j}) \left|\sP\{S_{\lambda}(X_i,Y_i)\leq q \mid X_i\} - \sP\LRl{S_{\lambda}(X_{j},Y_{j}) \leq q\mid X_{j}}\right|\\
        =& \sup_{\lambda\in \Lambda,q\in \sR}\sum_{i=1,i\neq j}^n w_i(X_{j}) \left|\sP\{S_{\lambda}(X_i,Y_i)\leq q \mid X_i\} - \sP\LRl{S_{\lambda}(X_{j},Y_{j}) \leq q\mid X_{j}}\right|\\
        \leq & \mathfrak{B}_n^{-1}(X_{j}) \sum_{i=1,i\neq j}^n H(X_i,X_{j})\cdot \bar{\tau} \|X_i-X_{j}\|.
    \end{align*}
    Then, using similar arguments can prove the conclusion for $j\in [n]$.
\end{proof}

\section{Additional results for CROiMS}
\subsection{Implementation of CROiMS}
The kernel function can be set as $H(x_1,x_2) = K\LRs{\frac{D^2(x_1,x_2)}{h^2}}$, where $D(x_1,x_2) \geq 0$ is a distance function that measures the dissimilarity between $x_1$ and $x_2$, $K(\cdot)$ is a normalized one-dimensional kernel function, and $h$ is the bandwidth. In the case where $\gX \subseteq \sR^d$, common choices for the kernel function include the Gaussian kernel is $H(x_1,x_2) = \exp\LRs{-\frac{D^2(x_1,x_2)}{h^2}}$ and the box kernel $H(x_1,x_2) = \mathbbm{1}\{D^2(x_1,x_2)\leq h^2\}$. The distance function $D$ can be flexibly chosen according to the specific need for localization. For example, the Euclidean distance for low-dimensional cases and the projection distance for high-dimensional cases \citep{guan2023localized,lei2018distribution}.
In our paper, we use the same kernel function for both the weighted quantile and the weighted ERM problem. In practice, one can choose different kernels to estimate the conditional quantile and risk functions.

\subsection{F-CROiMS for finite-sample marginal robustness}\label{appen:full-CROiMS}

To achieve the finite-sample marginal robustness in Definition 1, this section provides a corrected version of CROiMS by leveraging the full conformal prediction method \citep{vovk2005algorithmic} and swapping technique.

\subsection{Conformal prediction after individual model selection}

Before that, we present a more general procedure to construct a marginally valid prediction set for the individually selected model.
Let $\sA(\{(x_i,y_i)\}_{i\in [n]}, x): (\gX \times \gY)^n \times \gX \to \Lambda$ be a general model selection algorithm, e.g., CROiMS in Algorithm 3. We assume $\sA$ has the following symmetric property.

\begin{assumption}\label{assum:symmetry}
    The algorithm $\sA$ satisfies $\sA(\{(x_{\pi(i)},y_{\pi(i)})\}_{i\in [n]},x) = \sA(\{(x_i,y_i)\}_{i\in [n]}, x)$ holds for any $x\in \gX$, where $\pi$ is arbitrary permutation operator in $[n]$.
\end{assumption}

Now denote $\hat{\lambda}(X_{n+1}) = \sA(\gD_n,X_{n+1})$ be the individually selected model for test data $X_{n+1}$, where $\gD_n = \{(X_i,Y_i)\}_{i\in [n]}$. Given any hypothesized value $y\in \gY$ and $j\in [n+1]$, we define the ``swapped'' dataset $\gD_n^{j,y} = \{(X_i,Y_i)\}_{i\in [n],i\neq j} \cup \{(X_{n+1}, y)\}$. To quantify the uncertainty of $\hat{\lambda}$, we obtain the individually selected models for labeled data through
\begin{align}\label{eq:full_ind_model}
    \hat{\lambda}^y(X_j) = \sA(\gD_n^{j,y}, X_j),\quad j\in [n].
\end{align}
Then the prediction set is defined as
\begin{align}\label{eq:swapped_FCP}
    \widehat{\gU}^{\text{F-CROiMS}}(X_{n+1}) = \LRl{y\in \gY: S_{\hat{\lambda}(X_{n+1})}(X_{n+1},y) \leq Q_{(1-\alpha)(1+n^{-1})}\LRs{\{S_{\hat{\lambda}^y(X_j)}(X_j,Y_j)\}_{j=1}^n}}.
\end{align}
It is worthwhile noticing that if the hypothesized value $y$ imputes the ground truth label $Y_{n+1}$, the individually selected indexes $\{\hat{\lambda}(X_j) \equiv\hat{\lambda}^{Y_{n+1}}(X_j)\}_{j=1}^{n}$ and $\hat{\lambda}$ are exchangeable due to the symmetry property in Assumption \ref{assum:symmetry}. Then we can guarantee the corresponding scores $\{S_{\hat{\lambda}(X_j)}(X_j,Y_j)\}_{j=1}^{n+1}$ are also exchangeable, which leads to the following finite-sample marginal coverage result.

\begin{theorem}\label{thm:cover_swapped}
    Suppose $\{(X_i,Y_i)\}_{i=1}^{n+1}$ are i.i.d., and the selection algorithm $\sA$ satisfies Assumption \ref{assum:symmetry}, then we have
    \begin{align}
        \sP\LRl{Y_{n+1}\in \widehat{\gU}^{\text{F-CROiMS}}(X_{n+1})} \geq 1-\alpha.\nonumber
    \end{align}
\end{theorem}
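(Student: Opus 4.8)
The plan is to reduce the claim to the standard coverage guarantee of full conformal prediction \citep{vovk2005algorithmic,lei2018distribution} by exhibiting an exchangeable family of nonconformity scores produced by a single permutation-invariant rule; the covariate-swapping device in \eqref{eq:full_ind_model} together with Assumption \ref{assum:symmetry} is exactly what makes this possible. Fix the unknown test label $Y_{n+1}$ and write $Z_i = (X_i,Y_i)$ for $i\in[n+1]$. For each $j\in[n+1]$ I would define the leave-one-out selected model
\[
  \hat\lambda^{(-j)} \;=\; \sA\LRs{\{Z_i : i\in[n+1],\ i\neq j\},\ X_j},
\]
and the associated score $R_j = S_{\hat\lambda^{(-j)}}(X_j,Y_j)$. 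The first step is the bookkeeping identification: for $j=n+1$ the conditioning multiset $\{Z_i : i\neq n+1\}$ is exactly $\gD_n$, so $\hat\lambda^{(-(n+1))}=\hat\lambda(X_{n+1})$ and $R_{n+1}=S_{\hat\lambda(X_{n+1})}(X_{n+1},Y_{n+1})$; and for $j\in[n]$ the multiset $\{Z_i : i\neq j\}$ equals the swapped set $\gD_n^{j,Y_{n+1}}$ obtained with the oracle imputation $y=Y_{n+1}$, so $\hat\lambda^{(-j)}=\hat\lambda^{Y_{n+1}}(X_j)$ and $R_j=S_{\hat\lambda^{Y_{n+1}}(X_j)}(X_j,Y_j)$. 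Reading off \eqref{eq:swapped_FCP} at the candidate label $y=Y_{n+1}$ then gives the event identity
\[
  \LRl{Y_{n+1}\in \widehat{\gU}^{\textsf{F-CROiMS}}(X_{n+1})}
  \;=\; \LRl{R_{n+1}\le Q_{(1-\alpha)(1+n^{-1})}\LRs{\{R_j\}_{j=1}^n}}.
\]

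The second step is to show $(R_1,\dots,R_{n+1})$ is exchangeable. I would verify that the map $(z_1,\dots,z_{n+1})\mapsto(r_1,\dots,r_{n+1})$ with $r_j = S_{\sA(\{z_i : i\neq j\},\,x_j)}(x_j,y_j)$ is equivariant under permutations of its argument: for any permutation $\pi$ of $[n+1]$, the $j$-th coordinate of the output at the permuted input is built from the multiset $\{z_{\pi(i)} : i\neq j\}=\{z_k : k\neq\pi(j)\}$ together with $z_{\pi(j)}$, which by Assumption \ref{assum:symmetry} (permutation-invariance of $\sA$ in its dataset argument) coincides with the $\pi(j)$-th coordinate of the output at the original input. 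Since $Z_1,\dots,Z_{n+1}$ are i.i.d., hence exchangeable, applying an equivariant map preserves exchangeability, so $(R_1,\dots,R_{n+1})$ is exchangeable. The final step is to invoke the quantile-inflation guarantee for exchangeable scores (the argument in \citet{lei2018distribution}, or Lemma~2 of \citet{romano2019conformalized}): for exchangeable $R_1,\dots,R_{n+1}$ one has $\sP\{R_{n+1}\le Q_{(1-\alpha)(1+n^{-1})}(\{R_j\}_{j=1}^n)\}\ge 1-\alpha$, the $(1+n^{-1})$ inflation compensating for the missing $(n+1)$-st calibration point and the ceiling in the empirical quantile, with ties only helping. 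Combining with the event identity from the first step yields $\sP\{Y_{n+1}\in\widehat{\gU}^{\textsf{F-CROiMS}}(X_{n+1})\}\ge 1-\alpha$.

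The main obstacle I anticipate is the care required in the first and second steps: one must keep track of the fact that the model used for the \emph{test} score, $\hat\lambda(X_{n+1})=\sA(\gD_n,X_{n+1})$, is built from $\gD_n$ alone and does not involve the imputed value, whereas each \emph{calibration} model $\hat\lambda^y(X_j)$ does; it is precisely the choice $y=Y_{n+1}$ that lets all of $\hat\lambda^{(-1)},\dots,\hat\lambda^{(-(n+1))}$ be generated by the single symmetric rule $\sA$ applied to $\{Z_1,\dots,Z_{n+1}\}$. A minor secondary point is the treatment of ties in the empirical quantile, which is absorbed into the inequality direction of the quantile lemma; and one should record that $\sA$ is allowed to be randomized only through shared randomness independent of the data (otherwise Assumption \ref{assum:symmetry} and the equivariance argument must be read conditionally on that randomness).
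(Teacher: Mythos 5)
Your proposal is correct and follows essentially the same route as the paper's proof: identify the calibration models at the oracle imputation $y=Y_{n+1}$ with leave-one-out applications of the symmetric rule $\sA$ to the augmented dataset, deduce exchangeability of the resulting score vector, and conclude via the standard conformal quantile argument. The only difference is presentational — the paper re-derives the quantile bound by conditioning on the unordered multiset and counting strange points, whereas you invoke the equivariance-preserves-exchangeability fact and cite the quantile-inflation lemma directly — and your added caveat about randomized $\sA$ is a sensible refinement rather than a gap.
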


\begin{remark}
    The swapping technique is very useful in conformal prediction, whose role is to restore the pairwise exchangeability after some data-driven procedure is performed on labeled data and test data. For example, in the selective predictive inference problem \citep{bao2024selective}, the test data selected by some data-dependent procedure is no longer exchangeable with the labeled data. Recently, \citet{bao2024cap} and \citet{jin2024confidence} swapped the test data and labeled data to construct an adaptive pick rule to select labeled data points for further calibration. Here, we utilize this technique for a different purpose, to close the coverage gap after individual model selection.

    The full conformal approach is also employed in \citet{liang2024conformal} to correct the model selection bias in coverage, where the authors proposed a leave-one-out procedure to construct the prediction set. However, they require the algorithm $\sA$ to be symmetric to covariates $\{X_i\}_{i=1}^{n+1}$ (see Definition 2.1 in \citet{liang2024conformal}), which is not satisfied for Algorithm 3. In fact, \citet{liang2024conformal} focused on selecting the model that minimizes the \emph{averaged} loss of prediction sets, e.g., the expectation of size. It is different from our primary concern on individual efficiency.
\end{remark}

\subsection{F-CROiMS with marginal robustness}
Since CROiMS in Algorithm 3 satisfies Assumption \ref{assum:symmetry}. By setting $\sA$ as CROiMS, we can obtain the prediction set in \eqref{eq:swapped_FCP} and make the final decision by solving the CRO problem as follows,
\begin{align}
    \hat{z}^{\text{F-CROiMS}}(X_{n+1}) = \argmin_{z\in \gZ} \max_{c \in \widehat{\gU}^{\text{F-CROiMS}}(X_{n+1})} \phi(c,z).\nonumber 
\end{align}
We call this procedure F-CROiMS and summarize its implementation in Algorithm \ref{alg:full}. 

\begin{algorithm}[H]
	\renewcommand{\algorithmicrequire}{\textbf{Input:}}
	\renewcommand{\algorithmicensure}{\textbf{Output:}}
	\caption{F-CROiMS}
	\label{alg:full}
	\begin{algorithmic}[1]
	\Require Pre-trained models $\{S_{\lambda}:\lambda\in \Lambda\}$, loss function $\phi$, test data $X_{n+1}$, labeled data $\{(X_{i},Y_i)\}_{i=1}^{n}$, kernel function $H$, robustness level $1-\alpha \in (0, 1)$.
        \State Call Algorithm 3 to obtain $\hat{\lambda}(X_{n+1})$.
        \State Initialize $\widehat{\gU}^{\text{F-CROiMS}}(X_{n+1}) \gets \emptyset$.
        \For{$y\in \gY$}
        \For{$j\in [n]$}
        \State Call Algorithm 3 by replacing $(X_j,Y_j)$ with $(X_{n+1},y)$.
        \State $w_i^j(X_j) \gets \frac{H(X_i,X_j)}{\sum_{\ell=1,\ell\neq j}^{n+1} H(X_{\ell},X_j)}$ for $i\in [n+1]$.
        
        \State $\hat{\lambda}^y(X_j) \gets \argmin_{\lambda\in \Lambda} \sum_{i=1,i\neq j}^{n} w_i^j(X_{j})\cdot \phi(Y_i,\hat{z}_{\lambda}^{y,j}(X_i)) + w_{n+1}^j(X_{j})\cdot \phi(y,\hat{z}_{\lambda}^{y,j}(X_{n+1}))$.
        \EndFor
        \If{$S_{\hat{\lambda}(X_{n+1})}(X_{n+1},y) \leq Q_{1-\alpha}\LRs{\{S_{\hat{\lambda}^y(X_j)}(X_j,Y_j)\}_{j=1}^n \cup \{S_{\hat{\lambda}(X_{n+1})}(X_{n+1},y)\}}$}
        \State $\widehat{\gU}^{\text{F-CROiMS}}(X_{n+1}) \gets \widehat{\gU}^{\text{F-CROiMS}}(X_{n+1}) \cup \{y\}$.
        \EndIf
        \EndFor


        \State Solve CRO problem $\hat{z}^{\text{F-CROiMS}}(X_{n+1}) = \argmin_{z\in \gZ} \max_{c \in \widehat{\gU}^{\text{F-CROiMS}}(X_{n+1})} \phi(c,z)$.
        
        \Ensure Decision $\hat{z}^{\text{F-CROiMS}}(X_{n+1})$.
	\end{algorithmic}
\end{algorithm}

As a Corollary of Theorem \ref{thm:cover_swapped}, the decision induced by F-CROiMS has the following robustness guarantee in finite samples.
\begin{corollary}
    Suppose $\{(X_i,Y_i)\}_{i=1}^{n+1}$ are i.i.d., F-CROiMS satisfies the $1-\alpha$ level of marginal robustness.
\end{corollary}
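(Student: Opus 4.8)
The plan is to derive this corollary directly from Theorem~\ref{thm:cover_swapped}, combined with the elementary fact that marginal coverage of a prediction set always upgrades to marginal robustness of the associated minimax decision. Concretely, I would proceed in two short steps: first check that CROiMS meets the hypothesis of Theorem~\ref{thm:cover_swapped}, then convert the resulting coverage bound into the robustness statement of Definition~\ref{def:robustness}.

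For the first step, I would verify that CROiMS, regarded as a selection map $\sA(\gD_n,x)$, satisfies the symmetry condition in Assumption~\ref{assum:symmetry}. Looking at Algorithm~\ref{alg:CROiMS}, the labeled sample enters the selected index $\hat\lambda(x)$ only through (i) the kernel weights $w_i(x) = H(X_i,x)/\sum_{j=1}^n H(X_j,x)$, (ii) the weighted quantile $\hat q_\lambda(x)$ defining $\gU_\lambda^{\mathsf{LCP}}$, and (iii) the weighted empirical risk $\sum_{i=1}^n w_i(x)\,\phi(Y_i, z_\lambda^{\mathsf{LCP}}(X_i))$ minimized over $\lambda\in\Lambda$. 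Each of these objects is a symmetric function of the data pairs $(X_i,Y_i)$ — a sum, an empirical weighted quantile, or an argmin over a finite collection of such sums — so permuting the labeled indices leaves $\hat\lambda(x)$ unchanged. Hence Assumption~\ref{assum:symmetry} holds, and Theorem~\ref{thm:cover_swapped} applies with $\sA$ equal to CROiMS, yielding $\sP\{Y_{n+1}\in\widehat{\gU}^{\textsf{F-CROiMS}}(X_{n+1})\} \ge 1-\alpha$.

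For the second step, I would use the deterministic inclusion of events
$$
\LRl{Y_{n+1}\in\widehat{\gU}^{\textsf{F-CROiMS}}(X_{n+1})} \subseteq \LRl{\phi\LRs{Y_{n+1},\hat z^{\textsf{F-CROiMS}}(X_{n+1})} \le \max_{c\in\widehat{\gU}^{\textsf{F-CROiMS}}(X_{n+1})}\phi\LRs{c,\hat z^{\textsf{F-CROiMS}}(X_{n+1})}},
$$
which follows purely from the definition $\hat z^{\textsf{F-CROiMS}}(X_{n+1}) = \argmin_{z\in\gZ}\max_{c\in\widehat{\gU}^{\textsf{F-CROiMS}}(X_{n+1})}\phi(c,z)$: on the left-hand event, $Y_{n+1}$ is itself one of the points $c$ over which the inner maximum is taken, so the right-hand inequality holds automatically, with no assumption on $\phi$, $\gZ$, or $\gY$. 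Passing to probabilities and chaining with the coverage bound from the first step gives $\sP\{\phi(Y_{n+1},\hat z^{\textsf{F-CROiMS}}(X_{n+1})) \le \max_{c\in\widehat{\gU}^{\textsf{F-CROiMS}}(X_{n+1})}\phi(c,\hat z^{\textsf{F-CROiMS}}(X_{n+1}))\} \ge 1-\alpha$, which is precisely the $1-\alpha$ level of marginal robustness. The argument has no genuine obstacle; the only point needing any care is the symmetry verification for CROiMS, and even that reduces to the observation that kernel weights, weighted quantiles, and weighted empirical risks are all permutation-invariant in the labeled data.
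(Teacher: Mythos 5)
Your proposal is correct and follows essentially the same route as the paper: verify that CROiMS is permutation-invariant in the labeled data so that Assumption~\ref{assum:symmetry} holds, invoke Theorem~\ref{thm:cover_swapped} for the $1-\alpha$ marginal coverage of $\widehat{\gU}^{\textsf{F-CROiMS}}(X_{n+1})$, and then pass to robustness via the event inclusion that holds because $Y_{n+1}$ is one of the candidates in the inner maximum whenever it lies in the prediction set. Nothing is missing.
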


\subsection{Proof of Theorem \ref{thm:cover_swapped}}
\begin{proof}
    Let us define the augmented dataset $\gD_{n+1} = \{(X_i,Y_i)\}_{i=1}^{n+1}$ and its leave-one-out subset $\gD_{n+1}^{-j} = \{(X_i,Y_i)\}_{i=1,i\neq j}^{n+1}$ for $j\in [n+1]$. Then we write $\hat{\lambda}_j = \sA(\gD_{n+1}^{-j},X_j)$ for $j\in [n+1]$. It holds that $\hat{\lambda}_j \equiv \hat{\lambda}_j^{Y_{n+1}}$ and $\hat{\lambda}_{n+1} \equiv \hat{\lambda}(X_{n+1})$.
Recalling the definition of \eqref{eq:swapped_FCP}, it follows that
\begin{align}\label{eq:cover_equality}
    &\sP\LRl{Y_{n+1}\in \widehat{\gU}^{\text{F-CROiMS}}(X_{n+1})}\nonumber\\
    &\qquad= \sP\LRl{S_{\hat{\lambda}_{n+1}}(X_{n+1},Y_{n+1}) \leq Q_{1-\alpha}\LRs{\{S_{\hat{\lambda}_j^{Y_{n+1}}}(X_j,Y_j)\}_{j=1}^n \cup \{S_{\hat{\lambda}_{n+1}}(X_{n+1},Y_{n+1})\}}}\nonumber\\
    &\qquad= \sP\LRl{S_{\hat{\lambda}_{n+1}}(X_{n+1},Y_{n+1}) \leq Q_{1-\alpha}\LRs{\{S_{\hat{\lambda}_j}(X_j,Y_j)\}_{j=1}^n \cup \{S_{\hat{\lambda}_{n+1}}(X_{n+1},Y_{n+1})\}}}.
\end{align}
Let $Z_i = (X_i,Y_i)$ and $\mZ = [Z_1,\ldots,Z_{n+1}]$ be the unordered set of augmented dataset. For any $\vz = (z_1,\ldots,z_{n+1}) \in (\gX \times \gY)^{n+1}$ with $z_j = (x_j,y_j)$, denote the event $\gE = \{\mZ = \vz\}$. According to Assumption \ref{assum:symmetry}, given $\gE$, we know $\hat{\lambda}_j$ depends only on the value of $Z_j$. It means that the unordered set of $\{S_{\hat{\lambda}_j}(X_j,Y_j)\}_{j=1}^{n+1}$ is fixed given the event $\gE$, which is $\left[S_{\lambda_1}(z_1),\ldots,S_{\lambda_{n+1}}(z_{n+1})\right]$,
where $\lambda_j = \sA(\vz^{-j},x_j)$ for $j\in [n+1]$. 
Define the set of strange points
\begin{align}
    \gS(\vz)=\LRl{z_i: S_{\lambda_i}(z_i) > Q_{1-\alpha}\LRs{\{S_{\lambda_j}(z_j)\}_{j=1}^{n+1}},i\in [n+1]}.\nonumber
\end{align}
In particular, $\gS(\vz)$ is invariant to the permutation of $\{z_i\}_{i=1}^{n+1}$.
By the definition of quantile, we know $\frac{1}{n+1}\sum_{i=1}^{n+1}\mathbbm{1}\{z_i\in \gS(\vz)\} \leq \alpha.$
Notice that under the event $\gE$, it holds that
$$
Q_{1-\alpha}\LRs{\{S_{\hat{\lambda}_j}(Z_j)\}_{j=1}^{n+1}}\mid \gE = Q_{1-\alpha}\LRs{\{S_{{\lambda}_j}(z_j)\}_{j=1}^{n+1}}.
$$ 
Invoking the exchangeability of $\{Z_i\}_{i=1}^{n+1}$, we have
\begin{align}
    \sP\LRl{S_{\hat{\lambda}}(X_{n+1},Y_{n+1}) \leq \hat{Q}_{n+1} \mid \gE} &= 1 - \sP\LRl{Z_{n+1} \in \gS(\vz) \mid \gE}\nonumber\\
    &= 1 - \frac{1}{n+1}\sum_{i=1}^{n+1} \sP\LRl{Z_i \in \gS(\vz) \mid \gE}\nonumber\\
    &= 1 - \E\LRm{\frac{1}{n+1}\sum_{i=1}^{n+1} \mathbbm{1}\{Z_i \in \gS(\vz)\}\mid \gE}\nonumber\\
    &= 1 - \E\LRm{\frac{1}{n+1}\sum_{i=1}^{n+1} \mathbbm{1}\{z_i \in \gS(\vz)\}\mid \gE} \geq 1 - \alpha&.\nonumber
\end{align}
Marginalizing over $\gE$, together with \eqref{eq:cover_equality}, we can show the conclusion. 
\end{proof}

\section{Additional experiment results and deferred settings}\label{appen:simulation_setting}

\subsection{Model selection in the averaged case}\label{appen:additional_simulation}

\subsubsection{Classification task: models are trained with different features}

The loss function and data generation setting are identical to those in Section 5.1.1.
The nonconformity score function used here is $S_{\lambda}(x,y) = 1 - f_{\lambda}^{y}(x)$, where $f_{\lambda}:\mathcal{X} \rightarrow [0,1]^{|\mathcal{Y}|}$ is the softmax layer of a classifier. Candidate models $\{S_\lambda,\lambda \in \Lambda\}$ are trained with different features, where $f_{\lambda}$ is fitted by the Gradient Boosting algorithm. Specifically, $S_{\lambda}$ is trained with the target variable $Y$ and the covariate $\tilde{X}=(\tilde{X}_1,\tilde{X}_2,\tilde{X}_3,\tilde{X}_4,\tilde{X}_5)^\top$, where $\tilde{X}_1,\tilde{X}_2, \tilde{X}_3$ are uniformly selected from categorical variables $X_1,X_2,X_3, X_4$ and $\tilde{X}_4,\tilde{X}_5$ are uniformly selected from continuous variables $X_5, X_6,X_7$.

To illustrate the robustness and efficiency of different methods, we consider the effect of the size of labeled samples $n$, the size of candidate models $|\Lambda|$, and the nominal level $\alpha$ on decision performance. The respective simulation results are summarized in Figure \ref{fig: ACFSLP}. We observe that the experimental results are similar to those in Section 5.1. \texttt{F-CROMS} maintains the coverage guarantee while achieving a lower average decision loss. \texttt{E-CROMS} always achieves the minimal averaged loss while controlling the marginal misrobustness below the level $\alpha$. 

\begin{figure}[H]
    \centering
    \begin{subfigure}[t]{\textwidth}
        \includegraphics[width=\textwidth]{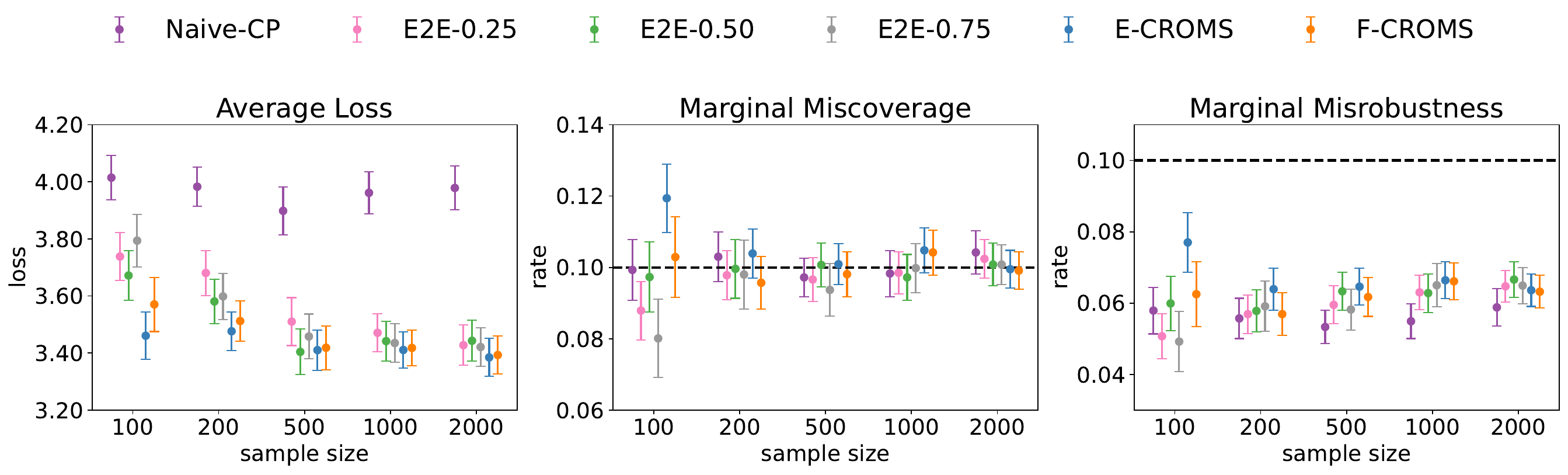}
        \caption{Varying sample size $n$ with $|\Lambda| = 15$ and $\alpha = 0.1$.}
    \end{subfigure}

    \begin{subfigure}[t]{\textwidth}
        \includegraphics[width=\textwidth]{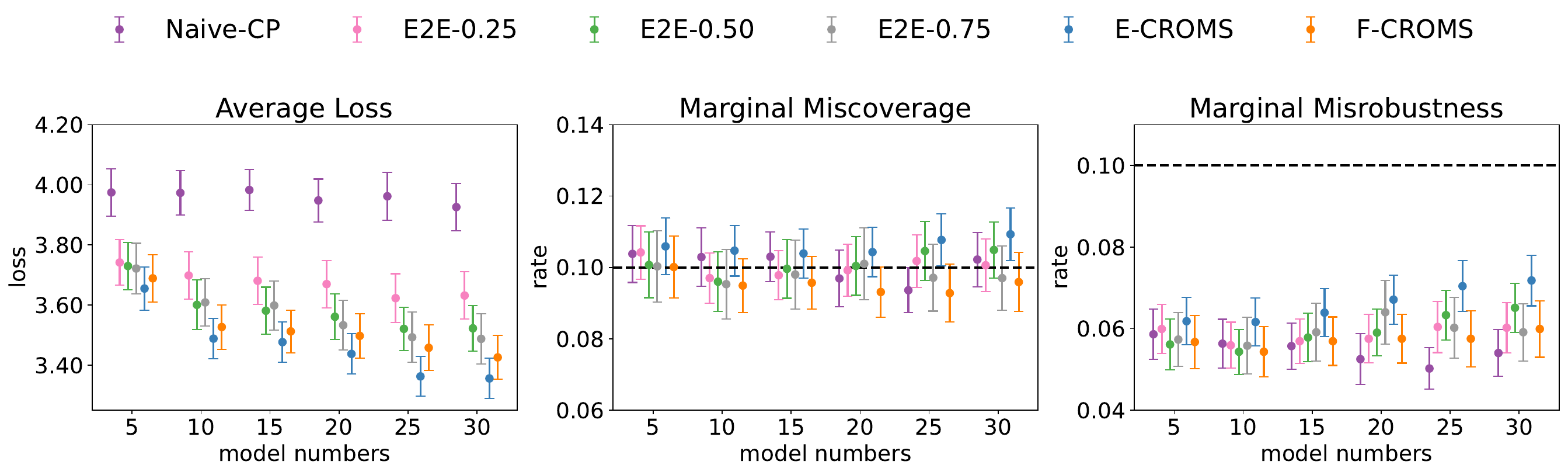}
        \caption{Varying candidate model numbers $|\Lambda|$ with $n = 200$ and $\alpha = 0.1$.}
    \end{subfigure}

    \begin{subfigure}[t]{1\textwidth}
        \includegraphics[width=\textwidth]{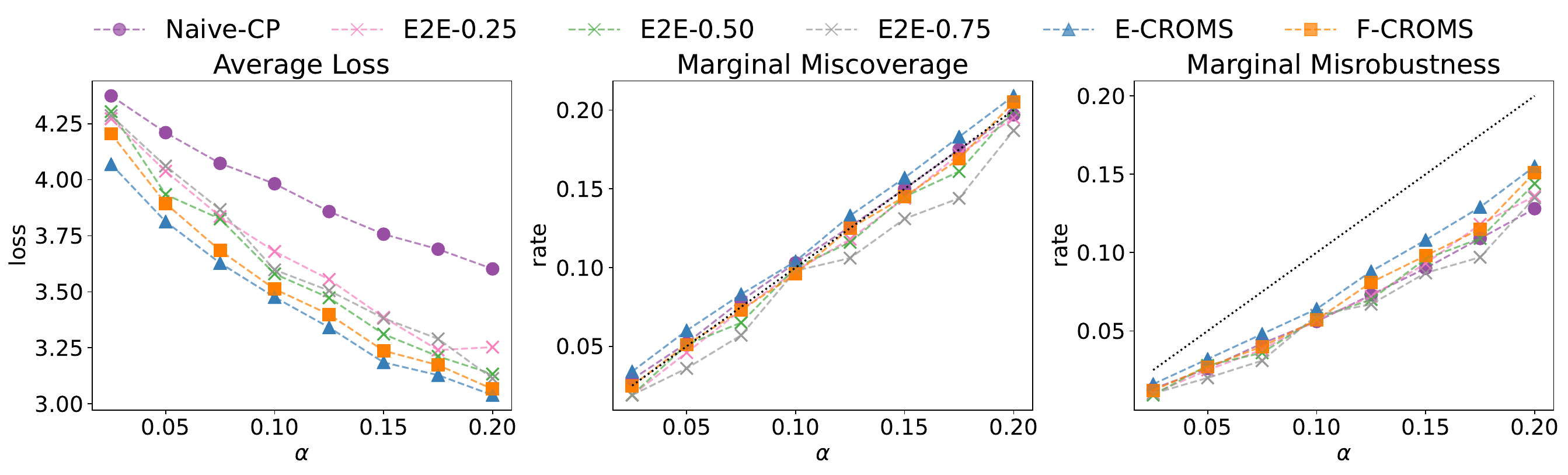}
        \caption{Varying nominal level $\alpha$ with $n = 200$ and $|\Lambda| = 15$.}
    \end{subfigure}
    \caption{The average loss, marginal coverage, and robustness in the classification task, where the candidate models are trained with different features.}
    \label{fig: ACFSLP}
\end{figure}






\subsubsection{Regression task: models are trained from different datasets}\label{appen:gen_regression}
In this regression task, we define the loss function as $\phi(y,z) = -y^{\top}z$, where $\mathcal{Y} = \mathbb{R}^{2}$ and $\mathcal{Z} = \{z \in [0,1]^2:\|z\|_1=1, z \geq 0\}$. The labeled and test data are generated by: $Y_1 = -X_1 - X_2^2 + \epsilon_1,\ Y_2 = -X_1^2 - X_2 + \epsilon_2$.
Let $Y = (Y_1,Y_2)^\top$, and $\epsilon = (\epsilon_1,\epsilon_2)^\top$. The covariate $X = (X_1,X_2)^\top$ follows the distribution $N((1,1)^{\top},2.25\cdot\mathrm{I}_2)$. The noise $\epsilon\sim N(0,\Sigma)$ is independent of $X$. The noise $\epsilon = (\epsilon_1, \epsilon_2)$ follows the multivariate normal distribution $N(0,\Sigma)$ and the corresponding covariance matrix is $\Sigma = 0.25LL^{\top}$, where $L=\begin{pmatrix}
  1.0 & 0.5  \\
  0.5 & 4.0 
\end{pmatrix}.$

\begin{figure}[H] 
\centering
\includegraphics[width=0.8\textwidth]{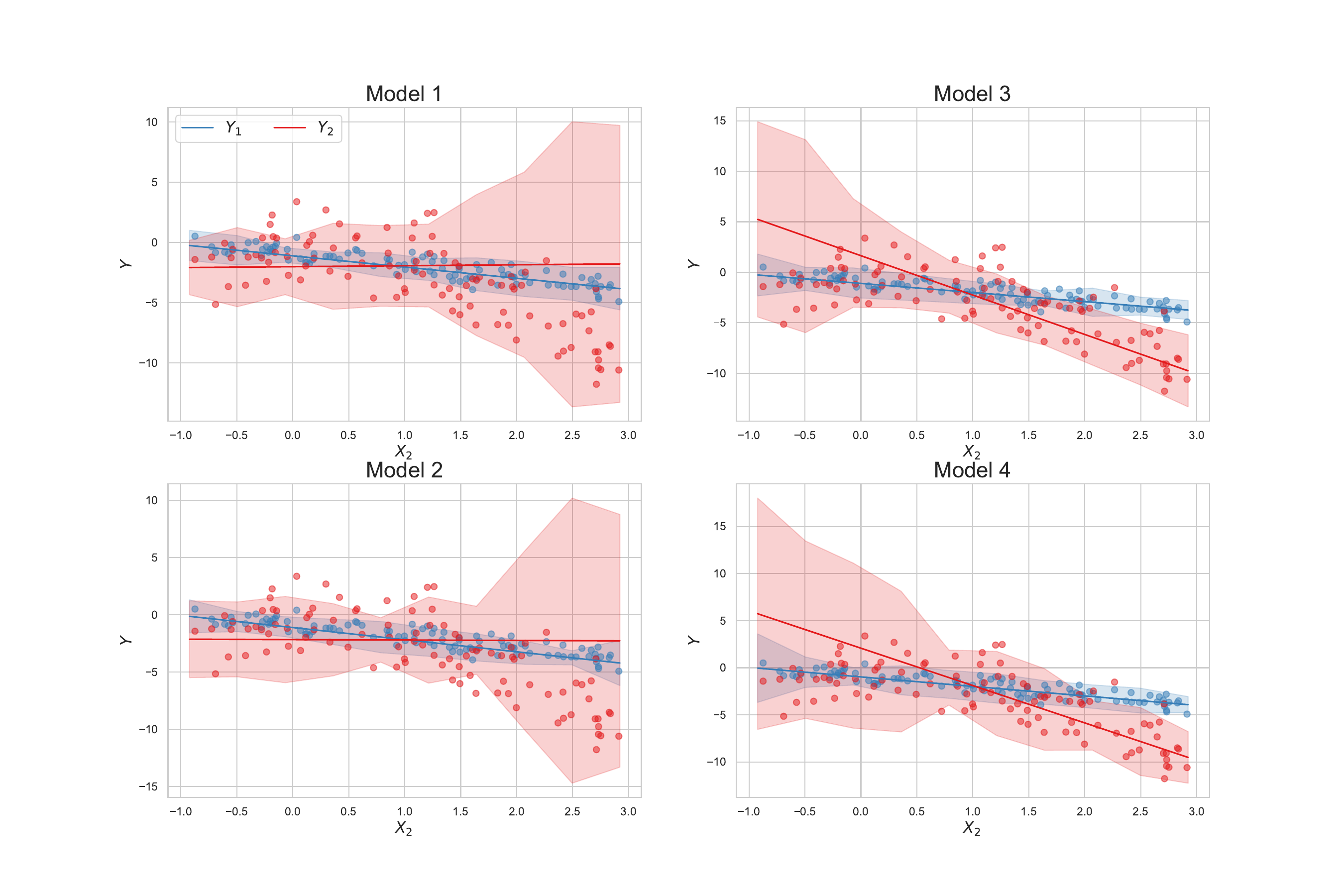}
\caption{Prediction sets with fixing $X_1 = 1$ constructed by four models trained in different datasets exhibiting covariate shift. The blue and red shaded regions represent the prediction sets for $Y_1$ and $Y_2$, respectively. The red points represent the test data.}
\label{fig:IRCS-interval}
\end{figure}

We consider the ellipsoid score $S_{\lambda}(x,y) = (y - \hat{\mu}_{\lambda}(x))^{\top}\hat{\Sigma}_{\lambda}^{-1}(x)(y - \hat{\mu}_{\lambda}(x))$, where $\hat{\mu}_{\lambda},\hat{\Sigma}_{\lambda}$ are pre-trained mean function and covariance function, respectively. Candidate models $\{S_{\lambda}:\lambda \in \Lambda\}$ are trained on four different datasets $\mathcal{D}_{\rm train}^\lambda$ for $\lambda \in \{1,...,4\}$, which are sampled from different distributions. Specifically, let $\mathcal{D}_{\rm train}^1 = \{(X,Y):X\sim N((0,0)^\top,\mathrm{I}_2)\}$, $\mathcal{D}_{\rm train}^2 = \{(X,Y):X\sim N((2,0)^\top,\mathrm{I}_2)\}$, $\mathcal{D}_{\rm train}^3 = \{(X,Y):X\sim N((0,2)^\top,\mathrm{I}_2)\}$ and $\mathcal{D}_{\rm train}^4 = \{(X,Y):X\sim N((2,2)^\top,\mathrm{I}_2)\}$. For each model, 
$\hat{\mu}_{\lambda}(\cdot)$ is fitted with least square algorithm and $\hat{\Sigma}_{\lambda}$ is fitted with random forest algorithm. The prediction sets constructed by four different models are shown in Figure \ref{fig:IRCS-interval}. In general, the models tend to fit better in regions where the training data are concentrated. For example, Model 1 has training data distributed on the left side of $X_2 = 1$, resulting in narrower prediction sets on that side. In contrast, Model 3 has training data concentrated on the right side of $X_2=1$, leading to more effective prediction sets on the right.

We examine the effect of the sample size $n$ and the nominal level $\alpha$ on decision performance,  with results shown in Figures \ref{fig:ARCSLP_samples} and \ref{fig:ARCSLP_alphas}. Both \texttt{E-CROMS} and \texttt{F-CROMS} achieve lower average decision loss than other baseline methods. The difference between the simulation results here and those in classification is that \texttt{F-CROMS} may achieve lower decision loss than \texttt{E-CROMS}. This occurs because we apply a discretization adjustment to \texttt{F-CROMS} to avoid excessive computations. It should be emphasized that, despite discretization, \texttt{F-CROMS} still maintains the coverage guarantee according to Theorem 3.6.

\begin{figure}[h] 
\centering
\includegraphics[width=\textwidth]{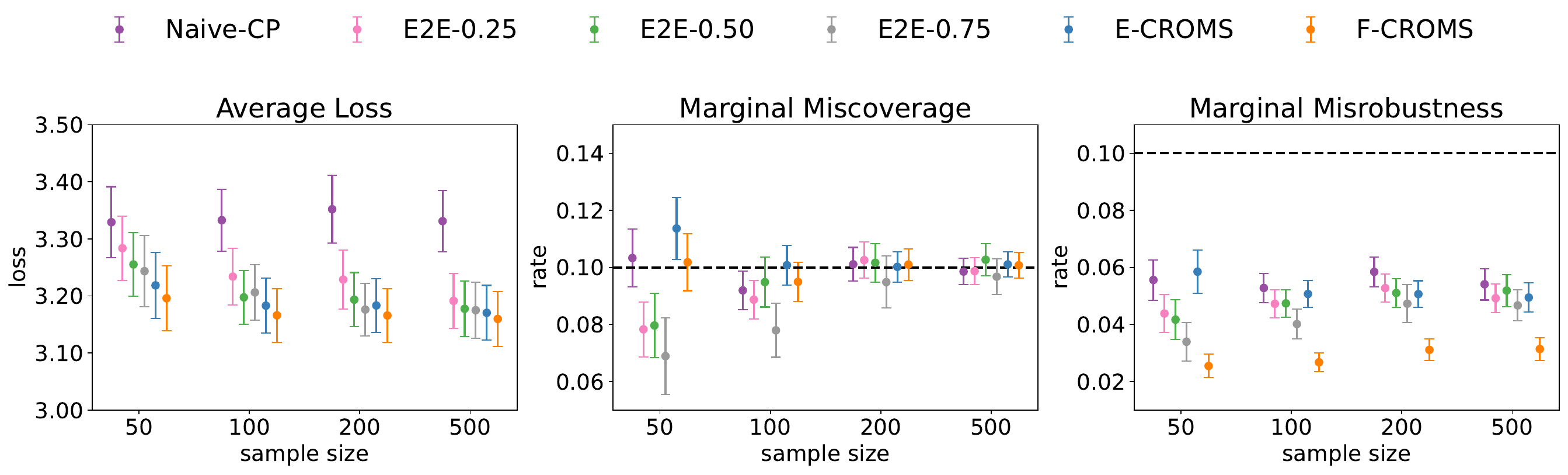}
\caption{The experimental results under $n \in \{50,100,200,500\}$, $|\Lambda| = 4$, $\alpha = 0.1$.}
\label{fig:ARCSLP_samples}
\end{figure}

\begin{figure}[h] 
\centering
\includegraphics[width=\textwidth]{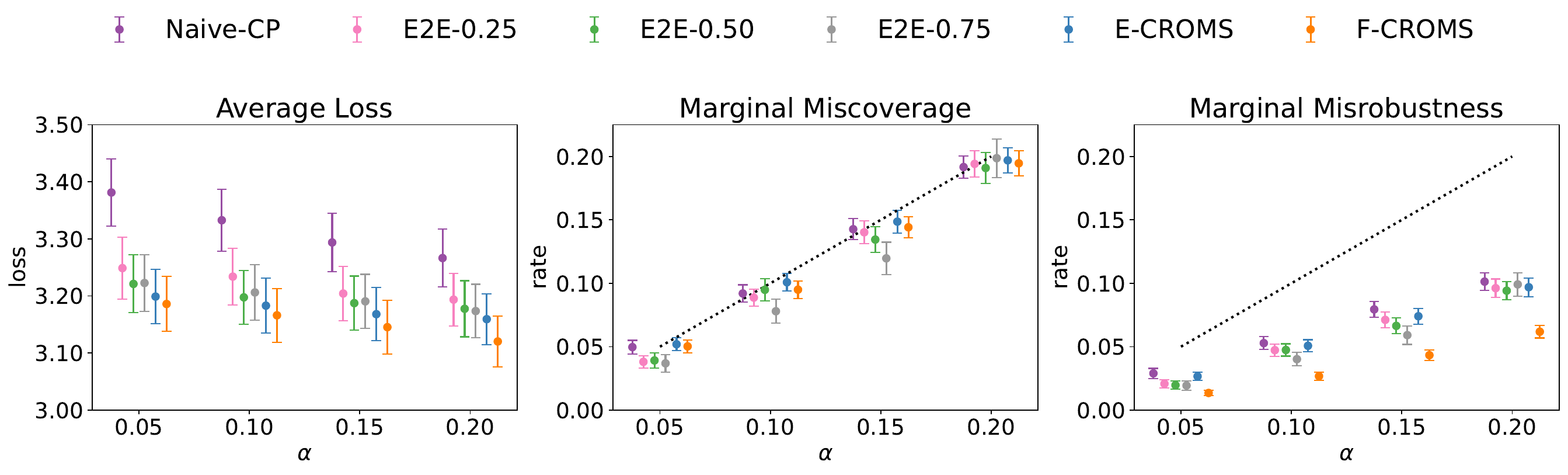}
\caption{The experimental results under $n = 100$, $|\Lambda| = 4$, $\alpha \in \{0.05,0.10,0.15,0.20  \}$.}
\label{fig:ARCSLP_alphas}
\end{figure}

\subsubsection{Additional results in Section 5.1.2}
In Section 5.1.2, we define the loss function as $\phi(y,z) = -y^{\top}z$, where $\mathcal{Y} = \mathbb{R}^2$ and $\mathcal{Z} = \{z \in [0,1]^2 : \|z\|_1 = 1, z \geq 0\}$. The labeled and test data are generated as follows:
$Y_1 = \sum_{k=1}^{50}\beta_{k1}X_k + \epsilon_1, Y_2 =\sum_{k=1}^{50}\beta_{k2}X_k + \epsilon_2$, where $\beta_{kl} = \mathbbm{1}\{(k+l) \text{ mod } 10 = 0\}$ for each $k\in[50], l\in[2]$. 
The features $\{X_k\}_{k=1}^{50}$ are independently and identically drawn from a t-distribution with 3 degrees of freedom and truncated at $|x| = 3$. That is, letting $\widetilde{X}$ follows an independent and identically distributed t-distribution, the truncated variable $X$ is defined as:

$$
X = 
\begin{cases}
    -3 &\text{if } \widetilde{X} < -3,\\
    \widetilde{X} & \text{if } -3 \leq \widetilde{X} \leq 3, \\
    3 & \text{if } \widetilde{X} > 3. \\
\end{cases}
$$
Similarly, the independent noise terms $\epsilon_1$ and $\epsilon_2$ follow a standard normal distribution truncated at $1.8$. Consequently, $Y$ is a bounded random vector, with each component having lower and upper bounds of $-16.80$ and $16.80$, respectively. Therefore, when applying grid-based algorithms (such as the F-CROMS algorithm or J-CROMS algorithm), we partition the region $[-16.80, 16.80]\times [-16.80, 16.80]$ into grid blocks. For this setup we set the number of grid points for each component to $N_{\text{grid}} = 3\sqrt{n}$, where $n$ is the number of labeled data points, and fix the number of test points per experiment at $m=100$.

We consider two different settings for candidate models. In the first setting, the candidate models are different box scores $S_{\lambda}(x,y) = \|\left(y-\hat{\mu}_{\lambda}(x)\right)/\hat{\sigma}_{\lambda}(x)\|_{\infty}$ for $\lambda \in \Lambda$, where $\hat{\mu}_{\lambda}$ and $\hat{\sigma}_{\lambda}$ are pre-trained mean and standard deviation functions, respectively. These candidate models are generated following a procedure similar to that in \citet{liang2024conformal}. Specifically, $\hat{\mu}_{\lambda}, \hat{\sigma}_{\lambda}$ are obtained by first uniformly selecting 20\% features at random, then fitting the mean and standard deviation functions on the projected data, and finally embedding the fitted functions back into the original 50-dimensional space. Thus each $\lambda \in \Lambda$ corresponds to a distinct subset of features. The sample size used for training the mean and standard deviation functions is 
$n_{\text{train}} = 300$. Following \citet{liang2024conformal}, the mean function is fitted via ridge regression (with penalty $\eta = 0.1$), and the standard deviation function is fitted via random forest algorithm. Figure \ref{fig:ACFSLP} presents the experimental results when the candidate models are box score functions.
\begin{figure}[H]
    \centering
    \begin{subfigure}[t]{\textwidth}
        \includegraphics[width=\textwidth]{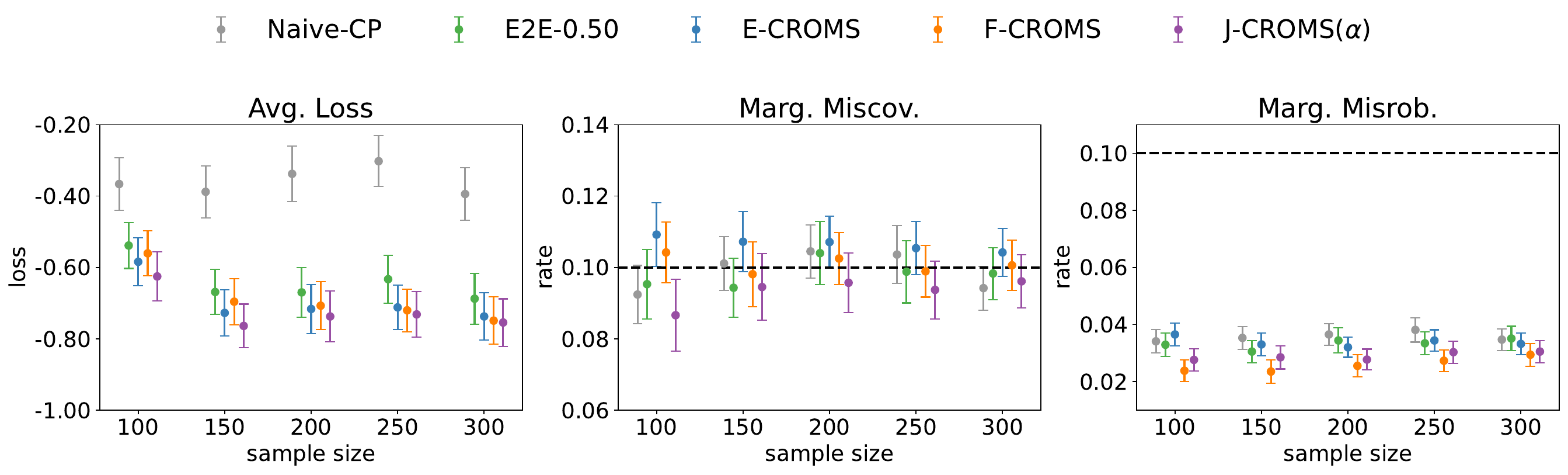}
        \caption{Varying sample size $n$ with $|\Lambda| = 30$ and $\alpha = 0.1$.}
    \end{subfigure}

    \begin{subfigure}[t]{\textwidth}
        \includegraphics[width=\textwidth]{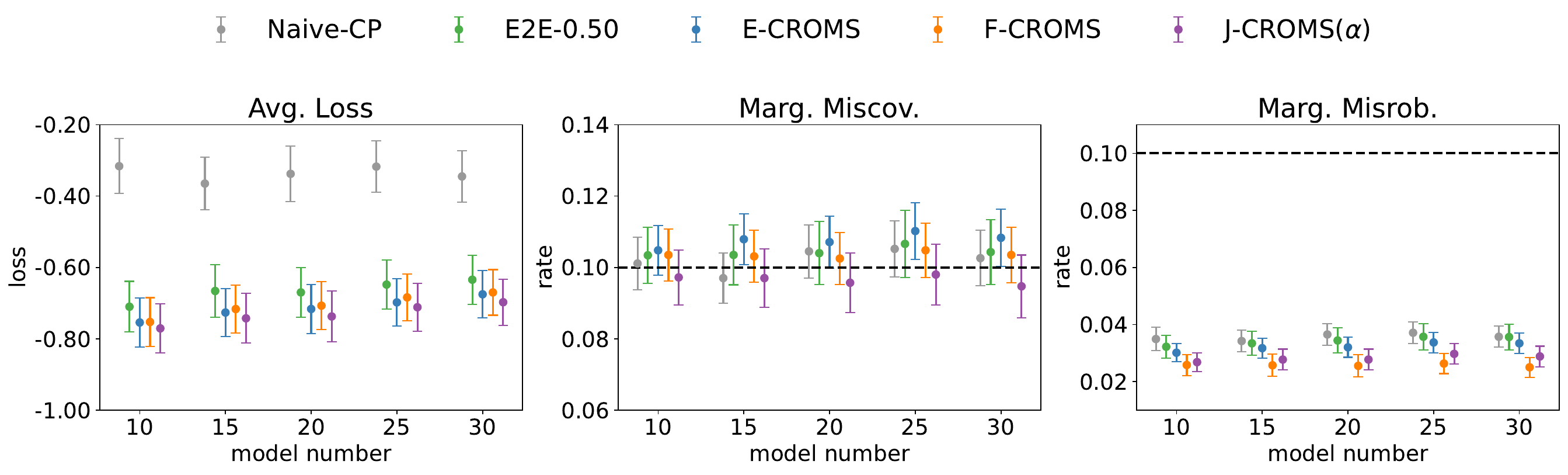}
        \caption{Varying index set size $|\Lambda|$ with $n = 100$ and $\alpha = 0.1$.}
    \end{subfigure}
    \caption{The evaluation metrics with confidence intervals in the regression task, where each candidate model corresponds to a distinct feature subset.}
    \label{fig:ACFSLP}
\end{figure}

In the second setting, the candidate models are different ellipsoid scores $S_{\lambda}(x,y) = \left(y-\hat{\mu}_{\lambda}(x)\right)^{\top} \widehat{\Sigma}_{\lambda}(X)\left( y- \hat{\mu}_{\lambda}(x) \right)$ for $\lambda \in \Lambda$, where $\hat{\mu}_{\lambda}$ and $\hat{\sigma}_{\lambda}$ are pre-trained mean and covariance functions, respectively. The procedure for generating these candidate models remains essentially the same as described earlier, except that the training objective shifts from estimating the mean and standard deviation functions to estimating the mean and covariance functions. All other aspects of the setup are unchanged. Table \ref{table:regression_ellipsoid_performance} presents the experimental results when the candidate models are ellipsoid score functions.

\begin{table}[H]
    \centering
    \small
    \setlength{\tabcolsep}{4pt}
    \caption{Evaluation metrics and runtimes (seconds) on 100 test points, with the radius of 95\% confidence intervals (in parentheses), are reported for the {\it regression} task using the {\it ellipsoid score} function in Section 5.1.2, under the scenario $n=150$, $|\Lambda|=25$.}
    \label{table:regression_ellipsoid_performance}
    \resizebox{\textwidth}{!}{
    \begin{tabular}{@{}clccccc@{}}
    \toprule
    $\alpha$ & \textbf{Method} & \textbf{Avg. Loss} & \textbf{Marg. Miscov.} & \textbf{Marg. Misrob.} & \textbf{Time} \\
    \midrule
    \multirow{8}{*}{0.10} 
    & Naive-CP & -0.172 (0.050) & 0.101 (0.008) & 0.016 (0.003) & 1.294 (0.019) \\
    \cmidrule(lr){2-6}
    & E2E-0.25 & -0.316 (0.052) & 0.105 (0.009) & 0.018 (0.003) & 9.936 (0.021) \\
    & E2E-0.50 & -0.325 (0.049) & 0.097 (0.010) & 0.015 (0.003) & 18.467 (0.039) \\
    & E2E-0.75 & -0.307 (0.051) & 0.078 (0.011) & 0.012 (0.003) & 26.812 (0.054) \\
    \cmidrule(lr){2-6}
    & LOO & -0.325 (0.048) & 0.102 (0.008) & 0.015 (0.003) & 73.510 (0.866) \\
    \cmidrule(lr){2-6}
    & E-CROMS & -0.332 (0.048) & 0.112 (0.008) & 0.017 (0.003) & 35.045 (0.204) \\
    & F-CROMS & -0.385 (0.054) & 0.101 (0.008) & 0.013 (0.002) & 5467.818 (236.380) \\
    & J-CROMS($\alpha$) & \textbf{-0.410} (0.051) & 0.096 (0.009) & 0.014 (0.002) & 130.505 (0.561) \\
    & J-CROMS($\alpha/2$) & -0.345 (0.051) & 0.045 (0.007) & 0.006 (0.002) & 131.828 (0.628) \\
    & CV-CROMS($K=5$) & \textbf{-0.405} (0.054) & 0.067 (0.008) & 0.008 (0.002) & 220.741 (1.804) \\
    & CV-CROMS($K=10$) & \textbf{-0.402} (0.052) & 0.080 (0.009) & 0.010 (0.002) & 380.393 (3.718) \\
    \cmidrule(lr){1-6}
    \multirow{8}{*}{0.20} 
    & Naive-CP & -0.204 (0.052) & 0.205 (0.009) & 0.036 (0.004) & 1.397 (0.048) \\
    \cmidrule(lr){2-6}
    & E2E-0.25 & -0.374 (0.052) & 0.196 (0.011) & 0.037 (0.004) & 10.018 (0.024) \\
    & E2E-0.50 & -0.395 (0.052) & 0.201 (0.012) & 0.036 (0.004) & 18.648 (0.047) \\
    & E2E-0.75 & -0.374 (0.050) & 0.187 (0.015) & 0.034 (0.005) & 27.063 (0.061) \\
    \cmidrule(lr){2-6}
    & LOO & -0.412 (0.049) & 0.208 (0.011) & 0.038 (0.004) & 75.674 (0.862) \\
    \cmidrule(lr){2-6}
    & E-CROMS & -0.418 (0.049) & 0.219 (0.011) & 0.041 (0.004) & 35.428 (0.114) \\
    & F-CROMS & -0.488 (0.054) & 0.209 (0.010) & 0.035 (0.004) & 2563.318 (117.992) \\
    & J-CROMS($\alpha$) & \textbf{-0.496} (0.056) & 0.207 (0.011) & 0.036 (0.004) & 131.979 (0.553) \\
    & J-CROMS($\alpha/2$) & -0.410 (0.051) & 0.096 (0.009) & 0.014 (0.002) & 133.461 (0.578) \\
    & CV-CROMS($K=5$) & \textbf{-0.517} (0.056) & 0.166 (0.012) & 0.024 (0.004) & 224.200 (1.780) \\
    & CV-CROMS($K=10$) & \textbf{-0.511} (0.054) & 0.180 (0.012) & 0.029 (0.004) & 387.371 (3.694) \\
    \bottomrule
    \end{tabular}}
    \end{table}

\subsection{Model selection in the individualized case} \label{appen:ind_regression_covariate_shit}

\subsubsection{Classification task}

In Section 5.2, the covariate $X = (X_1,X_2,X_3)$ follows a multivariate normal distribution $N(0,\Sigma)$ and the corresponding covariance matrix is $\Sigma = L L^{\top}$, where $L$ is set as follows:
$$
L =
\begin{pmatrix}
  1.5 & 0.1 & -0.2 \\
  0.1 & 2.0 & 0.4  \\
 -0.2 & 0.4 & 3.0
\end{pmatrix}.
$$
The loss function is $\phi(y,z) = M_{yz}$ where the loss matrix is 
$$
M=
\begin{pmatrix}
  0 & 4 & 10 \\
  2 & 0 & 9  \\
  7 & 6 & 0
\end{pmatrix}.
$$
In this setting, there are three candidate models, namely:
\begin{itemize}
    \item $S_1$ : trained with the target variable $Y$ and the covariate $(X_1,X_2)$.
    
    \item $S_2$ : trained with the target variable $Y$ and the covariate $(X_1,X_3)$.
    
    \item $S_3$ : trained with the target variable $Y$ and the covariate $(X_2,X_3)$.
\end{itemize}
The training sample size is $400$. The test sample size is $m=1000$. A larger test sample size is adopted to ensure that the empirical estimates of the conditional metrics can better approximate their expectation values. Every ball $B \in \mathcal{B}$ contains $10\%$ test samples, and $|\mathcal{B}| = 20$.
Additionally, we consider the coverage gap metric ``CovGap'' from \citet{kaur2025conformal}. First, we define a new partition $\mathcal{G} = \{G_1,G_2,G_3,G_4\}$, where $G_1 = \{X \in \mathbb{R}^{3}:X_1 \geq 1.2\}, G_2 = \{X \in \mathbb{R}^3:0 \leq X_1 < 1.2\}, G_3 = \{X \in \mathbb{R}^3: -1.2 \leq X_1 < 0\}, G_4 = \{X \in \mathbb{R}^3: X_1 < -1.2\}$. Then the coverage gap is defined as: $\textit{CovGap} = \sum_{g=1}^{4} |\hat{c}_g - (1-\alpha)|$,
where $\hat{c}_g = \sum_{j=n+1}^{n+m} \mathbbm{1}\{X_j \in G_g, Y_j \in \widehat{\mathcal{U}}(X_j)\}/\sum_{j=n+1}^{n+m} \mathbbm{1} \{X_j \in G_g\}$ denotes the empirical coverage in $G_g$ for $g \in [4]$. Similarly, the robustness gap metric is defined as $\textit{RobGap} = \sum_{g=1}^{4} |\hat{r}_g - (1-\alpha)|$,
where $\hat{r}_g = \sum_{j=n+1}^{n+m} \mathbbm{1}\{X_j \in G_g, \phi(Y_{j}, \hat{z}(X_{j})) \leq \max_{c\in \widehat{\gU}(X_{j})}\phi(c,\hat{z}(X_{j}))\}/\sum_{j=n+1}^{n+m} \mathbbm{1} \{X_j \in G_g\}$ denotes the empirical robustness in $G_g$ for $g \in [4]$.

As shown in Figure \ref{fig:ICFSM-coveragegap}, \texttt{Naive-LCP} and \texttt{CROiMS} both maintain coverage rates around $1 - \alpha$ across different regions. 
As shown in Figure \ref{fig: ICFSM} (a), the averaged model selection methods tend to be overly conservative in some regions ($G_2,G_3$) while failing to control the miscoverage rate under the nominal level $\alpha$ in others ($G_1,G_4$). Finally, Figure \ref{fig: ICFSM} (b) demonstrates that \texttt{CROiMS} achieves lower decision loss across all regions.



\begin{figure}[H] 
\centering
\includegraphics[width=\textwidth]{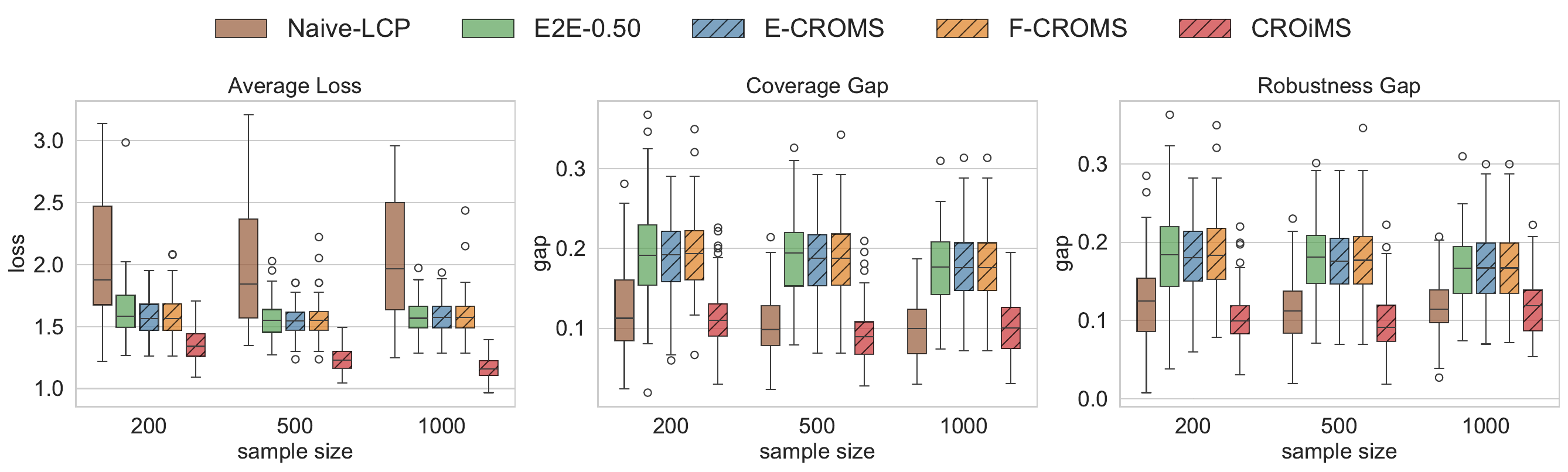}
\caption{The evaluation metrics for varying sample size $n$ with $|\Lambda| = 3$.}
\label{fig:ICFSM-coveragegap}
\end{figure}

\begin{figure}[H]
    \begin{subfigure}[t]{\textwidth}
        \centering
        \includegraphics[width=.7\textwidth]{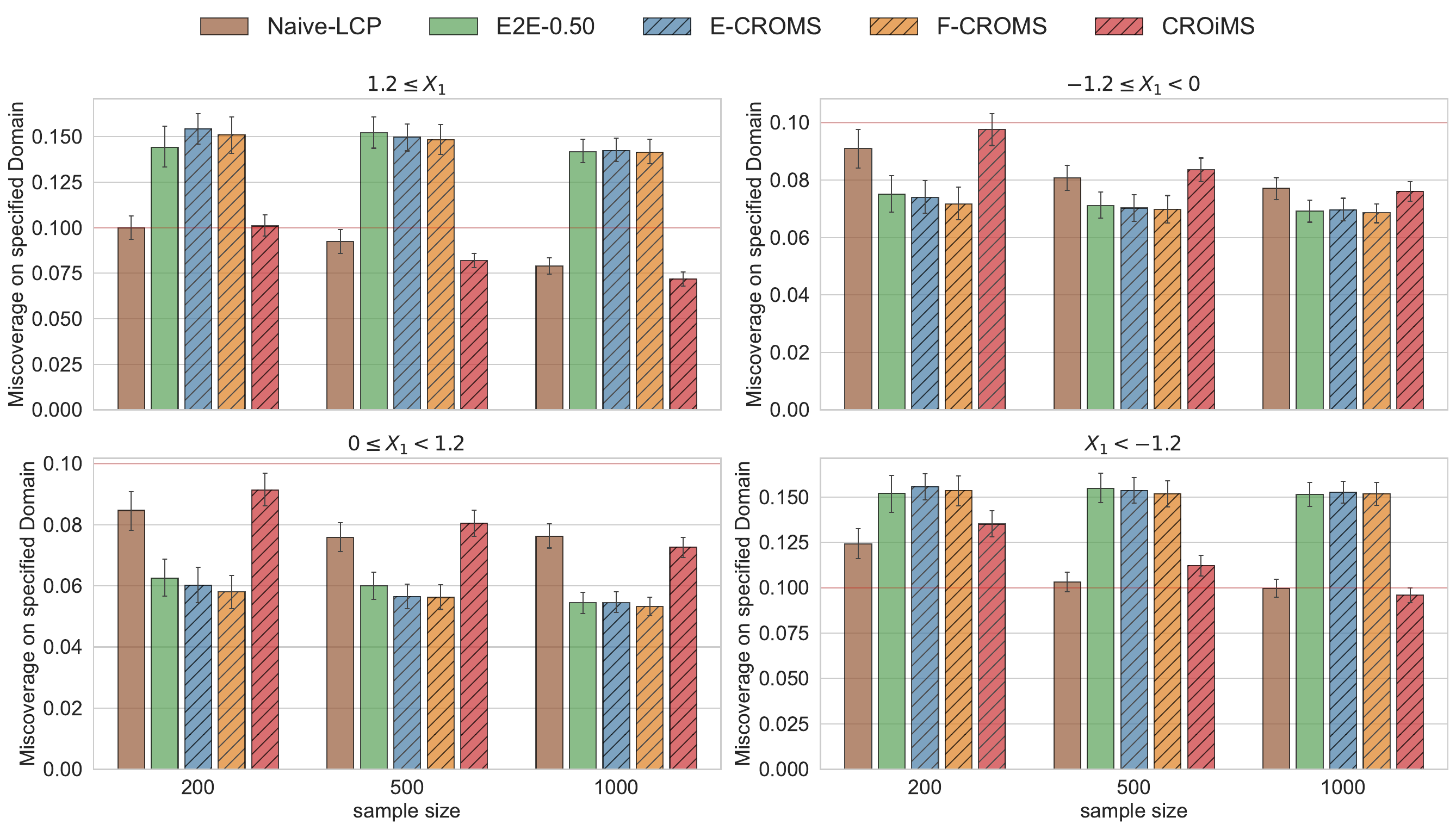}
        \caption{Conditional miscoverage}
    \end{subfigure}

    \begin{subfigure}[t]{\textwidth}
    \centering
        \includegraphics[width=.7\textwidth]{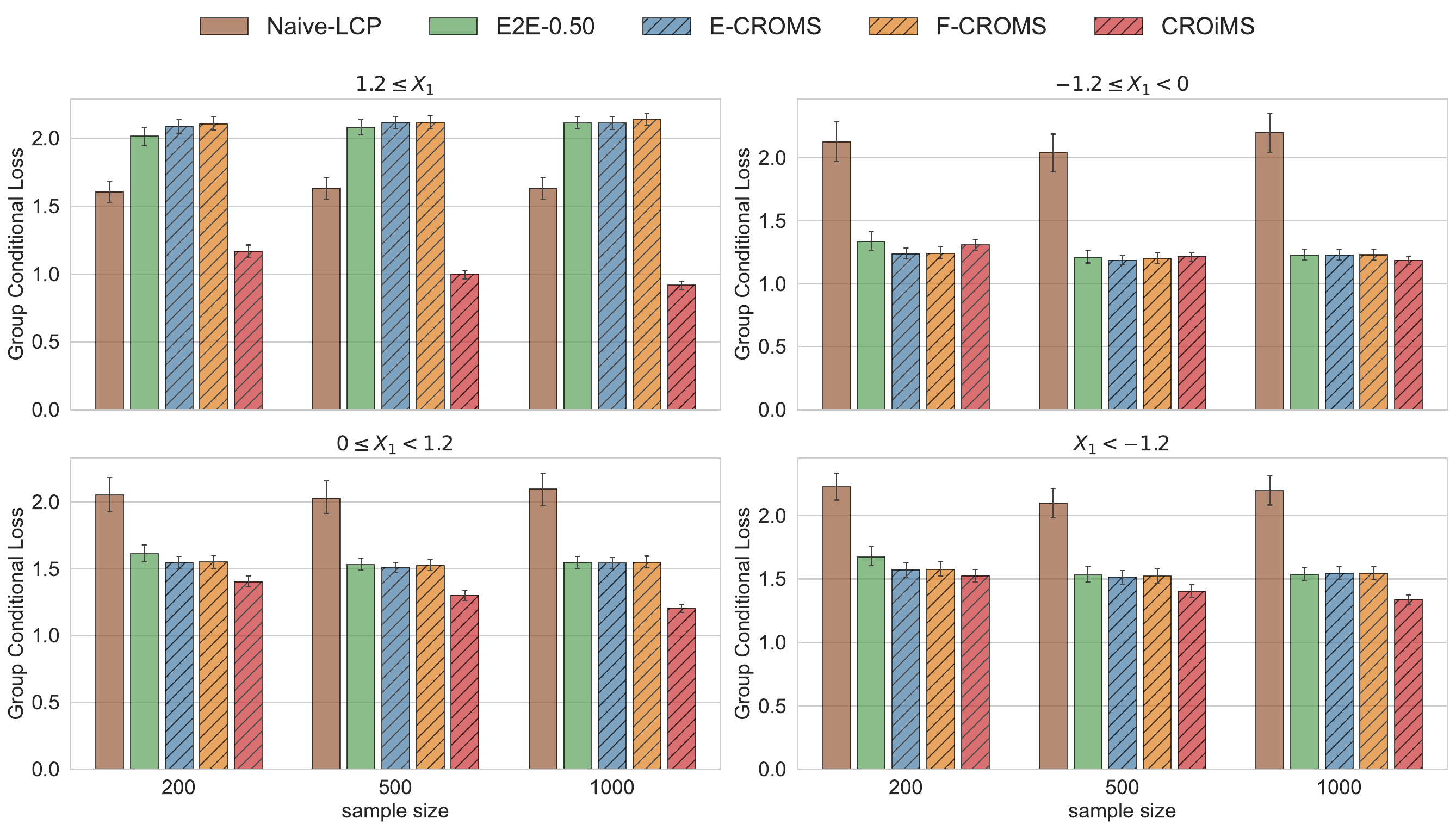}
        \caption{Conditional loss}
    \end{subfigure}
    \caption{The simulation results of the conditional miscoverage and loss on specific domains with varying sample size $n$ and fixed $|\Lambda|=3$ and $\alpha = 0.1$.}
    \label{fig: ICFSM}
\end{figure}

\begin{table}[H]
\centering
\small
\setlength{\tabcolsep}{3pt}
\caption{Performance of CROiMS with varying constant \(c\) in bandwidth \(h_n = c n^{-1/5}\) under classification task in Section 5.2.}
\label{tab: sensitivity_bandwidth_c}
\resizebox{.8\textwidth}{!}{
\begin{tabular}{ccccccccc}
\toprule
\multirow{2}{*}{Metrics} & \multirow{2}{*}{Sample size} & \multirow{2}{*}{Nominal level} & \multicolumn{5}{c}{Constant \(c\)} \\
\cmidrule(lr){4-8}
 & & & 4 & 5 & 6 & 7 & 8 \\
\midrule
\multirow{9}{*}{Avg. Loss}
& \multirow{3}{*}{\(n=500\)}
& $\alpha=0.10$ & 1.306 & 1.263 & 1.234 & 1.233 & 1.242 \\
& & $\alpha=0.15$ & 1.310 & 1.226 & 1.189 & 1.182 & 1.199 \\
& & $\alpha= 0.20$ & 1.351 & 1.250 & 1.209 & 1.202 & 1.211 \\
\arrayrulecolor{black!40}\cmidrule(lr){4-8}
& \multirow{3}{*}{\(n=1000\)}
& $\alpha=0.10$ & 1.226 & 1.174 & 1.163 & 1.170 & 1.183 \\
& & $\alpha=0.15$ & 1.233 & 1.155 & 1.124 & 1.120 & 1.140 \\
& & $\alpha=0.20$ & 1.264 & 1.173 & 1.142 & 1.137 & 1.156 \\
\cmidrule(lr){4-8}
& \multirow{3}{*}{\(n=1500\)}
& $\alpha=0.10$ & 1.190 & 1.149 & 1.141 & 1.139 & 1.160 \\
& & $\alpha=0.15$ & 1.184 & 1.118 & 1.096 & 1.096 & 1.114 \\
& & $\alpha=0.20$ & 1.220 & 1.136 & 1.113 & 1.107 & 1.120 \\
\midrule
\multirow{9}{*}{Worst Cond. Miscov.}
& \multirow{3}{*}{\(n=500\)}
& $\alpha=0.10$ & 0.117 & 0.119 & 0.122 & 0.131 & 0.140 \\
& & $\alpha=0.15$ & 0.168 & 0.170 & 0.175 & 0.182 & 0.193 \\
& & $\alpha=0.20$ & 0.217 & 0.218 & 0.232 & 0.239 & 0.247 \\
\cmidrule(lr){4-8}
& \multirow{3}{*}{\(n=1000\)}
& $\alpha=0.10$ & 0.102 & 0.103 & 0.111 & 0.116 & 0.129 \\
& & $\alpha=0.15$ & 0.150 & 0.154 & 0.161 & 0.165 & 0.173 \\
& & $\alpha=0.20$ & 0.198 & 0.205 & 0.211 & 0.218 & 0.227 \\
\cmidrule(lr){4-8}
& \multirow{3}{*}{\(n=1500\)}
& $\alpha=0.10$ & 0.109 & 0.110 & 0.112 & 0.116 & 0.118 \\
& & $\alpha=0.15$ & 0.156 & 0.159 & 0.162 & 0.166 & 0.172 \\
& & $\alpha=0.20$ & 0.200 & 0.208 & 0.214 & 0.219 & 0.225 \\
\midrule
\multirow{9}{*}{Worst Cond. Misrob.}
& \multirow{3}{*}{\(n=500\)}
& $\alpha=0.10$ & 0.112 & 0.112 & 0.115 & 0.125 & 0.133 \\
& & $\alpha=0.15$ & 0.157 & 0.162 & 0.165 & 0.174 & 0.187 \\
& & $\alpha=0.20$ & 0.207 & 0.209 & 0.223 & 0.229 & 0.237 \\
\cmidrule(lr){4-8}
& \multirow{3}{*}{\(n=1000\)}
& $\alpha=0.10$ & 0.098 & 0.096 & 0.103 & 0.108 & 0.121 \\
& & $\alpha=0.15$ & 0.138 & 0.141 & 0.149 & 0.156 & 0.166 \\
& & $\alpha=0.20$ & 0.185 & 0.191 & 0.196 & 0.205 & 0.214 \\
\cmidrule(lr){4-8}
& \multirow{3}{*}{\(n=1500\)}
& $\alpha=0.10$ & 0.102 & 0.104 & 0.106 & 0.108 & 0.111 \\
& & $\alpha=0.15$ & 0.144 & 0.149 & 0.152 & 0.155 & 0.161 \\
& & $\alpha=0.20$ & 0.188 & 0.194 & 0.199 & 0.207 & 0.214 \\
\arrayrulecolor{black}\bottomrule
\end{tabular}}
\end{table}

The bandwidth $h = 6.06 n^{-1/5}$. Under this bandwidth, the corresponding effective sample size is $n_{\text{eff}} \approx 50$ when the sample size is $n=200$. The effect of bandwidth choice is summarized in Table \ref{tab: sensitivity_bandwidth_c}.

\subsubsection{Regression task}
In this experiment, we used the same setting in Section \ref{appen:gen_regression}.
The labeled sample size is $n=500$, and the test sample size is $m=200$. Every ball $B \in \mathcal{B}$ contains $20\%$ test samples, and $|\mathcal{B}| = 20$. The simulation results for varying labeled sample sizes are shown in Figure \ref{fig:IRCSM-loss}.
Similar to the results in the classification task, only \texttt{Naive-LCP} and \texttt{CROiMS} achieve worst-case conditional miscoverage close to the nominal level $\alpha=0.1$ as $n$ increases, especially at $n=500$. Moreover, we find that the individualized model selection method \texttt{CROiMS} outperforms all other methods and results in the lowest average loss. Additionally, we observe that all methods yield worst-case conditional misrobustness much below the nominal level, implying that there exists a gap between coverage and robustness level. How to bridge this gap to obtain better robustness warrants further study.  Let $\mathcal{G} = \{G_1,G_2,G_3,G_4\}$, where $G_1 = \{X \in \mathbb{R}^2: X_1 \geq 1\}$, $G_2 = \{X \in \mathbb{R}^2: X_1 < 1\}$, $G_3 = \{X \in \mathbb{R}^2: X_2 \geq 1\}$, $G_4 = \{X \in \mathbb{R}^2: X_2 < 1\}$, i.e., using the same partitioning scheme as in the main text.

\begin{figure}[H] 
\centering
\includegraphics[width=1.0\textwidth]{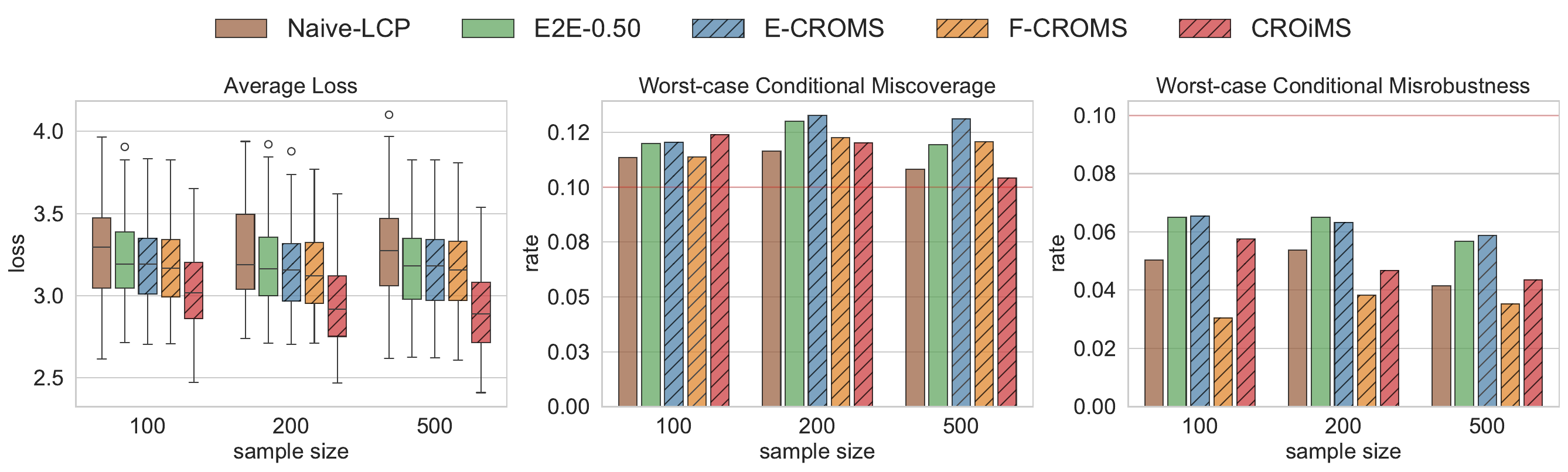}
\caption{The evaluation metrics with varying sample size $n$ in the regression task and $|\Lambda| = 4$, $\alpha = 0.1$.}
\label{fig:IRCSM-loss}
\end{figure}

As shown in Figures \ref{fig:IRCSM-coveragegap} and \ref{fig: IRCSM}, \texttt{CROiMS} performs better than the other methods, achieving stable coverage guarantees across all regions and yielding lower decision loss. Moreover, \texttt{CROiMS} may exhibit a larger robustness gap compared to other methods. This occurs because the robustness rate is always higher than the coverage rate in this regression.

\begin{figure}[H] 
\centering
\includegraphics[width=.9\textwidth]{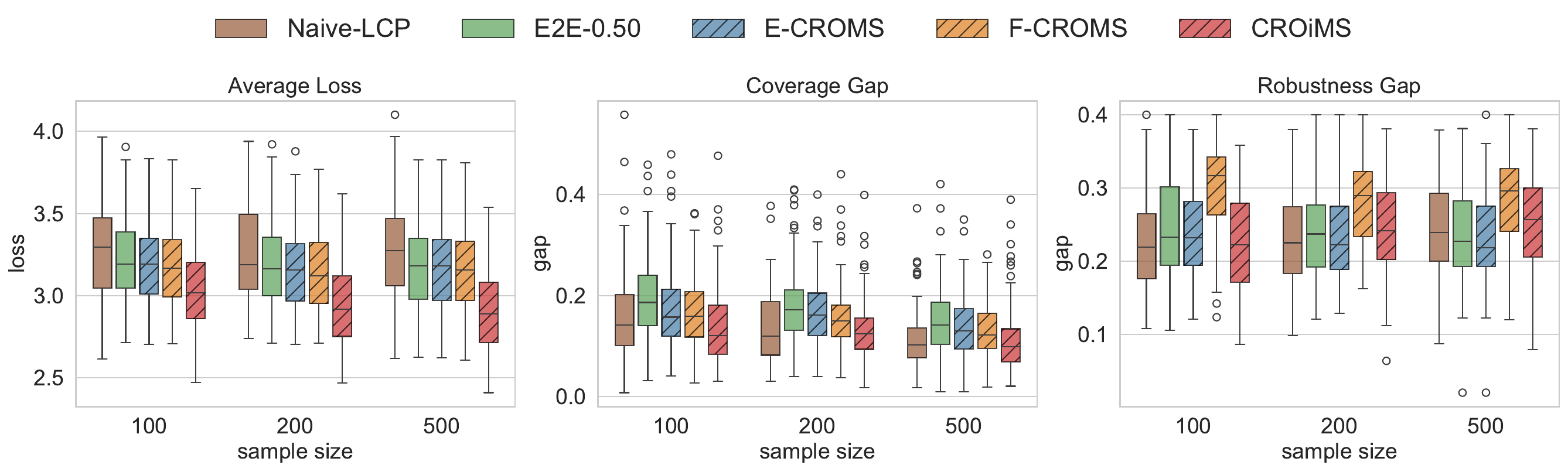}
\caption{The evaluation metrics with varying sample size $n$ and fixed $|\Lambda| = 4$.}
\label{fig:IRCSM-coveragegap}
\end{figure}



 \begin{figure}[H]
    
    \begin{subfigure}[t]{\textwidth}
        \centering
        \includegraphics[width=.75\textwidth]{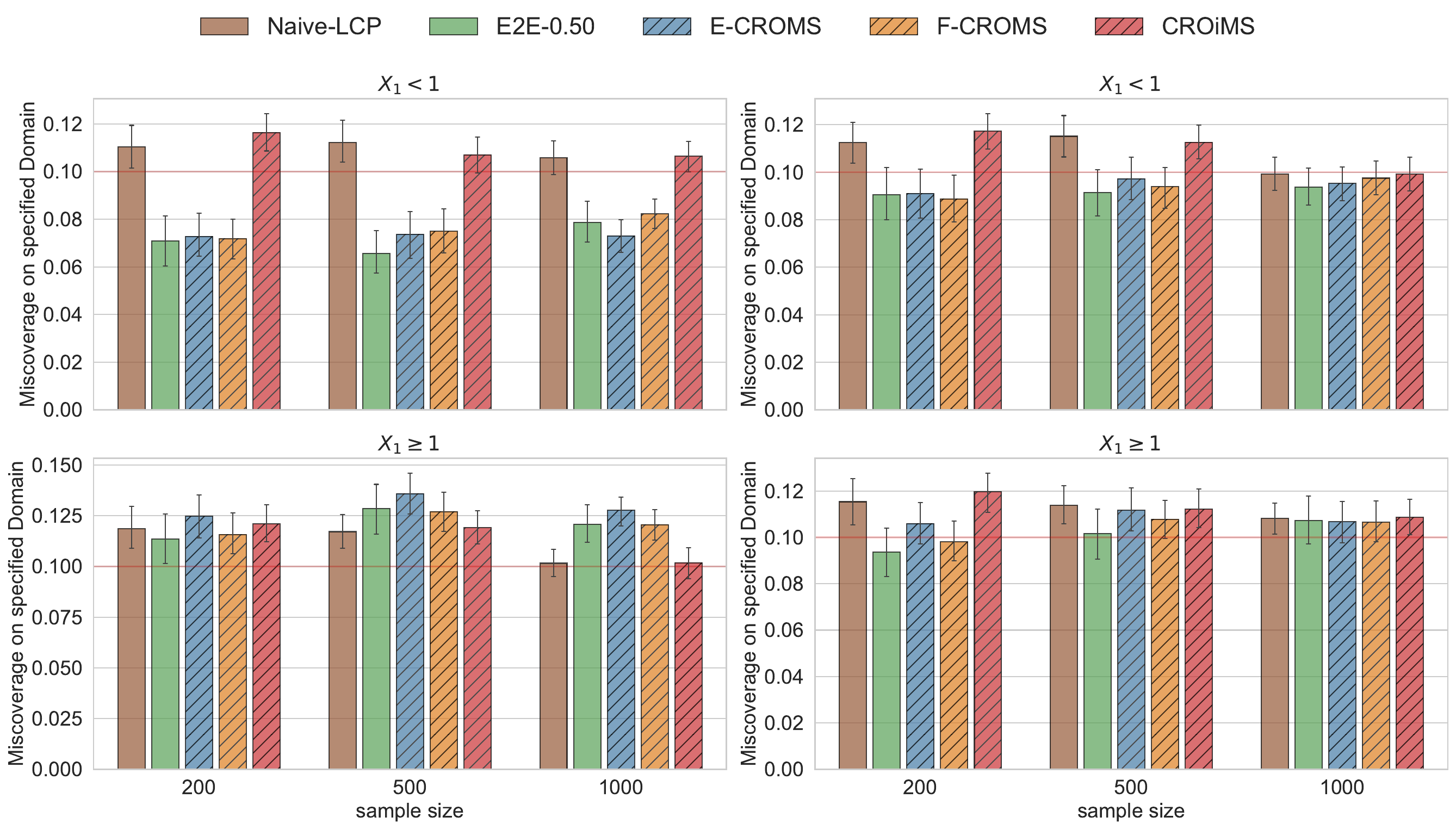}
        \caption{Conditional miscoverage}
    \end{subfigure}

    \begin{subfigure}[t]{\textwidth}
    \centering
        \includegraphics[width=.75\textwidth]{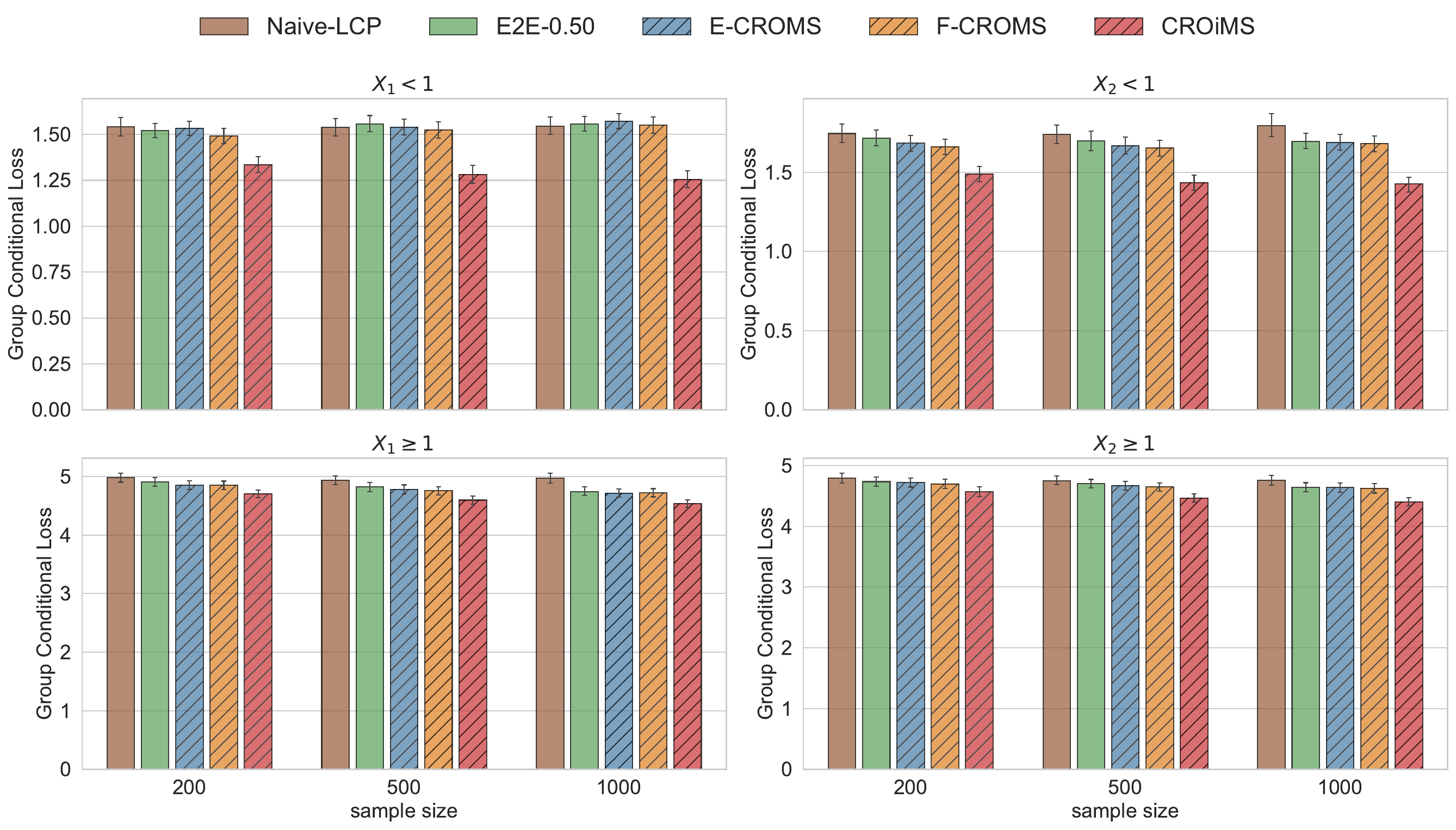}
        \caption{Conditional loss}
    \end{subfigure}
    \caption{The simulation results of conditional miscoverage and loss on specific domains with varying sample size $n$ and fixed $|\Lambda|=4$ and $\alpha = 0.1$.}
    \label{fig: IRCSM}
\end{figure}

The bandwidth for \texttt{CROiMS} is $h = 5.38 n^{-1/4}$ by the rule of effective sample size. Under this bandwidth, the corresponding effective sample size is $\hat{n}_{\text{eff}} \approx 50$ when $n=100$. The effect of bandwidth choice is given in Table \ref{tab: sensitivity_bandwidth_r}.
\begin{table}[H]
\centering
\small
\setlength{\tabcolsep}{3pt}
\caption{Performance of CROiMS with varying constant \(c\) in bandwidth \(h_n = c n^{-1/4}\) under regression task in Appendix G.2.2.}
\label{tab: sensitivity_bandwidth_r}
\resizebox{.8\textwidth}{!}{
\begin{tabular}{lcccccccc}
\toprule
\multirow{2}{*}{Metrics} & \multirow{2}{*}{Sample size} & \multirow{2}{*}{Nominal level} & \multicolumn{5}{c}{Constant \(c\)} \\
\cmidrule(lr){4-8}
 & & & 4 & 5 & 6 & 7 & 8 \\
\midrule
\multirow{9}{*}{Avg. Loss}
& \multirow{3}{*}{\(n=200\)}
& $\alpha=0.10$ & 2.897 & 2.913 & 2.932 & 2.954 & 2.973 \\
& & $\alpha=0.15$ & 2.890 & 2.903 & 2.922 & 2.941 & 2.960 \\
& & $\alpha=0.20$ & 2.880 & 2.898 & 2.916 & 2.933 & 2.947 \\
\arrayrulecolor{black!40}\cmidrule(lr){4-8}
& \multirow{3}{*}{\(n=500\)}
& $\alpha=0.10$ & 2.868 & 2.888 & 2.906 & 2.920 & 2.938 \\
& & $\alpha=0.15$ & 2.862 & 2.879 & 2.896 & 2.913 & 2.927 \\
& & $\alpha=0.20$ & 2.850 & 2.868 & 2.888 & 2.904 & 2.917 \\
\cmidrule(lr){4-8}
& \multirow{3}{*}{\(n=1000\)}
& $\alpha=0.10$ & 2.842 & 2.855 & 2.875 & 2.891 & 2.906 \\
& & $\alpha=0.15$ & 2.834 & 2.849 & 2.863 & 2.882 & 2.897 \\
& & $\alpha=0.20$ & 2.825 & 2.837 & 2.853 & 2.872 & 2.889 \\
\midrule
\multirow{9}{*}{Worst Cond. Miscov.}
& \multirow{3}{*}{\(n=200\)}
& $\alpha=0.10$ & 0.135 & 0.125 & 0.120 & 0.119 & 0.118 \\
& & $\alpha=0.15$ & 0.182 & 0.172 & 0.176 & 0.177 & 0.176 \\
& & $\alpha=0.20$ & 0.231 & 0.225 & 0.221 & 0.228 & 0.230 \\
\cmidrule(lr){4-8}
& \multirow{3}{*}{\(n=500\)}
& $\alpha=0.10$ & 0.119 & 0.107 & 0.107 & 0.107 & 0.112 \\
& & $\alpha=0.15$ & 0.168 & 0.158 & 0.158 & 0.163 & 0.168 \\
& & $\alpha=0.20$ & 0.217 & 0.211 & 0.218 & 0.221 & 0.229 \\
\cmidrule(lr){4-8}
& \multirow{3}{*}{\(n=1000\)}
& $\alpha=0.10$ & 0.116 & 0.114 & 0.107 & 0.107 & 0.101 \\
& & $\alpha=0.15$ & 0.162 & 0.165 & 0.163 & 0.164 & 0.159 \\
& & $\alpha=0.20$ & 0.215 & 0.218 & 0.214 & 0.210 & 0.212 \\
\midrule
\multirow{9}{*}{Worst Cond. Miscob.}
& \multirow{3}{*}{\(n=200\)}
& $\alpha=0.10$ & 0.055 & 0.052 & 0.048 & 0.047 & 0.050 \\
& & $\alpha=0.15$ & 0.079 & 0.071 & 0.072 & 0.073 & 0.075 \\
& & $\alpha=0.20$ & 0.100 & 0.098 & 0.096 & 0.097 & 0.100 \\
\cmidrule(lr){4-8}
& \multirow{3}{*}{\(n=500\)}
& $\alpha=0.10$ & 0.049 & 0.043 & 0.044 & 0.046 & 0.046 \\
& & $\alpha=0.15$ & 0.066 & 0.066 & 0.065 & 0.070 & 0.070 \\
& & $\alpha=0.20$ & 0.091 & 0.090 & 0.089 & 0.092 & 0.094 \\
\cmidrule(lr){4-8}
& \multirow{3}{*}{\(n=1000\)}
& $\alpha=0.10$ & 0.041 & 0.040 & 0.040 & 0.044 & 0.044 \\
& & $\alpha=0.15$ & 0.064 & 0.061 & 0.061 & 0.065 & 0.066 \\
& & $\alpha=0.20$ & 0.090 & 0.088 & 0.088 & 0.090 & 0.094 \\
\arrayrulecolor{black}\bottomrule
\end{tabular}}
\end{table}

\subsection{Deferred experiment settings}
\subsubsection{Deferred settings and results in Section 5.1.1}\label{appen:setting_simulation_GCP}

We begin by outlining the score function introduced in \citet{cortes2024decision}. Suppose that there is a label-penalty function $\ell: \mathcal{Y} \rightarrow \mathbb{R}^{+}$. This function assigns varying penalties to different labels, ensuring cautious decision-making in sensitive scenarios. For instance, in initial medical screenings, misclassifying a patient as having a serious condition carries greater consequences. Thus, the penalty for labeling a case as ``critical illness'' is set higher, nudging the model toward safer intermediate outcomes like ``recommend additional tests''. 

Suppose that $f:\mathcal{X} \rightarrow [0,1]^{|\mathcal{Y}|}$ is the softmax layer of a classifier, where $\mathcal{Y} = \{1,...,K\}$ is the label set. Given $X = x$, let $\{\sigma_1(x),...,\sigma_K(x)\}$ be the permutation that orders the probabilities $f^{1}(x),...,f^{K}(x)$ from greatest to lowest. Define $\rho(x,y) = \Sigma_{i=1}^{r} f^{\sigma_i(x)}(x)$ where $\sigma_r(x) = y$ and define $L(y)$ as $\Sigma_{i=1}^{r} \ell(\sigma_i(x))$. Then the score function is defined as
$$
S_{\lambda}(x,y) :=  \rho(x,y) + \lambda L(y),
$$
where $\lambda \in \mathbb{R}^{+}$ is the pre-specified score penalty.
Here, we simply set $\ell(y) = y$ for $y \in \mathcal{Y} = \{1,2,3,4,5\}$. The classifier $f$ is trained with Gradient Boosting algorithm. Candidate models $\{S_{\lambda}, \lambda \in \Lambda\}$ are established with different score penalties $\lambda \in [0,0.2]$. The score penalty $\lambda \in \Lambda$ is always obtained from a uniform grid over the interval $[0,0.2]$. For example, if we set $|\Lambda| = 6$, then $\lambda \in \Lambda=\{0.00,0.04,0.08,0.12,0.16,0.20\}$. Finally, the training sample size is $400$, the test sample size is $m=100$. The loss matrix is
$$
M=
\begin{pmatrix}
  0 & 3 & 5 & 7 & 10\\
  2 & 0 & 4 & 6 & 9 \\
  2.5 & 4.5 & 0 & 7 & 8\\
  3 & 5 & 6 & 0 & 7 \\
  3.5 & 6 & 8 & 10 & 0
\end{pmatrix}.
$$
The functions $v_k(\cdot),k=1,...,5$ in the data generation have the following form:
$$
v_k(X) = A_{k1} + A_{k2}X_1 + (A_{k3} + A_{k4}X_2)X_5 + (A_{k5} + A_{k6}X_3)X_6 + (A_{k7} + A_{k8}X_4)X_7,
$$
for $k=1,...,5$, where the coefficient matrix $A$ is set as follows:
$$
A =
\begin{pmatrix}
  0 & 1 & 0 & 0 & 2 & 3 & 3 & 3 \\
  0 & 1 & 1 & 4 & 0 & 0 & 2 & 5  \\
  0 & 1 & 6 & -4 & 6 & -5 & 7 & -4 \\
  1 & -1 & 0 & 3 & 1 & 5 & 4 & 1 \\
  1 & -1 & 1 & 6 & 0 & 3 & 2 & 4 
\end{pmatrix}.
$$
We examine the impact of the nominal level $\alpha$ on decision performance, with corresponding results shown in Figure \ref{fig:ACSSLP-2}. Clearly, \texttt{E-CROMS} and \texttt{F-CROMS} outperform other baselines.

\begin{figure}[H] 
\centering
\includegraphics[width=\textwidth]{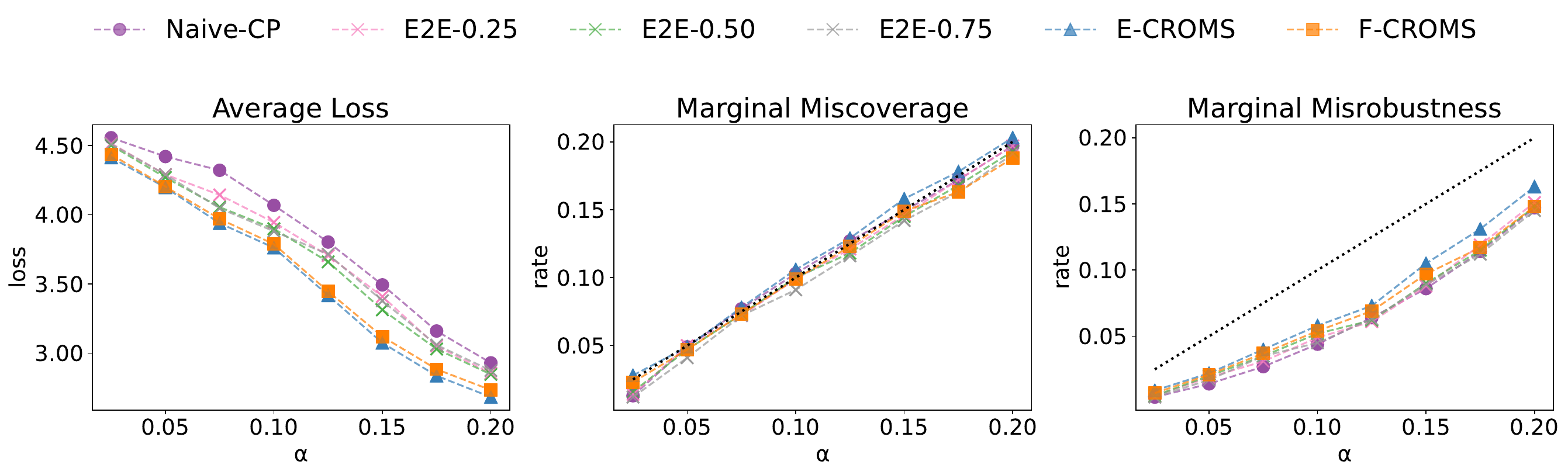}
\caption{The average loss, marginal coverage, and robustness in the classification task. Varying nominal level $\alpha$ with $n = 600$ and $|\Lambda| = 10$.}
\label{fig:ACSSLP-2}
\end{figure}




\subsubsection{Deferred settings and results in Section 6}\label{appen:real_data_COVID-19}

In Section 6, the loss matrix is given in the Table \ref{COVID-19cost}. The score function is defined as $S_{\lambda}(x,y) = 1-f_{\lambda}^{y}(x)$, where the classifier $f_{\lambda}:\mathcal{X} \rightarrow [0,1]^{4}$ is trained with the convolutional neural network (CNN). The CNN architecture consists of two convolutional layers, two pooling layers, two linear layers, three activation layers, and one softmax layer. The softmax layer outputs a 4-dimensional vector representing the probability of an input belonging to each of the four categories. We divided the original dataset into two parts. The first part is used for model training, and the second part is used for sampling both labeled data and test data. We train candidate models $\{S_{\lambda}: \lambda \in [4] \}$ on different datasets sampled from the former part, where these datasets have varying label distributions. Specifically,
\begin{itemize}
    \item $S_1$: Trained on dataset $\mathcal{D}_1$, which is sampled from the first dataset with a label distribution of $[0.15,0.35,0.35,0.15]$. The size of dataset is $|\mathcal{D}_1| = 1000$.
    \item $S_2$: Trained on dataset $\mathcal{D}_2$, with a label distribution $[0.35,0.35,0.15,0.15]$, $|\mathcal{D}_2|=1000$.
    \item $S_3$: Trained on dataset $\mathcal{D}_3$, with a label distribution $[0.20,0.30,0.20,0.30]$, $|\mathcal{D}_3|= 1000$.
    \item $S_4$: Trained on dataset $\mathcal{D}_4$, with a label distribution $[0.20,0.20,0.30,0.30]$, $|\mathcal{D}_4| = 1000$.
\end{itemize}

\begin{table}[H]
    \centering\footnotesize
    \setlength{\tabcolsep}{3pt}
    \renewcommand{\arraystretch}{0.8}
    \caption{The loss matrix of COVID-19 diagnosis in \citet{kiyani2025decision}.}
    \label{COVID-19cost}
    \resizebox{0.8\textwidth}{!}{
    \begin{tabular}{ccccc}
    \toprule
    \diagbox[dir=NW]{\textbf{Label ($y$)}}{\textbf{Decision ($z$)}}
    & {No Action} & {Antibiotics} & {Quarantine} & {Additional Testing} \\ 
    \midrule
    {Normal}           & 0                 & 8                     & 8                    & 6                           \\ 
    {COVID-19}         & 10                  & 7                     & 0                   & 2                           \\ 
    {Pneumonia}        & 10                  & 0                    & 7                    & 3                           \\ 
    {Lung Opacity}     & 9                  & 6                     & 6                    & 0                          \\
    \bottomrule
    \end{tabular}
    }
\end{table}

For similarity measurement, we integrate the softmax layer outputs from four models as a feature extractor $f_{\text{ex}}(x) = (f_{1}(x), f_2(x), f_3(x),f_4(x))$, which maps high-dimensional images into 16-dimensional feature vectors. The similarity between individuals is then computed using the Euclidean distance between these vectors.
The labeled and test data are both sampled from the second dataset with uniform label distribution $[0.25,0.25,0.25,0.25]$, with sample sizes set to $n = m = 300$. The bandwidth $h$ is set to 1.80, yielding an effective sample size $n_{\text{eff}} \approx  150$. Each ball $B \in \mathcal{B}$ contains $20\%$ test samples, with $|\mathcal{B}| = 50$.

Here, we present some relevant results. Let $\mathcal{G} = \{G_1,G_2,G_3,G_4\}$, where $G_1 = \{X \in \mathcal{X}: \hat{Y} = \text{COVID-19}\}$, $G_2 = \{X \in \mathcal{X}: \hat{Y} = \text{Lung-Opacity}\}$, $G_3 = \{X \in \mathcal{X}: \hat{Y} = \text{Normal}\}$, $G_4 = \{X \in \mathcal{X}: \hat{Y} = \text{Pneumonia}\}$. Prediction $\hat{Y}$ is obtained by statistically aggregating the outputs of four models for input $X$, where $\hat{Y}$ corresponds to the label with the ``highest vote count.'' For example, if Models 1, 2, and 3 output label ``COVID-19'', while Model 4 outputs label ``Normal'', then $\hat{Y}$ is determined to be ``COVID-19''. As shown in Figures \ref{fig:Medical diagnosis-CovGap}, \ref{fig:Medical diagnosis}, compared to other methods, \texttt{CROiMS} maintains conditional coverage consistently at the $1-\alpha$ level while achieving lower group conditional loss across all regions.

\begin{figure}[H] 
    \centering
    \includegraphics[width=.9\textwidth]{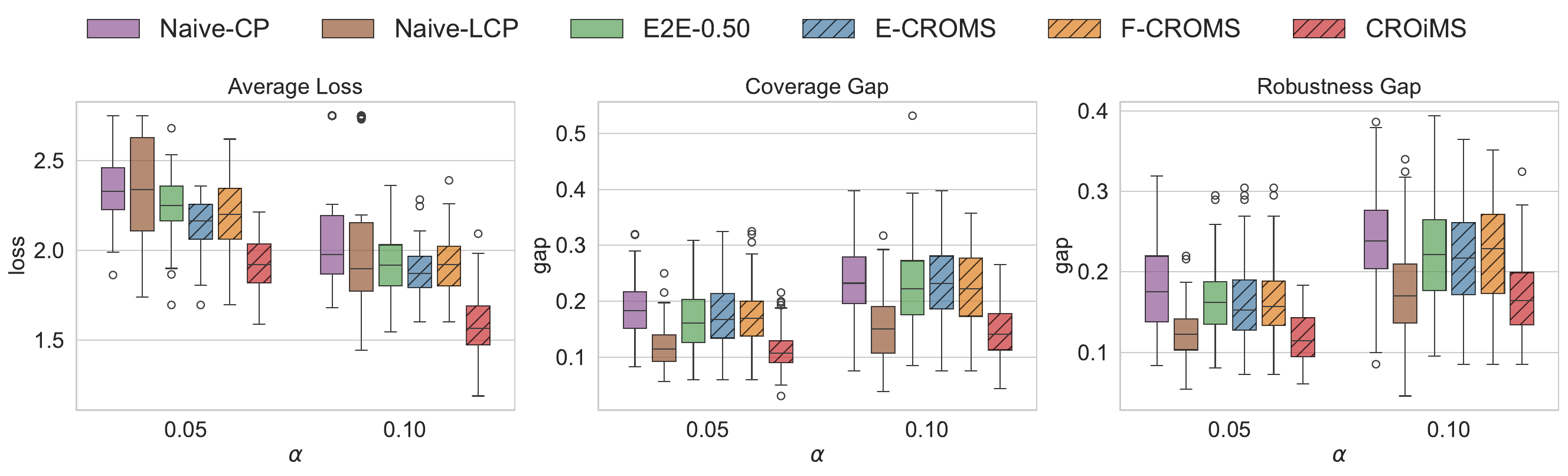}
    \caption{The evaluation metrics for the different nominal level $\alpha \in \{0.05,0.10\}$.}
    \label{fig:Medical diagnosis-CovGap}
\end{figure}

\begin{figure}[H]
    \begin{subfigure}[t]{\textwidth}
        \centering
        \includegraphics[width=.76\textwidth]{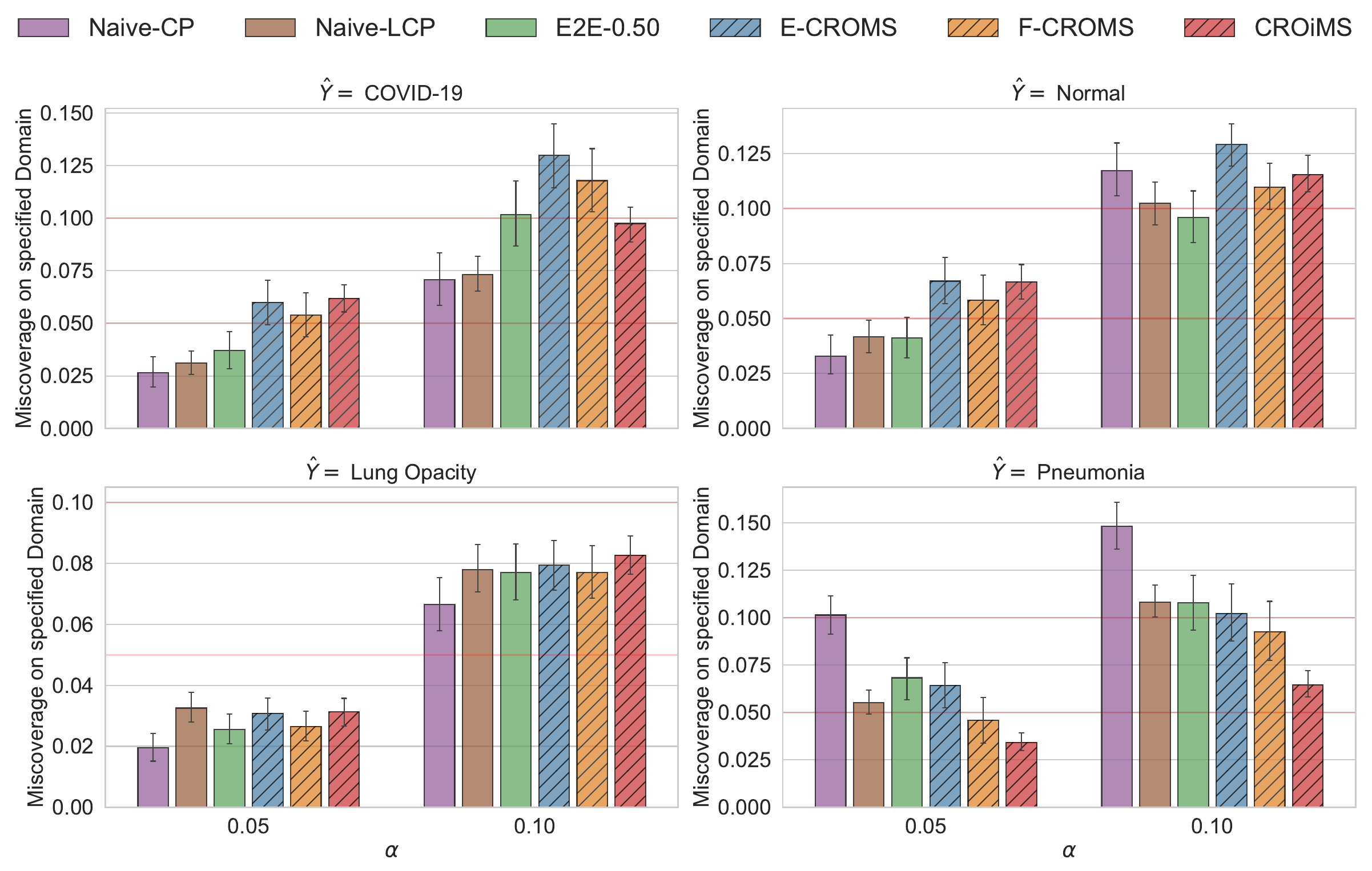}
        \caption{Conditional miscoverage}
    \end{subfigure}

    \begin{subfigure}[t]{\textwidth}
    \centering
        \includegraphics[width=.76\textwidth]{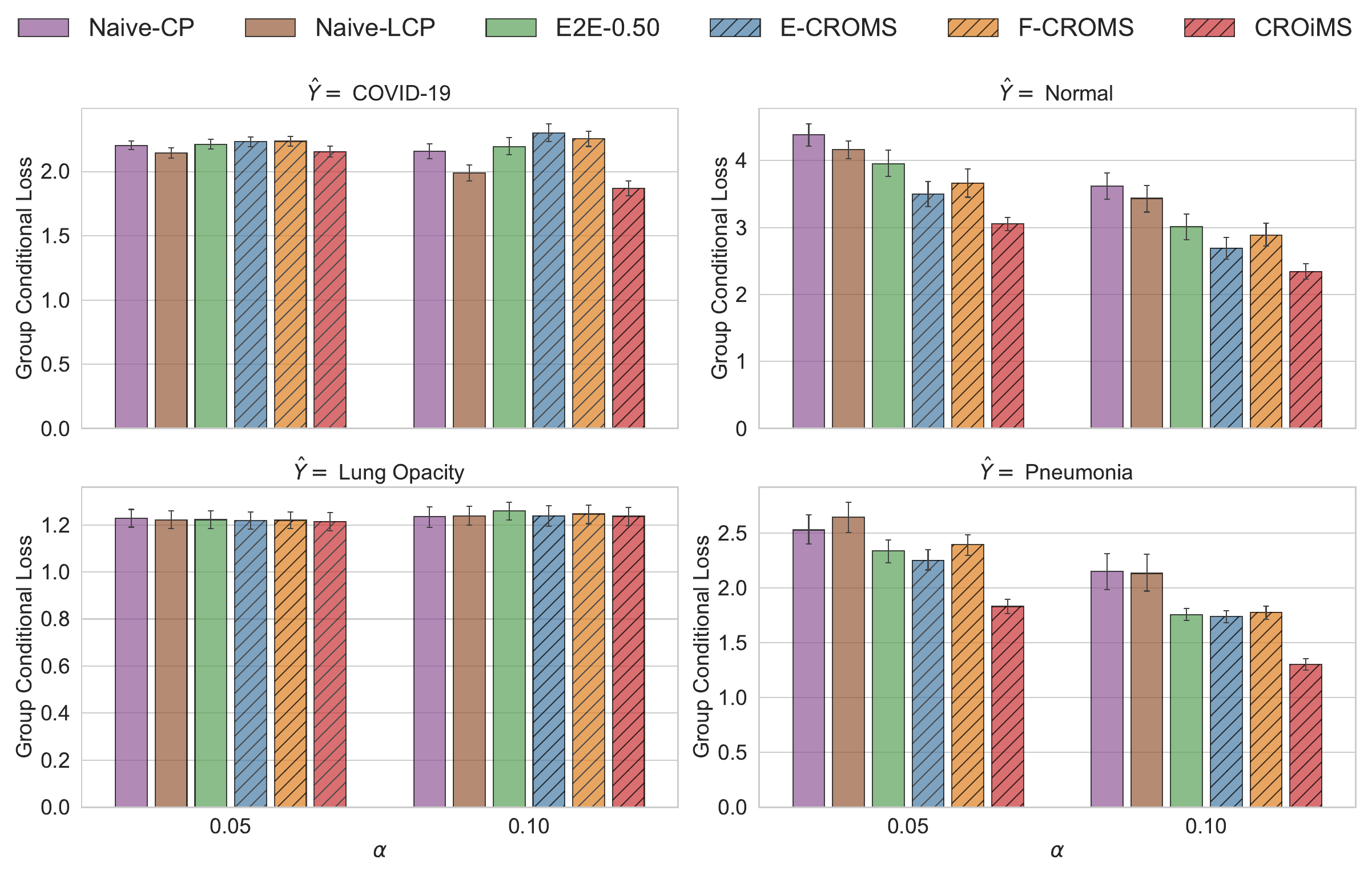}
        \caption{Conditional loss}
    \end{subfigure}
    \caption{The group conditional miscoverage and loss on COVID-19 Radiography Database under the different nominal level $\alpha \in \{0.05, 0.10\}$}
    \label{fig:Medical diagnosis}
\end{figure}



\subsection{Application on dermoscopic diagnosis}\label{Realdata: HAM10000}
The HAM10000 dataset \citep{tschandl2018ham10000} is a widely used collection of dermoscopic images for skin lesion classification. It contains 10,015 images of skin lesions from a diverse group of individuals. The dataset also includes each individual's basic demographic information (such as age) and dermatological diagnoses, covering 7 diagnostic categories. For our analysis, we treat the image as covariates ($X$) and the diagnostic results as two categories ($Y$): ``melanocytic nevus'' and ``others'' (including ``benign keratosis-like lesions'' and other malignant conditions). The loss matrix is established in Table \ref{HAM10000cost} to simulate decision costs in medical diagnosis. We take age as the group feature ($X_g$) and aim to investigate the optimal decision for each group.

\begin{table}[H]
    \centering\footnotesize
    \setlength{\tabcolsep}{3pt}
    \caption{The loss matrix of dermoscopic diagnosis.}\label{HAM10000cost}
    \begin{tabular}{ccccc}
    \toprule
    \diagbox[dir=NW]{\textbf{Label} $y$}{\textbf{Decision} $z$}
    & {No Action} & {Additional test} & {Disease}  \\ 
    \midrule
    {Melanocytic nevus}           & 0                 & 3                     & 6                \\ 
    {Others}         & 8                  & 4                     & 0       \\ 
    \bottomrule
    \end{tabular}
\end{table}

We divide the HAM10000 dataset into two parts: one for training different models and the other for sampling the labeled data and the test data. The former part are partitioned into four distinct training subsets based on individual ages: $\mathcal{D}_1 = \{(X,Y):0 \leq X_g<40\}$, $\mathcal{D}_2 = \{(X,Y):40 \leq X_g<55\}$, $\mathcal{D}_3 = \{(X,Y):55 \leq X_g<70\}$, and $\mathcal{D}_4 = \{(X,Y):70 \leq X_g\leq 85\}$. Then four separate models $\{S_{\lambda}: \lambda \in \Lambda\}$ with $\Lambda=4$ are trained on these subsets. In this experiment, the score function is $S_{\lambda}(x,y) = 1 - f_{\lambda}^{y}(x)$, where $x \in \mathbb{R}^{3\times224\times224}$ represents the image and the classifier $f_{\lambda}: \mathcal{X} \rightarrow [0,1]^{2}$ is trained using a CNN with DenseNet-121 architecture. Each replication randomly samples 500 labeled and test data, respectively. For the implementation of CROiMS, similarity between individuals is measured based on age differences, formulated as $H(X_g, X_g^{\prime}) = \exp\left(-\|X_g - X_g^{\prime}\|^2/h^2\right)$ with bandwidth $h=10$. Under this bandwidth, the corresponding effective sample size $n_{\text{eff}} \approx 200$. Every ball $B \in \mathcal{B}$ contains $10\%$ test samples,
and $|\mathcal{B}| = 10$.

The experimental results under different nominal levels $\alpha$ are displayed in Figure \ref{fig:HAM10000}. We see that \texttt{CROiMS} significantly outperforms all other methods, achieving the lowest average loss while maintaining marginal misrobustness and worst-case conditional misrobustness around the nominal levels. In Figure \ref{fig:HAM10000-CovLoss}, we further illustrate the group conditional losses across various age groups. The advantages of \texttt{CROiMS} are even more pronounced, with a nearly 41\% reduction in group conditional loss compared to other methods under $\alpha=0.1$ for $\text{age}\in[0,40]$. This demonstrates that adaptively selecting different models for different groups often leads to superior decision-making outcomes.


\begin{figure}[H] 
    \centering
    \includegraphics[width=1.0\textwidth]{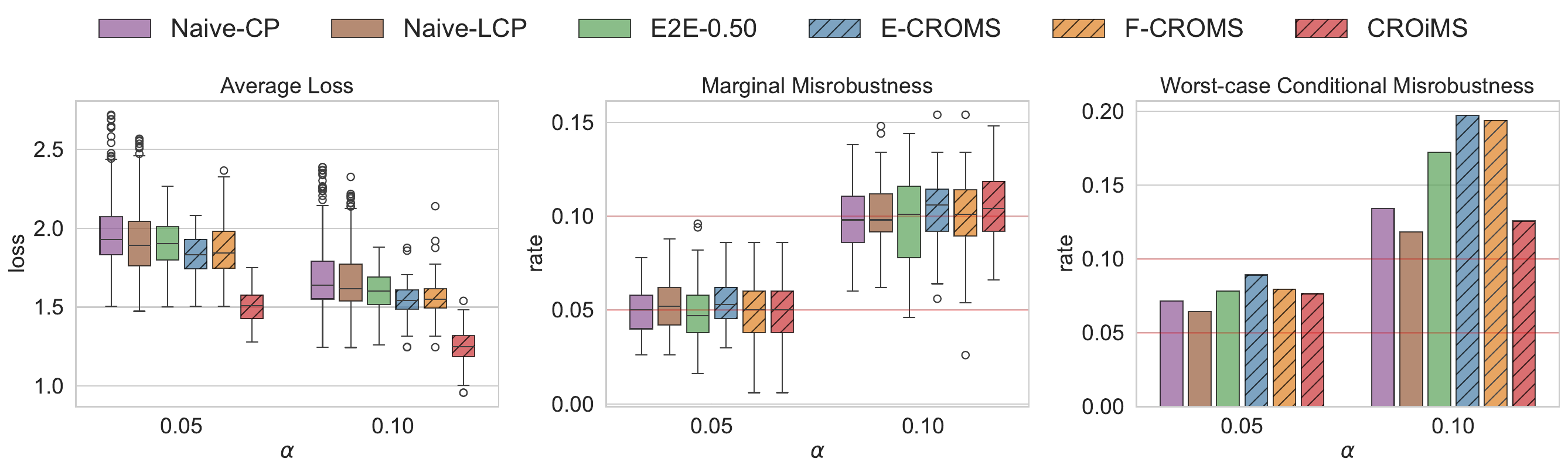}
    \caption{The average loss, marginal misrobustness, and worst-case conditional misrobustness on HAM10000 Dataset under the nominal level $\alpha = 0.05,0.10$. The candidate models are trained from four datasets with different ages.}
    \label{fig:HAM10000}
\end{figure}

\begin{figure}[H] 
    \centering
    \includegraphics[width=0.9\textwidth]{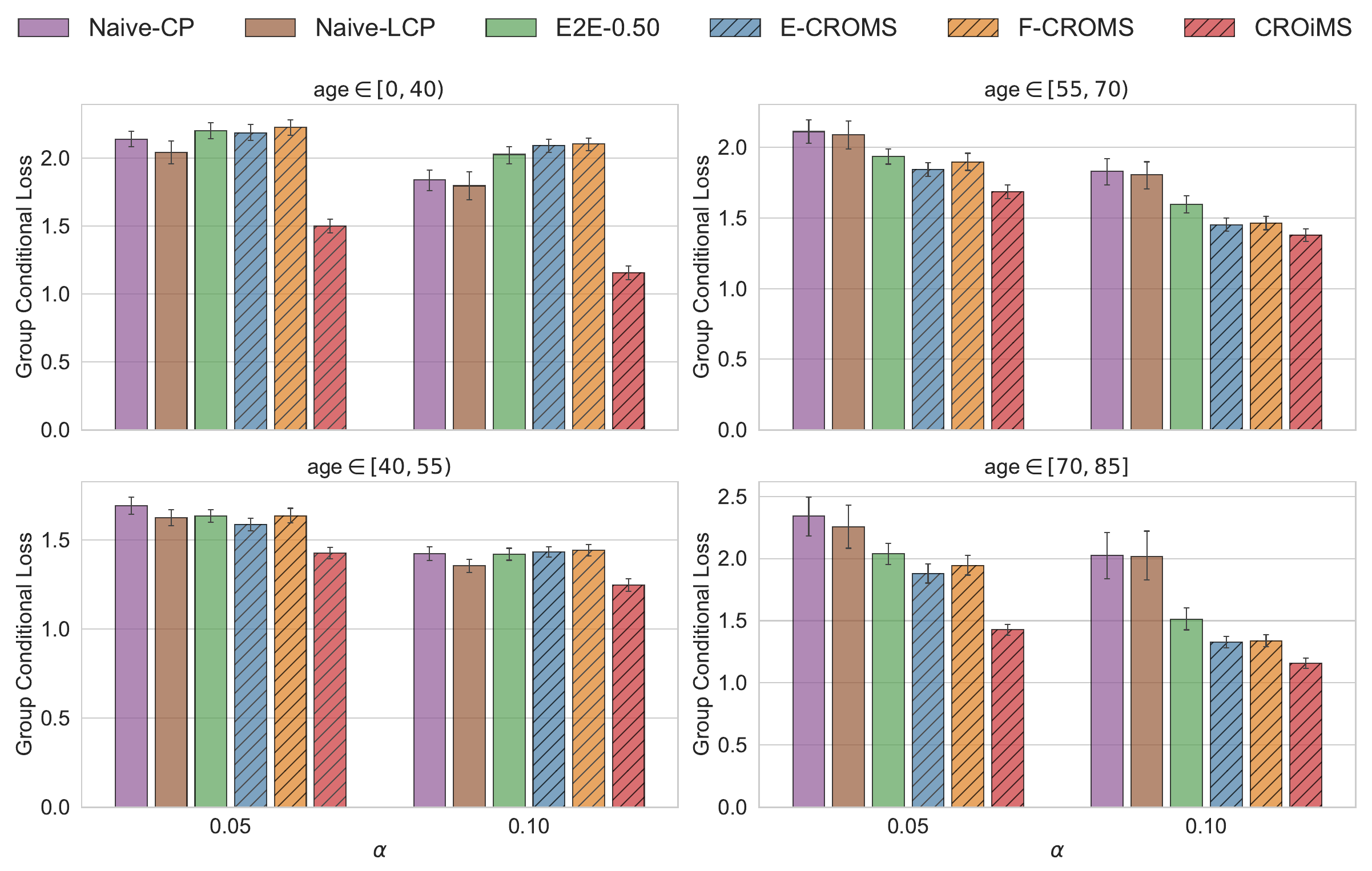}
    \caption{The group conditional loss on HAM10000 Dataset under the nominal level $\alpha \in \{0.05,0.1\}$. The candidate models are trained from four datasets with different ages.}
    \label{fig:HAM10000-CovLoss}
\end{figure}

\subsection{Comparison with the model selection methods for minimizing the prediction set size}

Some works select prediction models by minimizing the size of the prediction set, such as the EFCP and VFCP methods proposed by \citet{yang2024selection}, and the ModSel-CP method introduced by \citet{liang2024conformal}. However, it should be noted that the prediction set with the smallest size does not necessarily perform best in downstream tasks. 

In this section, we compare these three methods that minimize the size of the prediction set with our proposed new methods, E-CROMS and F-CROMS, in simulated decision-making problems. The results demonstrate that our proposed methods, which directly minimize the downstream task loss, perform more effectively in decision-making problems. Now we compare these methods with our proposals under the same experimental setup as in Section 5.1, including the data generation process, model training, and other configurations. The results are reported in Figure \ref{fig: EVM}, where both E-CROMS and F-CROMS achieve lower averaged decision loss than these baselines. It
also confirms that size efficiency does not necessarily imply decision efficiency.

\begin{figure}[H]
    \centering
    \begin{subfigure}[t]{\textwidth}
        \includegraphics[width=\textwidth]{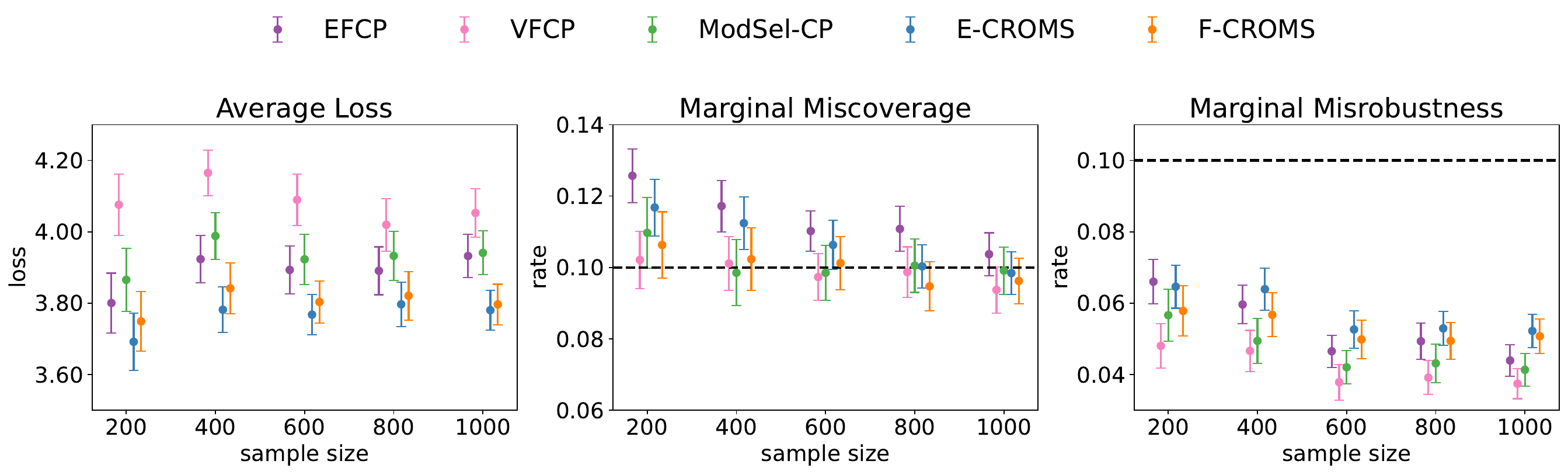}
        \caption{Varying sample size $n$ with $|\Lambda| = 10$ and $\alpha = 0.1$.}
    \end{subfigure}

    \begin{subfigure}[t]{\textwidth}
        \includegraphics[width=\textwidth]{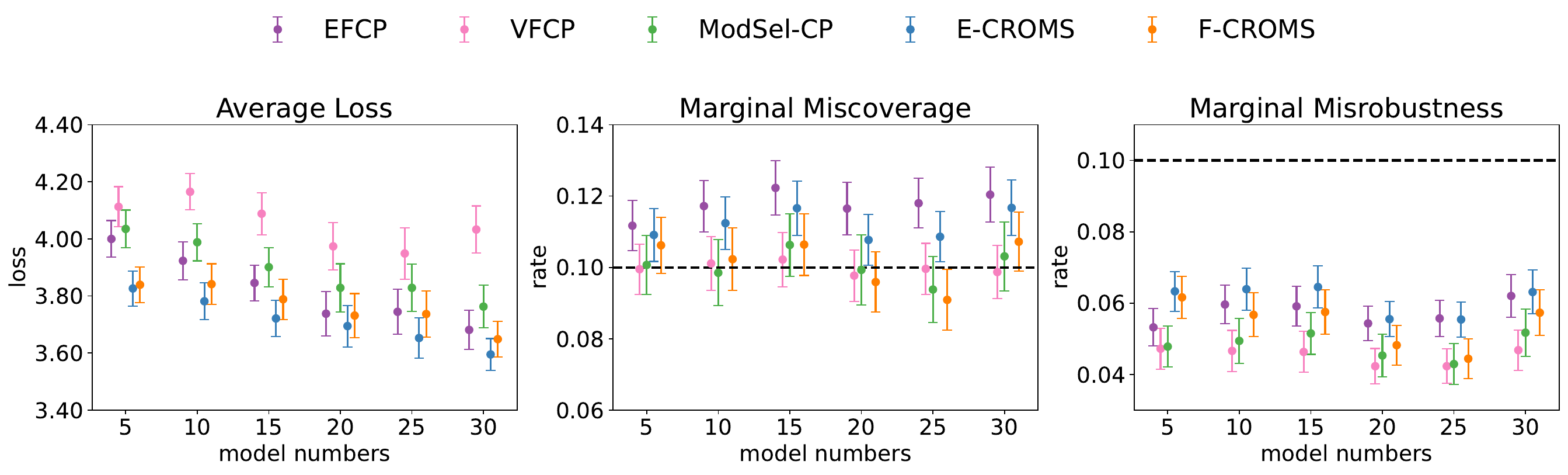}
        \caption{Varying candidate model numbers $|\Lambda|$ with $n = 400$ and $\alpha = 0.1$.}
    \end{subfigure}

    \begin{subfigure}[t]{1\textwidth}
        \includegraphics[width=\textwidth]{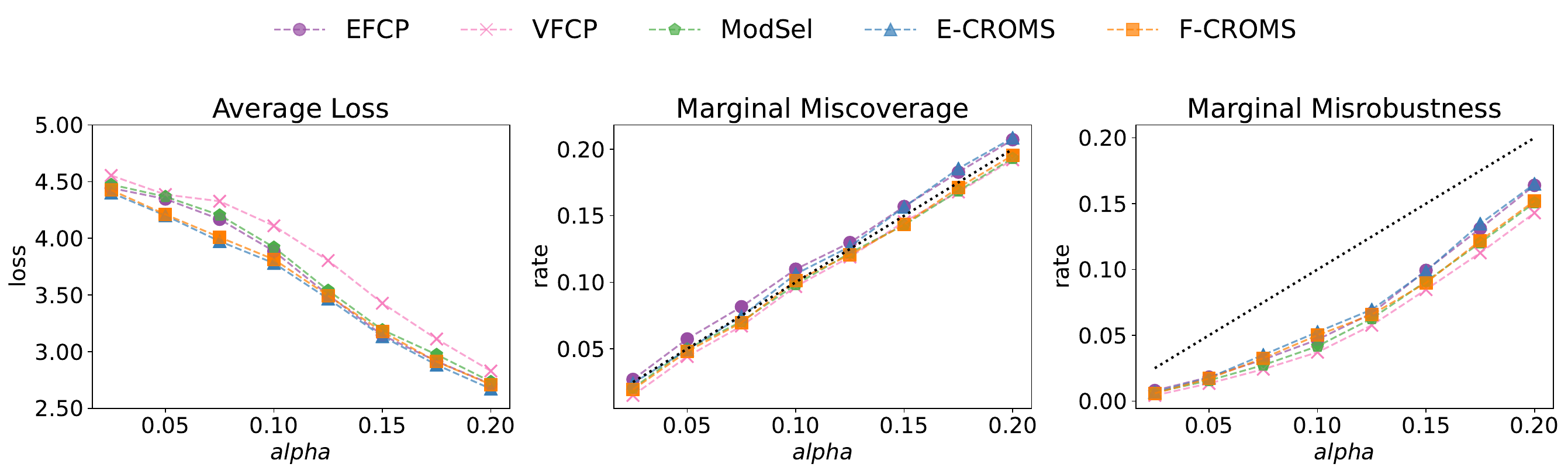}
        \caption{Varying nominal level $\alpha$ with $n = 600$ and $|\Lambda| = 10$.}
    \end{subfigure}
    \caption{The average loss, marginal coverage, and robustness in the classification task, where the candidate models are built on the same score function with different penalties.}
    \label{fig: EVM}
\end{figure}

\end{document}